\newcommand{\figleft}{{\em (Left)}}
\newcommand{\figcenter}{{\em (Center)}}
\newcommand{\figright}{{\em (Right)}}
\newcommand{\figtop}{{\em (Top)}}
\newcommand{\figbottom}{{\em (Bottom)}}
\def\kldiv{{\sf{KL}}}
\def\eqref#1{equation~\ref{#1}}
\def\1{\bm{1}}
\DeclareMathAlphabet{\mathsfit}{\encodingdefault}{\sfdefault}{m}{sl}
\SetMathAlphabet{\mathsfit}{bold}{\encodingdefault}{\sfdefault}{bx}{n}
\def\gA{{\mathcal{A}}}
\def\gC{{\mathcal{C}}}
\def\gD{{\mathcal{D}}}
\def\gF{{\mathcal{F}}}
\def\gL{{\mathcal{L}}}
\def\gN{{\mathcal{N}}}
\def\gP{{\mathcal{P}}}
\def\gS{{\mathcal{S}}}
\def\gT{{\mathcal{T}}}
\def\gU{{\mathcal{U}}}
\def\gX{{\mathcal{X}}}
\def\sC{{\mathbb{C}}}
\def\sP{{\mathbb{P}}}
\def\sQ{{\mathbb{Q}}}
\def\sR{{\mathbb{R}}}
\newcommand{\E}{\mathbb{E}}
\newcommand{\R}{\mathbb{R}}
\newcommand{\KL}{D_{\mathrm{KL}}}
\newcommand{\Cov}{\mathrm{Cov}}
\DeclareMathOperator*{\argmin}{arg\,min}
\DeclareMathOperator{\Tr}{Tr}
\newcommand{\rmu}{\mathrm{u}}
\newcommand{\rmd}{\mathrm{d}}
\newcommand{\im}{\mathrm{im}}
\newcommand{\textSC}{\mathrm{SC}_2}
\newcommand{\Matern}{Mat\'ern}
\mathchardef\mhyphen="2D % Define a "math hyphen"
\newcommand{\tsde}{$\gT$SDEs}
\newcommand{\fbtsde}{FB-$\gT$SDEs}
\newcommand{\gtsbp}{{\textcolor{violet}{G$\gT$SBP}}}
\newcommand{\gtsb}{{\textcolor{violet}{G$\gT$SB}}}
\newcommand{\tsb}{{\textcolor{violet}{$\gT$SB}}}
\newcommand{\tsbp}{{\textcolor{violet}{$\gT$SBP}}}
\newcommand{\tsheatbm}{{\textcolor{violet}{$\gT$SHeat\textsubscript{BM}}}}
\newcommand{\tsheatve}{{\textcolor{violet}{$\gT$SHeat\textsubscript{VE}}}}
\newcommand{\tsheatvp}{{\textcolor{violet}{$\gT$SHeat\textsubscript{VP}}}}
\definecolor{ultramarine}{rgb}{0.07, 0.04, 0.56}
\definecolor{azure}{rgb}{0.0, 0.5, 1.0}
\definecolor{blue2}{rgb}{0.2, 0.2, 0.6}
\definecolor{ceruleanblue}{rgb}{0.16, 0.32, 1}
\definecolor{mustard}{rgb}{1.0, 0.86, 0.35}
\definecolor{mistyrose}{rgb}{1.0, 0.89, 0.88}
\definecolor{purplemountainmajesty}{rgb}{0.59, 0.47, 0.71}
\definecolor{skymagenta}{rgb}{0.81, 0.44, 0.69}
\newcommand{\ctag}[1]{\tag{\color{violet}#1\color{black}}}
\newcommand*\diff{\mathop{}\!\mathrm{d}}
\renewcommand*{\backref}[1]{}
\renewcommand*{\backrefalt}[4]{{%
    \ifcase #1 Not cited.%
          \or Cited on page~#2.%
    \else Cited on pages #2.%
    \fi%
    }}
\crefname{assumption}{Assumption}{Assumptions}
\theoremstyle{plain}
\newtheorem{theorem}{Theorem}
\newtheorem{proposition}[theorem]{Proposition}
\newtheorem{lemma}[theorem]{Lemma}
\newtheorem{corollary}[theorem]{Corollary}
\theoremstyle{definition}
\newtheorem{definition}[theorem]{Definition}
\theoremstyle{remark}
\newtheorem{remark}[theorem]{Remark}
\newtheorem{example}[theorem]{Example}
\title{Topological Schrödinger Bridge Matching\\}
\author{Maosheng Yang \\
% \thanks{ Use footnote for providing further information
% about author (webpage, alternative address)---\emph{not} for acknowledging
% funding agencies.  Funding acknowledgements go at the end of the paper.} \\
% Department of Intelligent Systems, 
% TU Delft\\
Delft University of Technology\\
% Pittsburgh, PA 15213, USA \\
\texttt{m.yang-2@tudelft.nl} \\
% \And
% Ji Q. Ren \& Yevgeny LeNet \\
% Department of Computational Neuroscience \\
% University of the Witwatersrand \\
% Joburg, South Africa \\
% \texttt{\{robot,net\}@wits.ac.za} \\
% \AND
% Coauthor \\
% Affiliation \\
% Address \\
% \texttt{email}
}
\begin{document}

\maketitle

\begin{abstract}
Given two boundary distributions, the \emph{Schrödinger Bridge} (SB) problem seeks the “most likely” random evolution between them with respect to a reference process. 
It has revealed rich connections to recent machine learning methods for generative modeling and distribution matching. 
While these methods perform well in Euclidean domains, they are not directly applicable to topological domains such as graphs and simplicial complexes, which are crucial for data defined over network entities, such as node signals and edge flows.
In this work, we propose the \emph{Topological Schrödinger Bridge problem} ($\mathcal{T}$SBP) for matching signal distributions on a topological domain,
where we set the reference process to follow some linear tractable \emph{topology-aware} stochastic dynamics such as topological heat diffusion. 
For the case of Gaussian boundary distributions, we derive a \emph{closed-form} 
Gaussian topological SB
in terms of its time-marginal and stochastic differential. 
In the general case, leveraging the well-known result, we show that the optimal process follows the forward-backward topological dynamics governed by some unknowns.
Building on these results, we develop $\mathcal{T}$SB-based models for matching topological signals by parameterizing the unknowns in the optimal process as \emph{(topological) neural networks} and learning them through \emph{likelihood training}. We validate the theoretical results and demonstrate the practical applications of $\mathcal{T}$SB-based models on both synthetic and real-world networks, emphasizing the role of topology. 
Additionally, we discuss the connections of $\mathcal{T}$SB-based models to other emerging models, and outline future directions for topological signal matching.
\end{abstract}

%%%%%%%%%%%%%%%%%%%%%%%%%%%%%%%%%%%%%%%%%%%%%%%%%%%%%%%%%%%%
\section{Introduction}\label{sec:introduction}
As a fundamental problem in statistics and optimization, \emph{matching distributions} aims to find a map that transforms one distribution to another. 
It has found numerous applications in machine learning tasks, particularly in generative modeling, which often involves learning a transformation from a data distribution to a simple one (often Gaussian) for efficient sampling and inference.
While various methods have been proposed, including score-based  \citep{ho2020denoising,song2020score} and flow-based \citep{lipman_flow_2023} generative models, among others \citep{neklyudov_action_2023,albergo_stochastic_2023-1,tong2024improving}, the \emph{Schrödinger Bridge} (SB)-based methods \citep{de2021diffusion,chen2021likelihood,liu_generalized_2024} provide a principled framework for matching \emph{arbitrary} distributions.  

Inspired by \citet{schrodinger1931umkehrung,schrodinger1932theorie}, the classical SB problem (SBP) aims to find an optimal \emph{stochastic process} that evolves from an initial distribution to a final distribution, while minimizing the \emph{relative entropy} (Kullback-Leibler divergence) between the measures of the optimal process and the Wiener process \citep{leonard_survey_2014}. 
Alternatively, the SBP can be cast as a \emph{stochastic optimal control} (SOC) problem which minimizes the kinetic energy while matching the distributions through a nonlinear stochastic process \citep{dai1991stochastic,pavon1991free}. 
The optimal solution to this problem satisfies a \emph{Schrödinger system} of coupled \emph{forward-backward} (FB) stochastic differential equations (SDEs) \citep{leonard_survey_2014}.
Traditionally, SB problems have been solved by addressing the unknowns in this system using purely numerical methods \citep{fortet1940resolution,follmer1988random}. 
More recently, machine learning approaches have been proposed \citep{pavon_data-driven_2021,vargas_solving_2021,wang2021deep,de2021diffusion,chen2021likelihood} where the unknowns are approximated by learnable models (e.g., Gaussian processes, neural networks) trained on data-driven objectives, such as the \emph{likelihood} training \citep{chen2021likelihood}. 

However, SB-based methods have primarily focused on solving tasks in Euclidean spaces, such as time series, images \citep{deng_variational_2024} and point clouds. 
Modern learning tasks often involve data supported on irregular topological domains such as graphs, simplicial and cell complexes.
Arising from applications like chemical reaction networks, biological networks, power systems, social networks \citep{wang2022molecular,faskowitz2022edges,bick2023higher}, the emerging field of \emph{topological machine learning} \citep{papamarkou2024position} centers on signals supported on topological objects such as nodes and edges, which can represent sensor data or flow type data over network entities.
A direct application of existing SB models to such topological data may fail due to their inability to account for the underlying topology.
Thus, in this work, we investigate the SBP for topological signals,
with a focus on node and edge signals over networks modeled as graphs and simplicial complexes.
To match distributions of such topological signals, our contributions are threefold.
\begin{enumerate}[wide=5pt,topsep=-1pt,itemsep=-1pt]
    \item[(i)] We propose the \textcolor{violet}{\emph{Topological Schrödinger Bridge problem} ($\gT$SBP)}, which seeks an optimal \emph{topological stochastic process} that minimizes the relative entropy with respect to a reference process, while respecting the initial and final distributions.
    To incorporate the domain knowledge, we define the reference process to follow \emph{topology-aware} SDEs ($\gT$SDEs) with a linear \emph{topological convolution} drift term, admitting tractable Gaussian transition kernels.
    This subsumes the commonly-used stochastic heat diffusions on graphs and simplicial complexes for networked dynamics modeling. 
    \item[(ii)] 
    Focusing on the case where the end distributions are Gaussian, we find the closed-form optimal Gaussian $\gT$SB and characterize it in terms of a \emph{stochastic interpolant} time marginal, as well as its Itô differential. 
    This generalizes the results of \citet{bunne_schrodinger_2023} where the reference process is limited to SDEs \emph{scalar}-valued linear coefficients.
    For the general case, we show that, upon existing results, the optimal \tsb~adheres a pair of FB-\tsde~governed by some unknown terms (also called \emph{policies}), which in turn satisfy a \emph{system} driven by topological dynamics.
    \item[(iii)] We propose the \tsb-based model for topological signal generative modeling and matching. 
    Specifically, we parameterize the \emph{hard-to-solve} policies by some (topology-aware) learnable models (e.g., graph/simplicial neural networks), and train them by maximizing the likelihood of the model based on \citet{chen2021likelihood}.
    We show that \tsb-based models unify the extensions of score-based and diffusion bridge-based models \citep{song2020score,zhou2023denoising} for topological signals.
\end{enumerate}
We validate the theoretical results and demonstrate the practical implications of \tsb-models on synthetic and real-world networks involved with brain signals, single-cell data, ocean currents, seismic events and traffic flows. 
Before concluding the paper, we extensively discuss future directions in generative modeling for topological data.
Overall, our work lies in the intersection of SB theory, stochastic dynamics on topology, machine learning and generative modeling for topological signals.

\textbf{Notations.}
We denote by $X$ a \emph{stochastic process} $(X_t)_{0\leq t\leq 1}$ as a map $X:[0,1]\times\gX\to\R^n$ from the unit time interval $[0,1]$ (i.e., \emph{index space}) and sample space $\gX$ (e.g., Euclidean space) to $\R^n$ (state space). 
Here, $X_t$ is a random variable representing the state at $t$.
The standard $n$-dim \emph{Wiener process} (Brownian motion) is denoted by $W$.
Let $\Omega = \gC([0,1],\R^n)$ denote the space of all continuous $\R^n$-valued paths on $[0,1]$, and let $\gP(\Omega)$ denote the space of probability measures on $\Omega$.
For a \emph{path measure} $\sP\in\gP(\Omega)$ describing the law of the process $X$, we denote by $\sP_t$ its \emph{time marginal} 
that describes the distribution of $X_t$, i.e., if $X\sim\sP$, then $X_t\sim\sP_t$.
We assume distributions of random variables are associated with measures that have a \emph{density} with respect to the Lebesgue measure. 
We also assume locally Lipschitz smoothness on the drift and diffusion coefficients of SDEs. 

%%%%%%%%%%%%%%%%%%%%%%%%%%%%%%%%%%%%%%%%%%%%%%%%%%%%%%%%%%%%
\section{Background}\label{sec:background}

\subsection{Schrödinger Bridge Problem}
Let $\sQ_W$ be the path measure of a Wiener process $\diff Y_t = \sigma \diff W_t$ with variance $\sigma^2$.
The \emph{classical} SBP \citep{leonard_survey_2014} seeks an optimal path measure $\sP$ on $\Omega$ by minimizing its relative entropy $D_\kldiv$ with respect to $\sQ_W$
\begin{equation}
    \min  D_{\kldiv}(\sP\Vert \sQ_W), \quad \text{ s.t. } \sP\in\gP(\Omega), \sP_0=\rho_0, \sP_1=\rho_1, \tag{SBP}\label{eq.classical_sbp}
\end{equation}
where $\rho_0$ and $\rho_1$ are the \emph{prescribed} initial and final time marginals on $\R^n$.
Intuitively, the \ref{eq.classical_sbp} aims to find a stochastic process evolving from $\rho_0$ to $\rho_1$ that are ``most likely'' to a \emph{reference} (a \emph{prior}) process, here the Wiener process.
This is in fact a \emph{dynamic} formulation of the \emph{entropic-regularized optimal transport} (OT) with a quadratic transport cost \citep{villani_optimal_2009}.
The \emph{static} formulation reads 
\begin{equation}\label{eq.static_ot}
    \min_{\pi\in\Pi(\rho_0,\rho_1)} \int_{\R^n\times\R^n} \frac{1}{2} \lVert x_0 - x_1 \rVert^2 \diff \pi(x_0,x_1) + \sigma^2 D_{\kldiv} (\pi \Vert \rho_0\otimes\rho_1) \tag{E-OT}
\end{equation}
where $\Pi(\rho_0,\rho_1)$ is the set of couplings between $\rho_0$ and $\rho_1$, and $\rho_0\otimes\rho_1$ denotes their product measure.
In this \emph{static} formulation,
the optimization is over the coupling of $\rho_0$ and $\rho_1$, as opposed to the full path measure in the \emph{dynamic} formulation \cref{eq.classical_sbp}.
As $\sigma\to 0$, \ref{eq.static_ot} reduces to the typical 2-Wasserstein OT, and the associated dynamic problem is given by \citet{benamou_computational_2000}.

\subsection{Topological Signals}
In this work, we are interested in signals defined on a graph, a simplicial complex or a cell complex such a topological domain, denoted by $\gT$.
If $\gT$ is a graph with a node set and an edge set, we may define a \emph{node signal} $x\in\R^n$ as a collection of values associated to the nodes, where $n$ denotes the number of nodes. 
Such signals often arise from sensor measurements in sensor networks, user properties in social networks, etc \citep{shumanEmergingFieldSignal2013}. 
Similarly, if $\gT$ is a simplicial 2-complex $\textSC$ with the sets of nodes, edges, as well as triangular faces (or triangles), we can define an \emph{edge flow} by associating a real value to each \emph{oriented} edge. 
Here, for an edge $e=\{i,j\}$, if we choose $[i,j]$ as its positive orientation, then $[j,i]$ represents the opposite \citep{godsilAlgebraicGraphTheory2001}. 
The sign of the signal thus indicates the flow orientation relative to the chosen one.
Such edge signals often represent flows of information or energy, such as water flows, power flows, or transaction flows \citep{bick2023higher}.
Moreover, we may consider signals on general topological objects such as higher-order simplices (or cells). 
If $n$ is the number of simplices, we refer to $x\in\R^n$ as a \emph{topological signal}
where the $i$-th entry represents the signal value on the $i$-th simplex.
In topology, these are called \emph{cochains}, which are the discrete analogues to \emph{differential forms} \citep{limHodgeLaplaciansGraphs2020}.

The emerging field of learning on graphs and topology \citep{barbarossaTopologicalSignalProcessing2020,papamarkou2024position}
concerns such topological signals, where the central idea is to leverage the underlying topological structure in $\gT$.
For example, the \emph{graph Laplacian} or adjacency matrix (or their variants) can be used to encode the graph's structure, acting as a spectral operator for node signals \citep{chung1997spectral}.
Similarly, in a $\textSC$, the \emph{Hodge Laplacian} can be defined as the operator for edge flows, composed of the \emph{down} and \emph{up} parts, which encode the edge adjacency through a common node or triangle, respectively. 
Other variants of Hodge Laplacians can also be defined \citep{gradyDiscreteCalculus2010,schaubRandomWalksSimplicial2020}.
Thus, for a topological signal $x\in\R^n$, we assume a \emph{Laplacian-type}, positive semidefinite, \emph{topological operator} $L\in\R^{n\times n}$ on $\gT$ which encodes the topological structure. 
 
In a probabilistic setting,  a \emph{topological signal} can be considered random, following some high-dim distribution on $\R^n$ associated with the topology $\gT$.
This allows for the application of probabilistic methods to topological signals, similar to Euclidean cases.
Recent works \citep{borovitskiy_matern_2021,yang2024hodge,alain2023gaussian} have modeled node signals and edge flows using \emph{Gaussian processes} (GPs) on graphs and simplicial complexes.
These GPs encode the topological structure by building their covariance matrix (kernel) $\Sigma$ as a \emph{matrix function} of the associated Laplacian $L$.
For example, a diffusion node GP uses the kernel $\Sigma = \exp(-\frac{\kappa^2}{2}L)$ with a hyperparameter $\kappa$ and the graph Laplacian $L$. % $\Sigma = \sigma^2 \exp(-\frac{\kappa^2}{2}L)$
Other GPs can be defined as well to model signals in specific subspaces with certain properties \citep{yang2024hodge} or to jointly model the node-edge signals \citep{alain2023gaussian}.

%%%%%%%%%%%%%%%%%%%%%%%%%%%%%%%%%%%%%%%%%%%%%%%%%%%%%%%%%%%%
%%%%%%%%%%%%%%%%%%%%%%%%%%%%%%%%%%%%%%%%%%%%%%%%%%%%%%%%%%%%
\section{Topological Schrödinger Bridge Problem}\label{sec:topological_sbp}

In a topological domain $\gT$, we consider a \emph{topological stochastic process} $X$ where the index space is instead the product space of $[0,1]$ and the set of topological objects (e.g., nodes, edges) in $\gT$, and the state space $\R^n$ is the space of topological signals with $n$ the cardinality of the set. 
When we consider the node set in a graph (the edge set in a $\textSC$), $X$ is a stochastic process of node signals (edge flows).
We assume that $X$ follows some \emph{unknown} dynamics with its law described by the path measure $\sP$.
For some prescribed initial and final time-marginals, i.e., $X_0\sim\sP_0=\nu_0$ and $X_1\sim\sP_1=\nu_1$, we then aim to obtain the optimal $\sP$ by solving the \textcolor{violet}{\emph{Topological Schrödinger Bridge Problem} (\tsbp)}:
\begin{equation}\label{eq.topological_sbp}
   \min D_{\kldiv}(\sP \, \Vert \, \sQ_\gT),  \quad \text{s.t. } \sP\in\gP(\Omega), \sP_0=\nu_0, \sP_1=\nu_1. \ctag{$\gT$SBP}
\end{equation}
Here, $\sQ_\gT$ is the path measure of a reference process $Y$ which follows some prior \emph{topology-aware} stochastic dynamics on $\gT$.
Effectively, the solution $\sP$ to \ref{eq.topological_sbp} describes the ``most likely'' process $X$ that conforms to the prior $Y$ in the sense of minimizing relative entropy with respect to $\sQ_\gT$, while respecting the initial and final distributions $\nu_0$ and $\nu_1$.

\textbf{Topological stochastic dynamics.} 
For the reference process $Y$, given an initial topological signal condition $Y_0=y_0$,
we assume it follows a general class of topological SDEs:
\begin{equation}\label{eq.sde_topological} \ctag{$\gT$SDE}
   \diff Y_t = f(t,Y_t; L) \diff t + g_t \diff W_t, 
\end{equation}
where $f_t\equiv f(t,\cdot\,;L):\R^n\to\R^n$ is a time-varying \emph{drift} that depends on the topological structure $\gT$ through the operator $L$, and $g_t\equiv g(t)\in\R$ is a scalar \emph{diffusion} coefficient. 
For tractability, we consider a class of \emph{linear dynamics} on $\gT$ with the following drift term:
\begin{equation}\label{eq.topological_drift}
   f(t,Y_t;L) = H_t(L) Y_t + \alpha_t, \quad \text{with } H_t(L) = \textstyle\sum_{k=0}^{K} h_{k}(t) L^k 
\end{equation} 
and $\alpha_t\in\R^n$ a bias term.
Here, $H_t(L)$, denoted simply as $H_t$, is a \emph{matrix polynomial} of $L$ with time-varying coefficients $h_{k,t}\equiv h_k(t)$, which is able to approximate any \emph{analytic function} of $L$ for an appropriate $K$ by the \emph{Cayley-Hamilton theorem}. 
The drift $f_t$ is also referred to as a \emph{topological convolution} of the topological signal in the literature.
With a graph Laplacian $L$, this returns the \emph{graph convolution} of a node signal \citep{sandryhailaDiscreteSignalProcessing2013,sandryhailaDiscreteSignalProcessing2014}, and with the  Hodge Laplacian for edges or general simplices, it yields the \emph{simplicial convolution} \citep{yang2022simplicial_b}.  
Various topological machine learning methods have been developed based on such convolutions for their expressivity and efficiency.
We provide a few examples of linear \ref{eq.sde_topological}, which will be used later.

\emph{Topological stochastic heat diffusion}: \ref{eq.sde_topological} gives the stochastic variant of heat equation on $\gT$
\begin{equation}\label{eq.topological_diffusion_sde}\ctag{$\gT$SHeat}
   \diff Y_t = -c L Y_t \diff t + g_t \diff W_t,
\end{equation}
by setting $H_t = - c L$, with $c>0$. 
When $\gT$ is a graph with the graph Laplacian $L$, this dynamics enables modeling graph-time GPs \citep{nikitin_non-separable_2022}, networked dynamic system \citep{pereira2010learning,delvenne2015diffusion,santos2024learning}, and social opinion dynamics \citep{gaitonde2021polarization}.
More importantly, in the deterministic case of $g_t=0$, it has a harmonic steady-state, revealing the \emph{topological features} of $\gT$.
Specifically, node diffusion converges to a state 
that can identify the connected components (0-dim \emph{holes}), while edge diffusion in a $\textSC$ based on the Hodge Laplacian has a converging state of circulating around cycles (1-dim \emph{holes}).
We refer to \cref{fig:diffusion_demonstration} for such illustrations. 
In line with our goal of 
% generative modeling 
distribution matching
for topological signals, we present three examples of \ref{eq.topological_diffusion_sde}, inspired by diffusion models for generative modeling \citep{song2020score}.

%%%%%%%%%%%%%%%%%%%%%%%%%%%%%%%%%%%%%%%%%%%%%%%%%%%%%%%%%%%%
\begin{figure}[t]
   \vspace{-20pt}
   \hspace{-5pt}
\makebox[0.5\textwidth][l]{
   \includegraphics[width=0.162\linewidth]{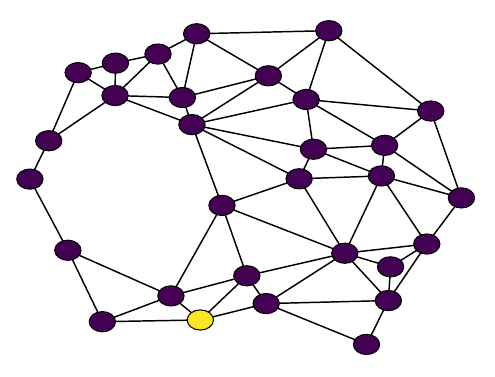}
   \hspace{-8pt}
   \includegraphics[width=0.162\linewidth]{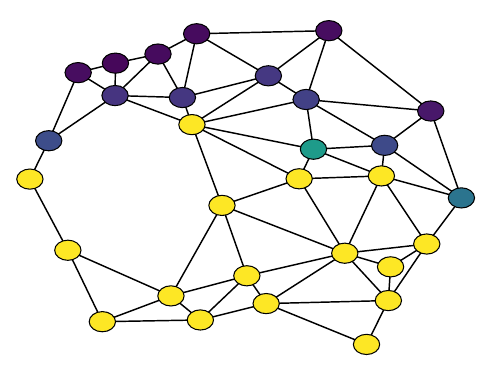}
   \hspace{-8pt}
  \includegraphics[width=0.162\linewidth]{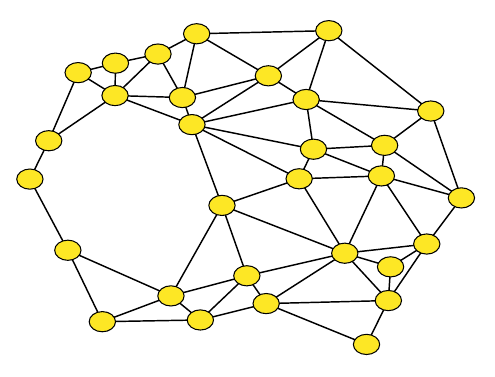} 
   \hspace{-8pt}
  \includegraphics[width=0.03\linewidth]{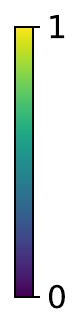} 
   }
\makebox[0.5\textwidth][l]{
  \includegraphics[width=0.165\linewidth]{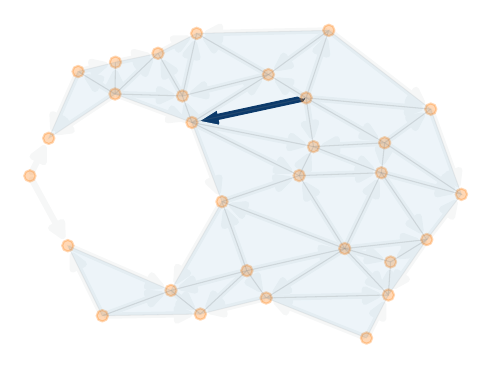}
  \hspace{-8pt}
  \includegraphics[width=0.165\linewidth]{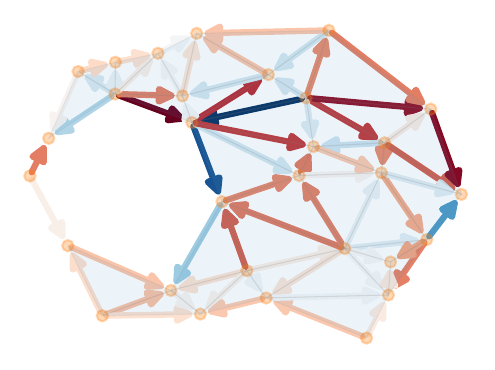} 
  \hspace{-8pt}
  \includegraphics[width=0.165\linewidth]{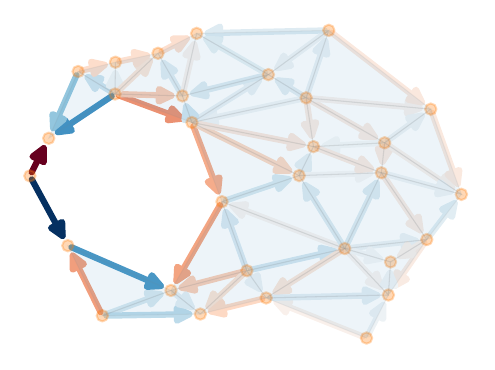}
  \hspace{-8pt}
\includegraphics[width=0.04\linewidth]{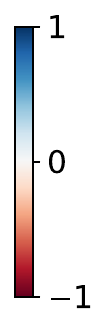} 
  }
  \vspace{-15pt}
\caption{Heat diffusion starting from a node over a graph \figleft~and an edge over a $\textSC$ \figright, followed by intermediate states, then reaching the steady states where the heat becomes uniform for the node case whereas circulating around the cycle for the edge case.
}\label{fig:diffusion_demonstration}
\vspace{-13pt}
\end{figure}
%%%%%%%%%%%%%%%%%%%%%%%%%%%%%%%%%%%%%%%%%%%%%%%%%%%%%%%%%%%%

\begin{example}[\tsheatbm]\label{ex.tsheatbm}
   Consider a constant $g_t=g$ in \ref{eq.topological_diffusion_sde}. This results in a mixture of a topological heat diffusion and BM with variance $g^2$, which we refer to as \tsheatbm.
\end{example}

\begin{example}[\tsheatve]\label{ex.tsheatve}
   For some noise scales $0<\sigma_{\min}< \sigma_{\max}$, consider a time-increasing $g_t = \sqrt{{\diff \sigma^2(t)}/{\diff t}}$ with $\sigma(t) = \sigma_{\min}({\sigma_{\max}}/{\sigma_{\min}})^t$, which drives the well-known \emph{variance exploding} (VE) noising process \citep{song_generative_2020,song2020score}. 
   The resulting form of \ref{eq.topological_diffusion_sde} is
\begin{equation}
    \diff Y_t = - c L Y_t \diff t + \sqrt{{\diff \sigma^2(t)}/{\diff t}} \diff W_t.
    \ctag{\tsheatve} \label{eq.topological_diffusion_sde_ve}
\end{equation}
\end{example}

\begin{example}[\tsheatvp]\label{ex.tsheatvp}
   When combined with another noising process, known as the \emph{variance preserving} (VP) process \citep{sohl-dickstein_deep_2015,ho2020denoising,song2020score}, we obtain  
\begin{equation}
    \diff Y_t = -\big(\frac{1}{2}\beta(t) I + cL \big) Y_t \diff t + \sqrt{\beta(t)} \diff W_t \ctag{\tsheatvp}, \label{eq.topological_diffusion_sde_vp}
\end{equation}
where $\beta(t) = \beta_{\min} + t(\beta_{\max} - \beta_{\min})$ with scales $0<\beta_{\min}<\beta_{\max}$.
The drift here can be considered as an instantiation of the topological convolution $H_t=-\big(\frac{1}{2}\beta(t) I + cL \big)$ in \cref{eq.topological_drift}. 
\end{example}

\textbf{Gaussian transition kernels.} 
The \ref{eq.sde_topological}, as an Itô process, is fully characterized by its \emph{transition kernel} in a probabilistic sense. 
As a result of the linear drift \cref{eq.topological_drift} of \ref{eq.sde_topological}, the associated {transition kernel} $p_{t|s}(y_t|y_s)$ (i.e., conditional distribution of $Y_t|Y_s$) is Gaussian.
Its mean 
and covariance 
can be computed according to \citet[Eq. 6.7]{sarkka_applied_2019}.
Let the \emph{transition matrix} of the ODE $\diff Y_t = H_t(L) Y_t \diff t$ be denoted by $\Psi_{ts}\equiv\Psi(t,s)$, which is given by $\Psi_{ts} = \exp(\int_s^t H_\tau \diff \tau)$ [cf. \cref{lemma:transition_matrix_ode}].
For brevity, we denote $\Psi_{t0}$ as simply $\Psi_t$.
We then have the following lemma.
% We have the following lemma.
\begin{lemma}[Statistics of transition kernels]\label{prop.topological_sde_solution_and_conditional_process}
   For the \ref{eq.sde_topological} with the linear drift \cref{eq.topological_drift}, 
   its Gaussian transition kernel $p_{t|0}(y_t|y_0)$
   has the mean $m_t$ and the cross covariance $K_{t_1t_2}$, at $t_1$ and $t_2$:
   \begin{align*}
      m_{t} &= \Psi_{t}  y_0 + \Psi_t \int_0^t \Psi_\tau^{-1} \alpha_\tau \diff \tau =: \Psi_t y_0 + \xi_t,  \ctag{cond. mean} \label{eq.conditional_mean} \\
      K_{t_1t_2} &= \Psi_{t_1} \bigg(\int_0^{\min\{t_1,t_2\}} g_\tau^2 \Psi_\tau^{-2} \diff \tau \bigg) \Psi_{t_2}^\top. 
      % =: K(t_1,t_2) \equiv K_{t_1t_2}. 
      \ctag{cond. cross cov} \label{eq.conditional_cov}
   \end{align*}
\end{lemma}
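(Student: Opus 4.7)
The plan is to solve the linear \ref{eq.sde_topological} explicitly using an integrating-factor argument based on the transition matrix $\Psi_t=\exp(\int_0^t H_\tau\diff\tau)$, and then read off mean and covariance of the (Gaussian) solution. The key structural observation is that the matrices $\{H_\tau\}_{\tau\in[0,1]}$ in \cref{eq.topological_drift} are all polynomials in the single symmetric PSD operator $L$, so they mutually commute and are jointly diagonalizable. Consequently $\Psi_t$ is well-defined as a genuine matrix exponential, satisfies $\frac{\diff}{\diff t}\Psi_t=H_t\Psi_t=\Psi_t H_t$, is invertible for all $t$, and is itself symmetric (so $\Psi_\tau^{-\top}=\Psi_\tau^{-1}$, which is what allows the covariance to be written with $\Psi_\tau^{-2}$ rather than $\Psi_\tau^{-1}\Psi_\tau^{-\top}$).

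First I would apply Itô's formula to the transformed process $Z_t:=\Psi_t^{-1}Y_t$. Using $\frac{\diff}{\diff t}\Psi_t^{-1}=-\Psi_t^{-1}H_t$ and the fact that $\Psi_t^{-1}$ is deterministic, the drift involving $H_tY_t$ cancels exactly, leaving
\begin{equation*}
\diff Z_t \;=\; \Psi_t^{-1}\alpha_t\,\diff t \;+\; g_t\,\Psi_t^{-1}\,\diff W_t.
\end{equation*}
Integrating from $0$ to $t$ with $Z_0=Y_0=y_0$ and multiplying back by $\Psi_t$ yields the explicit representation
\begin{equation*}
Y_t \;=\; \Psi_t y_0 \;+\; \Psi_t\!\int_0^t\!\Psi_\tau^{-1}\alpha_\tau\,\diff\tau \;+\; \Psi_t\!\int_0^t g_\tau\,\Psi_\tau^{-1}\,\diff W_\tau.
\end{equation*}
Since the integrand of the stochastic integral is deterministic, $Y_t$ is Gaussian with zero-mean noise part, and taking expectation gives $m_t=\Psi_t y_0+\xi_t$ as claimed in \eqref{eq.conditional_mean}.

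For the cross covariance, I would substitute the representation above for $Y_{t_1}$ and $Y_{t_2}$ and compute
\begin{equation*}
K_{t_1t_2} \;=\; \Psi_{t_1}\,\mathbb{E}\!\left[\bigl(\textstyle\int_0^{t_1}\! g_\tau\Psi_\tau^{-1}\diff W_\tau\bigr)\bigl(\int_0^{t_2}\! g_s\Psi_s^{-1}\diff W_s\bigr)^{\!\top}\right]\Psi_{t_2}^{\top}.
\end{equation*}
The inner expectation is handled by the (matrix) Itô isometry: for $t_1\le t_2$, only the overlap on $[0,t_1]$ contributes, giving $\int_0^{t_1}g_\tau^2\,\Psi_\tau^{-1}\Psi_\tau^{-\top}\diff\tau$. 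Using symmetry of $\Psi_\tau$ (from the commuting/jointly diagonalizable structure noted above), this collapses to $\int_0^{\min\{t_1,t_2\}}g_\tau^2\,\Psi_\tau^{-2}\diff\tau$, which yields \eqref{eq.conditional_cov}. Gaussianity of $p_{t|0}$ follows from the fact that the deterministic part is an affine function of $y_0$ and the noise part is a Wiener integral against a deterministic matrix-valued kernel.

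The main subtlety, rather than a real obstacle, is justifying the two manipulations that rely on commutativity of the $H_\tau$'s: (i) that $\Psi_t$ really equals the scalar-style exponential $\exp(\int_0^t H_\tau\diff\tau)$ and satisfies both $\dot\Psi_t=H_t\Psi_t$ and $\frac{\diff}{\diff t}\Psi_t^{-1}=-\Psi_t^{-1}H_t$, and (ii) that $\Psi_\tau^{-\top}=\Psi_\tau^{-1}$ so that the covariance can be written compactly in terms of $\Psi_\tau^{-2}$. Both reduce to the observation that every $H_\tau$ is a real polynomial in the symmetric operator $L$, which I would cite from \cref{lemma:transition_matrix_ode}; the remainder of the argument is the standard variation-of-constants derivation for linear Itô SDEs applied in the topological (matrix-polynomial) setting.
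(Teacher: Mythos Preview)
Your proposal is correct and follows essentially the same approach as the paper: the paper cites the variation-of-constants formula for linear SDEs from \citet[Eq.~6.7]{sarkka_applied_2019} to obtain $m_t$ and $K_{t|0}$, and then applies the It\^o isometry (exactly as you do) to compute the cross covariance $K_{t_1t_2}$. Your derivation is simply more self-contained, spelling out the integrating-factor step $Z_t=\Psi_t^{-1}Y_t$ and the commutativity/symmetry justifications that the paper packages into \cref{lemma:transition_matrix_ode} and the cited reference.
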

More importantly, we may characterize them for \tsheatbm~and \ref{eq.topological_diffusion_sde_ve} in closed-forms.
Both have the same mean $m_{t} = \Psi_t y_0$ with $\Psi_{t} = \exp(-cLt)$, % the transition matrix
and their covariances are given by: 
\begin{equation} \label{eq.transition_kernel_tsde_bm_ve}
   \hspace{-8pt} K_{t_1t_2} \hspace{-1pt} =  \hspace{-1pt}
   \begin{cases}
        \frac{g^2}{2c} \big[ \exp(-cL|t_1-t_2|) - \exp(-cL(t_1+t_2))\big] L^{-1}, & \text{for \tsheatbm} \\ 
        \sigma_{\min}^2  \ln \bigl(\frac{\sigma_{\max}}{\sigma_{\min}}\bigr) \exp(-cL (t_1+t_2)) \big[\exp(2A \min\{t_1,t_2\}) - I\big] A^{-1}, & \text{for \ref{eq.topological_diffusion_sde_ve}}
   \end{cases}
\end{equation}
with $A = \ln \big(\frac{\sigma_{\max}}{\sigma_{\min}}\big) I + c L$. 
If $L$ is singular, we use a perturbed $L+\epsilon I$ for a small $\epsilon>0$.
We detail the derivations in \cref{app.sec.stochastic_dynamics}.
These expressions allow for tractable solutions for \ref{eq.topological_sbp} and, more importantly, facilitate the construction of \tsb-based learning models, which we will discuss later.

\subsection{Towards an Optimal Solution of~\ref{eq.topological_sbp}}\label{subsec:optimal_solution}
To solve the classical \ref{eq.classical_sbp}, early mathematical treatments \citep{fortet1940resolution,beurling1960automorphism,jamison1975markov,follmer1988random} lead to a \emph{Schrödinger system} characterizing the SB optimality. 
Similarly, by \ref{eq.disintegration_formula}, we can convert the \ref{eq.topological_sbp} to a static problem over the joint measure $\sP_{01}$ of the initial and final states, instead of the full path measure $\sP$
\begin{align} 
   \min_{} D_{\kldiv}(\sP_{01} \, \Vert \, \sQ_{\gT01}), \quad \text{s.t. } \sP_{01}\in \gP(\R^n\times\R^n ), \sP_{0}=\nu_0, \sP_{1}=\nu_1 \ctag{\tsbp\textsubscript{static}} \label{eq.static_tsbp}
\end{align}
where $\sQ_{\gT01}$ is the joint measure of the reference process $Y$ at $t=0$ and $1$.
The \ref{eq.static_tsbp} only concerns at the boundary times, unlike the (\emph{dynamic}) \ref{eq.topological_sbp}. 
Using Lagrange multipliers for the linear constraints above, we can arrive at a \emph{Schrödinger System} that is instead \emph{driven by topological dynamics} (see \cref{app.sec.tsbp_optimality}), differing from the classical case \citep{jamison1975markov,leonard_survey_2014}.
This can be also interpreted through the equivalent E-OT formulation of \ref{eq.static_tsbp}:
\begin{align*} \label{eq.tsbp_entropic_ot}
   \min_{\sP_{01}} \int_{\R^n\times\R^n} \frac{1}{2} \lVert y_1 - \Psi_1 y_0 - \xi_1 \rVert^2_{K_{11}^{-1}} \diff {\sP}_{01}({y}_0,{y}_1) + \int_{\R^n\times\R^n} \log({\sP}_{01})   \diff {\sP}_{01} \ctag{$\gT$E-OT} %\log({\sP}_{01}(y_0,y_1))   \diff {\sP}_{01}(y_0,y_1) \ctag{$\gT$E-OT}
\end{align*}
where the transport cost is linked to the \ref{eq.sde_topological} as a $K_{11}^{-1}$-weighted norm of the difference $y_1-m_1$.

On the other hand, this system could \emph{also} be derived from the SOC view
which makes more apparent connections to machine learning methods.
Building on the variational formulations of the classical \ref{eq.classical_sbp} by \citet{dai1991stochastic,pavon1991free},
\citet{caluya2021wasserstein} extended the analysis to the case with a general nonlinear reference process and derived the corresponding optimality condition.
% we arrive at the SOC formulation of \ref{eq.topological_sbp}.
As it is convenient to arrive at an SOC formulation for the \ref{eq.topological_sbp} (see \cref{app.sec.tsbp_optimality}), we readily obtain the following optimality.
\begin{proposition}[\ref{eq.topological_sbp} optimality; \citet{caluya2021wasserstein,chen2021likelihood}]\label{prop.topological_sbp_optimality}
   The optimal solution $\sP$ of \ref{eq.topological_sbp} can be expressed as the path measure of the following forward \cref{eq.forward_sde}, or equivalently, backward \cref{eq.backward_sde}, $\gT$SDE: 
   % \vspace{-10pt}
   \begin{subequations} \label{eq.forward_backward_sde}
      \begin{equation}
         \diff X_t = [f_t + g_t Z_t]\diff t + g_t \diff W_t, \quad X_0 \sim \nu_0, Z_t\equiv g_t\nabla\log \varphi_t(X_t) \label{eq.forward_sde}
      \end{equation}
      \begin{equation}
         \diff X_t = [f_t - g_t \hat{Z}_t]\diff t + g_t \diff W_t, \quad X_1 \sim \nu_1, \hat{Z}_t\equiv g_t \nabla\log \hat{\varphi}_t(X_t) \label{eq.backward_sde}
      \end{equation}
   \end{subequations}
   where \cref{eq.forward_sde} runs forward and \cref{eq.backward_sde} runs backward with a backward Wiener process. Here, $\varphi_t\equiv \varphi_t(X_t)$ and $\hat{\varphi}_t\equiv\hat{\varphi}_t(X_t)$ 
   satisfy a pair of PDEs system (forward-backward Kolmogorov equations).
   Using nonlinear Feynman-Kac formula (or applying Itô's formula 
   on $\log\varphi_t$ and $\log\hat{\varphi}_t$), this PDE system admits 
   the SDEs 
   \begin{equation}\label{eq.fb_sde_nonlinear_fk}
      \diff \log\varphi_t = \frac{1}{2} \lVert {Z}_t \rVert^2 \diff t + Z_t^\top \diff W_t,  \quad 
      \diff \log\hat{\varphi}_t = \biggl( \frac{1}{2} \lVert \hat{Z}_t \rVert^2  + \nabla\cdot (g_t\hat{Z}_t - f_t) + \hat{Z}_t^\top Z_t \biggr) \diff t + \hat{Z}_t^\top \diff W_t
   \end{equation} 
   Then, the optimal path measure has the time-marginal 
      $\sP_t = \varphi_t(X_t)\hat{\varphi}_t(X_t)=\sP_t^{\cref{eq.forward_sde}} = \sP_t^{\cref{eq.backward_sde}}$.
\end{proposition}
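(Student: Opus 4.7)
The plan is to adapt the stochastic-optimal-control (SOC) derivation of classical SB optimality, as in \citet{caluya2021wasserstein,chen2021likelihood}, to the topological reference process. The starting point is the variational/SOC reformulation of \ref{eq.topological_sbp}: by Girsanov's theorem applied to the reference \ref{eq.sde_topological}, any candidate $\sP$ absolutely continuous with respect to $\sQ_\gT$ can be written as the law of $\diff X_t = (f_t + g_t u_t)\diff t + g_t \diff W_t$ for some adapted control $u_t$, and then $D_{\kldiv}(\sP\Vert \sQ_\gT) = \tfrac12\,\E \int_0^1 \lVert u_t \rVert^2 \diff t$. The \ref{eq.topological_sbp} thus becomes the SOC problem of minimising this quadratic cost subject to the marginal constraints $X_0\sim\nu_0$ and $X_1\sim\nu_1$.

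First I would invoke the dynamic-programming principle. The value function $V_t(x)$ satisfies the Hamilton--Jacobi--Bellman equation $\partial_t V_t + f_t^\top\nabla V_t + \tfrac{g_t^2}{2}\Delta V_t - \tfrac{g_t^2}{2}\lVert\nabla V_t\rVert^2 = 0$ with optimal feedback $u_t^\star = -g_t\nabla V_t$. Applying the Hopf--Cole transform $\hat\varphi_t := \exp(-V_t)$ linearises this into the backward Kolmogorov equation $\partial_t\hat\varphi_t + f_t^\top\nabla\hat\varphi_t + \tfrac{g_t^2}{2}\Delta\hat\varphi_t = 0$. Factorising the Fokker--Planck density of the optimal process as $\sP_t = \varphi_t\hat\varphi_t$ then forces $\varphi_t$ to solve the adjoint forward PDE $\partial_t\varphi_t = -\nabla\cdot(f_t\varphi_t) + \tfrac{g_t^2}{2}\Delta\varphi_t$, with the boundary couplings $\varphi_0\hat\varphi_0 = \nu_0$ and $\varphi_1\hat\varphi_1 = \nu_1$. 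This is the Schrödinger system driven by the topological drift $f_t = H_t(L)X_t + \alpha_t$.

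Substituting $u^\star = g_t\nabla\log\hat\varphi_t$ back into the controlled dynamics produces the backward \cref{eq.backward_sde}, and the corresponding forward \cref{eq.forward_sde} follows from the Nelson--Anderson reverse-time construction applied to the marginal factorisation $\sP_t = \varphi_t\hat\varphi_t$, yielding $Z_t = g_t\nabla\log\varphi_t$. To obtain \cref{eq.fb_sde_nonlinear_fk}, I would apply Itô's formula to $\log\varphi_t(X_t)$ along \cref{eq.forward_sde}; the drift contribution collapses dramatically once $\partial_t\log\varphi_t$ is eliminated using the forward PDE, leaving only $\tfrac12\lVert Z_t\rVert^2\diff t + Z_t^\top\diff W_t$. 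A parallel computation for $\log\hat\varphi_t$ along \cref{eq.backward_sde}, using the backward PDE and the reverse-time Itô formula, retains the divergence correction $\nabla\cdot(g_t\hat Z_t - f_t)$ and the cross term $\hat Z_t^\top Z_t$; equivalently, \cref{eq.fb_sde_nonlinear_fk} is the nonlinear Feynman--Kac representation of the Schrödinger PDE pair.

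The main obstacle is not the Itô calculus but the justification of Girsanov and time reversal for a \emph{vector-valued} topological drift $H_t(L)X_t + \alpha_t$, rather than the scalar linear case studied in prior SB work such as \citet{bunne_schrodinger_2023}. One must verify that $\nabla\log\varphi_t$ and $\nabla\log\hat\varphi_t$ satisfy the Novikov-type integrability required for Girsanov and Anderson's reverse-time SDE. These follow from the assumed locally Lipschitz smoothness of $f_t, g_t$ together with the Gaussian transition kernel structure already established in \cref{prop.topological_sde_solution_and_conditional_process}, which guarantees smooth positive marginal densities. Once this regularity is in place, the topology enters only through the specific algebraic form of $f_t$ via the polynomial $H_t(L)$, and the rest of the derivation is structurally identical to the classical nonlinear-reference case of \citet{caluya2021wasserstein}.
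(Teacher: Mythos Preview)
Your approach is essentially the paper's: Girsanov reduces \ref{eq.topological_sbp} to an SOC problem, the first-order optimality conditions yield a coupled HJB--FPK system, Hopf--Cole linearises it into the Schr\"odinger pair, substitution of the optimal control plus time reversal gives the FB-$\gT$SDEs, and It\^o on $\log\varphi_t,\log\hat\varphi_t$ produces \cref{eq.fb_sde_nonlinear_fk}. The paper's proof (of the restated \cref{thm:tsbp_optimality_condition}) is in fact terser than yours and simply cites \citet{caluya2021wasserstein} for the HJB/Hopf--Cole step and \citet{anderson_reverse-time_1982} for time reversal.

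One labeling issue: you have swapped the roles of $\varphi_t$ and $\hat\varphi_t$ relative to the statement. In the paper's convention (cf.\ \cref{eq.tsbp_optimality_pde}), the Hopf--Cole image of the value function is $\varphi_t$, which satisfies the \emph{backward} Kolmogorov equation, and substituting $b^\star_t = g_t^2\nabla\log\varphi_t$ into the controlled dynamics yields the \emph{forward} SDE \cref{eq.forward_sde}; the backward SDE \cref{eq.backward_sde} then comes from time reversal, with $\hat\varphi_t$ solving the forward FPK equation. Your text assigns the backward-Kolmogorov solution to $\hat\varphi_t$ and claims substitution gives the backward SDE, which is inverted on both counts. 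If you carried the computation through with your labeling, the two equations in \cref{eq.fb_sde_nonlinear_fk} would come out interchanged and the divergence term would land on the wrong one. This is a bookkeeping fix, not a gap in the argument.
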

This optimality condition adapts the result from \citet{chen2021likelihood} for \ref{eq.topological_sbp}.
From the forward-backward $\gT$SDEs (FB-$\gT$SDEs in \cref{eq.forward_backward_sde}), we see that the optimal $Z_t$ guides the forward \ref{eq.sde_topological} to the final $\nu_1$, and likewise $\hat{Z}_t$ adjusts the reverse \ref{eq.sde_topological} to return to the initial $\nu_0$.
% While solving the \ref{eq.topological_sbp} is still highly nontrivial, 
While solving the system \cref{eq.fb_sde_nonlinear_fk} is still highly nontrivial,
we \textbf{highlight} that 
the FB-$\gT$SDEs \cref{eq.forward_backward_sde} and \cref{eq.fb_sde_nonlinear_fk} pave a way for constructing generative models and efficient training algorithms, as demonstrated by the recent works, to name a few, \citep{pavon_data-driven_2021,vargas_solving_2021,de2021diffusion,chen2021likelihood}. 
We further discuss in detail how to build such models for topological signals in \cref{sec:topological_signal_generation}.

\section{Gaussian Topological SBP} \label{sec.gaussian_tsbp}
In this section, we consider the special case of \ref{eq.topological_sbp} where the initial and final measures are Gaussians, to which we refer as the \textcolor{violet}{\emph{Gaussian topological SBP}} (\ref{eq.gaussian_tsbp}).
We show that there exists a closed-form \gtsb~by following the idea in \citet{bunne_schrodinger_2023}, which focuses on a limited class of reference SDEs with a scalar coefficient in the drift, instead of a convolution operator $H_t(L)$.
We establish the first closed-form expression on the \gtsb~in the following theorem.
\begin{restatable}[]{theorem}{thmGaussianTsbpSolution}\label{thm:gaussian_tsbp_solution}
   Denote by $\sP$ the solution to \gtsbp~with $\nu_0=\gN(\mu_0,\Sigma_0)$ and $\nu_1=\gN(\mu_1,\Sigma_1)$. Then, 
   $\sP$ is the path measure of a \emph{Markov Gaussian process} whose marginal $X_t\sim\gN(\mu_t,\Sigma_t)$ admits an expression in terms of the initial and final variables, $X_0,X_1$, as follows  
   \begin{equation}\label{eq.stochastic_interpolant_expression}
      X_t = \bar{R}_t X_0 + R_t X_1 + \xi_t - R_t \xi_1 + \Gamma_t Z  
   \end{equation}
   where $Z\sim\gN(0,I)$ is standard Gaussian, independent of $(X_0,X_1)$, and 
      \begin{equation}
         R_t = K_{t1}K_{11}^{-1}, \quad \bar{R}_t = \Psi_t - R_t \Psi_1, \quad 
         \Gamma_t:=\Cov[Y_t|(Y_0,Y_1)] = K_{tt} - K_{t1}K_{11}^{-1}K_{1t}.
      \end{equation}
\end{restatable}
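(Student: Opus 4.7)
The plan is to exploit two classical facts about Schr\"odinger bridges and specialize them to the Gaussian topological setting. First, any solution $\sP$ of \ref{eq.topological_sbp} \emph{disintegrates} as
\[
\sP \;=\; \int \sQ_{\gT|01}(\,\cdot\,|\,x_0,x_1)\,\diff\sP_{01}(x_0,x_1),
\]
where $\sQ_{\gT|01}$ is the reference bridge pinned at the two endpoints and $\sP_{01}$ is the optimal endpoint coupling solving \ref{eq.static_tsbp} (equivalently \ref{eq.tsbp_entropic_ot}). Consequently, the conditional law of $X_t$ given $(X_0,X_1)$ under $\sP$ coincides with that of $Y_t$ given $(Y_0,Y_1)$ under $\sQ_\gT$. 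All topology-specific information is thus funneled through the Gaussian transition statistics of \cref{prop.topological_sde_solution_and_conditional_process}.

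The next step is to compute the reference bridge law. Since \ref{eq.sde_topological} is linear with Gaussian initialization, the triple $(Y_0, Y_t, Y_1)$ is jointly Gaussian, with conditional means $\Psi_t y_0 + \xi_t$ and $\Psi_1 y_0 + \xi_1$ given $Y_0=y_0$, and cross-covariances $K_{t_1 t_2}$ from \cref{prop.topological_sde_solution_and_conditional_process}. A block Gaussian-conditioning argument then yields
\[
\E[Y_t \mid Y_0, Y_1] \;=\; \Psi_t Y_0 + \xi_t + K_{t1}K_{11}^{-1}\bigl(Y_1 - \Psi_1 Y_0 - \xi_1\bigr) \;=\; \bar R_t Y_0 + R_t Y_1 + \xi_t - R_t \xi_1,
\]
with $\Cov[Y_t \mid Y_0, Y_1] = K_{tt} - K_{t1}K_{11}^{-1}K_{1t} = \Gamma_t$, matching the definitions of $R_t, \bar R_t, \Gamma_t$ in the theorem.

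It remains to argue that the optimal endpoint coupling $\sP_{01}^\star$ is itself Gaussian, so that the full triple $(X_0,X_t,X_1)$ is jointly Gaussian and $X_t$ admits the representation \eqref{eq.stochastic_interpolant_expression} with $Z\sim\gN(0,I)$ independent of $(X_0,X_1)$ and $\Gamma_t Z$ understood as a Gaussian increment of covariance $\Gamma_t$ (i.e., any matrix square root $\Gamma_t^{1/2}$ may be substituted if read literally). Because the reference joint $\sQ_{\gT 01}$ is Gaussian and both prescribed marginals $\nu_0,\nu_1$ are Gaussian, the \ref{eq.tsbp_entropic_ot} objective restricted to the Gaussian cone is a strictly convex finite-dimensional program; a Gaussian minimizer is then shown to be globally optimal since KL-projections from a Gaussian reference onto linear (marginal) constraint sets remain in the Gaussian exponential family. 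Markov Gaussianity of $X$ finally follows from joint Gaussianity of all finite-dimensional marginals together with the classical Markov property of Schr\"odinger bridges.

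The main obstacle is this last step: the Gaussianity argument for $\sP_{01}^\star$. While \citet{bunne_schrodinger_2023} handle the scalar linear reference directly, our setting involves a matrix polynomial $H_t(L)$, so $\Psi_t$ and $K_{t_1 t_2}$ are non-scalar. The cleanest route I foresee is to diagonalize in the eigenbasis of $L$, which commutes with each $H_t(L)$, $\Psi_t$, and $K_{t_1 t_2}$, thereby reducing the matrix-valued entropic OT problem to a decoupled family of scalar entropic OTs on eigenmodes, where Bunne et al.'s closed form applies; reassembling yields a Gaussian $\sP_{01}^\star$ with explicit mean and covariance. Crucially, formula \eqref{eq.stochastic_interpolant_expression} is a stochastic representation that depends on $\sP_{01}^\star$ only through its being Gaussian with marginals $\nu_0,\nu_1$, so the theorem follows once this diagonalization is in place.
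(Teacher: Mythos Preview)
Your disintegration argument and bridge computation are correct and match the paper exactly. The gap is in the step you flagged yourself: proving that $\sP_{01}^\star$ is Gaussian. Your proposed diagonalization in the eigenbasis of $L$ does \emph{not} decouple the static problem into scalar entropic OTs. While $\Psi_t$ and $K_{t_1t_2}$ are indeed matrix functions of $L$ and hence simultaneously diagonalizable, the prescribed endpoint covariances $\Sigma_0,\Sigma_1$ are arbitrary positive-definite matrices with no assumed relation to $L$; they need not commute with $L$ and will generically be full in that basis. The \ref{eq.tsbp_entropic_ot} cost couples the modes through $\Sigma_0,\Sigma_1$ in the marginal constraints, so you cannot split the optimization mode-by-mode.

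The paper sidesteps this entirely with a single affine change of variables, $\tilde Y_0 = K_{11}^{-1/2}(\Psi_1 Y_0+\xi_1)$ and $\tilde Y_1 = K_{11}^{-1/2}Y_1$, which turns the $K_{11}^{-1}$-weighted quadratic cost into the standard $\frac12\|\tilde y_1-\tilde y_0\|^2$ and pushes the marginals to new Gaussians $\tilde\nu_0,\tilde\nu_1$. This is now a \emph{classical} Gaussian \ref{eq.static_ot} with $\sigma=1$, and the closed-form Gaussian optimum of \citet{janati2020entropic} applies directly; the inverse transform then gives the Gaussian $\sP_{01}^\star$. No commutativity with $L$ is required. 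Your earlier exponential-family remark (``KL-projections from a Gaussian reference onto linear marginal constraints remain Gaussian'') is morally right but is not a proof on its own---it is precisely what Janati et al.\ establish, and the paper invokes their result rather than re-deriving it.
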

\begin{proof}{}
    We provide a sketch of the proof here, with the full derivations presented in \cref{app.sec.proof_gtsbp_solution}.
    \begin{enumerate}[wide=5pt,topsep=-1pt,itemsep=-1pt]
      \item By \ref{eq.disintegration_formula}, we first solve the reduced \emph{static} Gaussian \ref{eq.static_tsbp} (i.e., \ref{eq.tsbp_entropic_ot}).
      We can then convert the problem into a classical Gaussian \ref{eq.static_ot} via a change-of-variables. 
      The closed-form formula for the latter has been recently found by \citet[Theorem 1]{janati2020entropic}. 
      Via an inverse transform, we can then obtain the optimal coupling $\sP_{01}$ [i.e., the optimal \ref{proof.eq.static_solution}]. 
      \item In the disintegration of \gtsbp~to its static problem, the optimum is achieved when $\sP$~shares the \emph{bridge} with the reference $\sQ_\gT$ (i.e., $\sP$ is in the \emph{reciprocal class} of $\sQ_\gT$) \citep[Proposition 1]{follmer1988random,leonard_survey_2014}. 
      The \emph{$\sQ_\gT$-bridge}, $\sQ_\gT^{xy}=\sQ_\gT[\cdot|Y_0=x,Y_1=y]$, can be constructed using the conditional Gaussian formula and \cref{prop.topological_sde_solution_and_conditional_process}. 
      Upon this, together with the optimal $\sP_{01}$, we can construct the optimal $X_t$ and the marginal $\sP_t$.
  \end{enumerate}
  \vspace{-5mm}
\end{proof}
\vspace{-3mm}
At the first sight, the construction of optimal process $X$ in \cref{eq.stochastic_interpolant_expression} meets the recently proposed \emph{stochastic interpolant} framework by \citet[Definition 1]{albergo_stochastic_2023-1}, in that $X_{t=0}=X_0$ and $X_{t=1}=X_1$, and $\Gamma_0=\Gamma_1 = 0$.
Moreover, from \cref{eq.stochastic_interpolant_expression}, we can compute the marginal statistics $\sP_t$ in terms of its mean $\mu_t$ and covariance $\Sigma_t$ in closed-form as well, detailed in \cref {cor:gaussian_tsbp_solution}. 
In the following, we characterize the process $X$ under the optimal $\sP$ in terms of its Itô differential. 

\begin{restatable}[SDE representation]{theorem}{thmGaussianTsbpSolutionSDE}\label{thm:gaussian_tsbp_sde_solution}
      Under the optimal $\sP$, the process $X$ admits the SDE dynamics: 
      \begin{equation}\label{eq.gaussian_tsbp_sde_solution}
         \diff X_t = f_{\gT}(t, X_t; L) \diff t + g_t \diff W_t
         , \,\, \text{where } f_{\gT}(t,x;L) = S_t^\top \Sigma_t^{-1}(x-\mu_t) + \dot{\mu}_t 
      \end{equation}
      with $\mu_t,\Sigma_t$ the mean and covariance of $X_t$ [cf. \cref{cor:gaussian_tsbp_solution}] and we have     
      \begin{equation}\label{eq.other_variables}
         \begin{aligned}
            & S_t = P_t - Q_t^\top + H_t K_{tt} - K_{t1}K_{11}^{-1}\Upsilon_t^\top,  \\ 
         \end{aligned}
      \end{equation}
      with $ P_t  = (R_t\Sigma_1 + \bar{R}_t C) \dot{R}_t^\top , Q_t = - \dot{\bar{R}}_t (C R_t^\top + \Sigma_0 \bar{R}_t^\top), \Upsilon_t = H_t K_{t1} + g_t^2\Psi_t^{-1}\Psi_1^\top$, where $C$ is the covariance of $X_0,X_1$ in the optimal $\sP_{01}$. 
\end{restatable}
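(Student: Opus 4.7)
The plan is to extract the SDE directly from the explicit Gaussian-Markov structure of the optimal $\sP$ established in \cref{thm:gaussian_tsbp_solution}. Since $\sP$ has finite relative entropy with respect to $\sQ_\gT$, the two path measures are equivalent, and a standard Girsanov argument forces $X$ to satisfy an It\^o SDE with the \emph{same} diffusion $g_t$ as the reference $\gT$SDE. Because $X$ is a Markov Gaussian process with marginals $\gN(\mu_t,\Sigma_t)$, the drift must be affine in $X_t$ and hence of the form $b(t,x)=A_t(x-\mu_t)+\dot\mu_t$ for some matrix $A_t$. Differentiating the Gaussian conditional mean $\E[X_{t+h}\mid X_t]=\mu_{t+h}+\Cov[X_{t+h},X_t]\,\Sigma_t^{-1}(X_t-\mu_t)$ as $h\to 0^+$ yields the forward-drift identity $A_t\Sigma_t=\partial_s\Cov[X_s,X_t]|_{s=t^+}$. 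Transposing and setting $S_t:=\partial_s\Cov[X_t,X_s]|_{s=t^+}$ gives $A_t=S_t^\top\Sigma_t^{-1}$, matching the claimed form. The task thus reduces to computing $S_t$.

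Next I would compute $\Cov[X_t,X_s]$ under $\sP$. Writing $X_u=\bar R_u X_0+R_u X_1+\xi_u-R_u\xi_1+\epsilon_u$ as in \cref{eq.stochastic_interpolant_expression}, where $\epsilon_u:=X_u-\E[X_u\mid X_0,X_1]$ is the $\sQ_\gT$-bridge noise pinned at $t=0,1$, a Gaussian-conditioning argument combined with \cref{prop.topological_sde_solution_and_conditional_process} gives $\Cov[\epsilon_t,\epsilon_s]=K_{ts}-K_{t1}K_{11}^{-1}K_{1s}$. Since $\epsilon_\cdot$ is independent of $(X_0,X_1)$ and the optimal coupling contributes $\Cov[X_0,X_1]=C$, this yields
\[
\Cov[X_t,X_s] \;=\; \bar R_t\Sigma_0\bar R_s^\top + R_t\Sigma_1 R_s^\top + \bar R_t C R_s^\top + R_t C^\top\bar R_s^\top + K_{ts} - K_{t1}K_{11}^{-1}K_{1s}.
\]
Differentiating in $s$ at $s=t^+$, the first four terms collect into $P_t-Q_t^\top$ by the definitions of $P_t,Q_t$. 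For the remaining two, I would invoke the crucial fact that $\Psi_u$, $H_u$ and all $K_{t_1t_2}$ are matrix polynomials in the symmetric Laplacian-type operator $L$, hence pairwise commuting and symmetric. Writing $K_{ts}=K_{tt}\Psi_t^{-1}\Psi_s$ for $s>t$ and using $\dot\Psi_s=H_s\Psi_s$ gives $\partial_s K_{ts}|_{s=t^+}=H_tK_{tt}$; a similar one-sided derivative yields $\partial_s K_{1s}|_{s=t}=H_tK_{1t}+g_t^2\Psi_1\Psi_t^{-1}=\Upsilon_t^\top$. Substituting reproduces exactly \cref{eq.other_variables}.

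The main obstacle is the careful one-sided differentiation of $K_{ts}$ at the ``diagonal'' $s=t^+$, where the integrand involves $\min(s,t)$, together with the algebraic bookkeeping needed to fold the resulting pieces into the compact form $P_t-Q_t^\top+H_tK_{tt}-K_{t1}K_{11}^{-1}\Upsilon_t^\top$. Once symmetry and pairwise commutativity of all polynomials in $L$ are in hand, the remaining simplifications are routine; the Girsanov/Markov-Gaussian reduction makes the structural steps essentially forced.
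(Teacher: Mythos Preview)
Your proposal is correct and arrives at exactly the same computation as the paper: both identify the drift by expanding $\Cov[X_t,X_{t+h}]$ to first order in $h$, computing the one-sided derivatives $\partial_{t_2}K_{t_1t_2}$ on and off the diagonal, and collecting terms into $P_t-Q_t^\top+H_tK_{tt}-K_{t1}K_{11}^{-1}\Upsilon_t^\top$. The only difference is in the wrapper. The paper works with the infinitesimal generator acting on a general test function $\phi$, evaluates $\E[\phi(t+h,X_{t+h})\mid X_t=x]$ via the Gaussian-integral identity (their \cref{app.lemma.central_identity}), and then matches against the known It\^o generator; this simultaneously confirms the diffusion term by checking $\check\Sigma_{t+h}=hg_t^2I+o(h)$. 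You instead fix the diffusion via Girsanov/absolute continuity and read off the drift directly from $\lim_{h\to 0^+}(\E[X_{t+h}\mid X_t=x]-x)/h$, exploiting that for a Markov Gaussian process this conditional mean is affine. Your route is a touch more elementary (no need for \cref{app.lemma.central_identity}) and makes the structural reduction to $A_t=S_t^\top\Sigma_t^{-1}$ transparent; the paper's route has the minor virtue of independently verifying the diffusion coefficient rather than importing it. One small wording fix: finite $D_{\kldiv}(\sP\Vert\sQ_\gT)$ gives absolute continuity, not equivalence, but that is all Girsanov needs.
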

\begin{proof} 
   We detail the proof 
   in \cref{app.sec.proof_gtsbp_solution} and outline a sketch here.
   From \citet{leonard_survey_2014,caluya2021wasserstein} [cf. \cref{thm:tsbp_optimality_condition}], the optimal $\sP$ is the law of an SDE in the class of \cref{eq.gaussian_tsbp_sde_solution}. 
   To determine the drift, we first compute the associated \emph{infinitesimal generator} by definition for some test function. 
   Since the generator for an Itô SDE is \emph{known} (dependent on the drift) \citep[Eq. 5.9]{sarkka_applied_2019}, we can then match the two expressions and find a closed-form for the drift term. 
\end{proof}

\cref{thm:gaussian_tsbp_solution,thm:gaussian_tsbp_sde_solution} characterize the optimal $\sP$ of the \ref{eq.gaussian_tsbp} from different views. 
While the stochastic interpolant formula is intuitive and straightforward, it is natural to look for the associated SDE for a Markov measure. 
Despite the packed variables, both results [cf. \cref{eq.stochastic_interpolant_expression,eq.gaussian_tsbp_sde_solution}] fundamentally depend on the transition matrix $\Psi_t$ and $\xi_t,K_{t_1t_2}$ [cf. \cref{prop.topological_sde_solution_and_conditional_process}], which has closed-forms \cref{eq.transition_kernel_tsde_bm_ve} for \tsheatbm~and \tsheatve.
From a broader perspective, \cref{thm:gaussian_tsbp_solution,thm:gaussian_tsbp_sde_solution} extended the existing results of \citet{bunne_schrodinger_2023}, where the reference process has a limited drift $ cY_t + \alpha_t$ for some \emph{scalar} $c$.
While \citet{chen2016optimal} aimed to solve for a linear drift with a matrix coefficient, their results lead to the solution of a matrix Riccati equation, which is computationally expensive. 

\textbf{Solution complexity.} 
While the variables involved in \cref{eq.stochastic_interpolant_expression,eq.gaussian_tsbp_sde_solution} involve many matrix operations, we remark that  
(i) the \emph{underlying} $\Psi_t$ is a matrix function of $L$ and can be computed efficiently \citep{higham2008functions}.
Given the eigen-decomposition $L=U\Lambda U^\top$, 
denote by $\tilde{h}_{k}^{t,s} = \int_s^t h_{k,\tau} \diff \tau$ the integral of the scalar coefficients in $H_t$ [cf. \cref{eq.topological_drift}], then we have $\Psi_{ts} = U \exp\big( \sum_{k=0}^K \tilde{h}_{k}^{t,s} \Lambda^k \big) U^\top$, where the matrix exponential can be directly computed elementwise on each diagonal element of $\Lambda$.
(ii) The other terms depending on $\Psi_t$ can be computed similarly in the eigenspectrum of $L$.

%%%%%%%%%%%%%%%%%%%%%%%%%%%%%%%%%%%%%%%%%%%%%%%%%%%%%%%%%%%%
%%%%%%%%%%%%%%%%%%%%%%%%%%%%%%%%%%%%%%%%%%%%%%%%%%%%%%%%%%%%
%%%%%%%%%%%%%%%%%%%%%%%%%%%%%%%%%%%%%%%%%%%%%%%%%%%%%%%%%%%%
%%%%%%%%%%%%%%%%%%%%%%%%%%%%%%%%%%%%%%%%%%%%%%%%%%%%%%%%%%%%
% \newpage
\section{From \texorpdfstring{\ref{eq.topological_sbp}}~ to Topological Signal Generative Models}\label{sec:topological_signal_generation}

The recent SB-based generative modeling framework primarily relies on the \emph{learnable} parameterizations of the $(Z_t,\hat{Z}_t)$ pair (also viewed as the FB policies) in the FB-SDEs and a trainable objective that approximates the \ref{eq.classical_sbp}. 
Specifically, \citet{vargas_solving_2021} and \citet{de2021diffusion} use GPs and neural networks, respectively, to parameterize the policies, and alternatively train them using iterative proportional fitting (IPF) to solve the \emph{half}-bridge problem. 
On the other hand, \citet[]{chen2021likelihood} derived a \emph{likelihood} based on the SB optimality condition, generalizing the \emph{score matching} framework \citep{song2020score}. 
Upon the proposed \ref{eq.topological_sbp}, along with the above theoretical results, we now discuss how to build generative models for topological signals using the existing framework designed for Euclidean domains. 

\textbf{\tsb-based model.} 
% Following the recent SB framework in Euclidean domains, we focus on how to adapt them for topological signal generative modeling based on the previous theoretical results. 
Consider the matching task:
\indent \emph{In some topological domain $\gT$, given two sets of signal samples 
% $\{x_0^i\}$ and $\{x_1^i\}$ indexed by $i$n
following initial and final distributions ${\nu}_0, {\nu}_1$ on $\gT$, we aim to learn a Topological Schrödinger Bridge between the two distributions.}
From \cref{prop.topological_sbp_optimality}, the optimal \tsb~follows the \fbtsde~in \cref{eq.forward_backward_sde}.
Moreover, given a path sampled from the forward SDE \cref{eq.forward_sde} with an initial signal $x_0$, one can obtain an unbiased estimation of the \emph{log-likelihood} $\gL(x_0)$ of the \tsb~model driven by the optimal policies by using \cref{eq.fb_sde_nonlinear_fk} \citep[Theorem 4]{chen2021likelihood}. 
Similarly, the log-likelihood $\gL(x_1)$, given a final sample $x_1$, can be found.
This allows us to build a \tsb-based model for topological signals, following the ideas of \citet{de2021diffusion,chen2021likelihood}.

% Consider the \ref{eq.topological_sbp}.
We first parameterize the policies, $Z_t$ and $\hat{Z}_t$, by two learnable models $Z_t^\theta \equiv Z(t,x; \theta)$ and $\hat{Z}_t^{\hat{\theta}} \equiv \hat{Z} (t, x; {\hat{\theta}} )$ with parameters $\theta$ and $\hat{\theta}$, resulting in the parameterized \fbtsde.
Then, we can perform a \emph{likelihood training} by minimizing the following loss functions in an alternative fashion at initial and final signal samples $x_0$ and $x_1$
\begin{subequations}\label{eq.likelihood_training_objective}
    \begin{equation}
        l(x_0;\hat{\theta}) = \int_0^1 \E_{X_t\sim \cref{eq.forward_sde}} \bigg[ \frac{1}{2} \lVert \hat{Z}_t^{\hat{\theta}} \rVert^2  + g_t \nabla\cdot\hat{Z}_t^{\hat{\theta}} +  Z_t^{\theta^\top} \hat{Z}_t^{\hat{\theta}} \Big| X_0 = x_0 \bigg] \diff t , \label{eq.likelihood_training_objective_backward}
    \end{equation} 
    \vspace{-2mm}
    \begin{equation}
        l(x_1;\theta) = \int_0^1 \E_{X_t\sim \cref{eq.backward_sde}} \bigg[ \frac{1}{2} \lVert Z_t^\theta \rVert^2 + g_t \nabla\cdot Z_t^\theta +  \hat{Z}_t^{\hat{\theta}^\top} Z_t^\theta \Big| X_1 = x_1 \bigg] \diff t , \label{eq.likelihood_training_objective_forward}
    \end{equation}
\end{subequations}
which are, respectively, the upper bounds of the {negative log-likelihoods} (after dropping the unrelated terms)
of the signal samples $x_0$ and $x_1$ given paths sampled from the FB-$\gT$SDEs. 

\textbf{Other choice of reference \ref{eq.sde_topological}.} 
In this work, we mainly consider reference dynamics following \tsheatbm, \ref{eq.topological_diffusion_sde_ve} and \ref{eq.topological_diffusion_sde_vp}.
For the dynamics involved with Hodge Laplacians in a $\textSC$, we may further allow \emph{heterogeneous} diffusion based on the down and up parts of the Laplacian.
\citet{bunne_schrodinger_2023} proposed to better initialize the SB model using the closed-form Gaussian SB. 
Likewise, we can consider the \gtsb~in \cref{eq.gaussian_tsbp_sde_solution} as a stronger prior process, which yet requires a GP approximation from signal samples. 
We also consider \emph{fractional Laplacian} for some cases to enable a more efficient exploration of the network \citep{riascos2014fractional} due to its non-local nature. 

% \paragraph{Topological neural networks} 
\textbf{Topological neural networks (TNNs).} 
While \citet{de2021diffusion,chen2021likelihood} applied convolutional neural networks for the Euclidean SBP, we naturally consider parameterizing the policies using the emergent TNNs. 
For node signals, we could consider graph convolution networks (GCNs) \citep{kipf2016semi}; and likewise, for edge flows in a $\textSC$, simplicial neural networks (SNNs) \citep{roddenberry2021principled}. %with the linear convolution $x \leftarrow L_u x w_2 +  $ 
These \emph{topology-aware} models perform convolutional learning upon the topological structure, more efficient with less parameters and better in performance. 

\textbf{Complexity.}
Like standard SB models, \tsb-based models also require simulations of the FB-$\gT$SDEs.
The key difference is that these models operate over topological networks where the drift \cref{eq.topological_drift} involves a matrix-vector multiplication $H_tY_t$.
However, this is essentially a \emph{recursive} iteration of $LY_t$, which is efficient due to the \emph{typically sparse} structure of $L$, reflecting the underlying topological sparsity. 
Moreover, our TNN-parameterized policies are also efficient for the same reason.

\textbf{Connection to other models.} 
As discussed in \citet{chen2021likelihood}, in the special case of  $Z_t\equiv 0$ and $\hat{Z}_t$ as the \emph{score function} (scaled by $g_t$), the likelihood of SB models 
reduces to
that of the score-based models \citep{song2020score} when $\nu_1$ is a simple Gaussian and the forward process is designed to reach $\nu_1$.
Furthermore, if the reference process is poorly designed.
SB models can still guide the process to the target distribution through these learnable policies, thus generalizing score-based models.
On the other hand, 
for \fbtsde, we can also obtain \emph{probability flow ODEs} \citep{chen2018neural,song2020score} which share the same time-marginals and likelihoods, 
allowing for exact likelihood evaluation of the model.
Training through the likelihood of these flow ODEs naturally links to flow-based models.
While there are no direct score-based or flow-based models for topological signals, the above discussions apply to \tsb-based models.
We refer to \cref{app.sec.topological_signal_generation} for more details 
where we show how the variants of these models including the \emph{diffusion bridge} models \citep{zhou2023denoising} for topological processes can be constructed.

%%%%%%%%%%%%%%%%%%%%%%%%%%%%%%%%%%%%%%%%%%%%%%%%%%%%%%%%%%%%

\vspace{-3mm}
\section{Experiments}\label{sec:experiments}
\vspace{-2mm}

\begin{wrapfigure}[9]{r}{0.3\textwidth} 
    \vspace{-5mm}
    \includegraphics[width=0.3\textwidth]{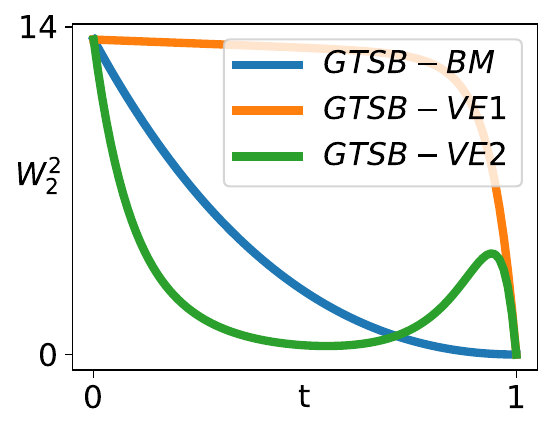}
    \vspace{-5mm}
    \vspace{-3mm}
    \label{fig.synthetic_cov_distance}
\end{wrapfigure}
First, we validate the theoretical results on \gtsb~using the synthetic graph in \cref{fig:diffusion_demonstration}. Here, we aim to bridge a zero-mean graph \Matern~GP $\nu_0$ with $\Sigma_0=(I+L)^{-1.5}$ and a diffusion GP $\nu_1$ with $\Sigma_1=\exp(-20L)$. 
Using the \ref{proof.eq.topological_diffusion_sde_bm} and \ref{eq.topological_diffusion_sde_ve} reference dynamics, we obtain the closed-form $X_t$ in \cref{eq.stochastic_interpolant_expression}, from which we further compute the covariance $\Sigma_t$ [cf. \cref{cor:gaussian_tsbp_solution}].
We measure the \emph{Bures-Wasserstein} distance between $\Sigma_t$ and $\Sigma_1$. 
From the 
\emph{right-hand-side} 
figure, we see that both bridges reach the target distribution. 
The bridges exhibit distinct behaviors depending on the reference dynamics, as demonstrated by the disparate curves for \ref{eq.topological_diffusion_sde_ve} with $c=0.01$ and $10$.
This highlights the flexibility of \tsb-models in exploring a large space of topological bridges. 

We then focus on evaluating \tsb-based models for topological signal \emph{generation} ($\nu_1$ is a Gaussian noise) and \emph{matching} (general $\nu_1$) in different applications, with the goal of investigating the question:
% (i) 
\textbf{\emph{whether \tsb-based models are 
% benefitial 
beneficial
for these tasks compared to the standard SB-based models?}}
For this goal, we consider as the baseline SB-based models in Euclidean domains  which use BM, VE and VP as reference dynamics \citep{chen2021likelihood}, labeled as \texttt{SB-BM}, \texttt{SB-VE} and \texttt{SB-VP}, respectively. 
We consider \tsb-based models using \ref{proof.eq.topological_diffusion_sde_bm}, \ref{eq.topological_diffusion_sde_ve} and \ref{eq.topological_diffusion_sde_vp} as references, labeled as \texttt{TSB-BM}, \texttt{TSB-VE} and \texttt{TSB-VP}, respectively.
We refer to \cref{app.sec.experiments} for the experimental details and additional results, as well as complexity analyses in \cref{app.sec:computational_complexity}.

\textbf{Topological signal matching.}
We first consider matching two sets of fMRI brain signals from the Human Connectome Project \citep{van2013wu}, which represent the liberal (with high energy, as the initial) and aligned (with low energy, as the final) brain activities, respectively. 
We use the recommended brain graph \citep{glasser2016multi} that connects 360 parcelled brain regions with edge weights denoting the connection strength. 
From \cref{fig:brain_signal_predictions}, we see that a \texttt{TSB-VE} model learns to reach at a final state with low energy indicating the aligned activity, whereas \texttt{SB-VE} fails.

We then consider the single-cell embryoid body data that describes cell differentiation over 5 timepoints  \citep{moon2019visualizing}. 
We follow the preprocessing from \citet{tong2023processed,tong2024improving,tong2023simulation}.
We aim to transport the initial observations to the final state.
Our method relies on the affinity graph constructed from the \emph{entire set of observations} ($\sim$18k). 
We define two normalized indicator functions as the boundary distributions, which specify the nodes corresponding to the data observed at the first and last timepoints.
\cref{fig:single_cell_predictions} shows the two-dim \texttt{phate} embeddings of the groundtruth and predicted data points using \texttt{TSB-BM} and \texttt{SB-BM} models.
Here, \texttt{SB-BM} gives very noisy predictions, especially for intermediate ones, even when trained on the full dataset (see \cref{app.tab:single_cell_intermediate}).

Edge flows have been used to model vector fields upon a discrete Hodge Laplacian estimate of the manifold Helmholtzian \citep{chen2021helmholtzian}.
Following the setup there, we consider the edge-based ocean current matching in a $\textSC$ ($\sim$20k edges).  
With an edge GP, learned by \citet{yang2024hodge} from drifter data, as the initial distribution modeling the currents, we synthetize a curl-free edge GP as the final one, modeling different behaviors of currents. 
From \cref{fig:ocean_currents_tsb_sb}, we see that \texttt{SB-BM} fails to reach the final curl-free state, while \texttt{TSB-BM} becomes more divergent, ultimately closer to the target.

For these matching tasks, we evaluate the forward final predictions using 1-Wasserstein distances in \cref{tab:wasserstein_results_across_datasets}, showing the consistent superiority of \tsb-based models over SB ones.
We reasonably argue that this difference is due to the improper reference in SB-based models, which overlooks the underlying topology.
This highlights the role of topology using \tsb-based models in these tasks.

\textbf{Generative modeling.}
We model the magnitudes of yearly seismic events from IRIS as node signals
on a mesh graph of 576 nodes based on the geodesic distance between the vertices of an icosahedral triangulated earth surface \citep{moresi2019stripy}. 
We also consider the traffic flow from PeMSD4 dataset modeled as edge flows on a $\textSC$ with 340 edges \citep{chen2022time}. 
From \cref{tab:wasserstein_results_across_datasets}, we see that \tsb-based models consistently outperform SB-based models also for signal generation tasks, highlighting the importance of topology-aware reference processes.

\begin{figure}[t!]
    \begin{minipage}{0.45\linewidth}
        % \vspace{-5mm}
        \vspace{-20pt}
        \hspace{-10pt}
        \makebox[1\linewidth][c]{
        \includegraphics[width=0.32\linewidth]{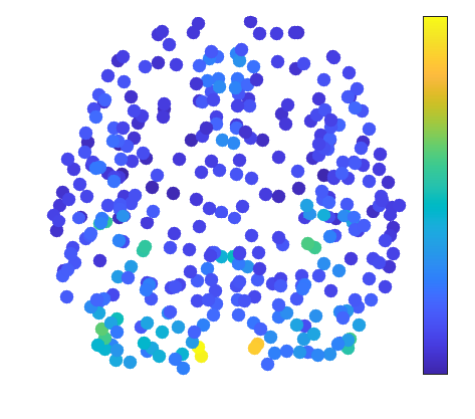}
        \includegraphics[width=0.32\linewidth]{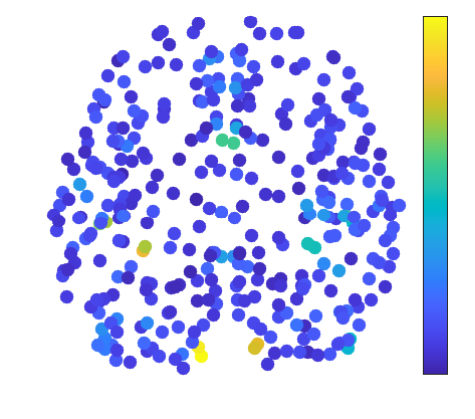}
        \includegraphics[width=0.32\linewidth]{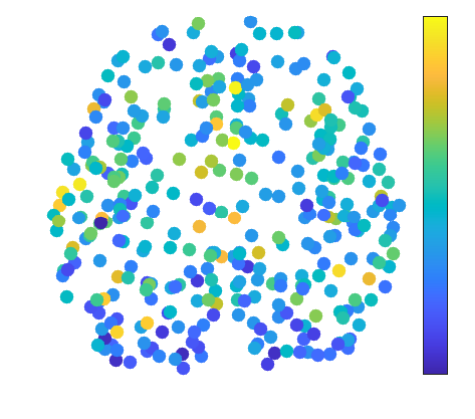}
        }
        \vspace{-8.5pt}
        \caption{Energies of true \figleft~final state and the predictions obtained from \texttt{TSB-VE} \figcenter~and \texttt{SB-VE} \figright~models.}
        \label{fig:brain_signal_predictions}
    \end{minipage} \hfill
    \begin{minipage}{0.5\linewidth}
        % \raggedright
        \vspace{-30pt}
        \hspace{3pt}
        \makebox[0.93\linewidth][c]{
        \includegraphics[height=60pt]{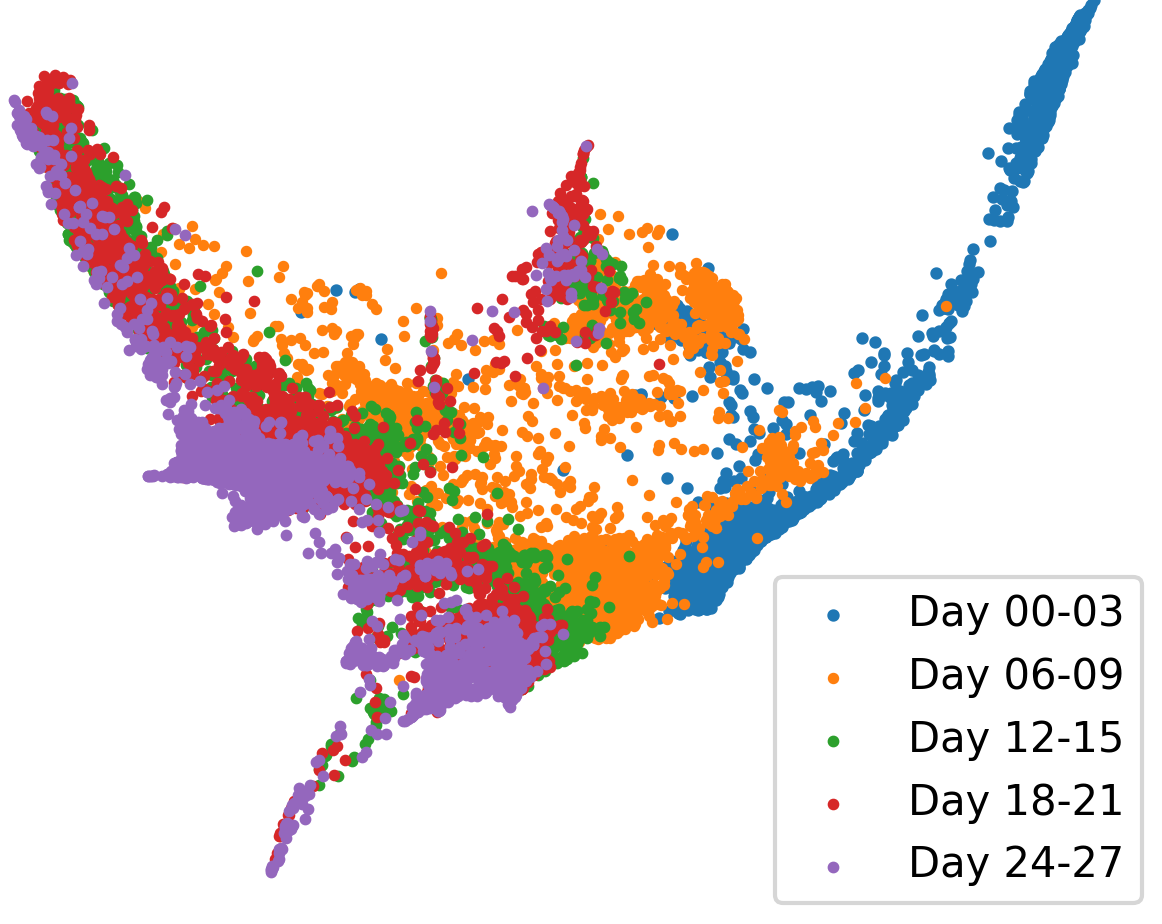}
        \includegraphics[height=60pt]{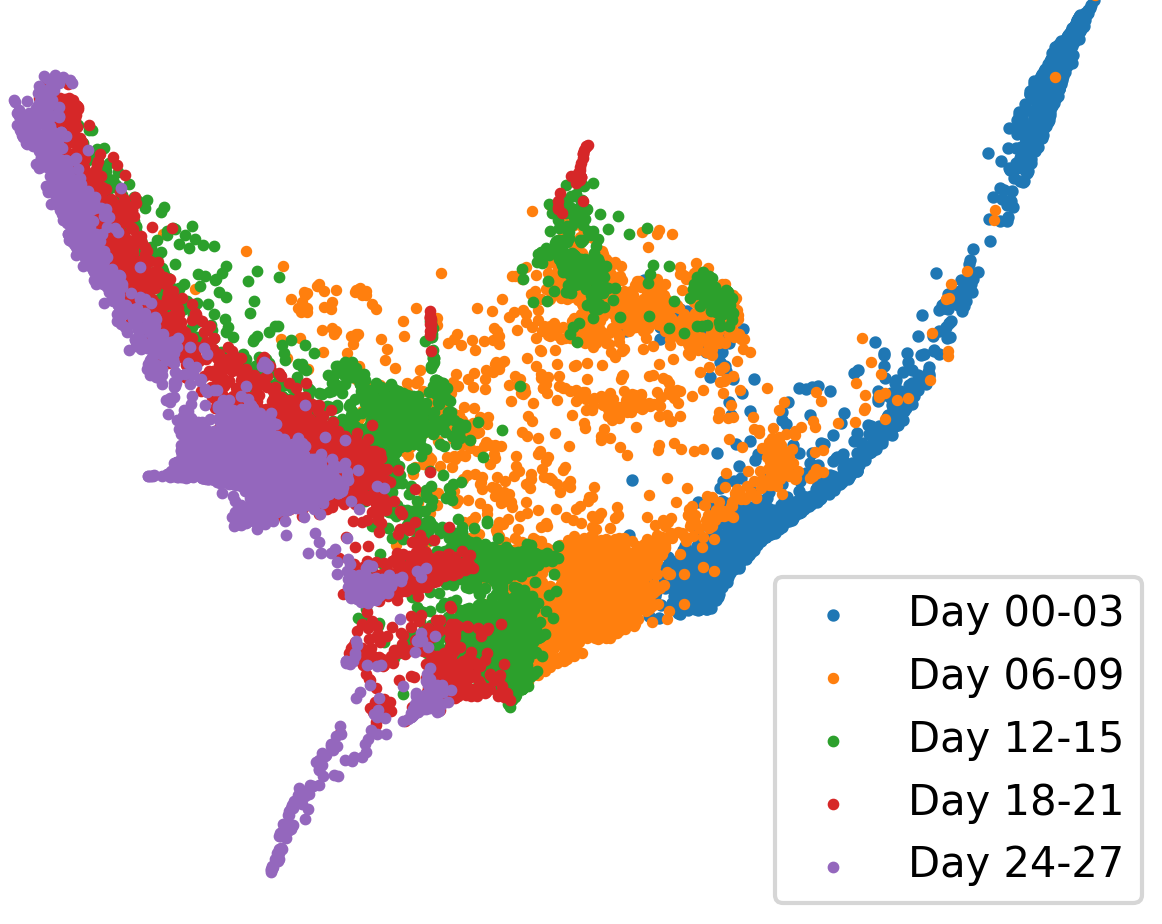}
        \includegraphics[height=60pt]{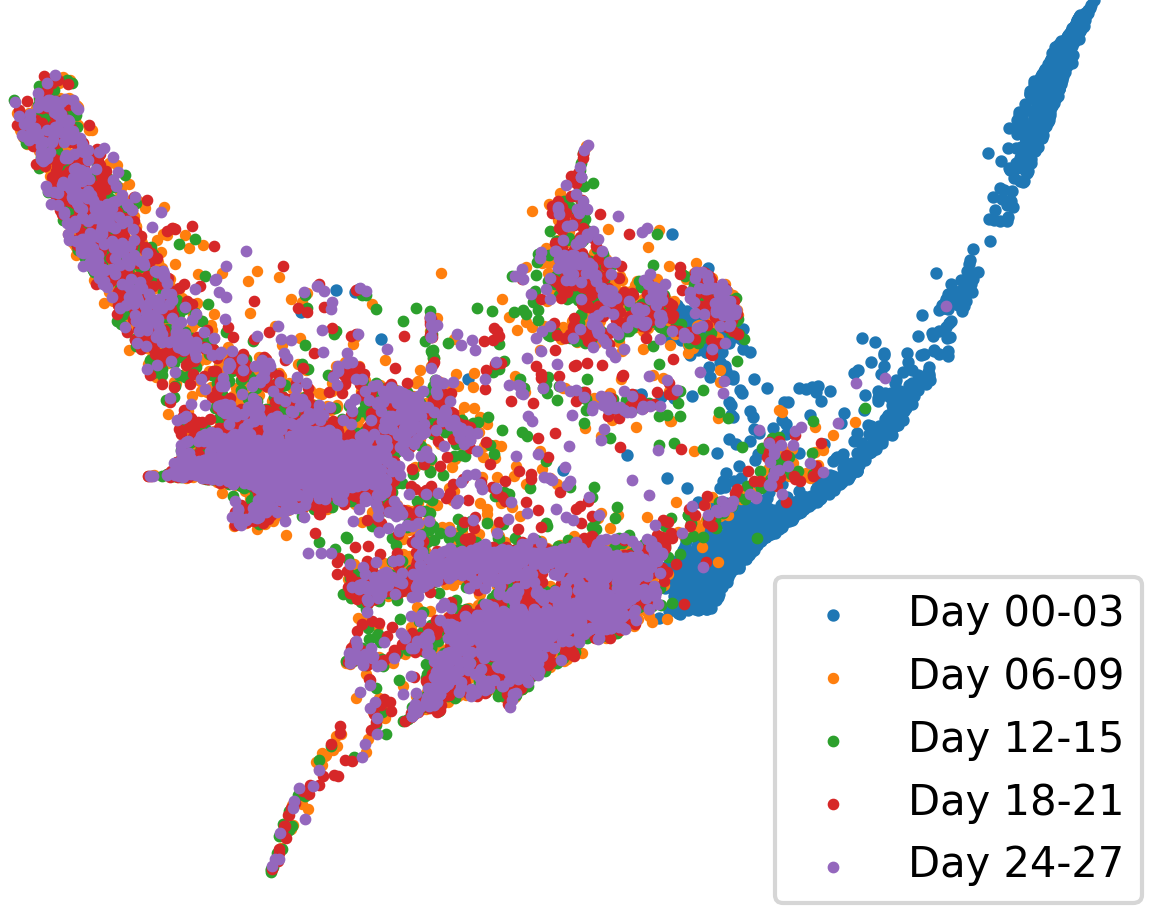}
        }
        \vspace{-9pt}
        \caption{\texttt{Phate} embeddings of the single-cell data observations \figleft~and predictions based on \texttt{TSB-BM}~\figcenter~and \texttt{SB-BM}~\figright~models.}
        \label{fig:single_cell_predictions}
    \end{minipage}
\end{figure}

\begin{table}[t!]
    \centering
    \vspace{-2mm}
    \caption{1-Wasserstein distances for generating and matching tasks across datasets over five runs, where $\star$ indicates using \texttt{GSB-VE} and \texttt{GTSB-VE} for ocean currents.}\label{tab:wasserstein_results_across_datasets}
    \vspace{-2mm}
    \resizebox{0.9\columnwidth}{!}{
    \begin{tabular}{@{}cccccc@{}}
    \toprule
    Method & Seismic magnitudes & Traffic flows & Brain signals & Single-cell data & Ocean currents \\ \midrule
    \texttt{SB-BM}          & 11.73\textsubscript{$\pm$0.05}    & 18.69\textsubscript{$\pm$0.02}   & 12.08\textsubscript{$\pm$0.08}  & 0.33\textsubscript{$\pm$0.01}  & 7.21\textsubscript{$\pm$0.00}  \\
    \texttt{SB-VE}          & 11.49\textsubscript{$\pm$0.04}     & 19.04\textsubscript{$\pm$0.02}   & 17.46\textsubscript{$\pm$0.14}  & 0.33\textsubscript{$\pm$0.01}  & 7.17\textsubscript{$\pm$0.02}  \\
    \texttt{SB-VP}          & 12.61\textsubscript{$\pm$0.06}     & 18.22\textsubscript{$\pm$0.03}   & 13.41\textsubscript{$\pm$0.05}  & 0.33\textsubscript{$\pm$0.01}  & 0.83\textsubscript{$\pm$0.01}\textsuperscript{$\star$} \\ \midrule
    \texttt{TSB-BM}         & 9.01\textsubscript{$\pm$0.03}     & 10.57\textsubscript{$\pm$0.02}   & 7.51\textsubscript{$\pm$0.08} & 0.14\textsubscript{$\pm$0.03}  & 6.94\textsubscript{$\pm$0.01}  \\
    \texttt{TSB-VE}         & 7.69\textsubscript{$\pm$0.04}     & 10.51\textsubscript{$\pm$0.02}   & 7.59\textsubscript{$\pm$0.05}  & 0.14\textsubscript{$\pm$0.02}  & 6.89\textsubscript{$\pm$0.00}  \\
    \texttt{TSB-VP}         & 8.40\textsubscript{$\pm$0.04}     & 9.92\textsubscript{$\pm$0.02}   & 7.67\textsubscript{$\pm$0.11}  & 0.14\textsubscript{$\pm$0.01}  & 0.53\textsubscript{$\pm$0.00}\textsuperscript{$\star$} \\ \bottomrule
    \end{tabular}
    }
    \vspace{-5mm}
\end{table}

\begin{wrapfigure}[7]{r}{0.3\textwidth}
    \vspace{-8.5mm}
    \raggedleft
    \includegraphics[trim=0 0 0 0, clip, width=1\linewidth]{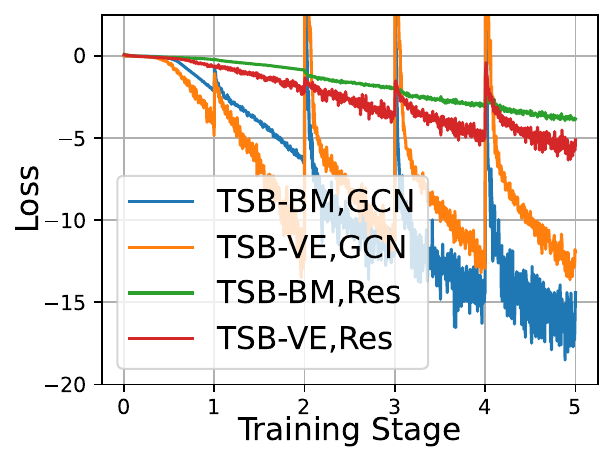}
    \vspace{-5mm}
\end{wrapfigure} 

\textbf{Effect of policy models.} 
From the training curves of \texttt{TSB-BM/VE} for ResBlock and GCN as policy models on the right, we see that 
the training converges much faster and better using GCN compared to the former.
This underlines the positive effect of TNNs on topological signal generative modeling.
We refer to \cref{app.tab:eqs-full-results,app.tab:traffic-flow-full-results} for the performance metrics of other bridge models with the two policy parameterizations on both seismic and traffic datasets.

\textbf{Effect of \gtsb~prior.}
Instead of using \ref{proof.eq.topological_diffusion_sde_bm} or \ref{eq.topological_diffusion_sde_ve} as the reference, we here consider their corresponding closed-form SDEs \cref{eq.gaussian_tsbp_sde_solution}~as the reference, imposing on the bridges a stronger prior carrying the moment information of the data samples \citep{bunne_schrodinger_2023}. 
For ocean current matching, we show the samples from the learned \fbtsde~using \texttt{GTSB-BM} in \cref{fig:ocean_currents_tsb_sb}, which arrives at a more faithful final state compared to \texttt{TSB-BM}, as also evaluated in \cref{tab:wasserstein_results_across_datasets}. 

\begin{figure}[ht!]
    \vspace{-3mm}
    \includegraphics[width=1\linewidth]{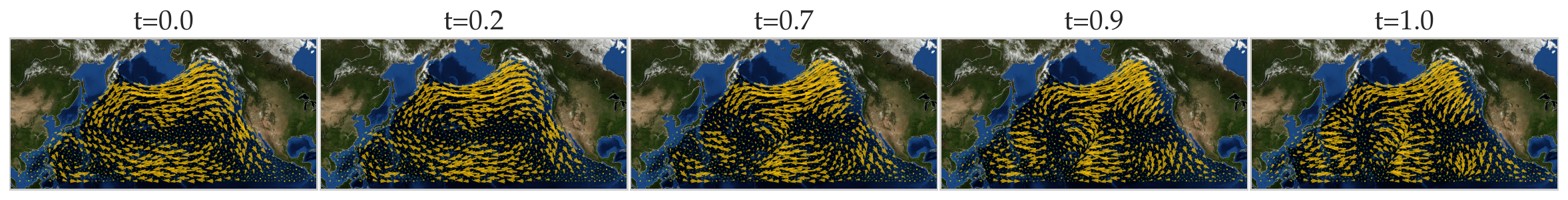}
    \includegraphics[trim=0 0 0 35, clip, width=1\linewidth]{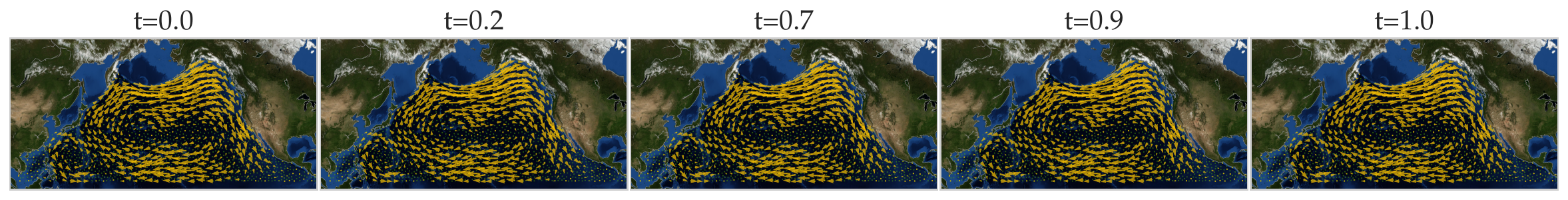}
    \includegraphics[trim=0 0 0 35, clip, width=1\linewidth]{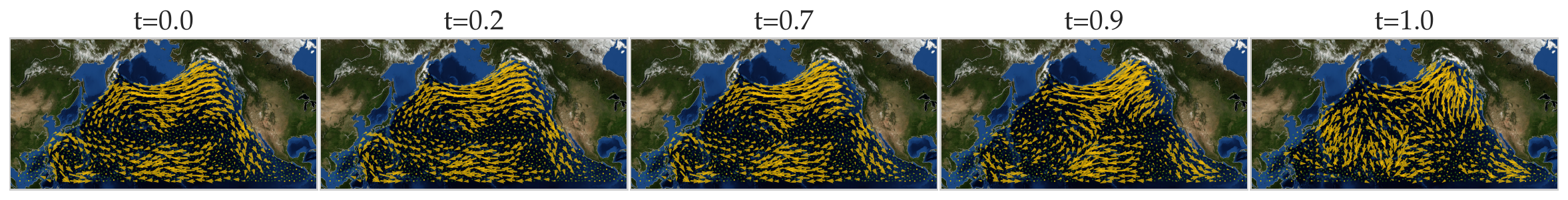}
    \vspace{-6mm}
    \caption{Forward sampled currents using \texttt{TSB-BM}~\figtop, \texttt{SB-BM}~\figcenter~and \texttt{GTSB-BM}~\figbottom.}
    \label{fig:ocean_currents_tsb_sb}
    \vspace{-6mm}
\end{figure}

%%%%%%%%%%%%%%%%%%%%%%%%%%%%%%%%%%%%%%%%%%%%%%%%%%%%%%%%%%%%
\section{Discussion and Conclusion}\label{sec:conclusion}
In this work, we demonstrated how to construct \tsb-based topological signal matching models within the likelihood training framework \citep{chen2021likelihood}. 
We here discuss a few promising future directions based on emerging work and unexplored theoretical results.

\textbf{On model training.} 
\citet{peluchetti2023diffusion,shi2024diffusion} applied iterative Markovian fitting (IMF), as an alternative of IPF, to the classical \ref{eq.classical_sbp}. 
This algorithm, trained via score matching, extends to \tsb-models with \ref{proof.eq.topological_diffusion_sde_bm} or \ref{eq.topological_diffusion_sde_ve} as the reference, thanks to their closed-form transition kernels in \cref{eq.transition_kernel_tsde_bm_ve}.
Recent work proposed (partially) \emph{simulation-free} training of SB models. \citet{tong2023simulation} learns the optimal SB by flow and score matching the forward SDE upon a heuristic \ref{eq.static_ot}. \citet{korotin2024light,gushchin2024light} modeled Schrödinger potentials using Gaussian mixtures, enabling light training for the optimal drift, and \citet{deng_variational_2024} linearized the forward policy.
However, these methods require the reference dynamics to be either Wiener process or have scalar drifts. 
While training \tsb-based models remain scalable w.r.t. the topology size (see \cref{app.sec:computational_complexity}), extending these approaches to our models is worthwhile but nontrivial. 

\textbf{On model improving.}
We focused on the reference dynamics driven by a topological convolution $H_t$ up to order one.
It is however worthwhile to consider more involved (potentially learnable) convolutions 
to impose more general priors or incorporate physics knowledge of the process. 
The scalar diffusion coefficient $g_t$ could be extended as matrix-valued, enabling spatially correlated noising processes over the topology.
On the other hand, SB models perform a kinetic energy minimization from the SOC view. 
\citep{liu_generalized_2024} considered \emph{generalized} SBP by adding a cost term which can model other knowledge of the process.
This broadens the applicability of SB models and \tsb~models could benefit from this, when there are external interactions with the topological process or prior knowledge on the process, such as enforcing curl-free edge flows. 

\textbf{On other models.}
While we showed the connections of \tsb~to stochastic interpolants, flow- and score-based models, as well as diffusion bridges, we notice that the \tsb~optimality 
can be interpreted as a \emph{Wasserstein gradient flow} (WGF) (see \cref{app.sec.wgf_interpretation}). For example, the \tsheatbm-driven \tsb~is the WGF of a functional $\gF(\nu)$ of some measure $\nu$ with
$\gF(\nu) =  c \int \frac{1}{2} x^\top  L x  \cdot \nu(x) \diff x  + \frac{1}{2} g^2 \int  \nu \log \nu  \diff x$
where $\gD(x) := \frac{1}{2} x^\top L x$ is the \emph{Dirichlet energy} of $x$ and the second term is the negative entropy.
Thus, the \ref{proof.eq.topological_diffusion_sde_bm}-driven forward \tsb~essentially reduces the Dirichlet energy.
This may not always align with the needs of real-world applications, which in turn motivates developing topological dynamics learning models via the JKO flow \citep{jordan_variational_1998} of a parametrized functional $\gD(x)$ on topology, akin to approaches used in Euclidean domains \citep{bunne_proximal_2022}.

We focused on a fixed topological domain, 
but it is also of interest to study the case where $\gT$ itself evolves over time. 
The \tsbp~in this scenario may rely on a time-varying operator $L_t$ to guide the reference process. 
This is relevant for recent generative models for graphs, to name a few \citep{niu2020permutation,jo_score-based_2022,liu_generative_2023}, where the graph structure, together with node features, are learned in the latent space based on diffusion models \citep{song2020score}. 
Lastly, we remark that discrete distributions on topological domains may be defined.
For instance, nodes of a graph can represent discrete states where node $i$ is associated with a discrete probability $P_i$. 
This motivates the emerging geneartive models for discrete data \citep{austin2021structured,ye2022first,haefeli_diffusion_2023,campbell_generative_2024}.
For a formal treatment of matching such discrete distributions on graphs, we refer to \citet{maas2011gradient,leonard_lazy_2013,solomon_optimal_2018,chow_dynamical_2022}.

\textbf{Conclusion.}
With the goal of matching topological signal distributions beyond Euclidean domains, we introduced the \ref{eq.topological_sbp} 
(topological Schrödinger bridge problem).
We defined the reference process using an SDE driven by a topological convolution linear operator, which is tractable and includes the commonly used heat diffusion on topological domains.
When the end distributions are Gaussians, we derived a closed-form \tsb,
% ~following the idea of \citet{bunne_schrodinger_2023} and 
generalizing the existing results by \citet{bunne_schrodinger_2023}.
In general cases, we showed that the optimal process satisfies a pair of \fbtsde~governed by some optimal policies. 
Building upon these results, we developed \tsb-based models where we parameterize the policies as (topological) neural networks and learn them from likelihood training, extending the framework of \citet{de2021diffusion,chen2021likelihood} to topological domains.
We applied \tsb-based models for both topological signal generation and matching in various applications, demonstrating their improved performance compared to standard SB-based models.  
Overall, our work lies at the intersection of the SB-based distribution matching and topological machine learning, and we hope it inspires further research in this direction. 
 
%%%%%%%%%%%%%%%%%%%%%%%%%%%%%%%%%%%%%%%%%%%%%%%%%%%%%%%%%%%%
\subsection*{Acknowledgements}
The author's research is funded by TU Delft AI Labs Programme and supported by professors Elvin Isufi and Geert Leus.
The author is grateful for the financial support provided by the Multimedia computing group at TU Delft.  
Additionally, the author thanks the anonymous reviewers for their comments and valuable feedback, especially on the single-cell data experiments.
Lastly, the author acknowledges the work of \citet{bunne_schrodinger_2023} for inspiring the proofs in \cref{sec.gaussian_tsbp}.
%%%%%%%%%%%%%%%%%%%%%%%%%%%%%%%%%%%%%%%%%%%%%%%%%%%%%%%%%%%%
\subsection*{Reproducibility Statement}
For reproducibility of the theoretical results, the complete proofs of the claims can be found in the appendix, which is organized \hyperref[sec:appendix]{here}. For reproducibility of the experiments, we refer to the GitHub repository at \href{https://github.com/cookbook-ms/topological_SB_matching}{topological\_SB\_matching}.
% \url{https://github.com/cookbook-ms/topological_SB_matching}. 

%%%%%%%%%%%%%%%%%%% References %%%%%%%%%%%%%%%%%%%
%%%%%%%%%%%%%%%%%%%%%%%%%%%%%%%%%%%%%%%%%%%%%%%%%%%%%%%%%%%%
\bibliography{optimal_transport.bib, stochastic_theory.bib, generative_models.bib, other_refs.bib}
\bibliographystyle{iclr2024_conference}
%%%%%%%%%%%%%%%%%%%%%%%%%%%%%%%%%%%%%%%%%%%%%%%%%%%%%%%%%%%%
%%%%%%%%%%%%%%%%%%%%%%%%%%%%%%%%%%%%%%%%%%%%%%%%%%%%%%%%%%%%
\appendix
\newpage\markboth{Appendix}{Appendix}
\renewcommand{\thesection}{\Alph{section}}
\numberwithin{equation}{section}
\numberwithin{theorem}{section}
\numberwithin{proposition}{section}
\numberwithin{lemma}{section}
\numberwithin{corollary}{section}
\numberwithin{figure}{section}
\numberwithin{table}{section}

\paragraph{Organizations}\label{sec:appendix}
% We provide detailed discussions on the future directions of the topological signal generative modeling in this appendix.
We include several appendices with additional details, proofs, derivations and experiments.
This work concerns with the intersection of Schrödinger bridge theory, topological signal processing and learning, stochastic dynamics (on topology) and generative modeling.
For that, we first introduce the necessary preliminaries on Schrödinger bridge theory in \cref{app.sec.preliminaries}, including needed theorems and lemmas for later.
In \cref{app.sec.stochastic_dynamics}, we first provide an overview of topological signals and probabilistic methods. 
Then, we extensively discuss the topological stochastic dynamics based on the linear \ref{eq.sde_topological}, as well as its three instantiations: \ref{proof.eq.topological_diffusion_sde_bm}, \ref{eq.topological_diffusion_sde_ve} and \ref{eq.topological_diffusion_sde_vp}.
Here, we provide the detailed derivations on how to obtain the transition kernels of the \ref{eq.sde_topological}, which are crucial for the \tsbp.
In \cref{app.sec.tsbp_optimality}, we discuss the optimality of \ref{eq.topological_sbp}, relying on the existing results.
Specifically, it includes the Schrödinger system for \ref{eq.topological_sbp}, an SOC formulation of the \ref{eq.topological_sbp}, the optimality, and the WGF interpretation. 
% Here, we first formulate the SOC problem for \ref{eq.topological_sbp}, then derive the 
\cref{app.sec.proof_gtsbp_solution} proves the closed-form solution of the \gtsbp, along with the marginal and conditional statistics of the optimal path measure.
In \cref{app.sec.topological_signal_generation,app.sec.experiments}, we provide more details on the \tsb-based models, their connections to other models, as well as the experiment details and additional results.

%%%%%%%%%%%%%%%%%%%%%%%%%%%%%%%%%%%%%%%%%%%%%%%%%%%%%%%%%%%%

\section{Preliminaries on Schrödinger Bridges}\label{app.sec.preliminaries}

%%%%%%%%%%%%%%%%%%%%%%%%%%%%%%%%%%%%%%%%%%%%%%%%%%%%%%%%%%%%
%%%%%%%%%%%%%%%%%%%%%%%%%%%%%%%%%%%%%%%%%%%%%%%%%%%%%%%%%%%%
%%%%%%%%%%%%%%%%%%%%%%%%%%%%%%%%%%%%%%%%%%%%%%%%%%%%%%%%%%%%
\subsection{On Optimal Transport}
% The optimal transport (OT) problem 
% \begin{equation}
%     \inf_{T:\mu_1=T\#\mu_0} \int_{\gX\times\gX} c(x, T(x)) \mu_0(\diff x)
% \end{equation}

\begin{theorem}[Static Gaussian OT; \citep{janati2020entropic}]\label{app.thm.static_gaussian_eot_solution}
    Let $\Sigma_0,\Sigma_1$ be positive definite. 
    Given two Gaussian measures $\rho_0\sim\gN(\mu_0,\Sigma_0)$ and $\rho_1\sim\gN(\mu_1,\Sigma_1)$, the entropic-regularized optimal transport
    \begin{equation}
        \min_{\pi\in\Pi(\mu_0,\mu_1)} \int_{\R^n\times\R^n} \frac{1}{2} \lVert x_0 - x_1 \rVert^2 \diff \pi(x_0,x_1) + \sigma^2 D_{\kldiv} (\pi \Vert \mu_0\otimes\mu_1) \ctag{E-OT}
    \end{equation}
    admits a closed-form solution $\pi^\star$
    \begin{equation}
        \pi^\star\sim \gN \Bigg(
        \begin{bmatrix}
            \mu_0 \\ \mu_1
        \end{bmatrix}, 
        \begin{bmatrix}
            \Sigma_0 & C_\sigma \\ C_\sigma^\top & \Sigma_1
        \end{bmatrix}    
        \Bigg)
    \end{equation}
    where 
    \begin{equation}
        C_\sigma = \frac{1}{2} (\Sigma_0^{\frac{1}{2}} D_\sigma \Sigma_0^{-\frac{1}{2}} - \sigma^2 I ), \quad 
        D_\sigma = (4 \Sigma_0^{\frac{1}{2}} \Sigma_1 \Sigma_0^{\frac{1}{2}} + \sigma^4 I)^{\frac{1}{2}}.
    \end{equation}
\end{theorem}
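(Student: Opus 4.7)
\textbf{[Proof proposal]} The plan is to reduce the problem to an optimization over Gaussian couplings, express the objective as a function of the cross-covariance $C$ alone via a Schur-complement identity, and then solve the resulting first-order condition using a similarity substitution that exposes a symmetric quadratic matrix equation. First I would argue that the optimum is attained at a Gaussian. For any $\pi\in\Pi(\mu_0,\mu_1)$, let $\pi^G$ denote the Gaussian with the same mean vector and covariance matrix as $\pi$. The quadratic cost $\tfrac{1}{2}\lVert x_0-x_1\rVert^2$ depends on $\pi$ only through these moments, so it is unchanged under $\pi\mapsto\pi^G$. For the KL term, write $D_{\kldiv}(\pi\Vert \rho_0\otimes\rho_1) = -h(\pi) - \E_\pi[\log(\rho_0\otimes\rho_1)]$; the second summand is quadratic in $(x_0,x_1)$ because $\rho_0\otimes\rho_1$ is Gaussian, hence again depends only on the first two moments of $\pi$, while $h(\pi)\le h(\pi^G)$ by the Gaussian maximum-entropy principle. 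Therefore $D_{\kldiv}(\pi\Vert\rho_0\otimes\rho_1) \ge D_{\kldiv}(\pi^G\Vert\rho_0\otimes\rho_1)$, and it suffices to optimize over Gaussian couplings $\pi=\gN((\mu_0,\mu_1),\Sigma_\pi)$ with $\Sigma_\pi=\begin{bmatrix}\Sigma_0 & C\\ C^\top & \Sigma_1\end{bmatrix}\succeq 0$; the marginal constraints fix the means, leaving $C$ as the only decision variable.

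Next I would reduce to a finite-dimensional optimization in $C$. By the Schur-complement identity $\det\Sigma_\pi = \det\Sigma_0\,\det(\Sigma_1 - C^\top\Sigma_0^{-1}C)$, the closed-form Gaussian KL collapses (after dropping $C$-independent constants) to
\begin{equation*}
    J(C) = -\trace(C) - \tfrac{\sigma^2}{2}\log\det(\Sigma_1 - C^\top\Sigma_0^{-1}C).
\end{equation*}
Differentiating by the chain rule (using $\partial\log\det(M)/\partial M = M^{-\top}$) and setting $\nabla_C J = 0$ yields the first-order condition
\begin{equation*}
    (C^\top + \sigma^2 I)\,\Sigma_0^{-1}\,C = \Sigma_1.
\end{equation*}

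The main obstacle is solving this quadratic matrix equation explicitly. My plan is to introduce the similarity substitution $C = \Sigma_0^{1/2} A\,\Sigma_0^{-1/2}$, which kills the cubic nonlinearity and reduces the equation to $A^\top A + \sigma^2 A = \Sigma_0^{1/2}\Sigma_1\Sigma_0^{1/2}=:G$. Taking the transpose of this identity and subtracting from the original forces $\sigma^2(A-A^\top)=0$, so $A$ is symmetric; completing the square then gives $(A + \tfrac{\sigma^2}{2}I)^2 = G + \tfrac{\sigma^4}{4}I$. The PSD constraint $\Sigma_\pi\succeq 0$ (equivalently, $A\succeq 0$ via Schur) picks out the unique positive semidefinite square root, yielding $A = \tfrac{1}{2}(D_\sigma - \sigma^2 I)$ with $D_\sigma = (4G+\sigma^4 I)^{1/2}$; unwinding the substitution produces $C_\sigma = \tfrac{1}{2}(\Sigma_0^{1/2} D_\sigma \Sigma_0^{-1/2} - \sigma^2 I)$ as claimed. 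The technical subtlety I anticipate is identifying precisely this nonsymmetric substitution rather than the more natural $\Sigma_0^{1/2} A \Sigma_0^{1/2}$: it is this specific choice that both self-symmetrizes $A$ and exposes the completion-of-the-square producing the characteristic $(4G+\sigma^4 I)^{1/2}$ factor.
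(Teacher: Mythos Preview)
The paper does not prove this theorem; it is stated in Appendix~A as a cited result from \citet{janati2020entropic} and used as a black box in Step~1 of the proof of \cref{thm:gaussian_tsbp_solution}. So there is no ``paper's own proof'' to compare against.

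That said, your argument is sound and self-contained. Two small points worth tightening. First, the claim that $\Sigma_\pi\succeq 0$ is ``equivalently $A\succeq 0$ via Schur'' is not a direct equivalence: the Schur complement condition is $\Sigma_1 - C^\top\Sigma_0^{-1}C\succeq 0$, which under your substitution reads $G - A^\top A\succeq 0$, not $A\succeq 0$. What actually happens is that \emph{at the critical point}, where $A$ is symmetric and $A^2+\sigma^2 A=G$, this becomes $\sigma^2 A\succeq 0$, whence $A\succeq 0$; then $A+\tfrac{\sigma^2}{2}I\succ 0$ forces the positive square root. Second, you should remark that $J(C)$ is convex on the feasible region (linear term plus $-\log\det$ of a matrix-concave function of $C$), so the first-order condition indeed identifies the global minimizer rather than a saddle or maximum. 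With these two clarifications the proof is complete.
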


\begin{remark}
    Note that while the above results are stated for positive definite covariance matrices (in order for $\rho_0$ and $\rho_1$ to have a Lebesgue density), the closed-form solution remains well-defined for positive semi-definite covariance matrices.
\end{remark}

%%%%%%%%%%%%%%%%%%%%%%%%%%%%%%%%%%%%%%%%%%%%%%%%%%%%%%%%%%%%
%%%%%%%%%%%%%%%%%%%%%%%%%%%%%%%%%%%%%%%%%%%%%%%%%%%%%%%%%%%%
%%%%%%%%%%%%%%%%%%%%%%%%%%%%%%%%%%%%%%%%%%%%%%%%%%%%%%%%%%%%
\subsection{On Schrödinger Bridge}

\begin{lemma}[\citet{leonard_survey_2014}]
    For a given measure $\sP$ over the path space $\Omega$, let $\sP^{xy}$ represent the conditioning of $\sP$ on paths that take values $x$ and $y$ at $t=0$ and $1$, respectively. That is, $\sP^{xy}=\sP[\cdot|X_0=x,X_1=y]$. Let $\sP_{01}$ denote the joint probability for the values of paths at the two ends $t=0,1$. Then, $\sP$ can be disintegrated into 
    \begin{equation}\label{eq.disintegration_formula}
        \sP(\cdot) = \int_{\R^n\times\R^n} \sP^{xy}(\cdot) \sP_{01} (\diff x \diff y). \ctag{Disintegration of Measures}
    \end{equation}
\end{lemma}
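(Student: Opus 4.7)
The plan is to invoke the disintegration theorem for Borel probability measures on Polish spaces, so the proof amounts to checking the hypotheses of a standard measure-theoretic black box rather than doing any novel work. First I would observe that $\Omega = \gC([0,1],\R^n)$ endowed with the supremum norm is a complete separable metric space, hence a Polish space, and in particular a standard Borel space. The evaluation map $e_{01}:\Omega \to \R^n\times\R^n$ defined by $e_{01}(\omega) := (\omega_0, \omega_1)$ is continuous and therefore Borel measurable, so its pushforward $(e_{01})_\ast \sP$ coincides with the joint law $\sP_{01}$ on $\R^n\times\R^n$.

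Next I would apply Rokhlin's disintegration theorem (equivalently, existence of regular conditional probabilities on a standard Borel space): there exists a family $\{\sP^{xy}\}_{(x,y)\in\R^n\times\R^n}$ of Borel probability measures on $\Omega$, uniquely determined up to $\sP_{01}$-null sets, such that (i) for every Borel $A\subseteq\Omega$ the map $(x,y)\mapsto \sP^{xy}(A)$ is Borel measurable, (ii) $\sP^{xy}\bigl(e_{01}^{-1}(\{(x,y)\})\bigr)=1$ for $\sP_{01}$-a.e. $(x,y)$, and (iii) for every bounded Borel $f:\Omega\to\R$,
$$\int_\Omega f\, \diff\sP \;=\; \int_{\R^n\times\R^n} \Bigl(\int_\Omega f\, \diff\sP^{xy}\Bigr)\, \sP_{01}(\diff x\, \diff y).$$
Specializing to $f=\mathbf{1}_A$ for an arbitrary measurable $A\subseteq\Omega$ delivers the claimed identity $\sP(\cdot)=\int \sP^{xy}(\cdot)\,\sP_{01}(\diff x\,\diff y)$, while property (ii) is exactly the assertion that $\sP^{xy}$ is the conditional law $\sP[\,\cdot \mid X_0=x, X_1=y]$ concentrated on paths pinned at the two endpoints.

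The main obstacle, though entirely routine in this setting, is the invocation of Rokhlin's theorem: it requires that the target space of the conditioning map be a standard Borel space, which is why the Polish structure of $\Omega$ matters. If one preferred an elementary construction, I would instead build $\sP^{xy}$ explicitly by fixing a countable algebra of cylinder sets generating the Borel $\sigma$-algebra on $\Omega$, defining conditional probabilities on that algebra via the Radon–Nikodym theorem (relative to $\sP_{01}$), checking $\sigma$-additivity on a $\sP_{01}$-co-null set of $(x,y)$ using the martingale convergence theorem, and extending via Carathéodory. Either route yields the same disintegration formula, and no properties specific to the topological SBP are used at this stage.
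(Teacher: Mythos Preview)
Your proof is correct, but there is nothing to compare it against: the paper does not prove this lemma. It is stated as a preliminary result with a citation to \citet{leonard_survey_2014} and used as a black box throughout (e.g., to reduce the dynamic \ref{eq.topological_sbp} to its static version). Your invocation of the disintegration theorem on the Polish space $\Omega=\gC([0,1],\R^n)$ via the continuous evaluation map $e_{01}$ is exactly the standard argument underlying this fact, and the alternative constructive route you sketch (Radon--Nikodym on a generating algebra plus martingale convergence) is also valid. Neither approach is needed for the paper's purposes, which simply import the result.
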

\paragraph{Static SBP}
By \ref{eq.disintegration_formula}, for all $\sP\in\gP(\Omega)$, the relative entropy can be factorized as 
\begin{equation}
    \KL(\sP\Vert\sQ) = \KL(\sP_{01}\Vert\sQ_{01}) +  \int_{\R^n\times\R^n} \KL(\sP^{xy}\Vert\sQ^{xy}) \sP_{01}(\diff x \diff y),
\end{equation}
% The second term 
which implies that $\KL(\sP_{01}\Vert \sQ_{01})\leq \KL(\sP\Vert\sQ)$ with equality if and only if $\sP^{xy} = \sQ^{xy}$ for each $(x,y)\in\R^n\times\R^n$. 
This allows us to reduce the (\emph{dynamic}) \ref{eq.classical_sbp} to the \emph{static} one.
\begin{equation}
    \min \KL(\sP_{01}\Vert\sQ_{01}), \quad \text{s.t. } \sP\in \gP(\R^n\times\R^n), \sP_0=\rho_0, \sP_1=\rho_1. \tag{SBP\textsubscript{static}} \label{eq.classical_sbp_static}
\end{equation}
Furthermore, it readily gives the following important theorem.
\begin{theorem}[\citet{follmer1988random,leonard_survey_2014}]\label{app.thm.prop_1_lenoard}
    The \ref{eq.classical_sbp} and \ref{eq.classical_sbp_static} admit, respectively, at most one solution.
    If \ref{eq.classical_sbp} has the solution $\sP$, then its joint-marginal at the end times $\sP_{01}$ is the solution of \ref{eq.classical_sbp_static}.   
    Conversely, if $\sP_{01}$ solves \ref{eq.classical_sbp_static}, then the solution of \ref{eq.classical_sbp} can be expressed as 
    \begin{equation}\label{app.eq.prop_1_lenoard}
        \sP() = \int_{\R^n\times\R^n} \sQ^{xy}() \sP_{01}(\diff x \diff y),
    \end{equation}
    which means that $\sP$ shares its bridges with $\sQ$ (i.e., $\sP$ is in the reciprocal class of $\sQ$): 
    \begin{equation}
        \sP^{xy} = \sQ^{xy} \quad \forall (x,y)\in\R^n\times\R^n.
    \end{equation}
\end{theorem}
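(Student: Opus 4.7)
The plan is to derive all four conclusions from the chain-rule decomposition of relative entropy stated just above the theorem, namely
\[
\KL(\sP \Vert \sQ) = \KL(\sP_{01} \Vert \sQ_{01}) + \int_{\R^n \times \R^n} \KL(\sP^{xy} \Vert \sQ^{xy})\, \sP_{01}(\diff x\, \diff y),
\]
together with the strict convexity of $D_{\mathrm{KL}}(\cdot \Vert \sQ)$ on $\gP(\Omega)$. The two terms on the right are non-negative, so the decomposition immediately yields $\KL(\sP_{01}\Vert\sQ_{01}) \leq \KL(\sP\Vert\sQ)$ with equality if and only if $\sP^{xy}=\sQ^{xy}$ for $\sP_{01}$-a.e.\ $(x,y)$. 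This is the engine that drives the whole argument.

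First I would show that the infima of \ref{eq.classical_sbp} and \ref{eq.classical_sbp_static} agree. For any feasible $\sP$ in the dynamic problem, its endpoint marginal $\sP_{01}$ is feasible in the static one with smaller or equal cost, giving one inequality. For the reverse, given a feasible static $\tilde\sP_{01}$, I would construct the dynamic candidate $\tilde\sP := \int \sQ^{xy}\, \tilde\sP_{01}(\diff x\, \diff y)$ as in \cref{app.eq.prop_1_lenoard}. Its endpoint marginals coincide with those of $\tilde\sP_{01}$, so it is feasible; by construction its bridges are exactly the $\sQ^{xy}$'s, so the integral term in the chain rule vanishes and $\KL(\tilde\sP\Vert\sQ)=\KL(\tilde\sP_{01}\Vert\sQ_{01})$. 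Hence the two infima are equal, and any dynamic optimizer must share its bridges with $\sQ$.

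From this equivalence the four claims fall out quickly. Uniqueness for both problems follows from strict convexity of $D_{\mathrm{KL}}(\cdot \Vert \sQ)$ applied on the affine (hence convex) feasible sets. If $\sP^\star$ solves \ref{eq.classical_sbp}, then $\sP^\star_{01}$ has static cost at most the dynamic cost, which coincides with the static infimum by the previous step; so $\sP^\star_{01}$ is static-optimal, and equality in the chain rule forces $\sP^\star$ to carry the $\sQ$-bridges, giving exactly \cref{app.eq.prop_1_lenoard}. Conversely, if $\sP^\star_{01}$ solves the static problem, then the measure defined by \cref{app.eq.prop_1_lenoard} is feasible with matching cost, and uniqueness identifies it as the unique dynamic optimizer.

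The main obstacle is the chain-rule decomposition itself, which the statement treats as given but whose rigorous proof requires the existence of regular conditional probabilities $\sP^{xy}$ and $\sQ^{xy}$ on the Polish path space $\Omega$, plus careful handling of the case $\sP_{01}\not\ll\sQ_{01}$, where both entropies diverge to $+\infty$ and the inequality is trivial. In the nontrivial case $\sP \ll \sQ$, the Radon-Nikodym derivative factorizes as $\tfrac{\diff\sP}{\diff\sQ}(\omega) = \tfrac{\diff\sP_{01}}{\diff\sQ_{01}}(\omega_0,\omega_1)\,\tfrac{\diff\sP^{\omega_0,\omega_1}}{\diff\sQ^{\omega_0,\omega_1}}(\omega)$, and taking $\log$, integrating against $\sP$, and applying Fubini yields the identity. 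After this measure-theoretic step, the remainder of the argument is purely combinatorial bookkeeping with non-negative quantities.
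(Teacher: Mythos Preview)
Your proposal is correct and follows exactly the approach the paper sketches: the paper states the chain-rule decomposition of $\KL(\sP\Vert\sQ)$ just before the theorem, notes that it yields $\KL(\sP_{01}\Vert\sQ_{01})\leq\KL(\sP\Vert\sQ)$ with equality iff the bridges agree, and then simply asserts that this ``readily gives'' the theorem, deferring the full argument to the cited references \citep{follmer1988random,leonard_survey_2014}. Your write-up fills in precisely the details the paper omits (equality of infima via the $\tilde\sP:=\int\sQ^{xy}\,\tilde\sP_{01}$ construction, uniqueness from strict convexity of $D_{\mathrm{KL}}$), and your final paragraph correctly identifies the only nontrivial measure-theoretic step.
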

% \begin{theorem}[Uniqueness; \citet{leonard_survey_2014}] % \citet[Thm 2.6]{leonard_survey_2014}
%     Let $\sP$ be a Markov measure in the reciprocal class of $\sQ$ such that $\sP_0 = \pi_0, \sP_1 = \pi_1$. Then, under assumptions on $\sQ, \pi_0$ and $\pi_1$, $\sP$ is unique and is equal to the Schrödinger Bridge $\sP$.
% \end{theorem}
\begin{proposition}[\citet{leonard_survey_2014}]\label{app.prop.markov_sbp}
    If the reference measure $\sQ$ is Markov, then the solution $\sP$ of \ref{eq.classical_sbp} is also Markov. 
\end{proposition}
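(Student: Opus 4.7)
The plan is to leverage \cref{app.thm.prop_1_lenoard}, which identifies the SBP optimum $\sP$ through the reciprocal-class identity $\sP^{xy} = \sQ^{xy}$ for $\sP_{01}$-almost every pair $(x,y)$, and gives the disintegration $\sP(\cdot) = \int \sQ^{xy}(\cdot)\, \sP_{01}(\diff x\, \diff y)$. The Markov property of $\sP$ is equivalent to the conditional independence of past and future given the present at every interior time, equivalently to a product-of-transition-kernels factorization of all finite-dimensional marginals. My strategy is to verify this factorization directly by composing the disintegration above with the known structure of bridges of a Markov measure.

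First I would recall a standard fact: if $\sQ$ is Markov with transition kernels $q_{s,t}$, then for $\sP_{01}$-a.e.\ $(x,y)$ the bridge $\sQ^{xy}$ is itself Markov, with finite-dimensional marginals factorizing through the same kernels. For any $0 = t_0 < t_1 < \cdots < t_n = 1$,
\begin{equation*}
\sQ^{x_0, x_n}(\diff x_1, \ldots, \diff x_{n-1}) = \frac{1}{q_{0,1}(x_0, x_n)} \prod_{i=1}^n q_{t_{i-1}, t_i}(x_{i-1}, x_i)\, \diff x_1 \cdots \diff x_{n-1}.
\end{equation*}
This follows from the Markov property of $\sQ$ applied to conditioning on both endpoints. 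Composing with the disintegration then yields the $(n+1)$-time marginal of $\sP$:
\begin{equation*}
\sP_{t_0, \ldots, t_n}(\diff x_0, \ldots, \diff x_n) = \frac{\sP_{01}(\diff x_0, \diff x_n)}{q_{0,1}(x_0, x_n)} \prod_{i=1}^n q_{t_{i-1}, t_i}(x_{i-1}, x_i)\, \diff x_1 \cdots \diff x_{n-1}.
\end{equation*}

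From this expression, computing the conditional law of $(X_{t_{k+1}}, \ldots, X_{t_n})$ given $(X_{t_0}, \ldots, X_{t_k})$ reduces to dividing by the $(k+1)$-marginal, at which point the transition kernels for indices $i \leq k$ cancel. The remainder depends on the past only through $x_{t_k}$, which is exactly the Markov property of $\sP$ at the chosen partition. Since this holds for arbitrary finite partitions of $[0,1]$, a standard Dynkin-$\pi$-$\lambda$ argument lifts the statement to the path measure level.

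The main obstacle is measure-theoretic rigour rather than conceptual: the bridges $\sQ^{xy}$ are only defined up to a $\sP_{01}$-null set, and $\sQ$ need not admit transition \emph{densities}, in which case the factorizations above must be re-expressed at the level of regular conditional probability kernels; no essential step changes. A conceptually cleaner alternative that bypasses this bookkeeping is to introduce a Markov projection $\tilde\sP$ of $\sP$ and exploit a Pythagorean-type identity of the form $D_{\kldiv}(\sP\Vert\sQ) = D_{\kldiv}(\sP\Vert\tilde\sP) + D_{\kldiv}(\tilde\sP\Vert\sQ)$ valid when $\sQ$ is Markov; together with the uniqueness of the SBP solution in \cref{app.thm.prop_1_lenoard}, this forces $\sP = \tilde\sP$ and hence the Markov property of $\sP$.
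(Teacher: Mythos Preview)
The paper does not supply its own proof of this proposition: it is stated as a background result and attributed to \citet{leonard_survey_2014} without argument. There is therefore nothing in the paper to compare your proposal against line by line.

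That said, your proposal is correct and tracks the standard proofs in the literature. The direct route you outline---using \cref{app.thm.prop_1_lenoard} to write $\sP$ as a mixture of $\sQ$-bridges over the optimal coupling $\sP_{01}$, invoking that bridges of a Markov measure are themselves Markov with the explicit product-of-kernels finite-dimensional marginals, and then checking that the resulting mixture factorizes appropriately---is sound. The alternative you mention at the end, via a Markov projection $\tilde\sP$ and the additive decomposition $D_{\kldiv}(\sP\Vert\sQ) = D_{\kldiv}(\sP\Vert\tilde\sP) + D_{\kldiv}(\tilde\sP\Vert\sQ)$, is in fact the argument L\'eonard gives (Proposition~2.10 in the survey): since $\tilde\sP$ is an admissible competitor with the same endpoint marginals and strictly smaller relative entropy unless $\sP=\tilde\sP$, uniqueness forces $\sP$ to be Markov. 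Either route is acceptable; the projection argument is cleaner because it sidesteps the regular-conditional-probability bookkeeping you flagged.
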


\paragraph{Schrödinger System}
\begin{theorem}[\citet{jamison1975markov,leonard_survey_2014,chen2021optimal}]\label{app.thm.schrodinger_system}
    Given two probability measures $\rho_0,\rho_1$ on $\R^n$ and the continuous, everywhere positive Markov kernel $p_{t|s}(y|x)$ (not necessarily associated to a scaled Brownian motion), there exists a unique pair of (up to scaling) of functions $\hat{\varphi}_0,\varphi_1$ on $\R^n$ such that the measure $\sP_{01}$ on $\sR^n\times\R^n$ defined by 
\begin{equation}
    \sP_{01} = \int_{\R^n\times\R^n} p_{1|0}(y|x) \hat{\varphi}_0(\diff x) \varphi_1(\diff y)
\end{equation}
has marginals $\rho_0$ and $\rho_1$. Moreover, the Schrödinger bridge from $\rho_0$ to $\rho_1$ induces the distribution flow 
\begin{equation}
    \sP_t = \varphi_t \hat{\varphi}_t \text{ with } \varphi_t(x) = \int p_{1|t}(y|x) \varphi_1(\diff y), \hat{\varphi}_t(x) = \int p_{t|0}(x|y) \hat{\varphi}_0(\diff y). 
\end{equation}
\end{theorem}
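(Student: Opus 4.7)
My plan is threefold: derive the product factorization of the optimal joint measure via Lagrange optimality, establish existence and uniqueness of the Schr\"odinger potentials, and recover the distribution flow from the Markov and reciprocal-class properties of the bridge.

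The first step starts from \ref{eq.disintegration_formula} and the entropy decomposition
\begin{equation*}
    \KL(\sP\Vert\sQ) = \KL(\sP_{01}\Vert\sQ_{01}) + \int \KL(\sP^{xy}\Vert\sQ^{xy})\,\sP_{01}(\diff x\diff y)
\end{equation*}
underlying \cref{app.thm.prop_1_lenoard}: the optimum forces $\sP^{xy}=\sQ^{xy}$ and reduces the problem to minimizing $\KL(\sP_{01}\Vert\sQ_{01})$ over couplings of $(\rho_0,\rho_1)$. Writing $\sQ_{01}(\diff x\diff y) = \sQ_0(\diff x)\, p_{1|0}(y|x)\diff y$ and introducing Lagrange multipliers $\lambda(x)$, $\mu(y)$ for the two marginal constraints, the stationarity condition $\log(\diff \sP_{01}/\diff \sQ_{01}) + \lambda(x) + \mu(y) = 0$ yields a separable form which, after absorbing $\sQ_0$ into the first factor, becomes $\sP_{01}(\diff x\diff y) = \hat{\varphi}_0(\diff x)\, p_{1|0}(y|x)\, \varphi_1(\diff y)$ for some $\sigma$-finite measures $\hat{\varphi}_0,\varphi_1$.

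The \emph{main obstacle} is proving that the resulting nonlinear Schr\"odinger system
\begin{equation*}
    \hat{\varphi}_0(x)\!\int\! p_{1|0}(y|x)\,\varphi_1(\diff y) = \rho_0(x), \qquad \varphi_1(y)\!\int\! p_{1|0}(y|x)\,\hat{\varphi}_0(\diff x) = \rho_1(y)
\end{equation*}
admits a unique positive solution pair (modulo the scalar rescaling $\hat{\varphi}_0\mapsto c\hat{\varphi}_0$, $\varphi_1\mapsto \varphi_1/c$). I would follow the classical Fortet--Beurling--Jamison approach: view the alternating substitution that updates each potential by dividing the corresponding target marginal by the current kernel-integral of the other potential as iterative proportional fitting ($I$-projections onto the two marginal constraint sets), and show that the hypothesis that $p_{1|0}$ is continuous and everywhere positive makes the associated positive integral operator a strict contraction in Hilbert's projective metric. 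Strict positivity of the kernel is essential here, as it gives the uniform lower bound on projective oscillation needed for the Birkhoff--Hopf contraction coefficient to be strictly less than one; the contraction then yields a unique fixed point and hence the claimed existence and uniqueness.

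Finally, for the time marginal I would invoke \cref{app.prop.markov_sbp} together with the reciprocal-class identity $\sP^{xy}=\sQ^{xy}$ from \cref{app.thm.prop_1_lenoard}: for a Markov reference, the bridge density at intermediate time $t$ reads $\sQ^{xy}_t(z) = p_{t|0}(z|x)\, p_{1|t}(y|z)/p_{1|0}(y|x)$, so integrating this against the product-form $\sP_{01}$ causes the factor $p_{1|0}(y|x)$ to cancel, leaving
\begin{equation*}
    \sP_t(z) = \int p_{t|0}(z|x)\,\hat{\varphi}_0(\diff x)\;\cdot\;\int p_{1|t}(y|z)\,\varphi_1(\diff y) = \hat{\varphi}_t(z)\,\varphi_t(z),
\end{equation*}
matching the definitions of $\hat{\varphi}_t$ and $\varphi_t$ given in the statement.
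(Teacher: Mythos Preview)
The paper does not provide its own proof of this statement: it is stated in Appendix~A as a preliminary result cited directly from \citet{jamison1975markov,leonard_survey_2014,chen2021optimal}, with no accompanying argument. There is therefore nothing in the paper to compare your proposal against.

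That said, your sketch is a faithful outline of the classical proof in those references. The reduction to the static problem via disintegration and the Lagrange-multiplier derivation of the product form are standard and correct. Your treatment of existence and uniqueness via a Hilbert-projective-metric contraction is the approach taken in \citet{chen2021optimal}; the original arguments of Fortet, Beurling, and Jamison differ in technique (Fortet uses a direct monotone iteration, Beurling a variational/duality argument), but all hinge on the same strict-positivity hypothesis on the kernel that you correctly flag as essential. Your final step recovering $\sP_t = \hat{\varphi}_t\varphi_t$ from the reciprocal-class identity and the Markov bridge density is also correct and is exactly how the flow factorization is obtained in the cited works. One minor point: the scalar-rescaling ambiguity you mention is real, so ``unique pair'' in the statement should be read up to that rescaling, as you note.
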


\paragraph{SOC Formulation}
By \emph{Girsanov's theorem}, \citet{dai1991stochastic,pavon1991free} showed an equivalent SOC formulation of the \ref{eq.classical_sbp} which aims to minimize the kinetic energy 
\begin{equation}
    \min_{u\in\gU} \E \biggl[ \int_0^1 \frac{1}{2} \lVert u(t,X_t) \rVert^2 \biggr], \quad \text{ s.t. } 
    \begin{cases}
        \diff X_t = u(t,X_t)\diff t +  \sigma \diff W_t \\ 
        X_0\sim \rho_0, X_1\sim \rho_1,
    \end{cases}
     \tag{SBP\textsubscript{soc}}\label{eq.classical_sbp_soc}
\end{equation}
% $\gU=\{u:[0,1]\times\R^n\to\R^n|\lVert u\rVert^2 <\infty \}$
where $\gU$ is the set of finite control functions $u_t\equiv u(t,x)$. 
Given the SDE constraint in \ref{eq.classical_sbp_soc}, the associated marginal density $\rho_t\equiv\rho(t,x)$ evolves according to the Fokker-Planck-Kolmogorov equation (FPK, \citet{risken1996fokker}).
This allows to arrive at an equivalent variational formulation
\begin{equation}
    \min_{(\rho_t,u_t)} \int_{\R^n} \int_0^1 \frac{1}{2} \lVert u(t,x) \rho(t,x)\rVert^2 \diff t \diff x \quad \text{ s.t. }  
    \begin{cases}
        \partial_t \rho_t + \nabla\cdot(\rho_t u_t) = \frac{\sigma^2}{2} \Delta \rho_t, \\ \rho_0 = \rho_0, \rho_1 = \rho_1.
    \end{cases}
    \tag{SBP\textsubscript{var}}\label{eq.classical_sbp_dyn}
\end{equation}
Given $\rho_t = \varphi_t\hat{\varphi}_t$, the optimal control in \ref{eq.classical_sbp_soc} can be obtained by $u_t = \sigma^2\nabla\log \varphi_t$.

%%%%%%%%%%%%%%%%%%%%%%%%%%%%%%%%%%%%%%%%%%%%%%%%%%%%%%%%%%%%
%%%%%%%%%%%%%%%%%%%%%%%%%%%%%%%%%%%%%%%%%%%%%%%%%%%%%%%%%%%%

%%%%%%%%%%%%%%%%%%%%%%%%%%%%%%%%%%%%%%%%%%%%%%%%%%%%%%%%%%%%
%%%%%%%%%%%%%%%%%%%%%%%%%%%%%%%%%%%%%%%%%%%%%%%%%%%%%%%%%%%%
%%%%%%%%%%%%%%%%%%%%%%%%%%%%%%%%%%%%%%%%%%%%%%%%%%%%%%%%%%%%
%%%%%%%%%%%%%%%%%%%%%%%%%%%%%%%%%%%%%%%%%%%%%%%%%%%%%%%%%%%%
%%%%%%%%%%%%%%%%%%%%%%%%%%%%%%%%%%%%%%%%%%%%%%%%%%%%%%%%%%%%
%%%%%%%%%%%%%%%%%%%%%%%%%%%%%%%%%%%%%%%%%%%%%%%%%%%%%%%%%%%%
%%%%%%%%%%%%%%%%%%%%%%%%%%%%%%%%%%%%%%%%%%%%%%%%%%%%%%%%%%%%
\section{Stochastic Dynamics on Topological Domains} \label{app.sec.stochastic_dynamics}
Compared to the Euclidean domain, the dynamics on topological domains are less studied. 
Here we provide some existing work on the dynamics on graphs and simplicial complexes.
Note that our choice of the linear topological drift $f_t$ in \cref{eq.topological_drift} is analogous to the ideas in \citep{archambeau2007gaussian,verma_variational_2024} which considered linear SDEs to approximate nonlinear dynamics, enabling approximations of more complex topological dynamics. 
% This gives rise to a tractable class of Markovian GPs \citep{sarkka_applied_2019}. 
% We first clarify the following terminologies in SDEs. 
% \begin{itemize}
%     \item 
% \end{itemize}

\subsection{Preliminaries on Topological Signals}\label{app.subsec.topological_signals}
Here we review the standard notions about topological signals, and we focus on the node signals and edge flows on graphs and simplicial complexes.
\emph{Note that we abuse some notions in this subsection}. 
\paragraph{Node signals}
Let $G=(V,E)$ be an unweighted graph where $V=\{1,\dots,n_0\}$ is the set of nodes and $E$ is the set of $n_1$ edges such that if nodes $i,j$ are connected, then $e=(i,j)\in E$. 
We can define real-valued functions on its node set $V\to\R$, collected into a vector $x=[x(1),\dots,x(n_0)]\in\R^{n_0}$, which is referred to as a node signal (or graph signal).
Denote the oriented node-to-edge incidence matrix by $B_1$ of dimension $n_0\times n_1$.
One can obtain the graph Laplacian by $L=B_1B_1^\top$, which is a positive semi-definite linear operator on the space $\R^{n_0}$ of node signals. 

A graph GP $x\sim\gN(0,\Sigma)$ assumes $x$ is a random function with zero mean and a graph kernel (covariance matrix) $\Sigma$ which encodes the covariance between pairs of nodes. 
\citep{borovitskiy_matern_2021} constructed the diffusion and the \Matern~graph GPs by extending the idea of deriving continuous GPs from SDEs, which have the kernels as follows 
\begin{equation}\label{eq.graph_gp}
    \Sigma_{\text{diffu}} = \exp\bigg(-\frac{\kappa^2}{2}L\bigg), \quad \Sigma_{\text{\Matern}} = \bigg(\frac{2\nu}{\kappa^2}I + L\bigg)^{-\nu}
  \end{equation} 
where $\kappa>0, \nu>0$ are hyperparameters.

\paragraph{Edge flows}
While it is possible to define edge flows on graphs, we consider a more general setting for a $\textSC$. 
A $\textSC$ generally contains $V,E,T$ three sets, where $V,E$ are the sets of nodes and edges, same as for graphs, and $T$ is the set of triangular faces (triangles) such that if $(i,j),(j,k),(i,k)$ form a closed triangle, then $t=(i,j,k) \in T$.
Not all three pairwise connected edges are necessarily closed and included in $T$.
For each edge and triangle, we assume the increasing order of their node labels as their reference orientation.
(Note that the orientation of a general simplex is an equivalence class of permutations of its labels. Two orientations are equivalent (resp. opposite) if they differ by an even (resp. odd) permutation \citep{limHodgeLaplaciansGraphs2020}.)
An oriented edge, denoted as $e=[i,j]$, is an ordering of $\{i,j\}$. 
This is not a directed edge allowing flow only from $i$ to $j$, but rather an assignment of the sign of the flow: from $i$ to $j$, it is positive and the reverse is negative. 
Likewise goes for the oriented triangle $t=[i,j,k]$.
% In a $\textSC$, the functions $E\to\R$ on its edges are required to be alternating, meaning that we have 
In a $\textSC$, we can define an edge flow by real-valued functions on its edges, collected in $x=[x(e_0),\dots,x(e_{n_1})]\in\R^{n_1}$. which are required to be alternating, meaning that $x(\bar{e}) = - x(e)$ if $\bar{e} = [j,i]$ is oriented opposite to the reference $e=[i,j]$. 
Likewise, a triangle signal can be defined via an alternating function on triangles.

In the same spirit as graph Laplacians operating on node functions, we can define the discrete Hodge Laplacian operating on edge flows as $L = L_{\rmd} + L_{\rmu}:=B_1^\top B_1 + B_2 B_2^\top$ where $B_2$ is the edge-to-triangle incidence matrix.
The Hodge Laplacian $L$ describes the connectivity of edges where the down part $L_{\rmd}$ and the up part $L_{\rmu}$ encode how edges are adjacent, respectively, through nodes and via triangles. 
Moreover, the Hodge Laplacian $L$ is also positive semi-definite. 
\citet{yang2024hodge} generalized the graph GPs for edge flows, resulting in the diffusion and \Matern~edge GPs with the kernels of the same forms as in \cref{eq.graph_gp} but with the Hodge Laplacian $L$ instead of the graph Laplacian $L$.
% For an edge flow $x\in\R^{|E|}$, the standard diffusion and \Matern~edge GPs can be built based on the Hodge Laplacian $L_1$ as \cref{eq.graph_gp}. 
Moreover, based on the combinatorial Hodge theory \citep{limHodgeLaplaciansGraphs2020} that 
\begin{equation}
    \R^{n_1} = \im(B_1^\top) \oplus \ker(L) \oplus \im(B_2)
\end{equation}
where $\im(B_1^\top)$ is the gradient space, $\ker(L)$ the harmonic space and $\im(B_2)$ the curl space,
one can define the two types of edge GPs living in certain Hodge subspace $\Box\in\{H,G,C\}$ with the kernels of the forms 
\begin{equation}
\Sigma_{\text{diffu},\Box} = \sigma_\Box^2 U_{\Box} \exp\bigg(-\frac{\kappa_\Box^2}{2}\Lambda_{\Box}\bigg)U_{\Box}^\top, \quad \Sigma_{\text{\Matern},\Box} = \sigma_\Box^2 U_{\Box} \bigg(\frac{2\nu_\Box}{\kappa^2}I + \Lambda_{\Box}\bigg)^{-\nu_\Box}U_{\Box}^\top
\end{equation}
where $(U_\Box,\Lambda_\Box)$ are the eigenpairs of the Hodge Laplacian in the gradient (G), curl (C) and harmonic (H) subspaces, respectively \citep{yang2022simplicial_b}.
The samples from certain Hodge GP are in the corresponding Hodge subspace, which allows us to model the edge flows with different properties.

\subsection{Preliminaries on (Stochastic) Differential Equations}
We are involved with differential equations in this work. 
In the following, we review some results on ordinary differential equations (ODEs) and SDEs which are required later.
\subsubsection{on ODEs}
Given an initial solution $x_0\in\R^n$, consider a linear differential system of the form 
\begin{equation}\label{eq:linear_ode}
    \diff x_t = A_t x_t \diff t \tag{linear ODE}
\end{equation}
with $A_t$ a time-varying matrix. 
To solve this system in closed-form, we require an expression for the state \emph{transition matrix} $\Psi(t,s)$ which transforms the solution at $s$ to $t$, $x_t=\Psi(t,s)x_{s}$. 
For the general case, the closed-form of $\Psi(t,s)$ is not possible.
In the following, we introduce an important class of matrices $A_t$ for which a closed-form solution is possible \citep{antsaklis1997linear}.
\begin{lemma}[Closed-form of the transition matrix of a linear ODE] \label{app.lemma.linear_ode_transition_matrix}
    Given a \ref{eq:linear_ode}, if for every $s,t\geq 0$, we have 
    \begin{equation} \label{eq.condition_linear_ode_transition_matrix}
        A_t \bigg[\int_s^t A_\tau \diff \tau \bigg] = \bigg[\int_s^t A_\tau \diff \tau \bigg] A_t,
    \end{equation}
    then the transition matrix is given by 
    \begin{equation}
        \Psi(t,s) = \exp\bigg(\int_s^t A_\tau \diff \tau \bigg) \triangleq I + \int_s^t A_\tau \diff \tau + \frac{1}{2!} \bigg(\int_s^t A_\tau \diff \tau\bigg)^2 + \ldots.
    \end{equation} 
\end{lemma}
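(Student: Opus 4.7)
The plan is to exhibit a candidate matrix-valued function and verify that it satisfies the two defining properties of the transition matrix $\Psi(t,s)$ for the \ref{eq:linear_ode}: namely, $\partial_t \Psi(t,s) = A_t \, \Psi(t,s)$ together with the initial condition $\Psi(s,s) = I$. Standard ODE uniqueness then forces $\Psi(t,s)$ to equal the candidate. The natural candidate is of course $\Phi(t,s) := \exp\!\left(\int_s^t A_\tau \, \diff \tau\right)$, which manifestly reduces to $I$ when $t=s$, so the initial condition is free.

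The substantive step is differentiating $\Phi(t,s)$ in $t$. Writing $M(t,s) := \int_s^t A_\tau \, \diff \tau$ so that $\partial_t M(t,s) = A_t$ by the fundamental theorem of calculus, I would expand $\Phi = \sum_{k\geq 0} M^k/k!$ and differentiate term by term, which is justified by the local uniform convergence of the series and its formal derivative. The derivative of $M^k$ is in general the noncommutative sum $\sum_{j=0}^{k-1} M^j (\partial_t M) M^{k-1-j}$. Here is where the hypothesis \eqref{eq.condition_linear_ode_transition_matrix} enters: it says $A_t$ commutes with $M(t,s)$, hence with every power $M(t,s)^j$, so each of the $k$ summands equals $A_t M(t,s)^{k-1}$, giving $\partial_t M^k = k \, A_t \, M^{k-1}$. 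Summing the series yields $\partial_t \Phi(t,s) = A_t \, \Phi(t,s)$, as required.

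Thus $\Phi$ solves the same matrix initial-value problem as $\Psi$, and by the standard Picard--Lindelöf uniqueness for linear ODEs (applicable under the assumed local Lipschitz/continuity of $A_t$) we conclude $\Psi(t,s) = \exp\!\left(\int_s^t A_\tau \, \diff \tau\right)$.

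The main obstacle is conceptual rather than computational: in general $\tfrac{d}{dt}\exp(M(t)) \neq M'(t)\exp(M(t))$, and one must instead use a Duhamel-type formula involving $\int_0^1 e^{sM}M'e^{(1-s)M}\diff s$. Condition \eqref{eq.condition_linear_ode_transition_matrix} is precisely the minimal assumption that collapses this Duhamel expression to the naive product rule, so the proof really hinges on flagging that commutation and invoking it at the term-by-term differentiation step. Two standard special cases worth noting as sanity checks are (i) $A_t \equiv A$ constant, where the condition is trivial, and (ii) $A_t = \sum_k h_k(t) L^k$ a matrix polynomial in a fixed operator $L$, as used throughout the paper in \cref{eq.topological_drift}, where $A_t$ and $\int_s^t A_\tau \diff\tau$ are both polynomials in $L$ and hence commute automatically.
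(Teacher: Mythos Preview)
Your argument is correct and is the standard textbook proof: verify the candidate $\exp\!\big(\int_s^t A_\tau\,\diff\tau\big)$ satisfies the matrix IVP $\partial_t\Psi=A_t\Psi$, $\Psi(s,s)=I$, with the commutation hypothesis \eqref{eq.condition_linear_ode_transition_matrix} collapsing the noncommutative Leibniz sum $\sum_j M^j(\partial_t M)M^{k-1-j}$ to $kA_tM^{k-1}$, then invoke uniqueness. The paper itself does not supply a proof of this lemma; it is stated as a preliminary result with a citation to a standard reference, so there is nothing further to compare against.
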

For the scalar case, or when $A_t$ is diagonal, or for $A_t=A$, \cref{eq.condition_linear_ode_transition_matrix} is always true. 

\begin{lemma}\label{app.lemma.linear_ode_transition_matrix_condition_simplified}
   For $A_t\in\sC[\R,\R^{n\times n}]$, \cref{eq.condition_linear_ode_transition_matrix} is true if and only if $A_t A_s = A_s A_t$ for all $s,t$.
\end{lemma}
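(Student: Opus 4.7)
The plan is to prove the equivalence by the standard two-direction argument, with the nontrivial direction handled by differentiation with respect to the lower limit of the integral.

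For the $(\Leftarrow)$ direction, suppose $A_\tau A_t = A_t A_\tau$ for every pair $(\tau, t)$. Since matrix multiplication commutes with the Bochner/Riemann integral with respect to a scalar parameter, I can push the (constant-in-$\tau$) factor $A_t$ inside, obtaining
\begin{equation*}
   A_t \int_s^t A_\tau \diff\tau \;=\; \int_s^t A_t A_\tau \diff\tau \;=\; \int_s^t A_\tau A_t \diff\tau \;=\; \biggl[\int_s^t A_\tau \diff\tau\biggr] A_t,
\end{equation*}
which is exactly \cref{eq.condition_linear_ode_transition_matrix}.

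For the $(\Rightarrow)$ direction, fix $t$ and define
\begin{equation*}
   F(s,t) \;:=\; A_t \biggl[\int_s^t A_\tau \diff\tau\biggr] - \biggl[\int_s^t A_\tau \diff\tau\biggr] A_t,
\end{equation*}
which vanishes identically by hypothesis. Because $\tau \mapsto A_\tau$ is continuous, the fundamental theorem of calculus gives $\partial_s \int_s^t A_\tau \diff\tau = -A_s$. Since $A_t$ does not depend on $s$, differentiating $F(s,t)=0$ in $s$ yields $-A_t A_s + A_s A_t = 0$, i.e.\ $A_t A_s = A_s A_t$. As $s,t\geq 0$ are arbitrary, this is the desired commutativity.

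The only subtle point is to differentiate with respect to $s$ rather than $t$: we are given only continuity of $A$, not differentiability, so a $t$-derivative of $A_t$ is not available, whereas the $s$-derivative of the integral is well-defined purely from continuity. Otherwise the argument is essentially a one-line application of the fundamental theorem of calculus in each direction, so I do not anticipate any real obstacle.
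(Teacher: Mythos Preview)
Your proof is correct. The paper states this lemma as a preliminary result without proof, so there is no paper proof to compare against; your argument---pushing a constant factor through the integral for $(\Leftarrow)$ and differentiating in the lower limit $s$ for $(\Rightarrow)$---is the standard and fully rigorous way to establish the equivalence under mere continuity of $A$.
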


\begin{lemma}[Integration of matrix exponential]\label{proof.lemma.matrix_exponential_integral}
    For a nonsingular matrix $A$, we have
    \begin{equation}
        \int_s^t \exp(A \tau) \diff \tau = \big[\exp(A t) - \exp(A s)\big] A^{-1}. 
    \end{equation}
\end{lemma}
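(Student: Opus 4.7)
The plan is to treat this as a direct application of the fundamental theorem of calculus, lifted from scalar- to matrix-valued integrands. The entire content is captured by producing a correct antiderivative for the matrix-valued function $\tau \mapsto \exp(A\tau)$ and then evaluating at the endpoints; nonsingularity of $A$ is used solely to guarantee that the natural candidate antiderivative exists.

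First I would recall the power-series definition $\exp(A\tau) = \sum_{k=0}^{\infty} \frac{(A\tau)^k}{k!}$ and differentiate termwise, which is justified by the uniform (operator-norm) convergence of the series on any bounded interval. This yields
\begin{equation*}
    \frac{\diff}{\diff \tau} \exp(A\tau) = A \exp(A\tau) = \exp(A\tau)\, A,
\end{equation*}
where the second equality follows because every term in the series is a polynomial in $A$ and hence commutes with $A$. This commutation is in fact the point where I invoke \cref{app.lemma.linear_ode_transition_matrix} implicitly: $\exp(A\tau)$ is the transition matrix of the constant-coefficient system, and as a matrix function of $A$ it commutes with $A$.

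Next, since $A$ is nonsingular, $A^{-1}$ exists and commutes with $A$, hence with every $\exp(A\tau)$. I would define $F(\tau) := \exp(A\tau)\, A^{-1}$ and compute
\begin{equation*}
    F'(\tau) = A \exp(A\tau)\, A^{-1} = \exp(A\tau),
\end{equation*}
so $F$ is an antiderivative of the integrand. The fundamental theorem of calculus applied entrywise (or in any operator norm, using that $F$ is continuously differentiable on $\R$) then gives
\begin{equation*}
    \int_s^t \exp(A\tau)\, \diff \tau = F(t) - F(s) = \bigl[\exp(At) - \exp(As)\bigr] A^{-1},
\end{equation*}
which is the claim.

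There is no genuine obstacle in this proof; the only point that deserves a line of care is justifying that $A^{-1}$ can be pulled out on the right (rather than the left) of the difference, which is immediate once one notes that $A^{-1}$ commutes with $\exp(A\tau)$. Correspondingly, the identity can equivalently be written as $A^{-1}\bigl[\exp(At) - \exp(As)\bigr]$. If $A$ were singular the same derivation would fail at the definition of $F$, which is exactly why the hypothesis of nonsingularity appears; in the applications of the lemma in \cref{app.sec.stochastic_dynamics}, the use of a perturbed $L + \epsilon I$ for small $\epsilon > 0$ (already flagged in the main text around \cref{eq.transition_kernel_tsde_bm_ve}) is precisely what enforces this.
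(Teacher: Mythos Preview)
Your proof is correct. The paper itself states this lemma without proof, treating it as a standard fact from matrix calculus; your argument via the antiderivative $F(\tau) = \exp(A\tau)\,A^{-1}$ and the fundamental theorem of calculus is exactly the expected justification.
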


\subsubsection{On SDEs}
\begin{lemma}[Itô isometry, \citet{oksendal2013stochastic}]\label{app.lemma.ito_isometry}
    Let $W:[0,1]\times\gX \to \R$ denote the canonical real-valued Wiener process defined up to time $1$, and let $X:[0,1]\times\gX\to\R$ be a stochastic process that is adapted to the filtration generated by $W$. Then  
    \begin{equation}
        \E \bigg[ \biggl( \int_s^t X_t \diff W_t \biggr)^2 \bigg] = \E \biggl[\int_s^t X_t^2 \diff t\biggr],
    \end{equation}
and
    \begin{equation}
        \E\bigg[ \biggl(\int_0^t X_t\diff W_t\biggr) \biggl(\int_0^t Y_t\diff W_t \biggr)\bigg] = \E\biggl[\int_0^t X_t Y_t \diff t\biggr].
    \end{equation}
    This corollary allows us to compute the covariance of two stochastic processes $X_t$ and $Y_t$ that are adapted to the same filtration.
\end{lemma}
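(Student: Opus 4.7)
The plan is to follow the textbook route: establish the identity first for a dense class of adapted simple (step) integrands, extend by the standard $L^2$ isometric-extension argument to all square-integrable adapted processes, and then derive the bilinear (covariance) version from the quadratic one by polarization. No topological structure enters the proof; the statement is purely about the one-dimensional Itô integral and follows the classical argument of \citet{oksendal2013stochastic}.

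\textbf{Simple processes.} First, I would fix an adapted simple integrand of the form $X_\tau=\sum_{i=0}^{N-1} c_i\,\mathbf{1}_{[\tau_i,\tau_{i+1})}(\tau)$ on $[s,t]$ with $s=\tau_0<\cdots<\tau_N=t$ and each $c_i$ measurable with respect to $\gF_{\tau_i}$ and square-integrable. By definition the Itô integral reduces to the finite sum $\sum_i c_i\,\Delta W_i$ with $\Delta W_i:=W_{\tau_{i+1}}-W_{\tau_i}$. Squaring produces diagonal contributions $c_i^2(\Delta W_i)^2$ and cross contributions $c_ic_j\,\Delta W_i\Delta W_j$ for $i\neq j$. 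Conditioning on $\gF_{\tau_{\max(i,j)}}$ kills the cross terms, since the three factors $c_i,c_j,\Delta W_{\min(i,j)}$ are all $\gF_{\tau_{\max(i,j)}}$-measurable while the remaining increment is independent with mean zero. For the diagonal, conditioning on $\gF_{\tau_i}$ and using $\E[(\Delta W_i)^2\mid\gF_{\tau_i}]=\tau_{i+1}-\tau_i$ gives $\E[c_i^2(\Delta W_i)^2]=\E\bigl[\int_{\tau_i}^{\tau_{i+1}}X_\tau^2\,d\tau\bigr]$. Summation over $i$ yields the isometry on simple processes.

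\textbf{Density and polarization.} Any adapted $X$ with $\E[\int_s^t X_\tau^2\,d\tau]<\infty$ is a $L^2(d\tau\otimes d\mathbb{P})$-limit of such simple processes $X^{(n)}$ (truncate and mollify). Since $X\mapsto\int X\,dW$ is an isometry on the simple class by the previous step, by continuous linear extension it defines an isometry on the full $L^2$-closure, yielding the first identity for general $X$. The bilinear identity follows from the polarization formula
\[
\Bigl(\!\int\! X\,dW\!\Bigr)\Bigl(\!\int\! Y\,dW\!\Bigr)=\tfrac{1}{4}\Bigl[\Bigl(\!\int\!(X+Y)\,dW\Bigr)^{2}-\Bigl(\!\int\!(X-Y)\,dW\Bigr)^{2}\Bigr],
\]
combined with the same identity under the $d\tau$-integral applied to $XY$.

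\textbf{Main obstacle.} There is essentially no obstacle, the result being classical. The most delicate point is the bookkeeping in the cross-term computation: one must condition on the $\sigma$-algebra indexed by the \emph{later} partition time so that the three adapted/earlier-increment factors are measurable while the later Brownian increment retains its independence and zero mean. The extension step is pure functional analysis once the Itô integral is defined as the $L^2$-closure of simple-process integrals, and the polarization step is algebraic.
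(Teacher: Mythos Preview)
Your proof is correct and follows the standard textbook route from \citet{oksendal2013stochastic}. Note, however, that the paper does not actually prove this lemma: it is stated as a cited preliminary result with no accompanying proof, so there is nothing to compare against beyond the original reference, which your argument faithfully reproduces.
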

\paragraph{Transition densities of SDEs} All Itô processes, that is, solutions to Itô SDEs, are \emph{Markov processes}. This means that all Itô processes are, in a probabilistic sense, completely characterized by the transition densities (from $x_s$ at time $s$ to $x_t$ at time $t$, denoted by $p_{t|s}(x_t|x_s)\equiv p(x_s,s;x_t,t)$). The transition density is also a solution to the FPK equation with a degenerate (Dirac delta) initial density concentrated on $x_s$ at time $s$. We refer to \citet[Thm 5.10]{sarkka_applied_2019}.

% \begin{lemma}[Steady-state of OU Process \citep{risken1996fokker}]
% Given an initial sample $x_0\in\R^n$, a general OU process follows the following dynamics:
% \begin{equation}
%     \diff X_t = - \Gamma X_t \diff t + C \diff W_t 
% \end{equation}   
% where $\Gamma\in\R^{n\times n}, C\in\R^{n\times d}$ and $W_t\in\R^d$ is the standard $d$-dim Wiener process. \textcolor{blue}{Regardless of the initial distribution}, the steady-state distribution of $X_t$ is Gaussian with zero mean and covariance matrix $\Sigma$ respecting the Lyapunov equation 
% \begin{equation}
%     \Gamma\Sigma + \Sigma\Gamma^\top = C C^\top. 
% \end{equation}
% Given a singular value decomposition $\Gamma=U\Lambda V^\top$ \textcolor{blue}{Can this be relaxed to all real parts of the eigenvalues are positive [wiki]; all eigenvalues are different [\citep{risken1996fokker,chirikjian_stochastic_2009}]}, the covariance matrix can be found as 
% \begin{equation}
%     \Sigma = \sum_{i,j} \frac{1}{\lambda_i + \lambda_j} (\vv_i^\top CC^\top \vv_j) \vu_i\vu_j^\top.
% \end{equation}
% \end{lemma}
% \textcolor{blue}{Change the above results for the case of conditional distribution, which gives a Gaussian process}

\subsection{Transition Densities of \ref{eq.sde_topological}}\label{app.sec.transition_densities}

\paragraph{\ref{eq.sde_topological}}
First, we can find the transition matrix of the associated ODE to \ref{eq.sde_topological}. 
\begin{restatable}[]{lemma}{LemmaTransitionMatrix}\label{lemma:transition_matrix_ode}
    For an ODE  $\diff y_t = H_t(L)y_t \diff t$, the transition matrix is given by
    \begin{equation}\label{eq.transition_matrix}
       \Psi(t,s) =: \Psi_{ts} =  \exp\bigg( \int_s^t H_\tau \diff \tau \bigg) = I + \sum_{k=0}^\infty \frac{1}{k!} \biggl( \int_s^t H_\tau \diff \tau \biggr)^k \ctag{transition matrix} % = e^{\int_\tau^t H_s \diff s } =
    \end{equation}
    with $y_t = \Psi_{ts} y_s$. 
    Note that $\Psi_{ts}$ is symmetric since $H_t$ is a function of the symmetric $L$. 
\end{restatable}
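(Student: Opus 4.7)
The plan is to reduce the claim to the two earlier lemmas on linear ODEs (\cref{app.lemma.linear_ode_transition_matrix} and \cref{app.lemma.linear_ode_transition_matrix_condition_simplified}) by verifying the required commutation hypothesis, which follows almost for free because $H_t(L)$ is a matrix polynomial in the single fixed operator $L$.

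First I would unpack the definition $H_t(L) = \sum_{k=0}^{K} h_{k}(t)\, L^k$ and observe that, for any two times $s$ and $t$,
\begin{equation*}
H_t H_s \;=\; \sum_{k,\ell=0}^{K} h_{k}(t)\, h_{\ell}(s)\, L^{k+\ell} \;=\; \sum_{k,\ell=0}^{K} h_{\ell}(s)\, h_{k}(t)\, L^{k+\ell} \;=\; H_s H_t,
\end{equation*}
since all powers of $L$ commute with each other and the scalars $h_k(t), h_\ell(s)$ commute freely. Hence $\{H_t\}_{t\in[0,1]}$ is a commuting family. By \cref{app.lemma.linear_ode_transition_matrix_condition_simplified}, this is exactly equivalent to the hypothesis \cref{eq.condition_linear_ode_transition_matrix} of \cref{app.lemma.linear_ode_transition_matrix}, so I may immediately invoke the latter to conclude
\begin{equation*}
\Psi_{ts} \;=\; \exp\!\left(\int_s^t H_\tau\, \diff\tau\right),
\end{equation*}
with the Taylor expansion as stated; this yields $y_t = \Psi_{ts}\, y_s$.

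For the symmetry claim, I would note that $\int_s^t H_\tau\, \diff\tau = \sum_{k=0}^{K}\bigl(\int_s^t h_k(\tau)\,\diff\tau\bigr) L^k$ is itself a polynomial in $L$, and $L$ is symmetric by assumption (Laplacian-type). Hence this integral is symmetric, and its matrix exponential is symmetric as well, giving $\Psi_{ts}^\top = \Psi_{ts}$.

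There is no real obstacle here: the only nontrivial observation is that the \emph{topological} structure of the drift, namely that it is a polynomial in a \emph{single} fixed operator $L$, automatically delivers the commutativity across time that the general linear ODE theory requires. If desired, one could additionally remark that in the eigenbasis $L = U\Lambda U^\top$ one obtains $\Psi_{ts} = U\exp\bigl(\sum_{k=0}^{K}\tilde h_k^{t,s}\Lambda^k\bigr)U^\top$ with $\tilde h_k^{t,s} = \int_s^t h_k(\tau)\,\diff\tau$, which both re-verifies symmetry and matches the spectral computation invoked in the main text.
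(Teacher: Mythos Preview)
Your proposal is correct and follows essentially the same approach as the paper: the paper's proof also invokes \cref{app.lemma.linear_ode_transition_matrix,app.lemma.linear_ode_transition_matrix_condition_simplified} after noting that $H_t H_\tau = H_\tau H_t$ for all $t,\tau$, and then records the spectral form $\Psi_{ts} = \exp\bigl(\sum_k \tilde h_k^{t,s} L^k\bigr)$. Your version is simply more explicit about why the polynomials in $L$ commute and why the resulting exponential is symmetric.
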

This is a direct result from \cref{app.lemma.linear_ode_transition_matrix,app.lemma.linear_ode_transition_matrix_condition_simplified} since $H_t H_\tau = H_\tau H_t$ for all $t,\tau$. 
By the definition of matrix integral, the computation of $\Psi_{ts}$ is given by 
\begin{equation}
    \Psi(t,\tau) = \exp(\tilde{H}_{t,\tau}(L)) = \exp\bigg( \sum_{k=0}^K \tilde{h}_{k}^{t,\tau} L^k \bigg)
 \end{equation}
 where $\tilde{h}_{k}^{t,\tau} = \int_\tau^t h_{k,s} \diff s$ are the integral of the scalar coefficients in $H_t$. 
In the following, we characterize the transition densities of the \ref{eq.sde_topological}, as well as the three concrete examples in \cref{ex.tsheatbm,ex.tsheatve,ex.tsheatvp}.
Then, using the formulas in \citet[Eq. 6.7]{sarkka_applied_2019}, we can compute the statistics of the transition kernel. 

The following two lemmas compose \cref{prop.topological_sde_solution_and_conditional_process}.
\begin{lemma}\label{app.lemma.transition_density_tsde}
    The transition density $p_{t|s}(y_t|y_s)$ of the \ref{eq.sde_topological} conditioned on $Y_s=y_s$ is Gaussian 
    \begin{equation}
        p_{t|s} (y_t|y_s) \sim \gN(y_t; m_{t|s},K_{t|s}  )
    \end{equation}
    with the mean and covariance, for $t\geq s$,  as follows 
    \begin{align*}
        m_{t|s} = \Psi_{ts} y_s + \Psi_t \int_s^t \Psi_\tau^{-1} \alpha_\tau \diff\tau, \quad K_{t|s} = \Psi_t \biggl(\int_s^t g^2_\tau \Psi_\tau^{-2} \diff \tau \biggr)\Psi_t^\top.
    \end{align*}
\end{lemma}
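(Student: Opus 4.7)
The plan is to treat \ref{eq.sde_topological} as a standard linear Itô SDE with matrix-valued drift coefficient $H_t(L)$ and deterministic bias and diffusion, and to apply the method of variation of constants (integrating factor) using the explicit transition matrix $\Psi_t$ from \cref{lemma:transition_matrix_ode}. Because the drift is linear in $Y_t$ and $g_t$ is deterministic, the conditional law of $Y_t$ given $Y_s=y_s$ is automatically Gaussian, so the whole task reduces to computing its first two moments. The key structural fact that makes every manipulation clean is that $H_\tau$, for all $\tau$, are matrix polynomials in the same operator $L$, so they commute with each other and with $\Psi_t$.

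First, I would introduce the auxiliary process $Z_t := \Psi_t^{-1} Y_t$. Since $\Psi_t$ is deterministic with $\diff \Psi_t = H_t \Psi_t \diff t$ and $H_t$ commutes with $\Psi_t^{-1}$, one has $\diff \Psi_t^{-1} = -\Psi_t^{-1} H_t \diff t$. The ordinary product rule then gives
\begin{equation*}
\diff Z_t = -\Psi_t^{-1} H_t Y_t \diff t + \Psi_t^{-1}\big(H_t Y_t + \alpha_t\big)\diff t + g_t \Psi_t^{-1}\diff W_t = \Psi_t^{-1}\alpha_t\,\diff t + g_t \Psi_t^{-1}\diff W_t,
\end{equation*}
so the $Y_t$-dependent term cancels. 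Integrating from $s$ to $t$ and premultiplying by $\Psi_t$ yields
\begin{equation*}
Y_t = \Psi_t \Psi_s^{-1} Y_s + \Psi_t \int_s^t \Psi_\tau^{-1}\alpha_\tau \diff \tau + \Psi_t \int_s^t g_\tau \Psi_\tau^{-1}\diff W_\tau,
\end{equation*}
and the commuting property lets me identify $\Psi_t \Psi_s^{-1}$ with $\Psi_{ts}$ from \cref{eq.transition_matrix}.

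Conditioned on $Y_s=y_s$, the first two terms above are deterministic while the last is an Itô integral of a deterministic integrand against Brownian motion; hence it is a centred Gaussian, which both establishes the Gaussianity of $p_{t|s}$ and yields the stated mean $m_{t|s}$ directly. For the covariance, I would apply the Itô isometry (the vector-valued analogue of \cref{app.lemma.ito_isometry}) to obtain
\begin{equation*}
K_{t|s} = \Psi_t \bigg(\int_s^t g_\tau^2 \Psi_\tau^{-1}\Psi_\tau^{-\top} \diff \tau\bigg)\Psi_t^\top = \Psi_t \bigg(\int_s^t g_\tau^2 \Psi_\tau^{-2}\diff \tau\bigg)\Psi_t^\top,
\end{equation*}
where the simplification uses the symmetry $\Psi_\tau^\top = \Psi_\tau$ already noted in \cref{lemma:transition_matrix_ode}. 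I do not anticipate a real obstacle: this is a textbook application of variation of constants for linear SDEs (see \citet[Sec.~6.1]{sarkka_applied_2019}), and the only nontrivial structural input — the closed form of $\Psi_t$ — is available from the preceding lemma. The one point requiring care is the commutativity $H_t \Psi_t = \Psi_t H_t$, which is what legitimizes both $\diff \Psi_t^{-1} = -\Psi_t^{-1} H_t \diff t$ and the factorization $\Psi_{ts} = \Psi_t \Psi_s^{-1}$ without worrying about operator ordering.
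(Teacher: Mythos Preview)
Your proposal is correct and is essentially the same approach as the paper's: both rely on the standard variation-of-constants solution for linear SDEs. The paper simply invokes the transition-kernel formula from \citet[Eq.~6.7]{sarkka_applied_2019} and then uses $\Psi_{t\tau}=\Psi_t\Psi_\tau^{-1}$ to factor the integrals, whereas you carry out that derivation explicitly via the integrating factor $Z_t=\Psi_t^{-1}Y_t$ and the It\^o isometry; the content is identical.
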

\begin{proof}
    Given the \ref{eq.transition_matrix}, using the transition kernel formula in \citet[Eq. 6.7]{sarkka_applied_2019}, we have
    \begin{align*}
        m_{t|s} = \Psi_{ts} y_s + \int_s^t \Psi_{t\tau} \alpha_\tau \diff \tau = \Psi_{ts} y_s + \Psi_t \int_s^t \Psi_\tau^{-1} \alpha_\tau \diff\tau,
    \end{align*}
    where we use the property $\Psi_{t\tau} = \Psi_t\Psi_\tau^{-1}$. Likewise, we have 
    \begin{align*}
        K_{t|s} =\int_s^t g_\tau^2\Psi_{t\tau}\Psi_{t\tau}^\top\diff\tau = \Psi_t \biggl(\int_s^t g^2_\tau \Psi_\tau^{-2} \diff \tau \biggr) \Psi_t^\top.
    \vspace{-3mm}
    \end{align*}
    \vspace{-2mm}
\end{proof}
\begin{lemma}\label{app.lemma.conditional_statistics_tsde}
    Conditioned on $Y_0=y_0$, the cross covariance $K(t_1,t_2)$ of \ref{eq.sde_topological} at $t_1, t_2$ is given by 
    \begin{align*}
        K_{t_1,t_2} = \Psi_{t_1} \biggl(\int_0^{\min\{t_1,t_2\}} g^2_\tau \Psi_\tau^{-2} \diff \tau \biggr) \Psi_{t_2}^\top.
    \end{align*}
\end{lemma}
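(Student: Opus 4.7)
The plan is to derive the cross covariance directly from the closed-form strong solution of the \ref{eq.sde_topological}, using the Itô isometry. Since this lemma is essentially a two-time generalization of \cref{app.lemma.transition_density_tsde}, the same machinery applies with only one extra ingredient (evaluating a cross-integral of two stochastic integrals on overlapping intervals).

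First I would write down the strong solution of the linear \ref{eq.sde_topological}. By the variation-of-constants formula applied to the \ref{eq.transition_matrix} $\Psi_t$ from \cref{lemma:transition_matrix_ode}, conditional on $Y_0=y_0$ the process admits
\begin{equation*}
   Y_t \;=\; \Psi_t y_0 \;+\; \Psi_t\!\int_0^t \Psi_\tau^{-1}\alpha_\tau\,\diff\tau \;+\; \int_0^t g_\tau\,\Psi_{t\tau}\,\diff W_\tau,
\end{equation*}
using $\Psi_{t\tau}=\Psi_t\Psi_\tau^{-1}$, which is valid because the $H_\tau(L)$ all commute (they are polynomials in $L$). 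The conditional mean $m_t = \Psi_t y_0 + \xi_t$ matches \cref{eq.conditional_mean}, so the centered process is $Y_t - m_t = \int_0^t g_\tau\Psi_{t\tau}\,\diff W_\tau$.

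Next I would compute the cross covariance. Without loss of generality assume $t_1 \le t_2$. Then
\begin{equation*}
   K_{t_1,t_2} \;=\; \E\!\Bigl[(Y_{t_1}-m_{t_1})(Y_{t_2}-m_{t_2})^\top\Bigr] \;=\; \E\!\Bigl[\int_0^{t_1}\! g_\tau\Psi_{t_1\tau}\diff W_\tau \Bigl(\int_0^{t_2}\! g_\tau\Psi_{t_2\tau}\diff W_\tau\Bigr)^{\!\top}\Bigr].
\end{equation*}
By the multivariate Itô isometry (\cref{app.lemma.ito_isometry}) and the fact that $W$ has independent increments, contributions from $\tau\in(t_1,t_2]$ vanish, leaving
\begin{equation*}
   K_{t_1,t_2} \;=\; \int_0^{t_1} g_\tau^2\,\Psi_{t_1\tau}\,\Psi_{t_2\tau}^\top\,\diff\tau \;=\; \Psi_{t_1}\!\left(\int_0^{t_1} g_\tau^2\,\Psi_\tau^{-2}\,\diff\tau\right)\!\Psi_{t_2}^\top,
\end{equation*}
where I pulled $\Psi_{t_1}$ and $\Psi_{t_2}$ outside the integral (permissible because they do not depend on $\tau$) and used the symmetry of $\Psi_\tau$ together with $\Psi_{t\tau}=\Psi_t\Psi_\tau^{-1}$. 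Swapping the roles of $t_1$ and $t_2$ in the opposite case produces the upper limit $\min\{t_1,t_2\}$ in general, yielding the claimed formula.

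I do not anticipate any real obstacle: the lemma is the natural two-time extension of the one-time covariance in \cref{app.lemma.transition_density_tsde}, and the only subtlety is to justify carrying the (deterministic, time-dependent) matrices $\Psi_{t_i}$ outside the stochastic integral. This is routine because $\Psi_{t_i}$ is measurable with respect to $\tau$-trivial information. If one wanted to avoid any Itô calculus altogether, an equivalent path would be to observe that $(Y_{t_1},Y_{t_2})$ is jointly Gaussian (as an affine image of a Gaussian process) and then assemble the block covariance matrix from \cref{app.lemma.transition_density_tsde} together with the Markov property $Y_{t_2}\mid Y_{t_1} \sim \gN(\Psi_{t_2 t_1}Y_{t_1}+\xi_{t_2}-\Psi_{t_2t_1}\xi_{t_1},\,K_{t_2\mid t_1})$, which yields the same expression via $\Cov(Y_{t_1},Y_{t_2})=\Cov(Y_{t_1})\Psi_{t_2 t_1}^\top$.
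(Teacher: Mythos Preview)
Your proposal is correct and follows essentially the same route as the paper: both express the centered process as the stochastic integral $\int_0^{t} g_\tau\Psi_{t\tau}\,\diff W_\tau$, invoke the Itô isometry (\cref{app.lemma.ito_isometry}) to reduce the cross-expectation to $\int_0^{\min\{t_1,t_2\}} g_\tau^2\,\Psi_{t_1\tau}\Psi_{t_2\tau}^\top\,\diff\tau$, and then factor out $\Psi_{t_1}$ and $\Psi_{t_2}^\top$ using $\Psi_{t\tau}=\Psi_t\Psi_\tau^{-1}$ and the symmetry of $\Psi_\tau$. Your alternative Markov-property derivation at the end is a nice additional observation not present in the paper.
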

\begin{proof}
    By applying the cross covariance function in \citet[Sec 6.4]{sarkka_applied_2019}, we have
    \begin{align*}
        K_{t_1,t_2} &= \Cov[Y_{t_1},Y_{t_2}|Y_0]  = \E\biggl[ \biggl(\int_0^{t_1} g_{\tau_1} \Psi_{t_1,\tau_1} \diff W_{\tau_1} \biggr)  \biggl(\int_0^{t_2} g_{\tau_2} \Psi_{t_2,\tau_2} \diff W_{\tau_2} \biggr)^\top  \biggr] \\ 
        % &= \int_0^{\min\{t_1,t_2\}} g_\tau^2 \Psi_{t_1,\tau} \Psi_{t_2,\tau}^\top \diff\tau \ctag{by \cref{app.lemma.ito_isometry}}\\ 
        % &= \Psi_{t_1} \biggl(\int_0^{\min\{t_1,t_2\}} g^2_\tau \Psi_\tau^{-2} \diff \tau \biggr) \Psi_{t_2}^\top.
        &= \int_0^{\min\{t_1,t_2\}} g_\tau^2 \Psi_{t_1,\tau} \Psi_{t_2,\tau}^\top \diff\tau  
        = \Psi_{t_1} \biggl(\int_0^{\min\{t_1,t_2\}} g^2_\tau \Psi_\tau^{-2} \diff \tau \biggr) \Psi_{t_2}^\top. \ctag{by \cref{app.lemma.ito_isometry}}
    \end{align*}
    \vspace{-3mm}
\end{proof}
% \cref{app.lemma.transition_density_tsde,app.lemma.conditional_statistics_tsde} compose \cref{prop.topological_sde_solution_and_conditional_process}.

\paragraph{\tsheatbm}
Given an initial sample $y_0$ of the random topological signal $Y_0$, consider the SDE: 
    \begin{equation}
        \diff Y_t = -c L Y_t \diff t + g \diff W_t. \ctag{\tsheatbm} \label{proof.eq.topological_diffusion_sde_bm}
    \end{equation}
Note that when $L$ is singular, we consider a perturbed version $L+\epsilon I$ with a small constant $\epsilon>0$.
Its steady-state distribution has zero mean and covariance matrix $\Sigma=\frac{g^2}{2c}L^{-1}$.
The transition matrix of the associated ODE $\diff Y_t = -cL Y_t \diff t$ is given by 
\begin{equation} \label{proof.eq.transition_matrix_topological_diffusion}
    \Psi_{ts} = \exp\left(-c(t-s)L\right).
\end{equation}

\begin{lemma} 
    The transition density $p_{t|s}(y_t|y_s)$ of the \ref{proof.eq.topological_diffusion_sde_bm} conditioned on $Y_s=y_s$ is Gaussian 
    % \begin{equation} \label{app.eq.transition_matrix_topological_diffusion}
    %     p_{t|s} (y_t|y_s) \sim \gN(y_t; m_{t|s},K_{t|s}  )
    % \end{equation}
    with the mean and covariance, for $t\geq s$,  as follows 
    \begin{align*}
        % \mu_{t|s} = \Psi_{ts} y_s, \quad K_{t|s} = \frac{g^2}{2c} \big[ I - \exp(-2c L (t-s)) \big] L^{-1} 
        m_{t|s} = \Psi_{ts} y_s, \quad K_{t|s} = \frac{g^2}{2c} \big[ I - \exp(-2c L (t-s)) \big] L^{-1} .
    \end{align*}
    Moreover, the conditional dynamics $Y_t|Y_0=y_0$ has the covariance process at $t_1, t_2$ as  
    \begin{equation}
        K_{t_1,t_2} = \frac{g^2}{2c} \Big[\exp(-cL |t_2-t_1| ) - \exp(-cL (t_1+t_2))\Big] L^{-1}.
    \end{equation}
\end{lemma}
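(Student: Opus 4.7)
The lemma is a direct specialization of the two general results stated just above it (\cref{app.lemma.transition_density_tsde} and \cref{app.lemma.conditional_statistics_tsde}) to the parameters $H_t \equiv -cL$, $\alpha_t\equiv 0$ and $g_t \equiv g$, combined with the closed-form integration of matrix exponentials (\cref{proof.lemma.matrix_exponential_integral}). So the plan is essentially to carry out the substitutions and simplify; there is no conceptual obstacle beyond being careful with the matrix exponentials commuting with each other (which they do since they are all analytic functions of the single symmetric operator $L$).

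First, I would specialize the transition matrix. From \cref{lemma:transition_matrix_ode} with $H_\tau = -cL$ constant in $\tau$, we immediately get $\Psi_{ts} = \exp(-cL(t-s))$, and in particular $\Psi_t = \exp(-cLt)$ with inverse $\Psi_\tau^{-1} = \exp(cL\tau)$. Since $\alpha_\tau = 0$, \cref{app.lemma.transition_density_tsde} gives $m_{t|s} = \Psi_{ts} y_s$ at once.

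Next, for the conditional covariance, I would plug into $K_{t|s} = \Psi_t \bigl(\int_s^t g^2 \Psi_\tau^{-2}\diff\tau\bigr)\Psi_t^\top$. Since $g_\tau$ is constant and $\Psi_\tau^{-2} = \exp(2cL\tau)$, \cref{proof.lemma.matrix_exponential_integral} applied with the (possibly perturbed) nonsingular $L$ yields
\begin{equation*}
\int_s^t \exp(2cL\tau)\diff\tau = \tfrac{1}{2c}\bigl[\exp(2cLt) - \exp(2cLs)\bigr] L^{-1}.
\end{equation*}
Sandwiching this between $\Psi_t$ and $\Psi_t^\top = \exp(-cLt)$ and using that all matrix exponentials in $L$ commute, the $\exp(2cLt)$ term collapses to the identity and the $\exp(2cLs)$ term yields $\exp(-2cL(t-s))$, giving the stated $K_{t|s}$.

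Finally, for the cross covariance I would apply \cref{app.lemma.conditional_statistics_tsde} with $Y_0 = y_0$. Assuming without loss of generality that $t_1 \le t_2$, the inner integral runs from $0$ to $t_1$ and evaluates to $\tfrac{1}{2c}[\exp(2cLt_1) - I]L^{-1}$. Left-multiplying by $\Psi_{t_1} = \exp(-cLt_1)$ and right-multiplying by $\Psi_{t_2}^\top = \exp(-cLt_2)$, and again using commutativity of functions of $L$, the two terms simplify to $\exp(-cL(t_2-t_1))$ and $\exp(-cL(t_1+t_2))$ respectively. Since the expression is symmetric in $(t_1,t_2)$, replacing $t_2-t_1$ by $|t_2-t_1|$ handles the other case. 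The only place where care is needed is the invertibility of $L$ used in \cref{proof.lemma.matrix_exponential_integral}; if $L$ is singular, the computation is carried out for $L + \epsilon I$ as indicated in the statement, and the resulting formula extends by continuity.
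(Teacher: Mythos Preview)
Your proposal is correct and follows essentially the same approach as the paper: specialize the general formulas from \cref{app.lemma.transition_density_tsde} and \cref{app.lemma.conditional_statistics_tsde} to $H_t=-cL$, $\alpha_t=0$, $g_t=g$, evaluate the resulting integrals via \cref{proof.lemma.matrix_exponential_integral}, and simplify using commutativity of functions of $L$. The paper's proof carries out exactly these steps line by line; your added remark about the $L+\epsilon I$ perturbation for singular $L$ is consistent with the paper's earlier comment on this point.
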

\begin{proof}
    For the conditional mean, we can obtain it directly from the transition matrix of the associated ODE. 
    For the conditional covariance, we have 
    \begin{align*}
        K_{t|s} & = \int_s^t \Psi(t,\tau) g^2 \Psi(t,\tau)^\top \diff \tau = \Psi_t \biggl(\int_s^t\Psi_\tau^{-2}g^2\diff\tau\biggr)\Psi_t^\top\\
        & = g^2 \Psi_t\biggl( \int_s^t \exp(2cL\tau) \diff \tau\biggr) \Psi_t^\top \ctag{by $\Psi_\tau^{-2} = \exp(2cL\tau)$} \\ 
        & = \frac{g^2}{2c} \Psi_t \big[\exp(2cLt) - \exp(2cLs)\big] L^{-1} \Psi_t^\top \ctag{by \cref{proof.lemma.matrix_exponential_integral}} \\
        & = \frac{g^2}{2c} \big[I - \exp(-2cL(t-s))\big] L^{-1}. \ctag{$\Psi_t = \exp(-cLt)$}
    \end{align*}
    To compute the covariance process of the conditional dynamics $Y_t|Y_0=y_0$, by definition, we have, for $t_1\leq t_2$
    \begin{align*}
        K_{t_1,t_2} &= \Cov[Y_{t_1},Y_{t_2}|Y_0]  = \Psi_{t_1} \biggl(\int_0^{t_1} g^2 \Psi_\tau^{-2} \diff \tau \biggr) \Psi_{t_2}^\top \ctag{by \cref{prop.topological_sde_solution_and_conditional_process}} \\ 
        & = g^2 \Psi_{t_1} \bigg[\int_0^{t_1}  \exp(2cL\tau) \diff \tau \bigg] \Psi_{t_2}^\top \ctag{by $\Psi_\tau^{-2} = \exp(2cL\tau)$} \\
        & = g^2 \Psi_{t_1} \bigg[ \frac{1}{2c} \big[  \exp(2cL t_1) - I \big] \bigg] L^{-1} \Psi_{t_2}^\top \ctag{by \cref{proof.lemma.matrix_exponential_integral}} \\
        & = \frac{g^2}{2c} \big[ \exp(-cL (t_2-t_1)) - \exp(-cL (t_1+t_2))\big] L^{-1}.
    \end{align*}
    The case of $t_1>t_2$ can be similarly derived, which completes the proof. 
\end{proof}

\paragraph{\ref{eq.topological_diffusion_sde_ve}}
% The continuous limit of the Markov chain in score matching with Langevin dynamics (SMLD) \citep{song_generative_2020} is a \emph{variance exploding} (VE) process \citep{song2020score}
% \begin{equation}
%     \diff Y_t = \sqrt{\frac{\diff \sigma^2(t)}{\diff t}} \diff W_t, \quad \text{where }   \sigma(t) = \sigma_{\min} \bigg(\frac{\sigma_{\max}}{\sigma_{\min}}\bigg)^t \ctag{VE} \label{proof.eq.ve}
% \end{equation}
% with positive noise scales $\sigma_{\min}< \sigma_{\max}$. 
% We re-express the SDE here for convenience
% \begin{equation}
%     \diff Y_t = - c L Y_t \diff t + \sqrt{\frac{\diff \sigma^2(t)}{\diff t}} \diff W_t.
%     \ctag{\tsheatve} \label{app.eq.topological_diffusion_sde_ve}
% \end{equation}
\begin{lemma}
The Gaussian transition kernel $p(t|s)$ of \ref{eq.topological_diffusion_sde_ve}
% \ref{app.eq.topological_diffusion_sde_ve} 
has the mean and covariance 
\begin{equation}
    m_{t|s} = \Psi_{ts} y_s, \quad K_{t|s} = \sigma_{\min}^2 \ln\biggl(\frac{\sigma_{\max}}{\sigma_{\min}}\biggr) \exp(-2cLt) \big[\exp(2 A t) - \exp(2As)\big] A^{-1}
\end{equation}
where $\Psi_{ts}$ is the same as \cref{proof.eq.transition_matrix_topological_diffusion} and $A = \ln \big(\frac{\sigma_{\max}}{\sigma_{\min}}\big) I + c L$. 
The cross covariance between $Y_t$ and $Y_s$, conditioned on $Y_0$, is given by 
\begin{equation}
    K_{t_1,t_2} =  \sigma_{\min}^2  \ln \biggl(\frac{\sigma_{\max}}{\sigma_{\min}}\biggr) \exp(-cL(t_1+t_2)) [\exp(2A\min\{t_1,t_2\}) - I] A^{-1} .
\end{equation}

\end{lemma}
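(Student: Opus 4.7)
The plan is to specialize the general Gaussian-kernel formulas from \cref{prop.topological_sde_solution_and_conditional_process} (equivalently \cref{app.lemma.transition_density_tsde,app.lemma.conditional_statistics_tsde}) to the $\gT$SHeat\textsubscript{VE} dynamics, where $f_t = -cL$ is time-constant, the bias $\alpha_\tau = 0$, and $g_\tau^2 = \diff \sigma^2(\tau)/\diff \tau$ with $\sigma(\tau) = \sigma_{\min}(\sigma_{\max}/\sigma_{\min})^\tau$. The mean is immediate: since there is no bias, the cited formula collapses to $m_{t|s} = \Psi_{ts}\, y_s$ with $\Psi_{ts} = \exp(-c(t-s)L)$ from \cref{proof.eq.transition_matrix_topological_diffusion}.

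For the covariance, I would first compute the scalar $g_\tau^2$ explicitly, giving $g_\tau^2 = 2\sigma_{\min}^2 \ln(\sigma_{\max}/\sigma_{\min})\,(\sigma_{\max}/\sigma_{\min})^{2\tau}$, and recognize $(\sigma_{\max}/\sigma_{\min})^{2\tau} = \exp(2\tau \ln(\sigma_{\max}/\sigma_{\min}))$. Combining with $\Psi_\tau^{-2} = \exp(2cL\tau)$ and using that $\ln(\sigma_{\max}/\sigma_{\min})I$ commutes with $cL$ (both are polynomials in $L$), the integrand factors as a single matrix exponential in $A = \ln(\sigma_{\max}/\sigma_{\min})I + cL$:
\begin{equation*}
    g_\tau^2 \Psi_\tau^{-2} = 2\sigma_{\min}^2 \ln(\sigma_{\max}/\sigma_{\min})\, \exp(2\tau A).
\end{equation*}
This is the key observation — everything else is bookkeeping.

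I would then invoke \cref{proof.lemma.matrix_exponential_integral} to evaluate $\int_s^t \exp(2\tau A)\diff\tau = \tfrac{1}{2}[\exp(2At)-\exp(2As)]A^{-1}$ (assuming $A$ is invertible, which holds provided $\ln(\sigma_{\max}/\sigma_{\min})>0$ and $L\succeq 0$; otherwise the $\epsilon$-perturbation convention can be invoked as elsewhere in the appendix). Sandwiching by $\Psi_t = \exp(-cLt)$ on the left and $\Psi_t^\top = \exp(-cLt)$ on the right, and using commutativity once more to merge the factors into $\exp(-2cLt)$, yields the claimed $K_{t|s}$. The cross covariance formula follows identically from \cref{app.lemma.conditional_statistics_tsde}: the integral now runs on $[0,\min\{t,s\}]$, producing $[\exp(2A\min\{s,t\})-I]A^{-1}$, while the left/right factors $\Psi_{t_1}$ and $\Psi_{t_2}^\top$ combine into $\exp(-cL(t+s))$.

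There is no real obstacle — the only subtlety is the commutativity-based factorization of the scalar exponential with the matrix exponential into a single $\exp(2\tau A)$, and the benign handling of $A^{-1}$ in case $L$ has a nontrivial kernel, which parallels the $L+\epsilon I$ convention already used for the \tsheatbm~derivation.
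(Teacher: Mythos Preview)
Your proposal is correct and follows essentially the same approach as the paper's proof: specialize the general transition-kernel formulas, compute $g_\tau^2$ explicitly, combine the scalar and matrix exponentials into $\exp(2\tau A)$ via commutativity, and apply \cref{proof.lemma.matrix_exponential_integral}. One minor remark: since $\sigma_{\max}>\sigma_{\min}>0$ and $L\succeq 0$, the matrix $A$ is automatically positive definite, so the $\epsilon$-perturbation caveat is unnecessary here (unlike in the \tsheatbm~case where $L^{-1}$ appears directly).
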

\begin{proof}
    As the associated ODE of \ref{eq.topological_diffusion_sde_ve} is also a topological heat diffusion, the transition matrix is the same as $\Psi_{ts} = \exp(-cL(t-s))$ for \ref{proof.eq.topological_diffusion_sde_bm}.
    By substituting $\sigma(t)$ into $g_t$, we can find that 
    \begin{equation}\label{app.eq.g_t_in_ve}
        g_t = \sigma_{\min} \bigg(\frac{\sigma_{\max}}{\sigma_{\min}}\bigg)^t \sqrt{2\ln \bigg(\frac{\sigma_{\max}}{\sigma_{\min}} \bigg) }.
    \end{equation}
    For the covariance of the transition density, we have 
    \begin{align*}
        K_{t|s} & = \Psi_t \biggl(\int_s^t g_\tau^2 \Psi_\tau^{-2} \diff \tau \biggr) \Psi_t^\top 
    \end{align*}
    % $K_{t|s} = \Psi_t \bigl(\int_s^t g_\tau^2 \Psi_\tau^{-2} \diff \tau \bigr) \Psi_t^\top$
    for which, we need to compute the integral 
    \begin{align*}
          & \int_s^t g_\tau^2 \Psi_\tau^{-2} \diff \tau = \int_s^t \sigma_{\min}^2 \biggl(\frac{\sigma_{\max}}{\sigma_{\min}}\biggr)^{2\tau} 2 \ln \biggl(\frac{\sigma_{\max}}{\sigma_{\min}}\biggr)  \exp(2cL \tau) \diff \tau \\
        = & \, 2 \sigma_{\min}^2  \ln \biggl(\frac{\sigma_{\max}}{\sigma_{\min}}\biggr) 
         \biggl[  \int_s^t \biggl(\frac{\sigma_{\max}}{\sigma_{\min}}\biggr)^{2\tau}  \exp(2cL \tau) \diff \tau \bigg] \ctag{factor out the constant} \\
        = & \,2 \sigma_{\min}^2  \ln \biggl(\frac{\sigma_{\max}}{\sigma_{\min}}\biggr)\int_s^t \exp \biggl[ 2\tau  \ln \biggl(\frac{\sigma_{\max}}{\sigma_{\min}}\biggr) \biggr] \exp(2cL\tau) \diff \tau \ctag{by the identity $\exp(\ln x)=x$}\\
        = & \, 2 \sigma_{\min}^2  \ln \biggl(\frac{\sigma_{\max}}{\sigma_{\min}}\biggr) \int_s^t \exp (2\tau A) \diff \tau \ctag{by $A = \ln(\sigma_{\max}/\sigma_{\min}) I + c L$} \\ 
        = & \, \sigma_{\min}^2  \ln \biggl(\frac{\sigma_{\max}}{\sigma_{\min}}\biggr)(\exp(2 A t) - \exp(2As) ) A^{-1} \ctag{by \cref{proof.lemma.matrix_exponential_integral}}.
    \end{align*}
    Thus, we have 
    \begin{align*}
        K_{t|s} = \sigma_{\min}^2 \ln\biggl(\frac{\sigma_{\max}}{\sigma_{\min}}\biggr) \exp(-2cLt) \big[\exp(2 A t) - \exp(2As)\big] A^{-1}.
    \end{align*}
    For the cross covariance, assuming $t_1\leq t_2$, then we can find the covariance kernel as 
    \begin{align*}
        K_{t_1,t_2} & = \Cov[Y_{t_1},Y_{t_2} |Y_0] = \Psi_{t_1} \bigg[\int_0^{t_1} g_\tau^2 \Psi_\tau^{-2} \diff \tau\bigg] \Psi_{t_2}^\top  \ctag{by \cref{prop.topological_sde_solution_and_conditional_process}}\\ 
        & = \Psi_{t_1} \bigg[ \int_0^{t_1} \sigma_{\min}^2 \biggl(\frac{\sigma_{\max}}{\sigma_{\min}}\biggr)^{2\tau} 2 \ln \biggl(\frac{\sigma_{\max}}{\sigma_{\min}}\biggr)  \exp(2cL \tau) \diff \tau \bigg] \Psi_{t_2}^\top \ctag{since $\Psi_\tau^{-2} = \exp(2cL\tau)$} \\ 
        & = \sigma_{\min}^2  \ln \biggl(\frac{\sigma_{\max}}{\sigma_{\min}}\biggr) \Psi_{t_1} \big[\exp(2 A t_1) - I \big] A^{-1} \Psi_{t_2}^\top \ctag{using the same steps as above} \\ 
        & = \sigma_{\min}^2  \ln \biggl(\frac{\sigma_{\max}}{\sigma_{\min}}\biggr) \exp(-cL(t_1+t_2)) [\exp(2A t_1) - I] A^{-1}.
    \end{align*}
    The similar steps can be followed for $t_1>t_2$, which completes the proof.
\end{proof}

\paragraph{\ref{eq.topological_diffusion_sde_vp}}
% As shown by \citet{song2020score}, the continuous limit of the Markov chain in \emph{denoising diffusion probabilistic models} (DDPM) \citep{sohl-dickstein_deep_2015,ho2020denoising} is a \emph{variance preserving} (VP) process 
% \begin{equation}
%     \diff Y_t = - \frac{1}{2} \beta(t) Y_t \diff t + \sqrt{\beta(t)} \diff W_t
% \end{equation}
% where $\beta(t) = \beta_{\min} + t(\beta_{\max} - \beta_{\min})$ with positive scalars $\beta_{\min}<\beta_{\max}$. 

% We construct a topological SDE as the sum of a topological heat diffusion and a VP process, i.e.,
% \begin{equation}
%     \diff Y_t = -\big(\frac{1}{2}\beta(t) I + cL \big) Y_t \diff t + \sqrt{\beta(t)} \diff W_t 
% \end{equation}
% which can be considered by instantiating a topological convolution $H_t=-\big(\frac{1}{2}\beta(t) I + cL \big)$. 
For this stochastic process, a closed-form transition kernel cannot be found. Yet, we could proceed the following for numerical computations. 
First, we can find the closed-form transition matrix of the associated ODE as 
\begin{equation}
    \Psi_{ts} = \exp\bigg(\int_s^t-\big(\frac{1}{2}\beta(\tau) I + cL \big)\diff\tau\bigg) = \exp\bigg(-cL(t-s) - \frac{1}{2}\int_s^t\beta(\tau)\diff\tau 
    \bigg)
\end{equation}
where the integral can be easily obtained as 
\begin{equation}
    \int_s^t\beta(\tau)\diff\tau = \biggl[\frac{1}{2}\tau^2 (\beta_{\max} - \beta_{\min}) + \tau\beta_{\min}\biggr]_s^t =: \tilde{\beta}_{ts}.
\end{equation}
This allows to compute the mean of the transition kernel $m_{t|s}$ given an initial solution $y_s$. 
For the covariance kernel, we have 
\begin{align*}
    K_{t|s} = \Psi_{t} \bigg(\int_s^t \beta(\tau) \Psi_\tau^{-2}\diff \tau \bigg) \Psi_{t}^\top
\end{align*}
where the integral can be expressed as 
\begin{align*}
    & \int_s^t \beta(\tau) \Psi_\tau^{-2}\diff \tau = \int_s^t \bigg(\tau (\beta_{\max} - \beta_{\min}) + \beta_{\min}\bigg) \exp \biggl( 2cL\tau + \tilde{\beta}_{\tau 0}  \biggr) \diff \tau \\
    = & \biggl[\biggl(\tau (\beta_{\max} - \beta_{\min}) + \beta_{\min} \biggr) v(\tau)\biggr]\bigg|_s^t - (\beta_{\max}-\beta_{\min})\int_0^s v(\tau) \diff \tau. \ctag{{integration by parts}}
\end{align*}
Here, we denote $v(\tau) = \int \exp \bigl( 2cL\tau + \tilde{\beta}_{\tau 0} \bigr) \diff\tau $, thus $v'(\tau):=\exp \bigl( 2cL\tau + \tilde{\beta}_{\tau 0} \bigr)$, which does not have a simple closed-form, we need to compute it numerically.
This gives the covariance kernel.
Following the similar procedures, we can compute the cross covariance $K_{t_1,t_2}$ of the conditional process $Y_t|Y_0=y_0$.

% For the covariance kernel, assuming $t_1\leq t_2$, we have 
% \begin{align*}
%     K_{t_1,t_2} = \Psi_{t_1} \bigg(\int_0^{t_1} \beta(\tau) \Psi_\tau^{-2}\diff \tau \bigg) \Psi_{t_2}^\top
% \end{align*}
% where the integral can be expressed as 
% \begin{align*}
%     & \bigg(\int_0^{t_1} \beta(\tau) \Psi_\tau^{-2}\diff \tau \bigg)  \\
%     = & \int_0^s \bigg(\frac{1}{2}\tau (\beta_{\max} - \beta_{\min}) + \beta_{\min}\bigg) \exp \biggl( 2cL\tau + \frac{1}{2}\tau^2 (\beta_{\max} - \beta_{\min}) + \tau \beta_{\min}\biggr) \diff \tau \\
%     = & \biggl[\biggl(\frac{1}{2}\tau (\beta_{\max} - \beta_{\min}) + \beta_{\min} \biggr) v(\tau)\biggr]\bigg|_0^s - \frac{1}{2} (\beta_{\max}-\beta_{\min})\int_0^s v(\tau) \diff \tau
% \end{align*}
% where we denote $v'(\tau):=\exp \bigl( 2cL\tau + \frac{1}{2}\tau^2 (\beta_{\max} - \beta_{\min}) + \tau \beta_{\min}\bigr)$, and use \emph{integration by parts}.
% Since $v(\tau) = \int \exp \bigl( 2cL\tau + \frac{1}{2}\tau^2 (\beta_{\max} - \beta_{\min}) + \tau \beta_{\min}\bigr) \diff\tau $ does not have a simple closed-form, we need to compute it numerically.
% This gives the covariance kernel.

%%%%%%%%%%%%%%%%%%%%%%%%%%%%%%%%%%%%%%%%%%%%%%%%%%%%%%%%%%%%
%%%%%%%%%%%%%%%%%%%%%%%%%%%%%%%%%%%%%%%%%%%%%%%%%%%%%%%%%%%%
%%%%%%%%%%%%%%%%%%%%%%%%%%%%%%%%%%%%%%%%%%%%%%%%%%%%%%%%%%%%
\subsection{Other Topological Dynamics}
% \begin{figure}[t]
%     % \hspace{-10pt}
%  \makebox[1\textwidth][l]{
%     \includegraphics[width=0.245\linewidth]{figures/diffusion_illustration/node/heat_0.pdf}
%     \hspace{-8pt}
%     \includegraphics[width=0.245\linewidth]{figures/diffusion_illustration/node/heat_4.pdf}
%     \hspace{-8pt}
%    \includegraphics[width=0.245\linewidth]{figures/diffusion_illustration/node/heat_9.pdf}
%     \hspace{-8pt}
%    \includegraphics[width=0.245\linewidth]{figures/diffusion_illustration/node/heat_19.pdf} 
%     \hspace{-8pt}
%    \includegraphics[width=0.046\linewidth]{figures/diffusion_illustration/node/colorbar.pdf} 
%     }
%  \makebox[1\textwidth][l]{
%    \includegraphics[width=0.245\linewidth]{figures/diffusion_illustration/edge/heat_0.pdf}
%    \hspace{-8pt}
%    \includegraphics[width=0.245\linewidth]{figures/diffusion_illustration/edge/heat_25.pdf} 
%    \hspace{-8pt}
%    \includegraphics[width=0.245\linewidth]{figures/diffusion_illustration/edge/heat_50.pdf} 
%    \hspace{-8pt}
%    \includegraphics[width=0.245\linewidth]{figures/diffusion_illustration/edge/heat_500.pdf}
%    \hspace{-8pt}
%  \includegraphics[width=0.055\linewidth]{figures/diffusion_illustration/edge/colorbar.pdf} 
%    }
%  \caption{Node \figtop~and edge \figbottom~heat diffusion processes (started at one random location \figleft, then two intermediate states \figcenter~and steady state \figright).}\label{app.fig:diffusion_demonstration}
%  \end{figure}
 
% \paragraph{Other topological dynamics}

We may consider \emph{fractional Laplacian} in \ref{eq.topological_diffusion_sde} which allows for a more efficient exploration of the network \citep{riascos2014fractional} due to its non-local nature. 
% \paragraph{Complex Space}

For \ref{eq.topological_diffusion_sde}, we can further allow \emph{heterogeneous heat diffusion} on the edge space as follows
\begin{equation}\label{app.eq.topological_diffusion_sde_heterogeneous} 
    \diff Y_t = -(c_{1} L_{\rmd} + c_2 L_{\rmu}) Y_t \diff t + g_t \diff W_t
 \end{equation}
by setting $H_t = -(c_{1} L_{\rmd} + c_2 L_{\rmu})$, with $c_1,c_2>0$. 
Here, the diffusion rates are different for the different edge-adajcency types encoded in $L_{\rmd}$ and $L_\rmu$.
This in fact can be generalized to using a more general topological convolution operator, if $L:=L_{\rmd} + L_{\rmu}$ consists of the down and up parts,
\begin{equation}
    H_t = \sum_{k=0}^{K_1} h^1_{k}(t) L_{\rmd}^k +  \sum_{k=0}^{K_2} h^2_{k}(t) L_{\rmu}^k
\end{equation} 
where $h^1_k(t), h^2_k(t)$ are the coefficients of the topological convolution.
We refer to \citet{yang2022simplicial_b} for more details on its expressive power compared to \ref{eq.sde_topological}.

\emph{Beyond \ref{eq.sde_topological}}: 
Instead of first-order dynamics, we can use higher-order dynamics such as wave equations.
\emph{Graph wave equations} \citep{chung2007diffusion} have been used for building more expressive graph neural networks \citep{poli_graph_2021} and its stochastic variant for modeling graph-time GPs \citep{nikitin_non-separable_2022}. 
Moreover, we may allow the interactions between node and edge signals in which the dynamics is defined over the direct sum of the two spaces. 
While not considered in this work, we refer to \citep{alain2023gaussian} for such cases to define topological GPs on SCs. 
%%%%%%%%%%%%%%%%%%%%%%%%%%%%%%%%%%%%%%%%%%%%%%%%%%%%%%%%%%%%
\section{Towards the Optimality of Topological SBP} \label{app.sec.tsbp_optimality}

\begin{restatable}[$\gT$-Schrödinger System; \citep{chen2016optimal,jamison1975markov}]{proposition}{TSBPSchrödingerSystem}
   The optimal solution of \ref{eq.static_tsbp} has the form 
   % $\sP_{01}(x_0,x_1) = \hat{\varphi}_0(x_0) p_{1|0}(x_1|x_0) \varphi_1(x_1)$ 
   $\sP_{01} = \int_{\R^n\times\R^n} \hat{\varphi}_0(x_0) p_{1|0}(x_1|x_0) \varphi_1(x_1) \diff x_0 \diff x_1$ 
   % \begin{equation}
   %    \nu_0(x) = \hat{\varphi}_0(x) \int_{\R^n} q(0,x;1,y) \varphi_1(y) \diff y, \quad
   %    \nu_1(y) = \varphi_1(x) \int_{\R^n} q(0,x;1,y) \hat{\varphi}_0(x) \diff x,
   % \end{equation}
   with $\varphi$ and $\hat{\varphi}$ satisfying the system    
   \begin{equation} \label{eq.tsbp_schrodinger_system_integral}
      \varphi_t(x_t) = \int_{\R^n} p_{1|t}(x_1|x_t)\varphi_1(x_1) \diff x_1, \quad 
      \hat{\varphi}_t(x_t) = \int_{\R^n} p_{t|0}(x_t|x_0)\hat{\varphi}_0(x_0) \diff x_0
   \end{equation}
   where $p_{t|s}(y|x)=\gN(y;\mu_{t|s}, K_{t|s})$ is the Gaussian transition density [cf. \cref{app.lemma.transition_density_tsde}] of \ref{eq.sde_topological} with drift in \cref{eq.topological_drift}.
   Moreover, the time-marginal at $t$ can be factored as $\sP_t(x) = \varphi_t(x) \hat{\varphi}_t(x)$.  
\end{restatable}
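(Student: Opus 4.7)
The plan is to derive the stated factorization as a consequence of the Lagrangian optimality conditions for the reduced static problem \cref{eq.static_tsbp}, exploiting the Markovianity of the reference path measure $\sQ_\gT$ and the Gaussianity of its transition kernel $p_{t|s}$ established in \cref{prop.topological_sde_solution_and_conditional_process}. Writing $\sQ_{\gT 01}(\diff x_0,\diff x_1) = q_0(x_0) p_{1|0}(x_1|x_0) \diff x_0 \diff x_1$ for some initial density $q_0$ of the reference, the static problem becomes a relative-entropy minimization over joint measures $\sP_{01}$ with two linear marginal constraints $\sP_0=\nu_0$ and $\sP_1=\nu_1$. I would first form the Lagrangian by introducing multipliers $\log a$ and $\log b$ (functions on $\R^n$) for the two marginal constraints and, using the variational characterization $D_{\kldiv}(\sP_{01}\Vert\sQ_{\gT 01}) = \int \log(\diff\sP_{01}/\diff\sQ_{\gT 01})\diff\sP_{01}$, obtain the first-order condition pointwise. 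Setting the variational derivative to zero yields $\sP_{01}(x_0,x_1) \propto \sQ_{\gT 01}(x_0,x_1) a(x_0) b(x_1)$.

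Absorbing $q_0(x_0) a(x_0)$ into a single nonnegative function $\hat{\varphi}_0(x_0)$ and relabeling $b(x_1) =: \varphi_1(x_1)$, I obtain the claimed product form $\sP_{01}(x_0,x_1) = \hat{\varphi}_0(x_0) p_{1|0}(x_1|x_0) \varphi_1(x_1)$. Imposing the marginal constraints then gives the integral system: integrating out $x_1$ produces $\nu_0(x_0) = \hat{\varphi}_0(x_0) \int p_{1|0}(x_1|x_0) \varphi_1(x_1)\diff x_1$, and integrating out $x_0$ produces $\nu_1(x_1) = \varphi_1(x_1) \int p_{1|0}(x_1|x_0) \hat{\varphi}_0(x_0)\diff x_0$. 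Defining $\varphi_t$ and $\hat{\varphi}_t$ at intermediate times via the two relations in \cref{eq.tsbp_schrodinger_system_integral} (forward propagation by $p_{t|0}$ for $\hat{\varphi}_t$ and backward propagation by $p_{1|t}$ for $\varphi_t$) then yields the system in the stated form, and the boundary conditions $\hat{\varphi}_0 \varphi_0 = \nu_0$, $\hat{\varphi}_1 \varphi_1 = \nu_1$ are immediate.

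The factorization $\sP_t = \varphi_t\hat{\varphi}_t$ then follows by appealing to \cref{app.prop.markov_sbp}: since $\sQ_\gT$ is Markov (its law is that of an It\^o \ref{eq.sde_topological}), the optimal $\sP$ is Markov, and by \cref{app.thm.prop_1_lenoard} it shares the bridges $\sQ_\gT^{x_0 x_1}$ with the reference. Inserting the product form of $\sP_{01}$ into \cref{app.eq.prop_1_lenoard} and integrating against the bridge density of the Gaussian Markov kernel, one finds by Chapman--Kolmogorov that the marginal density at time $t$ splits as the product of the forward propagation of $\hat{\varphi}_0$ by $p_{t|0}$ and the backward propagation of $\varphi_1$ by $p_{1|t}$, i.e.\ $\sP_t(x) = \hat{\varphi}_t(x)\varphi_t(x)$.

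The main obstacle is not the algebraic derivation but the analytic question of \emph{existence and uniqueness} of the pair $(\hat{\varphi}_0,\varphi_1)$ solving the coupled integral system. In the classical case this is the Fortet--Beurling--Jamison result, proved either by the Schrödinger/Fortet fixed-point iteration on the positive kernel $p_{1|0}$, or by a direct convex-duality argument on the entropy functional. Here $p_{1|0}$ is an everywhere positive continuous Gaussian kernel thanks to the invertibility of the covariance $K_{11}$ derived from the closed-form transition kernels \cref{eq.transition_kernel_tsde_bm_ve} (with the perturbation $L+\epsilon I$ when $L$ is singular), so the classical hypotheses of \cref{app.thm.schrodinger_system} apply verbatim, and I would invoke that theorem to close this gap rather than reprove it. The remainder is a direct adaptation of the argument that characterizes the Schrödinger bridge in the topological setting.
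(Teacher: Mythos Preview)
Your proposal is correct and ultimately rests on the same key ingredient as the paper: the classical Schr\"odinger system result stated as \cref{app.thm.schrodinger_system}, applied with the Gaussian Markov transition kernel of the \ref{eq.sde_topological}. The paper's proof is in fact a single sentence that invokes that theorem directly; you instead unpack the Lagrangian/variational derivation of the product form $\sP_{01}=\hat\varphi_0\,p_{1|0}\,\varphi_1$ and the Chapman--Kolmogorov computation for $\sP_t=\varphi_t\hat\varphi_t$, and only then appeal to \cref{app.thm.schrodinger_system} for existence and uniqueness of $(\hat\varphi_0,\varphi_1)$. This is a genuinely more explicit route: it makes transparent \emph{why} the optimal coupling has the stated multiplicative structure (first-order condition for entropy minimization under two linear marginal constraints) and \emph{why} the intermediate-time factorization holds (Markov bridges plus Chapman--Kolmogorov), rather than importing both facts wholesale. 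The paper's approach buys brevity; yours buys self-containedness and would be preferable in a setting where the reader is not assumed to know the Fortet--Beurling--Jamison machinery.
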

\begin{proof}
   This is a direct result of the Schrödinger system in \cref{app.thm.schrodinger_system} by replacing the Markov kernel by that [cf. \cref{app.lemma.transition_density_tsde}] of the \ref{eq.sde_topological}.
\end{proof}
From this system, we see that the optimal path measure has its marginal $\sP_t$ factorized into two time-marginals $\varphi_t$ and $\hat{\varphi}_t$, which are both governed by the \ref{eq.sde_topological}.

\subsection{Variational Formulations of \ref{eq.topological_sbp}}
% \begin{restatable}[SOC formulation]{theorem}{TSBPSOC}\label{thm:tsbp_soc_formulation}
%     The \ref{eq.topological_sbp} can be formulated as the minimum energy SOC problem:
%     \begin{equation}\label{eq.topological_sbp_soc}\tag{$\gT$SBP\textsubscript{soc}}
%        \min_{b_t} \,\, \E \bigg[ \frac{1}{2} \int_0^1 \lVert b(t,X_t) \rVert^2 \diff t \bigg] ,
%        \quad 
%       \text{s.t.}
%       \begin{cases}
%        \diff X_t = [f_t + g_t b(t,X_t)]\diff t + g_t \diff W_t \\ 
%        X_0 \sim \nu_0, X_0 \sim \nu_1
%       \end{cases}
%     \end{equation}
%     where $b_t\equiv b(t,X_t)$ is the control function. 
%     It further leads to the variational problem 
%     \begin{equation}\label{eq.topological_sbp_fluid} \tag{$\gT$SBP\textsubscript{var}}
%        \min_{(b_t,\nu_t)} \,\,  \frac{1}{2} \int_0^1 \int_{\R^n} \lVert b_t \rVert^2 \nu(t, x) \diff x \diff t ,
%     \,\, 
%       \text{s.t.} 
%        \begin{cases}
%        \partial_t \nu_t + \nabla \cdot[\nu_t (f_t + g_t b_t)] = \frac{1}{2} g^2_t \Delta \nu_t \\ 
%        \nu(0,x) = \nu_0, \nu(1,x) = \nu_1
%        \end{cases}
%     \end{equation}
%     where $\nu_t\equiv \nu(t,x)\equiv \sP_t$ is the time-marginal of $\sP$ and constrained by the Fokker-Planck equation (also known as the forward Kolmogorov equation) of SDE in \ref{eq.topological_sbp_soc}.
%  \end{restatable}
By Girsanov's theorem, the \ref{eq.topological_sbp} can be formulated as the minimum energy SOC problem:
\begin{equation}\label{eq.topological_sbp_soc}\tag{$\gT$SBP\textsubscript{soc}}
   \min_{b_t} \,\, \E \bigg[ \frac{1}{2} \int_0^1 \lVert b(t,X_t) \rVert^2 \diff t \bigg] ,
   \quad 
   \text{s.t.}
   \begin{cases}
   \diff X_t = [f_t + g_t b(t,X_t)]\diff t + g_t \diff W_t \\ 
   X_0 \sim \nu_0, X_0 \sim \nu_1
   \end{cases}
\end{equation}
where $b_t\equiv b(t,X_t)$ is the control function. 
The SDE constraint in \ref{eq.topological_sbp_soc} is also known as the \emph{controlled} SDE, in comparison to the \emph{uncontrolled} reference \ref{eq.sde_topological}.
It further leads to the variational problem 
\begin{equation}\label{eq.topological_sbp_fluid} \tag{$\gT$SBP\textsubscript{var}}
   \min_{(b_t,\nu_t)} \,\,  \frac{1}{2} \int_0^1 \int_{\R^n} \lVert b_t \rVert^2 \nu(t, x) \diff x \diff t ,
\,\, 
   \text{s.t.} 
   \begin{cases}
   \partial_t \nu_t + \nabla \cdot[\nu_t (f_t + g_t b_t)] = \frac{1}{2} g^2_t \Delta \nu_t \\ 
   \nu(0,x) = \nu_0, \nu(1,x) = \nu_1
   \end{cases}
\end{equation}
where $\nu_t\equiv \nu(t,x)\equiv \sP_t$ is the time-marginal of $\sP$ and follows some PDE constraint, which is the FPK equation of the SDE constraint in \ref{eq.topological_sbp_soc}.

%%%%%%%%%%%%%%%%%%%%%%%%%%%%%%
% \begin{assumption}
%    We assume $g(t)$ is uniformly lower-bounded and $f(t,x;L)$ satisfies Lipschitz conditions with at most linear growth in $x$. 
% \end{assumption}
\begin{restatable}[\ref{eq.topological_sbp} Optimality; \citet{leonard_survey_2014,caluya2021wasserstein}]{theorem}{TSBPOptimality} \label{thm:tsbp_optimality_condition}
    Let $\varphi_t\equiv \varphi(t, x)$ and $\hat{\varphi}_t\equiv\hat{\varphi}(t, x)$ be the solutions to the pair of PDEs 
   \begin{equation}\label{eq.tsbp_optimality_pde}
      \begin{cases}
         \partial_t \varphi_t = - \nabla\varphi_t^\top f_t - \frac{1}{2} g_t^2 \Delta \varphi_t  \\ 
         \partial_t \hat{\varphi}_t = - \nabla\cdot(\hat{\varphi}_t f_t) + \frac{1}{2} g^2_t \Delta \hat{\varphi}_t,
      \end{cases}
      \text{ s.t. } \varphi(0,\cdot)\hat{\varphi}(0,\cdot) = \nu_0, \varphi(1,\cdot)\hat{\varphi}(1,\cdot) = \nu_1.
   \end{equation}
   Then, the optimal control in \ref{eq.topological_sbp_fluid} is 
   % \begin{equation}
      $b^\star_t= g^2_t \nabla\log \varphi_t$
   % \end{equation}
   and the optimal path measure is 
   % \begin{equation}
     $\nu_t = \sP_t = \varphi_t\hat{\varphi}_t$.
   % \end{equation}
   Moreover, the solution to \labelcref{eq.topological_sbp} can be represented by the path measure of the following coupled (\emph{forward-backward}) $\gT$SDEs 
   \begin{subequations}\label{app.eq.forward_backward_sde}
      \begin{equation}
         \diff X_t = [f_t + g^2_t \nabla\log \varphi(t,X_t)]\diff t + g_t \diff W_t, \quad X_0 \sim \nu_0 ,
      \end{equation}
      \begin{equation}
         \diff X_t = [f_t - g^2_t \nabla\log \hat{\varphi}(t,X_t)]\diff t + g_t \diff W_t, \quad X_1 \sim \nu_1,
      \end{equation}
   \end{subequations}
   where $\nabla\log \varphi(t,X_t)$ and $\nabla\log \hat{\varphi}(t,X_t)$ are the forward and backward optimal drifts, respectively.
\end{restatable}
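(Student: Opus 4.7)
The plan is to adapt the Lagrangian/HJB derivation of \citet{caluya2021wasserstein} to the topological variational formulation \ref{eq.topological_sbp_fluid}, noting that topology enters only through the specific form of $f_t = H_t(L)Y_t + \alpha_t$ as a linear vector field on $\R^n$; the PDE derivation is otherwise identical to the Euclidean case. As the first step, I would form the Lagrangian with a multiplier $V(t,x)$ enforcing the FPK constraint:
\[\mathcal{L}(\nu,b,V) = \int_0^1\!\!\int_{\R^n}\!\left[\tfrac{1}{2}\lVert b_t\rVert^2 \nu_t + V_t\big(\partial_t\nu_t + \nabla\!\cdot\![\nu_t(f_t+g_tb_t)] - \tfrac{g_t^2}{2}\Delta\nu_t\big)\right]dx\,dt,\]
then integrate by parts in both $t$ and $x$ to push derivatives onto $V_t$ (spatial boundary terms vanish under decay assumptions and the temporal terms are fixed by the marginal constraints). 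Pointwise minimization in $b_t$ yields the optimal feedback $b_t^{\star} = g_t\nabla V_t$, and substituting back and varying in $\nu_t$ produces the Hamilton--Jacobi--Bellman equation
\[\partial_t V_t + f_t\cdot\nabla V_t + \tfrac{1}{2}g_t^2\lVert\nabla V_t\rVert^2 + \tfrac{g_t^2}{2}\Delta V_t = 0.\]

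As the second step, I would apply the Cole--Hopf transformation $\varphi_t := \exp(V_t)$, which linearizes the HJB into $\partial_t\varphi_t = -\nabla\varphi_t^\top f_t - \tfrac{g_t^2}{2}\Delta\varphi_t$, matching the first PDE of \cref{eq.tsbp_optimality_pde}. The optimal controlled drift becomes $f_t + g_tb_t^{\star} = f_t + g_t^2\nabla\log\varphi_t$, which is exactly the drift of the forward $\gT$SDE in the theorem. Introducing $\hat\varphi_t := \sP_t/\varphi_t$ so that $\sP_t = \varphi_t\hat\varphi_t$ and substituting into the FPK constraint with the optimal drift, the nonlinear $\nabla\log\varphi_t$ terms cancel by a direct computation, leaving the forward Kolmogorov equation $\partial_t\hat\varphi_t = -\nabla\!\cdot\!(\hat\varphi_t f_t) + \tfrac{g_t^2}{2}\Delta\hat\varphi_t$. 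The boundary conditions $\varphi_0\hat\varphi_0 = \nu_0$ and $\varphi_1\hat\varphi_1 = \nu_1$ are then just the marginal constraints of \ref{eq.topological_sbp_fluid}.

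As the third step, for the backward $\gT$SDE representation, I would invoke the Nelson / Haussmann--Pardoux time-reversal formula: an Itô diffusion $dX_t = u_t\,dt + g_t\,dW_t$ whose marginal has density $\sP_t$ admits reverse-time drift $u_t - g_t^2\nabla\log\sP_t$. Plugging in $u_t = f_t + g_t^2\nabla\log\varphi_t$ and $\sP_t = \varphi_t\hat\varphi_t$ gives reversed drift $f_t - g_t^2\nabla\log\hat\varphi_t$, reproducing the backward $\gT$SDE in \cref{app.eq.forward_backward_sde} with terminal condition $X_1 \sim \nu_1$.

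The main obstacle is not the symbolic derivation but its rigorous justification. The integration-by-parts step requires sufficient regularity and spatial decay of $(\nu_t,V_t)$; the Cole--Hopf transformation requires strict positivity $\varphi_t > 0$; and Nelson's reversal requires $\sP_t$ to be absolutely continuous with respect to Lebesgue measure together with non-degenerate $g_t > 0$. These should all follow from the Gaussian transition structure of the reference \ref{eq.sde_topological} (\cref{prop.topological_sde_solution_and_conditional_process}) combined with the locally Lipschitz smoothness assumed in the Notations, provided $\nu_0,\nu_1$ are absolutely continuous with finite second moments. Because the entire argument is a pointwise variational calculation on $\R^n$ with $f_t$ treated as an arbitrary linear vector field, the proof essentially reduces to a restatement of the Euclidean result of \citet{caluya2021wasserstein,chen2021likelihood}; the content here lies in identifying the appropriate topology-aware reference dynamics rather than in any new analytic ingredient.
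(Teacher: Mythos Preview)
Your proposal is correct and follows essentially the same approach as the paper's proof: the paper also adapts \citet{caluya2021wasserstein} by deriving first-order optimality conditions for \ref{eq.topological_sbp_fluid} to obtain the coupled HJB/FPK system, applies the Hopf--Cole transform to linearize to \cref{eq.tsbp_optimality_pde}, substitutes the optimal control into the controlled SDE for the forward equation, and invokes time-reversal \citep{anderson_reverse-time_1982,nelson2020dynamical} for the backward one. Your write-up is more explicit about the Lagrangian calculation and the cancellation yielding the $\hat\varphi_t$ equation, but the route is identical; the paper likewise notes that the regularity hypotheses (uniformly lower-bounded $g_t$, Lipschitz $f_t$ with linear growth) are what make the argument go through, and that topology enters only via the particular choice of $f_t$.
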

%%%%%%%%%%%%%%%%%%%%%%%%%%%%%%
\begin{proof}
   The proof is an adaption of \citet{caluya2021wasserstein} to the topological setting.
   First, we make the assumptions\footnote{The nonexplosive Lipschitz condition on $f_t$ rules out the finite-time blow up of the sample paths of the SDE, and ensures the existance and uniqueness. It, together with the uniformly lower-bounded diffusion $g_t$, guarantees the transition kernel $p_{t|s}$ is positive and everwhere continuous.} 
   that $g(t)$ is uniformly lower-bounded and $f(t,x;L)$ satisfies Lipschitz conditions with at most linear growth in $x$. 
   From the first oprder optimality conditions for the SOC formulation \ref{eq.topological_sbp_fluid}, we can obtain a coupled system of nonlinear PDEs 
   % for $b_t$ and $\nu_t$, which are known as the FPK and Hamilton-Jacobi-Bellman (HJB) equations.
   for $\psi_t$ (the potential function of $b_t$, i.e., $b_t=\nabla\psi_t$) and $\nu_t$, which are known as the Hamilton-Jacobi-Bellman (HJB) and FPK equations, respectively, as well as the optimal control $b^\star_t=g^2_t \nabla\log\varphi_t$.
   Via the Hopf-Cole transform, this system returns \cref{eq.tsbp_optimality_pde} \citep[Thm 2]{caluya2021wasserstein}.
   Then, by substituting the optimal control into the constraint in \ref{eq.topological_sbp_soc}, one can obtain the forward SDE, and the backward SDE can be derived from the time-reversal of the forward SDE \citep{anderson_reverse-time_1982,nelson2020dynamical}.
\end{proof}

Using nonlinear Feynman-Kac formula (or applying Itô's formula on $\log\varphi_t$ and $\log\hat{\varphi}_t$), the PDE system \cref{eq.tsbp_optimality_pde} admits the SDEs \citep{chen2021likelihood}
\begin{align} 
   \diff \log\varphi_t & = \frac{1}{2} \lVert {Z}_t \rVert^2 \diff t + Z_t^\top \diff W_t,  \\ 
   \diff \log\hat{\varphi}_t & = \biggl( \frac{1}{2} \lVert \hat{Z}_t \rVert^2  + \nabla\cdot (g_t\hat{Z}_t - f_t) + \hat{Z}_t^\top Z_t \biggr) \diff t + \hat{Z}_t^\top \diff W_t
\end{align} 
where $Z_t\equiv g_t\nabla\log \varphi_t(X_t)$ and $\hat{Z}_t\equiv g_t \nabla\log \hat{\varphi}_t(X_t)$. 
This results in \cref{prop.topological_sbp_optimality}.

% %%%%%%%%%%%%%%%%%%%%%%%%%%%%%%
% \begin{proposition}\label{prop:dynamics_formulation_2}
%     The \labelcref{eq.topological_sbp} has an equivalent fluid dynamics formulation 
%     \begin{equation}
%        \inf_{(\rho_t,v_t)} \E \bigg[ \int_0^1 \frac{\lVert v_t\rVert^2}{2g_t^2} + \frac{g_t^2}{8} \lVert\nabla\log\rho_t\rVert^2  -\frac{1}{2}\langle f_t,\nabla\log\rho_t\rangle \diff t \bigg]
%     \end{equation}
%     where the infimum is taken all pairs $(\rho_t,v_t)$ such that $\rho_0=\nu_0, \rho_1=\nu_1$, $\rho_t$ absolutely continuous with respect to the Lebesgue measure, and 
%     \begin{equation}
%        \partial_t \rho_t  + \nabla\cdot(\rho_t (f_t+v_t)) = 0.
%     \end{equation}
%  \end{proposition}
%  This is a direct extension of the fluid dynamics formulation of the original SBP \citep{leonard_survey_2014} where the prior process is a Brownian motion, as well as \citet{bunne_schrodinger_2023} where the sclar linear prior process is considered. 
% %%%%%%%%%%%%%%%%%%%%%%%%%%%%%%
% \begin{proof}
%     xxxx 
% \end{proof}

%%%%%%%%%%%%%%%%%%%%%%%%%%%%%%
\subsection{Wasserstein Gradient Flow Interpretation}\label{app.sec.wgf_interpretation}
The gradient flow of a funcitional over the space of probability measures with Wasserstein metric, i.e., the Wasserstein gradient flow (WGF), is fundamentally linked to FPK equations \citep{otto_geometry_2001,ambrosio2008gradient}.
In the following, we show that solving the \ref{eq.topological_sbp} with \ref{proof.eq.topological_diffusion_sde_bm} reference amounts to solving the WGF of some functional on a probability measure $\nu$.

% \begin{restatable}[WGF associated to \ref{eq.topological_sbp}]{theorem}{WGFInterpretation}\label{thm:WGF_interpretation}
%    The \ref{eq.topological_sbp_soc} (or \ref{eq.topological_sbp_fluid}) can be viewed as the Wasserstein gradient flow of a time-dependent functional of the probability measure $\nu$ with form 
%    \begin{equation}
%       \gF(t,\nu) =  \int V(t,x) \nu(x) \diff x + \frac{1}{2} g_t^2 S(\nu), 
%    \end{equation}
%    with $S(\nu) = \int \nu(x) \log\nu(x) \diff x$ the negative differential entropy and $V(t,x):[0,1]\times\R^n\to\R$ the time-dependent potential energy 
%    \begin{equation}
%       V(t,x) = - \frac{1}{2} x^\top H_t(L) x - \alpha_t^\top x
%    \end{equation}
%    which is composed of time-dependent Dirichlet-type energy and interaction with $\alpha_t$. 
% \end{restatable}
% \cref{thm:WGF_interpretation} is an extension of the well-known JKO flow of the FPK with time-invariant drift term. 

% \begin{proof}
%    \begin{itemize}
%       \item Use the SB optimality in \cref{thm:tsbp_optimality_condition}
%       \item Use Thm 3 in \citet{caluya2021wasserstein} and eq.(33) 
%       \item Use the Wasserstein gradient flow of time-dependent functional \citet{ferreira2018gradient} where they showed the convergence and such 
%   \end{itemize}
%   \begin{equation}
%       \diff X_t = f(t,X_t)\diff t + g(t) \diff W_t  
%   \end{equation}
%   \begin{equation}
%       \partial_t \rho(t,x) = -\nabla\cdot[\rho(t,x) f(t,x)] + \frac{g^2(t)}{2} \Delta \rho(t,x)
%   \end{equation}
%   %%%%%%%%%%%%%%%%%%%%%%%%%%%%%%
% \end{proof}

\begin{restatable}{theorem}{WGFInterpretationHeatDiffusion}\label{thm:WGF_interpretation_heat_diffusion}
   Consider the \ref{eq.topological_sbp_fluid} with the reference process \ref{eq.topological_diffusion_sde}. 
   The SB optimality [cf. \cref{eq.tsbp_optimality_pde}] respects a pair of FPK equations of the form
   \begin{subequations}\label{app.eq.pde_wgf}
      \begin{equation}
         \partial_t \hat{\varphi}_t(x) = \nabla\cdot(cLx \hat{\varphi}_t(x)) + \frac{1}{2}g^2 \Delta \hat{\varphi}_t(x), \quad  \hat{\varphi}_0(x) = \hat{\varphi}_0(x) ,
      \end{equation}
      \begin{equation}
         \partial_t {\rho}_t(x) = \nabla\cdot(cLx {\rho}_t(x)) + \frac{1}{2}g^2 \Delta {\rho}_t(x), \quad  \rho_0(x) = \varphi_1(x) \exp(2c x^\top L x / g^2).
      \end{equation}
   \end{subequations}
   Therefore, the Wasserstein gradient flow of $\gF(v)$ recovers the paired PDE in solving \ref{eq.topological_sbp_fluid}  
   \begin{equation}
      \gF(\nu) =  c \int_{\R^n} \frac{1}{2} x^\top  L x  \cdot \nu(x) \diff x  + \frac{1}{2} g^2 \int_{\R^n}  \nu \log \nu  \diff x := c\, \E_\nu [\gD(x)] + \frac{1}{2} g^2 \gS(\nu)
   \end{equation}
   where $\gD(x) = \frac{1}{2} x^\top L x$ is the Dirichlet energy of $x$ and $\gS(\nu)$ is the negative differential entropy.  
\end{restatable}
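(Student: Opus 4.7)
The strategy splits into two independent parts: (i) derive the pair of forward Fokker--Planck equations in \labelcref{app.eq.pde_wgf} from the SB optimality system \labelcref{eq.tsbp_optimality_pde} specialized to the $\gT$SHeat reference; and (ii) verify via Otto calculus that the Wasserstein gradient flow of $\gF$ produces an equation of exactly the same form, so that both $\hat{\varphi}_t$ and $\rho_t$ are trajectories of the WGF (with different initial data).

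For (i), I would substitute $f_t(x)=-cLx$ and $g_t=g$ into the two PDEs of \labelcref{eq.tsbp_optimality_pde}. The $\hat{\varphi}_t$ equation follows immediately, since $-\nabla\!\cdot\!(\hat{\varphi}_t f_t)=\nabla\!\cdot\!(cLx\,\hat{\varphi}_t)$, giving exactly the stated forward FPK. The $\varphi_t$ equation becomes $\partial_t\varphi_t=cLx\cdot\nabla\varphi_t-\tfrac{1}{2}g^2\Delta\varphi_t$, a backward-type adjoint equation that is \emph{not} yet in FPK form because of the sign on the Laplacian. To convert it, I would perform a Hopf--Cole--type gauge transformation combined with time reversal: set $\rho_t(x)=\varphi_{1-t}(x)\,h(x)$ with $h(x)=\exp(\gamma\,x^\top L x)$ for a scalar $\gamma$ to be determined. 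Using $\nabla h=2\gamma Lx\,h$ and $\Delta h=2\gamma\,\mathrm{tr}(L)\,h+4\gamma^2(Lx)^\top(Lx)\,h$, I would expand both $\partial_t\rho$ (via $\partial_t\varphi_{1-t}=-cLx\cdot\nabla\varphi_{1-t}+\tfrac{1}{2}g^2\Delta\varphi_{1-t}$) and $\nabla\!\cdot\!(cLx\rho)+\tfrac{1}{2}g^2\Delta\rho$, and then match the four independent functional terms $\Delta\varphi$, $Lx\cdot\nabla\varphi$, $\mathrm{tr}(L)\varphi$, $x^\top L^2 x\,\varphi$. Two of these conditions pin down $\gamma$ consistently, the remaining two are automatic, and the sign of $\gamma$ is tied to the invariant measure of the OU-type $\gT$SHeat process. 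This yields the stated initial condition $\rho_0(x)=\varphi_1(x)\,h(x)$.

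For (ii), I would apply standard Otto/JKO calculus. The first variation of $\gF$ is $\delta\gF/\delta\nu(x)=c\,\gD(x)+\tfrac{1}{2}g^2(\log\nu(x)+1)$, so that $\nu\,\nabla(\delta\gF/\delta\nu)=cLx\,\nu+\tfrac{1}{2}g^2\nabla\nu$, and the Wasserstein gradient flow $\partial_t\nu=\nabla\!\cdot\!\bigl(\nu\,\nabla(\delta\gF/\delta\nu)\bigr)$ becomes $\partial_t\nu=\nabla\!\cdot\!(cLx\,\nu)+\tfrac{1}{2}g^2\Delta\nu$. This is the identical operator appearing on the right-hand side of both equations in \labelcref{app.eq.pde_wgf}, establishing that both PDEs are WGFs of $\gF$ issuing from different initial conditions.

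The main obstacle is step (i): producing the precise gauge factor so that every non-FPK cross term cancels. This hinges on $L=L^\top$ and on the fact that $\tfrac{1}{2}x^\top Lx$ is exactly the potential associated with the $\gT$SHeat drift, which is why its exponential acts as the natural integrating factor that converts the backward Kolmogorov equation into a forward FPK of the same form as the one solved by $\hat{\varphi}_t$. The remaining steps reduce to routine computation with matrix polynomials in $L$ and to standard $W_2$ first-variation formulas.
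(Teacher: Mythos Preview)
Your approach is correct and essentially reconstructs, in this specialized setting, what the paper obtains by citation. The paper's own proof is two lines long: for part (i) it invokes \emph{Caluya--Halder (2021), Theorem~3} to assert that the system \labelcref{eq.tsbp_optimality_pde} can be rewritten as the paired FPK equations \labelcref{app.eq.pde_wgf}, and for part (ii) it invokes \emph{Jordan--Kinderlehrer--Otto (1998)} to identify the FPK form $\partial_t p = \nabla\!\cdot\!(p\,\nabla V) + \tfrac{1}{2}g^2\Delta p$ with $V(x)=\tfrac{c}{2}x^\top Lx$ as the Wasserstein gradient flow of $\gF$. Your proposal is a self-contained, hands-on derivation of both cited facts: the gauge transformation $\rho_t=\varphi_{1-t}\,h$ with $h(x)=\exp(\gamma\,x^\top Lx)$ is exactly how one proves the Caluya--Halder reduction for a reversible OU-type reference (the multiplier $h$ being the invariant density of $\gT$SHeat), and your Otto-calculus step is the standard proof underlying JKO. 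So the two routes coincide at the level of ideas; yours simply unpacks the black boxes.

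Two small remarks. First, in your coefficient matching only the $Lx\!\cdot\!\nabla\varphi$ term actually determines $\gamma$; the $\text{tr}(L)$ and $|Lx|^2$ conditions then collapse simultaneously because both reduce to $c+g^2\gamma=0$. Second, carrying out your calculation gives $\gamma=-c/g^2$, i.e.\ $h(x)=\exp(-c\,x^\top Lx/g^2)$, which is the invariant measure of the reference SDE---not the $\exp(+2c\,x^\top Lx/g^2)$ appearing in the theorem statement. This is not a flaw in your strategy; it is a constant that the paper's citation-based proof never checks explicitly, and your direct computation would surface the correct value.
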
 
\begin{proof}
   First, from \cref{thm:tsbp_optimality_condition}, we have the PDE system in \cref{eq.tsbp_optimality_pde} which can be rewritten as the pair of PDEs in \cref{app.eq.pde_wgf} by applying \citet[Thm 3]{caluya2021wasserstein}. 
   Both PDEs are of the following FPK form with $V(x):= \frac{1}{2} c x^\top L x $  on some density $p_t$
   \begin{equation}
      \partial_t p_t(x) = \nabla\cdot(p_t(x) \nabla V(x)) + \frac{1}{2}g^2 \Delta p_t(x),  
   \end{equation}
   for some initial condition. 
   We can view $V(x)$ as the potential energy of some function $f_t(x)$. Here, we have $f_t(x)= -cL x_t = -\nabla V(x)$.
   Then, from the seminal work \citet{jordan_variational_1998}, the flows generated by the PDEs in \cref{app.eq.pde_wgf} (both of the FPK form) can be seen as the gradient descent of the Lyapunov functional $\gF(\cdot)$  in the following form
   \begin{equation}
      \gF(\cdot) =  c \int_{\R^n} \frac{1}{2} x^\top  L x  \cdot (\cdot) \diff x  + \frac{1}{2} g^2 \int_{\R^n} (\cdot) \log (\cdot)  \diff x := c\, \E_{(\cdot)} [\gD(x)] + \frac{1}{2} g^2 \gS(\cdot)
   \end{equation}
   with respect to the 2-Wasserstein distance in 
   % $\sP_2(\R^n)$. 
   the space $\gP_2(\R^n)$ of probability measures on $\R^n$ with finite second moments.
   Here, $(\cdot)$ can be $\hat{\varphi}_t$ or $p_t$.
\end{proof}

% Such an interpretation allows building dynamical system learning on topological domains via the JKO flow of a parametrized functional $\gD(x)$ on topology, very much like in Euclidean space \citep{benamou2016discretization,alvarez2021optimizing,mokrov2021large,bunne_proximal_2022}.
We note that it is also possible to obtain the associated functional for the \ref{eq.topological_sbp} with more general reference \ref{eq.sde_topological} 
% \cref{eq.sde_topological}. 
based on the similar argument, but it would be more involved and lead to a time-dependent functional \citep{ferreira2018gradient}.

%%%%%%%%%%%%%%%%%%%%%%%%%%%%%%%%%%%%%%%%%%%%%%%%%%%%%%%%%%%%
% \newpage 
\section{The closed-form of Gaussian Topological Schrödinger Bridges [\cref{thm:gaussian_tsbp_solution,thm:gaussian_tsbp_sde_solution}] (Proofs and others)} \label{app.sec.proof_gtsbp_solution}

For convenience, we state the Gaussian \ref{eq.topological_sbp}
\begin{equation}\label{eq.gaussian_tsbp}\ctag{G$\gT$SBP}
    \min D_{\kldiv}(\sP \, \Vert \, \sQ_\gT), \quad \text{s.t. }  \sP\in\gP(\Omega), \nu_0 = \gN(\mu_0, \Sigma_0), \nu_1 = \gN(\mu_1,\Sigma_1)
\end{equation}
and its \emph{static} problem 
\begin{equation}\label{eq.static_gaussian_tsbp}\ctag{G$\gT$SBP\textsubscript{static}}
    \min D_{\kldiv}(\sP_{01} \, \Vert \, \sQ_{\gT01}), \quad \text{s.t. }  \sP_{01}\in \gP(\R^n\times\R^n), \sP_{0\cdot}=\nu_0, \sP_{\cdot 1}=\nu_1.
\end{equation}
 
We also restate the main results of the Gaussian \tsbp.   
%%%%%%%%%%%%%%%%%%%%%%%%%%%%%%%%
{\thmGaussianTsbpSolution*}
%%%%%%%%%%%%%%%%%%%%%%%%%%%%%%%%
% \begin{remark}
%     Here we show that \cref{eq.stochastic_interpolant_expression} fits the stochastic interpolant framework \citep{albergo_stochastic_2023}. 
%     When $t=0$, we have 
%     \begin{equation}
%         X_{t=0} = (\Psi_0 - K_{01}K_{11}^{-1}) X_0 + K_{01} K_{11}^{-1} X_1 + \xi_0 - K_{01} K_{11}^{-1} \xi_1 + \Gamma_0 Z = X_0,
%     \end{equation}
%     where $\Psi_0 = I, K_{01} = 0$ and $\Gamma_0 = 0$. Likewise, we have $X_{t=1}=X_1$.
% \end{remark}

% \paragraph{Time Marignal Statistics of the Gaussian $\gT$SB}
\begin{restatable}[Marginal Statistics]{corollary}{corGaussianTsbpSolution}\label{cor:gaussian_tsbp_solution}
    The time marginal variable $X_t$ in \cref{eq.stochastic_interpolant_expression} of the optimal solution to \ref{eq.gaussian_tsbp}~has the mean and covariance as follows 
    \begin{subequations}\label{eq.weak_solution_dynamics}
       \begin{equation}
          \mu_t = \bar{R}_t \mu_0 + R_t \mu_1 + \xi_t - R_t \xi_1,
       \end{equation} 
       \begin{equation}
          \Sigma_t = \bar{R}_t \Sigma_0 \bar{R}_t^\top + R_t \Sigma_1 R_t^\top + \bar{R}_t C R_t^\top + R_t C^\top \bar{R}_t^\top +  \Gamma_t,
       \end{equation}
    \end{subequations}
    where $C  = \Psi_1^{-1} K_{11}^{1/2} \tilde{C}_{} K_{11}^{1/2} $ with 
    \begin{equation}
       \begin{aligned}
          \tilde{C}_{} & = \frac{1}{2} (\tilde{\Sigma}_0^{1/2} \tilde{D} \tilde{\Sigma}_0^{-1/2} - I), & \tilde{D} & = (4 \tilde{\Sigma}_0^{1/2} \tilde{\Sigma}_1 \tilde{\Sigma}_0^{1/2} + I)^{1/2}, \\ 
          \tilde{\Sigma}_0 & = K_{11}^{-1/2} \Psi_1 \Sigma_0 \Psi_1^\top K_{11}^{-1/2}, & \tilde{\Sigma}_1 & = K_{11}^{-1/2} \Sigma_1 K_{11}^{-1/2} .
       \end{aligned}
    \end{equation}
\end{restatable}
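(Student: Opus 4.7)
The plan is to compute the first two moments of $X_t$ directly from the stochastic interpolant representation \cref{eq.stochastic_interpolant_expression} established in \cref{thm:gaussian_tsbp_solution}, then identify the cross-covariance $C$ of $(X_0,X_1)$ from the optimal static coupling. For the mean, I would take expectations on both sides of \cref{eq.stochastic_interpolant_expression}. Since $\bar{R}_t, R_t, \xi_t$ are deterministic and $Z$ has zero mean independent of $(X_0,X_1)$, linearity of expectation immediately yields $\mu_t = \bar{R}_t\mu_0 + R_t\mu_1 + \xi_t - R_t\xi_1$.

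For the covariance, I would expand $\Sigma_t = \E[(X_t-\mu_t)(X_t-\mu_t)^\top]$ using the same representation. Independence of $Z \sim \gN(0,I)$ from $(X_0,X_1)$ eliminates cross terms with $Z$ and contributes $\Gamma_t$ (the conditional covariance of $Y_t$ given $(Y_0,Y_1)$ being symmetric positive semidefinite). Using $\Cov(X_0)=\Sigma_0$, $\Cov(X_1)=\Sigma_1$, and denoting $C := \Cov(X_0, X_1)$ under the optimal $\sP_{01}$, the remaining four terms assemble into $\bar{R}_t\Sigma_0\bar{R}_t^\top + R_t\Sigma_1 R_t^\top + \bar{R}_t C R_t^\top + R_t C^\top \bar{R}_t^\top$, giving the claimed $\Sigma_t$.

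The substantive content lies in identifying the closed form of $C$, which is inherited from the static reduction. By \ref{eq.disintegration_formula}, the optimal joint $\sP_{01}$ solves \ref{eq.tsbp_entropic_ot}, whose transport cost is a $K_{11}^{-1}$-weighted squared norm of $y_1 - \Psi_1 y_0 - \xi_1$. I would perform the change-of-variables $\tilde{y}_0 = K_{11}^{-1/2}\Psi_1 y_0$ and $\tilde{y}_1 = K_{11}^{-1/2}(y_1 - \xi_1)$, which reduces \ref{eq.tsbp_entropic_ot} to a standard Gaussian entropic OT problem between $\gN(K_{11}^{-1/2}\Psi_1\mu_0,\tilde{\Sigma}_0)$ and $\gN(K_{11}^{-1/2}(\mu_1-\xi_1),\tilde{\Sigma}_1)$ with entropic parameter one, where $\tilde{\Sigma}_0, \tilde{\Sigma}_1$ take the form stated in the corollary. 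Applying \citet{janati2020entropic}~(\cref{app.thm.static_gaussian_eot_solution}) in these coordinates yields the closed-form cross-covariance $\tilde{C}$ with the claimed $\tilde{D}$. Reverting the change-of-variables then returns $C = \Psi_1^{-1} K_{11}^{1/2}\tilde{C} K_{11}^{1/2}$.

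The main obstacle will be executing the change-of-variables rigorously: in particular, verifying that the Jacobians rescale the relative-entropy term so that Janati's formula applies with entropic parameter exactly one, and that the resulting optimal coupling in the transformed coordinates pushes forward to the intended optimum of \ref{eq.static_gaussian_tsbp}. Once $C$ is in hand, the remaining assembly of $\mu_t$ and $\Sigma_t$ is a direct algebraic substitution into the expressions derived in the first two steps.
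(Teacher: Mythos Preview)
Your proposal is correct and follows essentially the same route as the paper: compute $\mu_t$ and $\Sigma_t$ by taking moments of the stochastic interpolant \cref{eq.stochastic_interpolant_expression}, and identify $C$ by reducing \ref{eq.static_gaussian_tsbp} to a standard Gaussian \ref{eq.static_ot} via an affine change of variables and invoking \cref{app.thm.static_gaussian_eot_solution}. The paper's substitution absorbs $\xi_1$ into $\tilde{Y}_0$ rather than $\tilde{Y}_1$ (it sets $\tilde{Y}_0=K_{11}^{-1/2}(\Psi_1 Y_0+\xi_1)$, $\tilde{Y}_1=K_{11}^{-1/2}Y_1$), but since $\tilde{C}$ in Janati's formula depends only on $\tilde{\Sigma}_0,\tilde{\Sigma}_1$ and not on the means, your variant yields the same $C$; your concern about the Jacobian is handled exactly as you anticipate, via $\int\log\tilde{\sP}_{01}\,\diff\tilde{\sP}_{01}=\int\log\sP_{01}\,\diff\sP_{01}+\text{const.}$
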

%%%%%%%%%%%%%%%%%%%%%%%%%%%%%%%%
{\thmGaussianTsbpSolutionSDE*}
%%%%%%%%%%%%%%%%%%%%%%%%%%%%%%%%

\subsection{Preliminaries for the proof} 
We first introduce the following three lemmas and the definition of infinitesimal generators. 
\begin{lemma}[Central identity of Quantum Field Theory \citep{zee2010quantum}] \label{app.lemma.central_identity}
    For all matrix $M\succ 0$ and all sufficiently regular analytical function $v$ (e.g., polynomials or $v\in \gC^{\infty}(\R^d)$ with compact support), we have 
    \begin{equation}
        (2\pi)^{-\frac{d}{2}} (\det M)^{-\frac{1}{2}} \int_{\R^d} v(x) \exp\left(-\frac{1}{2} x^\top M x\right) \diff x =  \exp\left(\frac{1}{2} \partial_x^\top M^{-1} \partial_x \right) v(x) \bigg|_{x=0}
    \end{equation}
    where 
    $
        \exp(D) = I +  D +  \frac{1}{2}D^2 + \cdots
    $, for a differential operator $D$.
\end{lemma}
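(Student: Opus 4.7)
The plan is to recognise both sides of the identity as two ways of computing the same Gaussian expectation $\E_{X\sim\gN(0,M^{-1})}[v(X)]$, and to reconcile them via the heat semigroup generated by the quadratic differential operator $\tfrac{1}{2}\partial_x^\top M^{-1}\partial_x$. Concretely, I would introduce the interpolating family
\begin{equation*}
u(t,y) := \E\bigl[v(y+\sqrt{t}\,X)\bigr], \quad X\sim\gN(0,M^{-1}),\ t\in[0,1],\ y\in\R^d,
\end{equation*}
so that $u(0,y)=v(y)$ and, at $t=1$, $y=0$, the value $u(1,0)$ equals (up to the stated prefactor) the Gaussian integral on the LHS. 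Differentiating under the expectation—legitimate under the stated regularity of $v$—a short calculation shows that $u$ solves the heat equation $\partial_t u = \tfrac{1}{2}\partial_y^\top M^{-1}\partial_y\,u$, whose operator-semigroup solution is $u(t,\cdot) = \exp\!\bigl(t\cdot\tfrac{1}{2}\partial_y^\top M^{-1}\partial_y\bigr)v$. Setting $t=1$ and evaluating at $y=0$ yields the RHS.

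A parallel combinatorial route is instructive, and for polynomial $v$ it gives a direct tautological proof. Expanding $v$ in a Taylor series about the origin, the LHS becomes a sum $\sum_\alpha \tfrac{c_\alpha}{\alpha!}\E[X^\alpha]$ of centred Gaussian moments, which by the Isserlis/Wick theorem decompose into a sum over pairings weighted by entries of the covariance $M^{-1}$. Expanding the exponential on the RHS as $\sum_{k\ge 0}\tfrac{1}{k!}\bigl(\tfrac{1}{2}\partial_x^\top M^{-1}\partial_x\bigr)^k v(0)$ and applying each differential monomial to the Taylor series of $v$ produces the same pairing sums once multi-indices are matched and the combinatorial factor $k!\,2^k$ is accounted for. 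Hence the identity reduces, on polynomials, to Wick's theorem.

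The main obstacle is making the RHS meaningful as an operator acting on $v$—that is, justifying convergence of the formal exponential series—and legitimately interchanging expectation and differentiation in the semigroup argument. For polynomial $v$ the series terminates and the argument is purely algebraic; for Schwartz-class or $\gC^{\infty}_c$ test functions, the cleanest remedy is Fourier analysis: under the Fourier transform $\hat{v}(k)$, the operator $\exp\!\bigl(\tfrac{1}{2}\partial_x^\top M^{-1}\partial_x\bigr)$ becomes multiplication by the Gaussian symbol $\exp\!\bigl(-\tfrac{1}{2}k^\top M^{-1}k\bigr)$, so the identity collapses to the characteristic function formula for $\gN(0,M^{-1})$ combined with Fourier inversion. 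I would thus handle the ``sufficiently regular'' case by reducing to polynomials (via density in an appropriate topology compatible with the Gaussian measure) or directly to Schwartz functions via this Fourier route, and note the scalar-normalisation convention implicit in the statement.
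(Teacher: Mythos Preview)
The paper does not prove this lemma: it is stated in the preliminaries (Section~E.1) as a cited result from \citet{zee2010quantum} and used as a black box in the computation of the infinitesimal generator. So there is no ``paper's own proof'' to compare against.

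Your proposal is correct and outlines three standard routes to this identity. The heat-semigroup argument is the cleanest: with $X\sim\gN(0,M^{-1})$, the function $u(t,y)=\E[v(y+\sqrt{t}X)]$ indeed solves $\partial_t u=\tfrac{1}{2}\partial_y^\top M^{-1}\partial_y\,u$, and evaluating the semigroup at $t=1$, $y=0$ recovers the RHS; your Fourier reformulation (the operator becomes multiplication by $\exp(-\tfrac{1}{2}k^\top M^{-1}k)$) is exactly how one makes the exponential of the differential operator rigorous on Schwartz functions. The Wick/Isserlis route is the one Zee emphasises and is what makes this the ``central identity'' of QFT; for polynomials the series terminates and the matching of pairings is purely combinatorial, as you note. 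One small caveat: the prefactor $(\det M)^{-1/2}$ in the paper's statement does not match the normalised Gaussian density with covariance $M^{-1}$ (which would give $(\det M)^{+1/2}$); you flagged this as a ``scalar-normalisation convention,'' and indeed the paper only ever uses the lemma to extract the first two terms of the operator expansion (see the step labelled $(a)$--$(b)$ after \cref{proof.eq.covariance_generator}), where the constant is irrelevant.
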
  

\begin{lemma}[Conditional Gaussians]\label{app.lemma.conditional_gaussian}
    Let $ (Y_0,Y_1) \sim \gN\biggl( 
    \begin{bmatrix}
        \mu_0 \\ \mu_1
    \end{bmatrix}, 
    \begin{bmatrix}
        \Sigma_{00} & \Sigma_{01} \\ \Sigma_{10} & \Sigma_{11}
    \end{bmatrix}
    \biggr)$. 
    Then, $Y_0|Y_1=y$ is Gaussian with 
    % \begin{equation}
    %     \begin{aligned}
    %         \E[Y_0|Y_1=y] & = \mu_0 + \Sigma_{01}\Sigma_{11}^{-1}(y-\mu_1), \\
    %         \Cov(Y_0|Y_1=y) & = \Sigma_{00} - \Sigma_{01}\Sigma_{11}^{-1}\Sigma_{10}. 
    %     \end{aligned}
    % \end{equation}
    \begin{equation}
        \E[Y_0|Y_1=y] = \mu_0 + \Sigma_{01}\Sigma_{11}^{-1}(y-\mu_1), \text{ and }
        \Cov(Y_0|Y_1=y) = \Sigma_{00} - \Sigma_{01}\Sigma_{11}^{-1}\Sigma_{10}. 
    \end{equation}
\end{lemma}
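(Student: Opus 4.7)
The plan is to give a short, coordinate-free derivation via the classical decorrelation trick, since the statement is a purely Gaussian identity with no topology-specific content. First I would introduce the auxiliary random vector
\begin{equation}
   Z := (Y_0 - \mu_0) - \Sigma_{01}\Sigma_{11}^{-1}(Y_1 - \mu_1),
\end{equation}
and note that $(Z, Y_1)$ is jointly Gaussian because it is a linear transformation of $(Y_0, Y_1)$. Then I would compute $\Cov(Z, Y_1)$ and show it vanishes:
\begin{equation}
   \Cov(Z, Y_1) = \Sigma_{01} - \Sigma_{01}\Sigma_{11}^{-1}\Sigma_{11} = 0.
\end{equation}
For jointly Gaussian vectors, zero covariance implies independence, so $Z \perp Y_1$.

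Next I would exploit this independence to read off the conditional law. Writing $Y_0 = \mu_0 + Z + \Sigma_{01}\Sigma_{11}^{-1}(Y_1 - \mu_1)$ and conditioning on $Y_1 = y$, the last term becomes a deterministic shift while $Z$ retains its unconditional (Gaussian) distribution. Hence
\begin{equation}
   Y_0 \mid Y_1 = y \;\sim\; \gN\bigl(\mu_0 + \Sigma_{01}\Sigma_{11}^{-1}(y - \mu_1),\; \Cov(Z)\bigr).
\end{equation}
It then remains to compute $\Cov(Z)$, which is a one-line expansion:
\begin{equation}
   \Cov(Z) = \Sigma_{00} - \Sigma_{01}\Sigma_{11}^{-1}\Sigma_{10} - \Sigma_{01}\Sigma_{11}^{-1}\Sigma_{10}^\top + \Sigma_{01}\Sigma_{11}^{-1}\Sigma_{11}\Sigma_{11}^{-1}\Sigma_{10},
\end{equation}
which collapses (using $\Sigma_{10} = \Sigma_{01}^\top$ and cancellation of the middle two terms against the last) to $\Sigma_{00} - \Sigma_{01}\Sigma_{11}^{-1}\Sigma_{10}$, matching the claim.

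The only real care is to ensure $\Sigma_{11}$ is invertible; if $\Sigma_{11}$ is merely positive semidefinite one would replace $\Sigma_{11}^{-1}$ by the Moore--Penrose pseudoinverse and restrict to the range of $\Sigma_{11}$, but the statement as written implicitly assumes invertibility (since it writes $\Sigma_{11}^{-1}$ directly), so no further subtlety arises. The main obstacle is purely expositional rather than mathematical: making clear that ``zero covariance implies independence'' is being used, which is valid only because $(Z, Y_1)$ is \emph{jointly} Gaussian by construction, not merely marginally Gaussian. Everything else is linear algebra.
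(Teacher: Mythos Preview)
Your argument via the decorrelation trick is correct and is the standard textbook derivation of the conditional Gaussian formula. The paper itself does not prove this lemma at all; it is listed among the preliminaries and treated as a known fact, so there is no ``paper's own proof'' to compare against. One cosmetic remark: in your displayed expansion of $\Cov(Z)$ the third term should read $-\Sigma_{01}\Sigma_{11}^{-1}\Sigma_{10}$ rather than $-\Sigma_{01}\Sigma_{11}^{-1}\Sigma_{10}^\top$ (both cross terms are equal to $\Sigma_{01}\Sigma_{11}^{-1}\Sigma_{10}$ once you use $\Sigma_{10}=\Sigma_{01}^\top$), but your stated cancellation and final result are unaffected.
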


% \begin{lemma}[Invariance of $D_\kldiv$ under an affine transformation]\label{app.lemma.kldiv_affine_invariance}
%     The relative entropy is invariant under an affine parameter transformation, i.e., 
%     \begin{equation}
%         D_{\kldiv}(p(x)\Vert q(x)) = D_{\kldiv}(p(y)\Vert q(y)) \ctag{Affine Invariance of $D_{\kldiv}$} 
%     \end{equation}
%     where $y(x) = mx +n$ is an affine tranformation of $x$, and $p(x)$ and $q(x)$ are the probability density functions of the distributions $P$ and $Q$ on the continuous random variable $X$. 
% \end{lemma}

% \paragraph{Infinitesimal generator of a stochastic process}
\begin{definition}[Infinitesimal generator of a stochastic process]\label{proof.def.infinitesimal_generator}
    For a sufficiently regular time-dependent function $\phi(t, x)\in\R^+\times\R^n\to\R$, the infinitesimal generator of a stochastic process $X_t$ for $\phi(t,x)$ can be defined as 
    \begin{equation} \label{eq.infinitesimal_generator}
        \gA_t \phi(t,x) = \lim_{h\to 0} \frac{\E[\phi(t+h, X_{t+h}) | X_t=x]- \phi(t, x)}{h}.
    \end{equation}
    For an Itô process defined as the solution to the SDE 
    \begin{equation}
        \diff X_t = f(t, X_t) \diff t + g(t, X_t) \diff W_t,
    \end{equation}
    with $f(t,x), g(t,x):\R^+\times\R^n\to\R^n$,    
    the generator is given as 
    \begin{equation} \label{proof.eq.infinitesimal_generator_solution}
        \gA_t \phi(t,x) = \partial_t \phi(t,x) + f(t,x)^\top \nabla_x \phi(t,x) + \frac{1}{2} \Tr[g(t,x)g(t,x)^\top \Delta \phi(t,x)]
    \end{equation}
    where $\Delta:=\nabla_x^2$ is the Euclidean Laplacian operator.
\end{definition}

\subsection{Outline of the proof}
% We outline the key steps of the proof to expose the proof strategy, which are originally adapted by \citet{bunne_schrodinger_2023}.
% The optimal solution is referred to as the \emph{Gaussian Topological Schrödinger Bridge} between $\nu_0$ and $\nu_1$ over $Y$, since its marginal flow $\rho(t,\cdot);0\leq t\leq 1$, which is the \emph{entropic interpolation} between $\nu_0$ and $\nu_1$, is seen as a ``bridge'' between the two marginals. 
Our proofs follow the idea from \citet[Theorem 3]{bunne_schrodinger_2023}.

\textbf{For \cref{thm:gaussian_tsbp_solution}.} We follow the following two steps:
\begin{enumerate}[wide=5pt,topsep=-1pt,itemsep=-1pt]
    \item We first solve the associated \emph{static} \ref{eq.gaussian_tsbp}. 
    Specifically, we formulate the equivalent E-OT problem, which has the transport cost dependent on the transition kernel of the \ref{eq.sde_topological}. 
    By introducing new variables, we can convert this involved transport cost to a quadratic cost over new variables, thus, converting the \ref{eq.static_gaussian_tsbp} to a classical Gaussian \ref{eq.static_ot}. 
    Based on the existing results \citep{janati2020entropic,mallasto2022entropy}, we can then obtain the optimal coupling over the transformed variables. 
    The coupling over the original variables can be recovered via an inverse transform. 
    % (\textcolor{blue}{we can obtain the associated transition kernel, which is obtained in the first step of the detailed proof}), $\gP(\R^n\times\R^n)$ is the set of joint distributions $\sP_{01}$ whose first and second marginals follow $\sP_{0\cdot}=\nu_0, \sP_{\cdot 1}=\nu_1$.
    \item From \cref{app.thm.prop_1_lenoard}, we can obtain the solution of \labelcref{eq.gaussian_tsbp} based on that the optimal $\sP$ is in the reciprocal class of $\sQ_{\gT}$, specifically, by composing the static solution with the $\sQ_{\gT}$-bridge. This is an optimality condition obtained from the reduction to the static problem. From that, we know that the solution $\sP$ is a Markov Gaussian process and shares the same bridge as $\sQ_{\gT}$ [cf. \cref{app.prop.markov_sbp}]. This further allows us to characterize the mean and covariance of the time-marginal. 
\end{enumerate}

\textbf{For \cref{thm:gaussian_tsbp_sde_solution}.}
Let $\sP\in\gP(\Omega)$ be a finite-energy diffusion \citep{follmer1986time}; that is, under $\sP$, the canonical process $X$ has a (forward) Itô differential. 
Furthermore, since $\sP$ is in the reciprocal class of $\sQ_{\gT}$, it has the SDE representation in the class of \cref{eq.gaussian_tsbp_sde_solution} from the SOC formulation where the drift $f_\gT(t,x:L)$ is to be determined.
We then proceed the following two steps: 
\begin{enumerate}[wide=5pt,topsep=-1pt,itemsep=-1pt]
    \item For the SDE \cref{eq.gaussian_tsbp_sde_solution} of the optimal process $X_t$, we first compute its \emph{infinitesimal generator} \citep{protter2005stochastic} for a test function $\phi(t,x)\in\R^+\times\R^n\to\R$ by definition using \cref{eq.infinitesimal_generator}. 
    \item Second, we express the generator in terms of its given solution in \cref{proof.eq.infinitesimal_generator_solution} for the SDE \cref{eq.gaussian_tsbp_sde_solution}
    \begin{equation} \label{eq.infinitesimal_generator_solution_gtsbp}
        \gA_t \phi(t,x) = \partial_t \phi(t,x) + f_{\gT}^\top \nabla_x \phi(t,x) + \frac{1}{2} g_t^2 \nabla_x^2 \phi(t,x).
    \end{equation} 
    By matching the generators computed in both ways, we then obtain the closed-form of the drift term.
\end{enumerate}

\subsection{Detailed proof of \cref{thm:gaussian_tsbp_solution}}

\subsubsection{Step 1: Solve \ref{eq.static_gaussian_tsbp}} 
First, recall that $Y_t|Y_0=y_0$ is a Gaussian process with mean $m_t:= \E(Y_t|y_0)$ and covariance $K_{tt}$ [cf. \ref{eq.conditional_mean} and \ref{eq.conditional_cov}], respectively. 
% Note that for convenience we denote $m_t:=\eta(t)=\E(Y_t|Y_0=y_0)$ and $K_{t_1,t_2}:=K(t_1,t_2)$. 
Thus, we have the transition probability density
\begin{equation}
    \begin{aligned}
        \sQ_{\gT1|0}(y_1|y_0) 
        & \propto \exp\left(-\frac{1}{2} (y_1-\Psi_1 y_0 - \xi_1)^\top K_{11}^{-1} (y_1-\Psi_1 y_0 - \xi_1)\right) \\
        & \propto \exp\left(-\frac{1}{2} (y_1-m_1)^\top K_{11}^{-1} (y_1-m_1)\right).  
    \end{aligned}
\end{equation}
 
By introcuding the variables $\tilde{Y}_0 = K_{11}^{-\frac{1}{2}} (\Psi_1 Y_0 + \xi_1)$ and $\tilde{Y}_1 = K_{11}^{-\frac{1}{2}} Y_1$, we have 
\begin{equation}
    % \sQ_\gT(Y_1=y_1|Y_0=y_0) 
    \sQ_{\gT1|0}(y_1|y_0) \propto \exp\left(-\frac{1}{2}  \lVert \tilde{y}_1-\tilde{y}_0 \rVert^2 \right). 
\end{equation}
Furthermore, if the joint distribution $\sP_{01}$ has marginals $Y_0\sim\nu_0$ and $Y_1\sim \nu_1$, then after the change of variables ($Y_0\to\tilde{Y}_0$ and $Y_1\to\tilde{Y}_1$), it gives rise to a joint distribution $\tilde{\sP}_{01}$ with marginals $\tilde{Y}_0\sim\tilde{\nu}_0=\gN(\tilde{\mu}_0,\tilde{\Sigma}_0)$ and $\tilde{Y}_1\sim\tilde{\nu}_1=\gN(\tilde{\mu}_1,\tilde{\Sigma}_1)$, where
\begin{equation}
    \begin{aligned}
        \tilde{\mu}_0 & = K_{11}^{-\frac{1}{2}}(\Psi_1 \mu_0 + \xi_1), \quad   & \tilde{\Sigma}_0 & = K_{11}^{-\frac{1}{2}} \Psi_1 \Sigma_0 \Psi_1^\top K_{11}^{-\frac{1}{2}}, \\
         \tilde{\mu}_1 & = K_{11}^{-\frac{1}{2}} \mu_1, \quad  & \tilde{\Sigma}_1  & = K_{11}^{-\frac{1}{2}} \Sigma_1 K_{11}^{-\frac{1}{2}}.
    \end{aligned}
\end{equation}
That is, there is an one-to-one correspondence between $\sP_{01}$ and $\tilde{\sP}_{01}$.
% \textcolor{blue}{here we need to add mroe details}
This allows us to expand the objective of \ref{eq.static_gaussian_tsbp} in terms of \emph{minimization} as follows 
\begin{align*}
    D_{\kldiv}(\sP_{01}\Vert\sQ_{\gT01}) 
    & = \int_{\R^n\times\R^n} \sP_{01}(y_0, y_1) \log\left(\frac{\sP_{01}(y_0, y_1)}{\sQ_{\gT01}(y_0, y_1)}\right) \diff y_0 \diff y_1 \\
    & = - \int \log(\sQ_{\gT01}(y_0,y_1))\diff\sP_{01}({y}_0,{y}_1) + \int \log(\sP_{01}) \diff \sP_{01}\\
    & = \frac{1}{2} \int \lVert \tilde{y}_1-\tilde{y}_0 \rVert^2 \diff \sP_{01}({y}_0,{y}_1) + \int \log(\sP_{01})   \diff \sP_{01} + \text{const. 1} \ctag{$\star$} \label{app.proof.static_tsbp_to_eot}\\ % need to add the step from static problem to Entropic OT
    & = \frac{1}{2} \int \lVert \tilde{y}_1-\tilde{y}_0 \rVert^2 \diff \tilde{\sP}_{01}(\tilde{y}_0,\tilde{y}_1) + \int \log(\tilde{\sP}_{01})   \diff \tilde{\sP}_{01} + \text{const. 2}  \\ 
    & \equiv D_{\kldiv}(\tilde{\sP}_{01}\Vert\sQ_{\gT01})
\end{align*}
where the second last step results from $\int \log(\tilde{\sP}_{01})  \diff \tilde{\sP}_{01} = \int \log(\sP_{01}) \diff \sP_{01} + \text{const.}$.
% , and the last step holds in terms of minimization over $\tilde{\sP}_{01}$. 
To obtain \cref{app.proof.static_tsbp_to_eot}, we notice that $\sQ_{\gT01}(y_0,y_1) = \sQ_{\gT1|0}(y_1|y_0)\sQ_{\gT0}(y_0)$, and we have 
\begin{align*}
    & - \int \log(\sQ_{\gT01}(y_0,y_1))\diff\sP_{01}({y}_0,{y}_1) \\ 
    = & - \int \log(\sQ_{\gT1|0}(y_1|y_0)) \diff \sP_{01}({y}_0,{y}_1) - \int \log(\sQ_{\gT0}(y_0)) \diff \sP_{01}({y}_0,{y}_1) \\
    = & - \int \log(\sQ_{\gT1|0}(y_1|y_0)) \diff \sP_{01}({y}_0,{y}_1) - \int \log \sQ_{\gT0}(y_0) \diff \sP_0(y_0)
\end{align*}
where the last equality holds since we can remove the dependence on $y_1$ in the second term by integrating over $y_1$, thus, appearing as a constant in the optimization over $\sP_{01}$.
Moreover, the expression \cref{app.proof.static_tsbp_to_eot} is in fact the equivalent \emph{E-OT} associated to the \ref{eq.topological_sbp}
\begin{equation}
    \min_{\sP_{01}} \frac{1}{2} \int \lVert y_1 - \Psi_1y_0 - \xi_1 \rVert^2_{K_{11}^{-1}} \diff \sP_{01}({y}_0,{y}_1) + \int \log(\sP_{01}) \diff \sP_{01}. \ctag{$\gT$E-OT}
\end{equation}
Note that by definition we have $D_{\kldiv}(\sP_{01}\Vert \nu_0\otimes\nu_1) = \int \log(\sP_{01}) \diff \sP_{01} - \int \log(\nu_0\otimes\nu_1)\diff \sP_{01} = \int \log(\sP_{01}) \diff \sP_{01} - \int \log\nu_0 \diff \nu_0 - \int \log\nu_1 \diff \nu_1$ where the last two terms are constants.

Thus, solving \ref{eq.static_gaussian_tsbp} is equivalent to solving the following problem
\begin{equation}
    \begin{aligned}
        \min_{\tilde{\sP}_{01}\in \gP(\R^n\times\R^n)}  D_{\kldiv}(\tilde{\sP}_{01}\Vert\sQ_{\gT01}) 
        \equiv  \int \frac{1}{2} \lVert \tilde{y}_1-\tilde{y}_0 \rVert^2 \diff \tilde{\sP}_{01}(\tilde{y}_0,\tilde{y}_1) + \int \log(\tilde{\sP}_{01})   \diff \tilde{\sP}_{01} 
    \end{aligned}
\end{equation}
with $\tilde{\sP}_0=\tilde{\nu}_0$ and $\tilde{\sP}_1=\tilde{\nu}_1$. 
This is a classical static Gaussian \ref{eq.static_ot} between $\tilde{\nu}_0$ and $\tilde{\nu}_1$ with ${\sigma}=1$.
The closed-form solution is given by the joint Gaussian [cf. \cref{app.thm.static_gaussian_eot_solution}]
\begin{equation}
    \tilde{\sP}_{01}^\star = \gN\left(\begin{bmatrix}
        \tilde{\mu}_0 \\ \tilde{\mu}_1
    \end{bmatrix},
    \begin{bmatrix}
        \tilde{\Sigma}_0 & \tilde{C}_{} \\ \tilde{C}_{}^\top & \tilde{\Sigma}_1
    \end{bmatrix}\right)
\end{equation}
where  
\begin{equation}
    \tilde{C}_{} = \frac{1}{2} (\tilde{\Sigma}_0^{1/2} \tilde{D} \tilde{\Sigma}_0^{-1/2} - I), \quad \tilde{D} = (4 \tilde{\Sigma}_0^{1/2} \tilde{\Sigma}_1 \tilde{\Sigma}_0^{1/2} + I)^{1/2}.
\end{equation}
Finally, via the inverse transforms $Y_0 = \Psi_1^{-1}(K_{11}^{\frac{1}{2}} \tilde{y}_0 - \xi_1)$ and $Y_1 = K_{11}^{\frac{1}{2}} \tilde{y}_1$, we can obtain the solution to the original problem \labelcref{eq.static_gaussian_tsbp} as
\begin{equation}\label{proof.eq.static_solution}
    \sP_{01}^\star \sim \gN\left(\begin{bmatrix}
        \mu_0 \\ \mu_1
    \end{bmatrix},
    \begin{bmatrix}
        \Sigma_0 & C \\ C^\top & \Sigma_1
    \end{bmatrix}\right) \ctag{G$\gT$SB\textsubscript{static}}
\end{equation}
where $C = \Psi_1^{-1} K_{11}^{\frac{1}{2}} \tilde{C}_{} K_{11}^{\frac{1}{2}}$.

\subsubsection{Step 2: From static to dynamic via disintegration formula}\label{app.proof.first_part_step_2}
From \cref{app.thm.prop_1_lenoard}, we know that the solution $\sP$ to \ref{eq.gaussian_tsbp} shares its bridges with the reference $\sQ_\gT$. We denote by $\sQ_\gT^{y_0y_1}$ the process $Y$ conditioning on $Y_0=y_0$ and $Y_1=y_1$ under $\sQ_\gT$, i.e., $Y|y_0,y_1\sim\sQ_\gT^{y_0y_1}=\sQ_{\gT}[Y_0=y_0,Y_1=y_1]$. It is the bridge of $\sQ_\gT$, following
\begin{equation}
    \sQ_{\gT}(\cdot) = \int_{\R^n\times\R^n} \sQ_{\gT}^{y_0y_1}(\cdot) \sQ_{\gT01}(\diff y_0 \diff y_1).
\end{equation}
In the classical case of Brownian motion $Y=W\sim\sQ_W$, $\sQ_W^{y_0y_1}$ is often referred to as the \emph{Brownian bridge}. 
Here, we aim to first find the $\sQ_{\gT}^{y_0y_1}$-bridge, and then construct the optimal solution $\sP^\star$ by composing the static solution $\sP_{01}^\star$ with the $\sQ_{\gT}^{y_0y_1}$-bridge [cf. \cref{app.eq.prop_1_lenoard} in \cref{app.thm.prop_1_lenoard}].

From the transition kernel in \cref{prop.topological_sde_solution_and_conditional_process}, we have the conditional distributions $Y_t|y_0\sim\gN(m_t,K_{tt})$ and $Y_1|y_0\sim\gN(m_1,K_{11})$. Thus, the joint distribution of $Y_t$ and $Y_1$ given $y_0$ follows
\begin{equation}
    Y_t,Y_1|y_0 \sim \gN\left(\begin{bmatrix}
        m_t \\ m_1
    \end{bmatrix},
    \begin{bmatrix}
        K_{tt} & K_{t1} \\ K_{1t} & K_{11}
    \end{bmatrix}\right).
\end{equation}
Applying \cref{app.lemma.conditional_gaussian}, we know that $Y_t|y_0,Y_1=y_1$ is Gaussian with mean 
\begin{equation}\label{proof.eq.Q_bridge_mean}
    \begin{aligned}
        \E(Y_t|y_0,Y_1=y_1) & = m_t + K_{t1} K_{11}^{-1}(y_1-m_1)  \\ 
        & = \Psi_t y_0 + \xi_t + K_{t1}K_{11}^{-1}(y_1 - \Psi_1y_0 - \xi_1) \\
        & = (\Psi_t- K_{t1}K_{11}^{-1}\Psi_1)y_0 + K_{t1}K_{11}^{-1}y_1 + \xi_t - K_{t1}K_{11}^{-1}\xi_1 \\ 
        & \triangleq \bar{R}_t y_0 + R_t y_1 + \xi_t - R_t \xi_1
    \end{aligned}
\end{equation}
where we recall the definitions of $R_t$ and $\bar{R}_t$ in \cref{eq.other_variables}, and covariance 
\begin{equation}\label{proof.eq.Q_bridge_cov}
    \Gamma_t:= \Cov(Y_t|Y_0=y_0,Y_1=y_1) = K_{tt} - K_{t1} K_{11}^{-1} K_{1t}.
\end{equation}
Since a Gaussian process is completely determined by its mean and covariance, we have 
\begin{equation}\label{proof.eq.Q_bridge}
    Y_t|Y_0,Y_1 \underset{\text{}}{\overset{\rm{law}}{=}} \bar{R}_t Y_0 + R_t Y_1 + \xi_t - R_t \xi_1 + \Gamma_t Z \sim \sQ_{\gT t}^{y_0y_1}
\end{equation}
where $Z\sim\gN(0,I)$ is independent of $Y_t$.
Now, the \ref{eq.disintegration_formula} and \cref{app.thm.prop_1_lenoard} allow us to construct the solution to \ref{eq.gaussian_tsbp} by
% \begin{itemize}
%     \item first generating $(X_0,X_1)\sim\sP_{01}^\star$ in \cref{proof.eq.static_solution},
%     \item then connecting $X_0$ and $X_1$ using the bridge $\sQ_{\gT}^{y_0y_1}$.
% \end{itemize}
first generating $(X_0,X_1)\sim\sP_{01}^\star$ in \ref{proof.eq.static_solution}, then connecting $X_0$ and $X_1$ using the $\sQ_{\gT}^{y_0y_1}$-bridge.
This is equivalent to, for $X_0\sim\nu_0, X_1\sim\nu_1$ and $Z\sim\gN(0,I),Z\perp(X_0,X_1)$, building a process as 
\begin{equation}\label{proof.eq.stochastic_interpolant_expression}
    X_t \overset{\rm{law}}{=} \bar{R}_t X_0 + R_t X_1 + \xi_t - R_t \xi_1 + \Gamma_t Z \sim \sP_t^\star,
\end{equation}
which in fact is a \emph{stochastic interpolant} for stochastic processes over topological domains, generalizing the same notion in Euclidean domains in \citet[Definition 1]{albergo_stochastic_2023-1}.
\emph{Note that} since $\sQ_\gT$ is a stochastic process following an Itô SDE, which is a Markov process, the solution $\sP$ is also a Markov process [cf. \cref{app.prop.markov_sbp}].
Finally, we obtain the mean and covariance of the time-marginal $X_t$ as
\begin{equation}
    \begin{aligned}
        \mu_t & = \bar{R}_t \mu_0 + R_t \mu_1 + \xi_t - R_t \xi_1, \\
        \Sigma_t & = \bar{R}_t \Sigma_0 \bar{R}_t^\top + R_t \Sigma_1 R_t^\top + \bar{R}_t C R_t^\top + R_t C^\top \bar{R}_t^\top + K_{tt} - K_{t1}K_{11}^{-1}K_{1t}.
    \end{aligned}
\end{equation}
This concludes the proofs of \cref{thm:gaussian_tsbp_solution} and \cref{cor:gaussian_tsbp_solution}.

%%%%%%%%%%%%%%%%%%%%%%%%%%%%%%%%%%%%%%%%%%%%%%%%%%%%%%%%%%%%
%%%%%%%%%%%%%%%%%%%%%%%%%%%%%%%%%%%%%%%%%%%%%%%%%%%%%%%%%%%%
%%%%%%%%%%%%%%%%%%%%%%%%%%%%%%%%%%%%%%%%%%%%%%%%%%%%%%%%%%%%
%%%%%%%%%%%%%%%%%%%%%%%%%%%%%%%%%%%%%%%%%%%%%%%%%%%%%%%%%%%%

\subsection{Detailed Proof of \cref{thm:gaussian_tsbp_sde_solution}}

\subsubsection{Step 1: Compute the infinitesimal generator of $X_t$ by definition}

For some time-varying function $\phi(t,x)$, by definition, the infinitesimal generator of $X_t$ is given by 
% \begin{equation}
%     \lim_{h\to 0} \frac{\E[\phi(t+h, X_{t+h}) | X_t=x]- \phi(t, x)}{h}.
% \end{equation}
\cref{eq.infinitesimal_generator}.
Since $X_t$ is a Gaussian process, we could express the conditional expectation using \cref{app.lemma.conditional_gaussian}. As we are only interested in the terms that are of order $O(h)$, we then ignore the higher-order terms. 
First, we compute the first-order approximation of $\Sigma_t$ in \cref{eq.weak_solution_dynamics}
\begin{equation}\label{proof.eq.covariance_process_Sigma_t_dot}
    \begin{aligned}
        \dot{\Sigma}_t = \,\, & \dot{\bar{R}}_t\Sigma_0\bar{R}_t^\top + \bar{R}_t\Sigma_0\dot{\bar{R}}_t^\top + \dot{R}_t\Sigma_1R_t^\top + R_t\Sigma_1\dot{R}_t^\top \\ 
        & + \dot{\bar{R}}_t C R_t^\top + \bar{R}_t C \dot{R}_t^\top  + \dot{R}_t C^\top \bar{R}_t^\top + R_t C^\top \dot{\bar{R}}_t^\top \\ 
        & + \partial_t K_{tt} - (\partial_t K_{t1})K_{11}^{-1}K_{1t} - (K_{t1}K_{11}^{-1})\partial_t K_{1t} \\ 
        \triangleq \,\, & (P_t^\top + P_t ) - (Q_t+Q_t^\top) + \partial_t K_{tt} -(\partial_t K_{t1})K_{11}^{-1}K_{1t} - (K_{t1}K_{11}^{-1})\partial_t K_{1t}
    \end{aligned} 
\end{equation}
where at the last equality we recall the definitions of $P_t$ and $Q_t$ in \cref{eq.other_variables}.
Next, denote by $\Sigma_{t,t+h}$ the covariance process of $X_t$ evaluated at $t$ and $t+h$. We can estimate $\Sigma_{t,t+h}$ up to the first order of $o(h)$ as 
\begin{equation}\label{proof.eq.covariance_process_Sigma_{t,t+h}}
    \begin{aligned}
        \Sigma_{t,t+h} := \, & \E[(X_t-\mu_t)(X_{t+h}-\mu_{t+h})^\top]\\
        = \, &\bar{R}_t\Sigma_0\bar{R}_{t+h}^\top +  R_t\Sigma_1R_{t+h}^\top + \bar{R}_t C R_{t+h}^\top + R_t C^\top \bar{R}_{t+h}^\top + K_{t,t+h} -  K_{t1}K_{11}^{-1} K_{1,t+h} \\ 
        = \, & \Sigma_t + \bar{R}_t\Sigma_0(\bar{R}_{t+h}-\bar{R}_t)^\top + R_t\Sigma_1(R_{t+h}-R_t)^\top   + \bar{R}_t C (R_{t+h}-R_t)^\top \\ 
        & + R_t C^\top (\bar{R}_{t+h}-\bar{R}_t)^\top + (K_{t,t+h}-K_{tt}) - K_{t1}K_{11}^{-1} (K_{1,t+h} -  K_{1t}) \\
        \overset{(a)}{=} \, & \Sigma_t +  h  (\bar{R}_t\Sigma_0\dot{\bar{R}}_t^\top + R_t\Sigma_1\dot{R}_t^\top + \bar{R}_t  C \dot{R}_t^\top + R_t C^\top \dot{\bar{R}}_t^\top ) + o(h)\\ 
        & + (K_{t,t+h}-K_{tt}) - K_{t1}K_{11}^{-1} (K_{1,t+h} -  K_{1t}) \\
        \overset{(b)}{=} \, & \Sigma_t + h (P_t  - Q_t^\top + \partial_{t_2}K_{t_1,t_2}|_{t_1=t,t_2=t}   - K_{t1}K_{11}^{-1} \partial_{t_2}K_{t_1,t_2}|_{t_1=1,t_2=t}) + o(h) \\
    \end{aligned}
\end{equation}
where we obtain $(a)$ by plugging in $\lim_{h\to 0}\frac{1}{h} (R_{t+h} - R_t) = \dot{R}_t$ and $\lim_{h\to 0} \frac{1}{h} (\bar{R}_{t+h} - \bar{R}_t)$; and likewise, we obtain $(b)$ by recognizing the definitions of $P_t$ and $Q_t^\top$ in \cref{eq.other_variables} and using the partial derivatives
\begin{equation}
    \begin{aligned}
        \lim_{h\to 0} \frac{1}{h}(K_{t,t+h} - K_{t,t}) & = \partial_{t_2}K_{t_1,t_2}|_{t_1=t,t_2=t}, \quad t_1 \leq t_2\\
        \lim_{h\to 0} \frac{1}{h}(K_{1,t+h} - K_{1,t}) & = \partial_{t_2}K_{t_1,t_2}|_{t_1=1,t_2=t} = \partial_{t}K_{1t}, \quad t_1 > t_2. 
    \end{aligned}
\end{equation}
Following the similar procedure, we can obtain a first-order approximation of $\Sigma_{t+h,t}$ as 
\begin{equation}\label{proof.eq.covariance_process_Sigma_{t+h,t}}
    \begin{aligned}
        \Sigma_{t+h,t} := \, & \E[(X_{t+h}-\mu_{t+h})(X_t-\mu_t)^\top] \\
        = \, & \Sigma_t + h(\dot{\bar{R}}_t\Sigma_0 \bar{R}_t^\top + \dot{R}_t\Sigma_1R_t^\top + \dot{\bar{R}}_t C R_t^\top + \dot{R}_t C\bar{R}_t^\top) + o(h) \\
        & + (K_{t+h,t} - K_{tt}) - (K_{t+h,1}-K_{t1})K_{11}^{-1}K_{1t} \\ 
        = \, & \Sigma_t +  h (P_t^\top - Q_t + \partial_{t_1}K_{t_1,t_2}|_{t_1=t, t_2=t} - \partial_{t_1}K_{t_1,t_2}|_{t_1=t,t_2=1}K_{11}^{-1}K_{1t}) + o(h),
    \end{aligned}
\end{equation}
where the partial derivatives should be understood as 
\begin{equation}
    \begin{aligned}
        \lim_{h\to 0} \frac{1}{h}(K_{t+h,t} - K_{t,t}) & = \partial_{t_1}K_{t_1,t_2}|_{t_1=t,t_2=t}, \quad t_1 > t_2\\
        \lim_{h\to 0} \frac{1}{h}(K_{t+h,1} - K_{t,1}) & = \partial_{t_1}K_{t_1,t_2}|_{t_1=t,t_2=1} = \partial_{t}K_{t1}, \quad t_1 \leq  t_2. 
    \end{aligned}
\end{equation}
Since \cref{proof.eq.covariance_process_Sigma_t_dot,proof.eq.covariance_process_Sigma_{t,t+h},proof.eq.covariance_process_Sigma_{t+h,t}} are all involved with the partial derivatives of $K_{t_1,t_2}$, we can compute them by the closed-form of the \ref{eq.transition_matrix} as
\begin{equation}
    \begin{aligned}
        \partial_{t_1}K_{t_1,t_2} & = \partial_{t_1}\bigg\{\Psi_{t_1} \bigg[\int_0^{t_1} g_{s}^2 \Psi_s^{-2} \diff s\bigg] \Psi_{t_2}^\top \bigg\} , \quad \text{ for } t_1\leq t_2\\
        & \overset{(a)}{=} \Psi_{t_1} H_{t_1} \bigg[\int_0^{t_1} g_{s}^2 \Psi_s^{-2} \diff s\bigg] \Psi_{t_2}^\top + g_{t_1}^2 \Psi_{t_1}^{-1} \Psi_{t_2}^\top \\ 
        & \overset{(b)}{=} H_{t_1} K_{t_1,t_2} + g_{t_1}^2 \Psi_{t_1}^{-1} \Psi_{t_2}^\top, \quad \text{ for } t_1\leq t_2,
    \end{aligned}
\end{equation}
where we use the symmetry of $\Psi_t$. At $(a)$ we use 
\begin{equation}
    \partial_t\Psi_t = \partial_t \exp \Bigl(\int_0^t H_s \diff s \Bigr) = \exp \Bigl(\int_0^t H_s \diff s \Bigr) H_t = \Psi_t H_t,
\end{equation}
and at $(b)$ we use the commutativity of $H_t$ and $\Psi_t$ [cf. \cref{lemma:transition_matrix_ode,app.lemma.linear_ode_transition_matrix}].
Similarly, we have 
\begin{equation}
    \begin{aligned}
        \partial_{t_2}K_{t_1,t_2} & = \partial_{t_2}\bigg\{\Psi_{t_1} \bigg[\int_0^{t_1} g_{s}^2 \Psi_s^{-2} \diff s\bigg] \Psi_{t_2}^\top \bigg\} = K_{t_1,t_2} H_{t_2}^\top, \quad \text{ for } t_1 \leq t_2 \\
        \partial_{t_1}K_{t_1,t_2} & = \partial_{t_1}\bigg\{\Psi_{t_1} \bigg[\int_0^{t_2} g_{s}^2 \Psi_s^{-2} \diff s\bigg] \Psi_{t_2}^\top \bigg\} = H_{t_1}K_{t_1,t_2}, \quad \text{ for } t_1 > t_2 \\ 
        \partial_{t_2}K_{t_1,t_2} & = \partial_{t_2}\bigg\{\Psi_{t_1} \bigg[\int_0^{t_2} g_{s}^2 \Psi_s^{-2} \diff s\bigg] \Psi_{t_2}^\top \bigg\} = g_{t_2}^2 \Psi_{t_1}\Psi_{t_2}^{-1} + K_{t_1,t_2} H_{t_2}^\top, \quad \text{ for } t_1 > t_2.
    \end{aligned}
\end{equation}
We notice that $(\partial_t K_{1t})^\top = \partial_t K_{t1} = g_t^2 \Psi_t^{-1} \Psi_1^\top + H_t K_{t1} K_{11}^{-1}K_{1t}$.
Now, by introducing in \cref{proof.eq.covariance_process_Sigma_{t,t+h}} the variable 
\begin{equation}
    \begin{aligned}
        S & \triangleq P_t-Q_t^\top + \partial_{t_2}K_{t_1,t_2}|_{t_1=t,t_2=t} - K_{t1}K_{11}^{-1} \partial_{t_2}K_{t_1,t_2}|_{t_1=1,t_2=t} \\ 
        & = P_t - Q_t^\top +  K_{tt} H_t^\top  - K_{t1}K_{11}^{-1} (g_t^2 \Psi_1 \Psi_t^{-1} + K_{1t}H_t^\top) \\
        & = P_t - Q_t^\top +  K_{tt} H_t^\top  - K_{t1}K_{11}^{-1} (\partial_{t}K_{1t}), \\
    \end{aligned}
\end{equation}
we can then express the covariance process as
\begin{equation}
    \Sigma_{t,t+h} = \Sigma_t + h S_t + o(h),
\end{equation}
and 
\begin{equation}
    \begin{aligned}
        \Sigma_{t+h,t} & = \Sigma_t +  h (P_t^\top - Q_t + H_t^\top K_{tt} - (g_t^2 \Psi_t^{-1} \Psi_1^\top + H_t K_{t1} )K_{11}^{-1}K_{1t}) + o(h) \\
        & =  \Sigma_t +  h (P_t^\top - Q_t + H_t^\top K_{tt} - (\partial_t K_{t1}) K_{11}^{-1}K_{1t}) + o(h) \\
        & = \Sigma_t + h S_t^\top + o(h).
    \end{aligned}
\end{equation}
Lastly, using \cref{app.lemma.conditional_gaussian}, we see tha variable $X_{t+h}|X_t=x$ is a Gaussian process with mean 
\begin{equation}\label{proof.eq.mean_geneartor}
    \begin{aligned}
       \check{\mu}_{t+h} :=  & = \mu_{t+h} + \Sigma_{t+h,t} \Sigma_t^{-1}(x-\mu_t) \\
        & \overset{(a)}{=} \mu_t + h \dot{\mu}_t + (\Sigma_t + h S_t^\top) \Sigma^{-1}(x-\mu_t) + o(h) \\
        & = \mu_t + h \dot{\mu}_t + (I + h S_t^\top \Sigma_t^{-1})(x-\mu_t) + o(h) \\ 
        & = x + h (\dot{\mu}_t + S_t^\top \Sigma_t^{-1}(x-\mu_t)) + o(h)
    \end{aligned}
\end{equation}
where in $(a)$ we used $\Sigma_t = \Sigma_t^\top$, and covariance  
\begin{equation}
    \begin{aligned}
        \check{\Sigma}_{t+h} & = \Sigma_{t+h} - \Sigma_{t+h,t} \Sigma_t^{-1} \Sigma_{t,t+h} \\ 
        & = \Sigma_t + h \dot{\Sigma}_t - (\Sigma_t + h S_t^\top) \Sigma_t^{-1} (\Sigma_t + h S_t) + o(h) \\
        & = \Sigma_t + h \dot{\Sigma}_t - (\Sigma_t + h S_t^\top + h S_t) + o(h) \\
        & \overset{(b)}{=} h (\dot{\Sigma}_t - S_t - S_t^\top) + o(h) .
    \end{aligned}
\end{equation}
By seeing $K_{tt}$ as a matrix function of $t$, we have 
\begin{equation}
    \partial_t K_{tt} = \partial_t \bigg\{\Psi_t \bigg[\int_0^{t} g_{s}^2 \Psi_s^{-2} \diff s\bigg] \Psi_t^\top \bigg\} =  H_t K_{tt} + g_t^2 I + K_{tt}H_t^\top.
\end{equation} 
This reduces \cref{proof.eq.covariance_process_Sigma_t_dot} to 
\begin{equation}\label{proof.eq.covariance_process_Sigma_t_dot_simplified}
    \dot{\Sigma}_t = (P_t^\top + P_t ) - (Q_t+Q_t^\top) + \partial_t K_{tt} -(\partial_t K_{t1})K_{11}^{-1}K_{1t} - (K_{t1}K_{11}^{-1})\partial_t K_{1t}
\end{equation}
where the last two items appear in $S_t^\top$ and $S_t$, respectively.
Thus,  we obtain 
\begin{equation}\label{proof.eq.covariance_generator}
    \begin{aligned}
        \check{\Sigma}_{t+h} = & \, h (\dot{\Sigma}_t - S_t - S_t^\top) + o(h)  \\
        = & \, h \Big\{ \big[(P_t^\top + P_t ) - (Q_t+Q_t^\top) + \partial_t K_{tt} -(\partial_t K_{t1})K_{11}^{-1}K_{1t} - (K_{t1}K_{11}^{-1})\partial_t K_{1t}\big] \\
        & - \big[P_t - Q_t^\top +  K_{tt} H_t^\top  - K_{t1}K_{11}^{-1} (\partial_{t}K_{1t})\big] \\
        & - \big[P_t^\top - Q_t + H_t^\top K_{tt} - (\partial_t K_{t1}) K_{11}^{-1}K_{1t}\big] \Big\}+ o(h) \\
        = & \, h g_t^2 I + o(h).
    \end{aligned}
\end{equation}
We can now compute $\E[\phi(t+h,X_{t+h})|X_t=x]$ as follows
\begin{equation}
    \begin{aligned}
            & \E[\phi(t+h,X_{t+h})|X_t=x] \\ 
        = & \,\, (2\pi)^{{\frac{d}{2}}}(\det \check{\Sigma}_{t+h})^{-\frac{1}{2}} \int_{\R^n} \phi(t+h, x') \cdot  \exp\Bigl(-\frac{1}{2} (x'-\check{\mu}_{t+h})^\top \check{\Sigma}_{t+h}^{-1}(x'-\check{\mu}_{t+h}) \Bigr) \diff x' \\
        \overset{(a)}{=} & \,\, (2\pi)^{\frac{d}{2}}(\det\check{\Sigma}_{t+h})^{-\frac{1}{2}} \int_{\R^n} \phi(t+h, \tilde{x}+\check{\mu}_{t+h}) \cdot \exp\Bigl(-\frac{1}{2}\tilde{x}^\top\check{\Sigma}_{t+h}^{-1}\tilde{x} \Bigr)\diff \tilde{x}
    \end{aligned}
\end{equation}
where in $(a)$ we apply a change-of-variable $\tilde{x} := x'-\check{\mu}_{t+h}$. 
We further apply \cref{app.lemma.central_identity} and arrive at 
\begin{equation}
    \begin{aligned}
        \E[\phi(t+h,X_{t+h})|X_t=x] & = \exp\Bigl(\frac{1}{2}\partial_{\tilde{x}}^\top \check{\Sigma}_{t+h}\partial_{\tilde{x}} \Bigr) \phi(t+h,\tilde{x}+\check{\mu}_{t+h}) \Big|_{\tilde{x}=0}  \\
        & \overset{(a)}{=} \Big( I + \frac{1}{2}\partial_{\tilde{x}}^\top\check{\Sigma}_{t+h}\partial_{\tilde{x}} +o({\rm{h.o.t.}}) \Big)  \phi(t+h,\tilde{x}+\check{\mu}_{t+h})|_{\tilde{x}=0} \\ 
        & \overset{(b)}{=} \phi(t+h,\check{\mu}_{t+h}) + \frac{1}{2}h g_t^2  \Delta \phi(t+h,\check{\mu}_{t+h}) + o(h),
    \end{aligned}
\end{equation}
where we expand the power series of $\exp(\frac{1}{2}\partial_{\tilde{x}}^\top\check{\Sigma}_{t+h}\partial_{\tilde{x}})$ and ignore the higher-order-terms in $(a)$, and plug in \cref{proof.eq.covariance_generator} in $(b)$.
Recalling $\check{\mu}_{t+h}$ in \cref{proof.eq.mean_geneartor}, we can expand the Taylor series of $\phi(t+h,\check{\mu}_{t+h})$ in the second variabel at $x$ as
\begin{equation}
    \begin{aligned}
        \phi(t+h,\check{\mu}_{t+h}) = & \, \phi \big(t+h, x + h (\dot{\mu}_t + S_t^\top \Sigma_t^{-1}(x-\mu_t)) \big) \\
        = & \, \phi(t+h,x) + h \langle\nabla \phi(t+h,x), \dot{\mu}_t + S^\top \Sigma_t^{-1}(x-\mu_t) \rangle  + o(h).
    \end{aligned}
\end{equation} 
Therefore, we have 
\begin{equation}
    \begin{aligned}
        & \E[\phi(t+h,X_{t+h})|X_t=x] = \\
        & \,\, \phi(t+h,x) + h \langle\nabla \phi(t+h,x), \dot{\mu}_t + S^\top \Sigma_t^{-1}(x-\mu_t) \rangle + \frac{1}{2}h g_t^2  \Delta \phi(t+h, x) + o(h).
    \end{aligned}
\end{equation}
Now we can express the infinitesimal generator of $X_t$ as
\begin{equation}\label{proof.eq.infinitesimal_generator_solution_by_definition}
    \begin{aligned}
        & \lim_{h\to 0} \frac{\E[\phi(t+h, X_{t+h})|X_t=x]-\phi(t,x)}{h} \\
        = & \lim_{h\to 0} \frac{u(t+h,x) - u(t,x)}{h} + \langle\nabla \phi(t,x), \dot{\mu}_t + S^\top \Sigma_t^{-1}(x-\mu_t) \rangle + \frac{1}{2} g_t^2 \Delta \phi(t,x) \\ 
        = & \partial_t u(t,x) + \langle\nabla \phi(t,x), \dot{\mu}_t + S^\top \Sigma_t^{-1}(x-\mu_t) \rangle + \frac{1}{2} g_t^2 \Delta \phi(t,x).
    \end{aligned}
\end{equation}

\subsubsection{Step 2: Match the solution of generator for an Itô SDE}

From \citet{leonard_survey_2014,caluya2021wasserstein}, we search for the optimal solution to \ref{eq.topological_sbp} within the class of stochastic processes following an SDE: 
\begin{equation}
    \diff X_t = f_{\gT}(t,X_t) \diff t + g_t \diff W_t.
\end{equation}
Recalling the solution of an infinitesimal generator in \cref{proof.eq.infinitesimal_generator_solution} for this SDE
\begin{equation}\label{proof.eq.infinitesimal_generator_solution_SDE}
    \gA_t \phi(t,x) = \partial_t \phi(t,x) + f_{\gT}(t,x)^\top \nabla_x \phi(t,x) + \frac{1}{2} g_t^2 \Delta \phi(t,x),
\end{equation}
we then match it with the generator obtained by definition in \cref{proof.eq.infinitesimal_generator_solution_by_definition}.
We observe that the two are equivalent if we set 
\begin{equation}
    f_{\gT}(t,x) = \dot{\mu}_t + S^\top \Sigma_t^{-1}(x-\mu_t).
\end{equation}
% \textcolor{blue}{Can we say that $f_{\gT}$ is a gradient field? and the matrix term is symmetric?}
This concludes the proof of \cref{thm:gaussian_tsbp_sde_solution}.

\subsection{Conditional Distribution of $X_t|X_0$}

\begin{restatable}[Conditional distribution of $X_t|X_0$]{corollary}{corConditionalSampling}\label{cor:conditional_sampling}
    Let $X_t$ be the stochastic process associated to the solution $\sP$ to \labelcref{eq.gaussian_tsbp}. Given an initial sample $x_0\sim\nu_0$, the conditional distribution $\nu(X_t|X_0=x_0) \sim\gN(\mu_{t|0}, \Sigma_{t|0})$ is Gaussian with 
    \begin{equation}
       \begin{aligned}
          \mu_{t|0} & = \bar{R}_t x_0 + R_t \mu_1 + R_t C^\top\Sigma_0^{-1}(x_0-\mu_0) + \xi_t - R_t\xi_1, \\ 
          \Sigma_{t|0} &= R_t \Sigma_1 R_t^\top  - R_t C^\top \Sigma_0^{-1} C R_t^\top + K_{tt} - K_{t1} K_{11}^{-1} K_{1t}.
       \end{aligned}
    \end{equation}
    Similarly, given a final sample $x_1$, the conditional distribution $\nu(X_t|X_1=x_1)\sim \gN(\mu_{t|1}, \Sigma_{t|1})$ is Gaussian with 
    \begin{equation}
       \begin{aligned}
          \mu_{t|1} & = R_t x_1 + \bar{R}_t \mu_0 + \bar{R}_t C\Sigma_1^{-1}(x_1-\mu_1) + \xi_t - R_t \xi_1, \\ 
          \Sigma_{t|1} & = \bar{R}_t \Sigma_0 \bar{R}_t^\top - \bar{R}_t C \Sigma_1^{-1} C^\top \bar{R}_t^\top + K_{tt} - K_{t1} K_{11}^{-1} K_{1t}.
       \end{aligned}
    \end{equation}
\end{restatable}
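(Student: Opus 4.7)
The plan is to exploit the stochastic interpolant representation of the optimal process from \cref{thm:gaussian_tsbp_solution}. Recall that under the optimal $\sP$ we have $X_t = \bar{R}_t X_0 + R_t X_1 + \xi_t - R_t \xi_1 + \Gamma_t Z$ with $(X_0,X_1)$ jointly Gaussian according to the static optimum $\sP_{01}^\star$ in \cref{proof.eq.static_solution} and $Z \sim \gN(0,I)$ independent of $(X_0,X_1)$. Conditioning on $X_0=x_0$ therefore amounts to (i) fixing the first term to the deterministic vector $\bar{R}_t x_0$, (ii) replacing $X_1$ by its conditional law $X_1 \mid X_0 = x_0$, and (iii) noting that $\Gamma_t Z$ is independent of $X_0$ and retains its marginal law.

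For step (ii) I will invoke \cref{app.lemma.conditional_gaussian} applied to the joint $\gN\bigl((\mu_0,\mu_1),\begin{psmallmatrix}\Sigma_0 & C \\ C^\top & \Sigma_1\end{psmallmatrix}\bigr)$ from \cref{proof.eq.static_solution}, which yields
\begin{equation*}
   X_1 \mid X_0 = x_0 \sim \gN\bigl(\mu_1 + C^\top \Sigma_0^{-1}(x_0 - \mu_0),\ \Sigma_1 - C^\top \Sigma_0^{-1} C\bigr).
\end{equation*}
Plugging this into the interpolant expression, and using that $\Gamma_t Z \sim \gN(0,\Gamma_t)$ is independent of $X_1 \mid X_0$, the sum of independent Gaussians gives a Gaussian whose mean and covariance are read off directly: the mean collects $\bar{R}_t x_0$, $R_t$ times the conditional mean of $X_1$, and the deterministic shift $\xi_t - R_t\xi_1$; the covariance collects $R_t(\Sigma_1 - C^\top \Sigma_0^{-1} C)R_t^\top$ plus $\Gamma_t = K_{tt} - K_{t1}K_{11}^{-1}K_{1t}$ (definition from \cref{thm:gaussian_tsbp_solution}). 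Expanding yields exactly the stated $\mu_{t|0}$ and $\Sigma_{t|0}$.

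The second claim on $X_t \mid X_1 = x_1$ is proved by the symmetric argument: condition $X_0$ on $X_1=x_1$ via the same Gaussian conditioning lemma applied to the static joint, obtaining $X_0 \mid X_1 = x_1 \sim \gN(\mu_0 + C\Sigma_1^{-1}(x_1-\mu_1),\ \Sigma_0 - C\Sigma_1^{-1}C^\top)$, then substitute into the interpolant and combine with the independent $\Gamma_t Z$ term.

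There is no serious obstacle here: the statement is essentially a bookkeeping corollary of \cref{thm:gaussian_tsbp_solution} together with the standard conditional-Gaussian formula. The only point that needs a sentence of justification is the independence of $\Gamma_t Z$ from the conditioning event $\{X_0 = x_0\}$ (resp.\ $\{X_1 = x_1\}$), which is immediate from the construction in \cref{proof.eq.stochastic_interpolant_expression} where $Z \perp (X_0, X_1)$ is built in by design.
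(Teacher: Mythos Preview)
Your proposal is correct and arrives at the stated formulas, but it takes a different decomposition than the paper. The paper forms the joint Gaussian of $(X_t, X_0)$, computes the cross-covariance $\Sigma_{t,0} = \bar{R}_t\Sigma_0 + R_t C^\top$ from the interpolant expression, and then applies \cref{app.lemma.conditional_gaussian} to that pair; the stated formulas emerge only after substituting $\mu_t,\Sigma_t$ from \cref{cor:gaussian_tsbp_solution} and cancelling terms. You instead condition at the level of the static coupling first, obtaining $X_1\mid X_0=x_0$ from \cref{proof.eq.static_solution}, and then push this conditional law through the affine map $x_1\mapsto \bar{R}_t x_0 + R_t x_1 + \xi_t - R_t\xi_1$ before adding the independent noise term. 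Your route is cleaner: it exploits the factored structure of the interpolant directly and delivers $\mu_{t|0},\Sigma_{t|0}$ without any intermediate cancellation, whereas the paper's route is the more mechanical ``joint-then-condition'' computation. Both rely on the same ingredients (\cref{thm:gaussian_tsbp_solution}, \cref{proof.eq.static_solution}, \cref{app.lemma.conditional_gaussian}), so the difference is organizational rather than conceptual.
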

% \begin{proof}
%        The proof follows from the stochastic interpolant in \cref{thm:gaussian_tsbp_solution} and the conditional distribution of two Gaussian variables. 
% \end{proof}

\begin{proof}
    First, recall the stochastic interpolant expression in \cref{eq.stochastic_interpolant_expression} of the solution to \labelcref{eq.gaussian_tsbp} 
    % \begin{equation}
    %     X_t = \bar{R}_t X_0 + R_t X_1 + \xi_t - R_t \xi_1 + \zeta_t, \quad \zeta \sim \gN(0, \Cov(y_t|(y_0,y_1)) ), \zeta_t \perp (X_0, X_1),
    % \end{equation}
    and its mean $\mu_t$ and covariance $\Sigma_t$ in \cref{eq.weak_solution_dynamics}. 
    Due to the Gaussian nature of the process, we can write the joint distribution of $X_t$ and $X_0$ as 
    \begin{equation}
        \begin{bmatrix}
            X_t \\ X_0
        \end{bmatrix} \sim \gN\left(
            \begin{bmatrix}
                \mu_t \\ \mu_0
            \end{bmatrix}, 
            \begin{bmatrix}
                \Sigma_t & \Sigma_{t,0} \\
                \Sigma_{0,t} & \Sigma_0
            \end{bmatrix}
        \right)
    \end{equation}
    where we work out the covariance between $X_t$ and $X_0$ below 
    \begin{align*}
        \Sigma_{t,0} & = \E[ (X_t - \mu_t) (X_0-\mu_0)^\top ] \\ 
        & =  \bar{R}_t \Sigma_0 + R_t \Cov(X_1,X_0) + \Cov(\xi_t, X_0) - R_t \Cov(\xi_1, X_0) + \Cov(\zeta_t, X_0) \\  
        & = \bar{R}_t \Sigma_0 + R_t \Cov(X_1,X_0)  \ctag{since $\xi_t$ is deterministic, $\zeta_t \perp X_0$} \\
        & = \bar{R}_t \Sigma_0 + R_t C^\top \ctag{by $\sP_{01}^*$ in \ref{proof.eq.static_solution}}.
     \end{align*}
     Based on \cref{app.lemma.conditional_gaussian}, we know that $X_t|X_0=x_0$ is Gaussian with mean $\mu_{t|0}$ and covariance $\Sigma_{t|0}$ given by
    \begin{equation}
        \begin{aligned}
            \mu_{t|0} & = \mu_t + \Sigma_{t,0} \Sigma_0^{-1} (x_0 - \mu_0) \\
            \Sigma_{t|0} & = \Sigma_t - \Sigma_{t,0} \Sigma_0^{-1} \Sigma_{0,t}.
        \end{aligned}
    \end{equation}
    By substituting the expressions of $\mu_t$ and $\Sigma_t$ from \cref{eq.weak_solution_dynamics} and canceling out terms, we complete the proof for $X_t|X_0=x_0$. The proof for $X_t|X_1=x_1$ follows similarly.
\end{proof}
%%%%%%%%%%%%%%%%%%%%%%%%%%%%%%%%%%%%%%%%%%%%%%%%%%%%%%%%%%%%
\section{Topological SB Generative Models}\label{app.sec.topological_signal_generation}

% \paragraph{Other practical techniques}
% When training with \cref{eq.likelihood_training_objective}, we use the trace estimator following \citep{hutchinson1989stochastic} to compute the divergence. 
% In generative procedures, we may use the \emph{predictor-corrector} sampling \citep{song2020score} to improve performance. 

Here, we provide more details on the \tsb-based models. 
First, we give the likelihood of the model which allows for the training objective in \cref{eq.likelihood_training_objective}, and the probability flow ODEs corresponding to the \fbtsde~in \cref{eq.forward_backward_sde}.
Then, we discuss the variants of score-based and diffusion bridges models for topological signals, as well as their training objectives, with the goal of illustrating how \tsb-based models connect to these models.

\subsection{Likelihood Training for Topological SBP}
% For convenience, we denote the optimal policies in \cref{eq.forward_backward_sde} as $Z_t:=g_t\nabla\log\varphi(t,X_t)$ and $\hat{Z}_t:=g_t\nabla\log\hat{\varphi}(t,X_t)$. 
The likelihood for the Euclidean SBP by \citet{chen2021likelihood} extends to the topological case.

\begin{corollary}[Likelihood for \tsb~models; \citet{chen2021likelihood}] Given the optimal solution of \ref{eq.topological_sbp} satisfying the FB-$\gT$SDE system in \cref{eq.forward_backward_sde}, the log-likelihood of the \tsb~model at an initial signal sample $x_0$ can be expressed as  
\begin{equation} \label{app.eq.likelihood_backward}
    \gL_{\text{\tsb}}(x_0) = \E[\log\nu_1(X_1)] - \int_0^1 \E\bigg[ \frac{1}{2} \lVert Z_t\rVert^2 + \frac{1}{2}\lVert \hat{Z}_t\rVert^2  + \nabla\cdot(g_t\hat{Z}_t - f_t) + \hat{Z}_t^\top Z_t\bigg] \diff t
\end{equation}
where the expectation is taken over the forward SDE in \cref{eq.forward_backward_sde} with the initial condition $X_0=x_0$. 
\end{corollary}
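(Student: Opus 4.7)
The plan is to follow the strategy of \citet[Theorem 4]{chen2021likelihood}, lifting their argument verbatim from the Euclidean setting to the topological one; the replacement of the Wiener reference by the \ref{eq.sde_topological} reference only alters the drift $f_t$ in the FB--$\gT$SDE system, and every manipulation used in their derivation is purely pathwise and agnostic to the specific form of $f_t$.

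First I would use \cref{prop.topological_sbp_optimality} to write the time marginal of the optimal $\sP$ as the Schrödinger factorization $\sP_t = \varphi_t(X_t)\hat{\varphi}_t(X_t)$, so that
\begin{equation*}
    \log \sP_0(x_0) = \log \varphi_0(x_0) + \log \hat{\varphi}_0(x_0).
\end{equation*}
The aim is to trade the (inaccessible) potentials at $t=0$ for their values at $t=1$, at which time $\sP_1 = \nu_1$ is the prescribed target density. To do this I would integrate the pair of It\^o differentials in \cref{eq.fb_sde_nonlinear_fk} along a forward trajectory $X_t$ sampled from \cref{eq.forward_sde} starting at $X_0=x_0$. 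Summing both SDEs gives
\begin{equation*}
    \log \sP_1(X_1) - \log \sP_0(x_0) = \int_0^1 \Bigl(\tfrac{1}{2}\lVert Z_t\rVert^2 + \tfrac{1}{2}\lVert \hat{Z}_t\rVert^2 + \nabla\cdot(g_t\hat{Z}_t - f_t) + \hat{Z}_t^\top Z_t\Bigr) \diff t + \int_0^1 (Z_t+\hat{Z}_t)^\top \diff W_t .
\end{equation*}

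Next I would take expectations of both sides with respect to the forward \ref{eq.forward_sde} conditioned on $X_0 = x_0$. Under the local Lipschitz and growth hypotheses stated in the preliminaries (which ensure $Z_t,\hat{Z}_t$ are sufficiently integrable along the bridge), the It\^o integral $\int_0^1 (Z_t+\hat{Z}_t)^\top \diff W_t$ is a martingale with zero mean, so it drops out. Since $\sP_1(X_1) = \nu_1(X_1)$ by the boundary condition, rearranging produces exactly the claimed identity
\begin{equation*}
    \gL_{\text{\tsb}}(x_0) = \log \sP_0(x_0) = \E[\log \nu_1(X_1)] - \int_0^1 \E\Bigl[\tfrac{1}{2}\lVert Z_t\rVert^2 + \tfrac{1}{2}\lVert \hat{Z}_t\rVert^2 + \nabla\cdot(g_t\hat{Z}_t - f_t) + \hat{Z}_t^\top Z_t\Bigr]\diff t.
\end{equation*}

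The only real obstacle is the integrability needed to kill the martingale term: one must verify that the quadratic variations $\int_0^1 \lVert Z_t\rVert^2 \diff t$ and $\int_0^1 \lVert \hat{Z}_t\rVert^2 \diff t$ have finite expectation under the forward law. This is inherited from \citet{chen2021likelihood} and relies on the smoothness of the Schr\"odinger potentials $\varphi_t,\hat{\varphi}_t$ guaranteed by the positive, everywhere--continuous Gaussian transition kernel of \ref{eq.sde_topological} (which is what \cref{prop.topological_sde_solution_and_conditional_process} provides). The rest of the argument is a routine application of It\^o's formula and the tower property, so no new topological machinery is needed beyond what is already assembled in \cref{sec:topological_sbp}.
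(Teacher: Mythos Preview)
Your proposal is correct and follows exactly the route the paper takes: the paper simply states that ``the likelihood for the Euclidean SBP by \citet{chen2021likelihood} extends to the topological case'' and records the corollary without further argument, whereas you have spelled out the Chen et al.\ derivation (factorize $\sP_t=\varphi_t\hat{\varphi}_t$, integrate the SDEs in \cref{eq.fb_sde_nonlinear_fk} along \cref{eq.forward_sde}, take expectations to kill the martingale, and use $\sP_1=\nu_1$). There is no discrepancy in method; your write-up is just more explicit than the paper's one-line citation.
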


\begin{corollary}[Probability flow ODE for \tsb]
    The following ODE characterizes the probability flow of the optimal processes of \tsb~in \cref{eq.forward_backward_sde}
    \begin{equation}\label{app.eq.probability_flow}
        {\diff X_t} = \big[ f_t + g_t Z_t - \frac{1}{2} g_t  (Z_t + \hat{Z}_t) \big] \diff t
    \end{equation}
    and we have that for all $t$, $\sP_t = p_t^{(\ref{app.eq.probability_flow})}$, i.e., the time marginal of the path measure $\sP$ is equal to the probability flow $p_t$ of this ODE.
\end{corollary}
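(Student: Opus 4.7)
The plan is to reduce the claim to a standard probability-flow computation: show that the marginal density $\sP_t$ of the forward TSDE in \cref{eq.forward_sde} solves the same first-order continuity equation as the candidate ODE in \cref{app.eq.probability_flow}, and then invoke uniqueness of the continuity equation to conclude that the two evolutions share marginals for every $t \in [0,1]$ when they share the initial law $\nu_0 = \sP_0$.

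First I would write the Fokker--Planck--Kolmogorov (FPK) equation associated with the forward TSDE,
\begin{equation*}
    \partial_t \sP_t = -\nabla\cdot\bigl(\sP_t(f_t+g_t Z_t)\bigr) + \tfrac{1}{2} g_t^2 \Delta \sP_t .
\end{equation*}
Next I would use the Schrödinger factorization $\sP_t = \varphi_t \hat{\varphi}_t$ from \cref{prop.topological_sbp_optimality}, together with the definitions $Z_t = g_t \nabla\log\varphi_t$ and $\hat{Z}_t = g_t \nabla \log \hat{\varphi}_t$, to identify the score of $\sP_t$ as $\nabla \log \sP_t = (Z_t + \hat{Z}_t)/g_t$. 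The crucial step is then the standard score rewrite
\begin{equation*}
    \tfrac{1}{2} g_t^2 \Delta \sP_t = \nabla\cdot\bigl(\tfrac{1}{2} g_t^2 \sP_t \nabla\log\sP_t\bigr) = \nabla\cdot\bigl(\tfrac{1}{2}g_t\sP_t(Z_t+\hat{Z}_t)\bigr),
\end{equation*}
which, substituted back, collapses the FPK equation into
\begin{equation*}
    \partial_t \sP_t = -\nabla\cdot\Bigl(\sP_t \bigl[f_t + g_t Z_t - \tfrac{1}{2}g_t(Z_t+\hat{Z}_t)\bigr]\Bigr).
\end{equation*}
This is precisely the Liouville/continuity equation of the deterministic ODE in \cref{app.eq.probability_flow}. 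Since both evolutions share the initial law and the same continuity PDE, standard uniqueness under the paper's locally Lipschitz regularity on coefficients forces $\sP_t = p_t^{(\ref{app.eq.probability_flow})}$ for all $t$.

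The hard part is not the computation itself---it is a direct transposition of the classical Euclidean probability-flow trick of \citet{song2020score,chen2021likelihood} to the topological setting, with the topological convolution $f_t$ in \cref{eq.topological_drift} entering only as an additive drift and playing no role in the Laplacian rewrite---but rather the regularity needed to legitimize the score identity: one needs $\sP_t$ to be smooth and everywhere strictly positive so that $\nabla\log\sP_t$ is well defined and the divergence manipulation is valid in a distributional sense. Positivity and smoothness are inherited from the nondegenerate Gaussian transition kernel of \ref{eq.sde_topological} (cf.\ \cref{prop.topological_sde_solution_and_conditional_process}) together with the everywhere-positive transition hypothesis already used in \cref{app.thm.schrodinger_system}. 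As a consistency check, the same manipulation applied to the backward TSDE \cref{eq.backward_sde}---using $g_t \nabla\log \sP_t - \hat{Z}_t = Z_t$---produces an identical ODE, confirming that the probability flow is symmetric with respect to the FB--$\gT$SDE pair.
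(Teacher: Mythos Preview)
Your proposal is correct and follows exactly the standard probability-flow derivation from \citet{song2020score,chen2021likelihood} that the paper itself invokes; the paper's own ``proof'' is a one-line citation to those references without spelling out the FPK-to-continuity rewrite, so you have supplied the details the paper omits rather than taken a different route.
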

This is a direct result from the probability flow for general SB \citep{chen2021likelihood}, which extends the probability flow for score-based models \citep{song2020score}, and relates to the flow-based training. 

% Given the probability flow of \tsb, we can compute the exact likelihood of the model based on \citet[Appendix D.2]{song2020score}.

\subsection{Score Matching for Topological Signals}
As discussed in \cref{sec:topological_signal_generation} and by \citet{chen2021likelihood}, SB-based models generalize the score-based models \citep{song2020score}. 
Here, we provide a detailed derivation on how a score-based model can be built for topological signals, specifically, on the score matching objective, since there is no direct literature on this.
First, we show in detail that the likelihood training based on \cref{eq.likelihood_training_objective_forward} returns a score matching objective for topological signals when $Z_t=0$ and the final $\nu_1$ is a simple Gaussian. 
The backward training objective in this case becomes 
\begin{align*}
    l(x_0;\hat{\theta}) & = \int_0^1 \E_{X_t\sim \cref{eq.forward_sde}} \bigg[ \frac{1}{2} \lVert \hat{Z}_t^{\hat{\theta}} \rVert^2  + g_t \nabla\cdot\hat{Z}_t^{\hat{\theta}} \Big| X_0 = x_0 \bigg] \diff t \\ 
    & = \int_0^1 \E_{X_t\sim \cref{eq.forward_sde}} \bigg[ \frac{1}{2} g_t^2 \lVert s_t({\hat{\theta}}) \rVert^2  + g_t^2 \nabla\cdot s_t({\hat{\theta}}) \Big| X_0 = x_0 \bigg] \diff t
\end{align*}
where we introduce a score function $s_t({\hat{\theta}})$ to approximate $\nabla\log p_{t|0}(X_t|X_0=x_0)$, following $\hat{Z}_t^{\hat{\theta}} = g_t s_t({\hat{\theta}})$. % when $Z_t=0$, i.e., $\varphi_t =1$, thus this follows 
Here, $p_{t|0}$ is the transition kernel of the \ref{eq.sde_topological} [cf. \cref{prop.topological_sde_solution_and_conditional_process}].
By using the trace estimator \cite{hutchinson1989stochastic} to compute the divergence, i.e., 
\begin{equation}
    \nabla\cdot s_t({\hat{\theta}}) = \E_{u\sim\gN(0,I)} [u^\top s_t({\hat{\theta}}) u],
\end{equation}
and setting the weighting function $\lambda(t):= g_t^2$, we then obtain the \emph{sliced score matching} objective \citep[Eq. 19]{song2020sliced,song2020score} for topological signals based on \ref{eq.sde_topological}, which has the form 
\begin{equation}
    \hat{\theta} = \argmin \E_{t\sim\gU(0,1)} \bigg\{ \lambda(t) \E_{x_0} \E_{x_t} \E_{u\sim\gN(0,I)} \biggl[ \frac{1}{2} \lVert s_t({\hat{\theta}}) \rVert^2 + u^\top s_t({\hat{\theta}}) u \biggr]\bigg\},
\end{equation} 
and is equivalent to $l(x_0;\hat{\theta})$. 
This does not require a closed-form solution for the true score function $\nabla\log p_{t|0}$. 
The associated \fbtsde~now become the forward-backward processes for the score-based models
% \begin{equation}
%     \diff X_t = f_t  \diff t + g_t \diff W_t, \quad \diff X_t = ( f_t - g_t^2 s_t({\hat{\theta}}) ) \diff t + g_t \diff W_t.
% \end{equation}
\begin{align}
    \diff X_t &= f_t  \diff t + g_t \diff W_t, \\ 
    \diff X_t &= ( f_t - g_t^2 s_t({\hat{\theta}}) ) \diff t + g_t \diff W_t.
\end{align}

\paragraph{Closed-form score matching} For \ref{proof.eq.topological_diffusion_sde_bm} and \ref{eq.topological_diffusion_sde_ve}, since we have their closed-form transition kernels in \cref{eq.transition_kernel_tsde_bm_ve}, we can use the direct score matching objective \citep[Eq. 7]{song2020score} to train a score-based model for topological signals
\begin{equation}
    \hat{\theta} = \argmin \E_{t\sim\gU(0,1)} \bigg\{ \lambda(t) \E_{x_0} \E_{x_t|x_0} \biggl[ \lVert  s_t(\hat{\theta})  - \nabla \log p_{t|0}(x_t|x_0) \rVert^2 \biggr]\bigg\}
\end{equation}
where $\nabla\log p_{t|0}$ can be readily obtained based on \cref{eq.transition_kernel_tsde_bm_ve}.

\subsection{Diffusion Bridges for Topological Signals}
As discussed earlier, SB models are closely related to stochastic interpolants, flow- and score-based models. We further remark that from the \ref{eq.sde_topological}, we can build the \emph{topological diffusion bridge} to directly construct transport models between any topological distributions via \emph{Doob's $h$-transform} \citep{sarkka_applied_2019}.
% It is given by 
% \begin{equation}
%     \diff Y_t = f_t \diff t + g_t^2 \nabla \log p_{1|t}(x_1|y_t) + g_t \diff W_t, \quad Y_1=x_1
% \end{equation} 
% where $p_{1|t}$ is given by \cref{prop.topological_sde_solution_and_conditional_process} and tractable for \ref{proof.eq.topological_diffusion_sde_bm} and \ref{eq.topological_diffusion_sde_ve}.
% We can find a time-reversal process \citep{anderson_reverse-time_1982}.
% This remains possible for \ref{proof.eq.topological_diffusion_sde_bm} and \ref{eq.topological_diffusion_sde_ve} due to the tractable transition kernels.
% We give more concrete discussions on this in appendix ... 
This has been evidenced in Euclidean domains by converting existing diffusion processes (BM, VE, VP) to diffusion bridges so to arrive at arbitrary distributions, and training upon score matching \citep{heng2021simulating,liu_let_2022,delbracio2023inversion,li2023bbdm,zhou2023denoising}. 
% We review the Doob's $h$-transform for the diffusion bridges, which is a powerful tool to construct the diffusion bridges from the Brownian bridges.

% Given a stochastic process defined by an SDE, \emph{Doob's $h$-transform} can drive the process to arrive at a particular state of interest almost surely \citep{sarkka_applied_2019}.
Specifically, consider the \ref{eq.sde_topological}.
To let it arrive at a final sample $x_1$, the Doob's $h$-transform gives  
\begin{equation}\label{app.eq.doob_h_transform}
    \diff X_t = \big[f_t  + g_t^2 \nabla \log p_{1|t}(x_1|X_t)\big] \diff t + g_t \diff W_t, \quad X_1=x_1, \,\, x_0\sim\nu_0
\end{equation}
where $p_{1|t}(x_1|x_t)$ is the transition kernel of the \ref{eq.sde_topological} satisfying the associated backward FPK, given by \cref{app.lemma.transition_density_tsde} (cf. \cref{prop.topological_sde_solution_and_conditional_process}). 
We can further find the time-reversal process for \cref{app.eq.doob_h_transform} \citep[Theorem 1]{zhou2023denoising}
\begin{equation}\label{app.eq.doob_h_transform_backward}
    \diff X_t = \big[f_t - g_t^2( \nabla \log q_{t|1}(x_t|x_1) - \nabla \log p_{1|t}(x_1|x_t) )\big] \diff t + g_t \diff W_t, \quad X_1 =x_1
\end{equation}
where $q_{t|1}(x_t|x_1)$ is the transition kernel of the new SDE in \cref{app.eq.doob_h_transform} (instead of \ref{eq.sde_topological}) conditioned on $Y_1=x_1$.
The goal is to learn this new score function $\nabla \log q_{t|1}(x_t|x_1)$, which can be achieved by applying the score matching \citep{song2020score}.

Given paired training samples $(x_0,x_1)\sim q_{01}$, \citet[Theorem 2]{zhou2023denoising} considered a score matching objective to learn the new score function
\begin{equation}\label{app.eq.score_matching_diffusion_bridge}
    % \E_t\E_{x_0,x_1} \E_{x_t|x_0,x_1} \biggl[ \lVert  s_t(\hat{\theta})  - \nabla \log p_{t|01}(x_t|x_0,x_1) \rVert^2 \biggr],
    \hat{\theta} = \argmin \E_{x_t,x_0,x_1,t} \biggl[ \lambda(t) \lVert  \tilde{s}_t(\hat{\theta})  - \nabla \log q_{t|1}(x_t|x_1) \rVert^2 \biggr].
\end{equation}
However, we cannot directly find closed-form $q_{t|1}$ in the topological case, we need to use sliced score matching for this case.

Note that we can view that the underlying topological process $X$ now follows the new SDE pair \cref{app.eq.doob_h_transform} and \cref{app.eq.doob_h_transform_backward} as the forward and backward processes, respectively.
In this sense, the topological diffusion bridge is a special case of the \tsb~when setting the policies as $Z_t=g_t \nabla \log p_{1|t}(x_1|x_t)$ and $\hat{Z}_t=g_t \nabla  [\log q_{t|1}(x_t|x_1) - \nabla \log p_{1|t}(x_1|x_t)]$.
When performing learning on these policies, since $Z_t$ is fixed once \ref{eq.sde_topological} is given, the learning boils down to training the parameterized $\hat{Z}_t^{\hat{\theta}}$.

\section{Additional Experiments and Details}\label{app.sec.experiments}
We first describe the synthetic experiment on matching Gaussian topological signal distributions based on the closed-form \gtsb.
Then, we detail the generative modeling experiments conducted on real-world datasets based on \tsb-models.
% \paragraph{\gtsb~on synthetic graph} 
\subsection{Closed-form \gtsb~corroboration}
\emph{Graph GP matching:} 
We build a synthetic graph with 30 nodes and 67 edges, as shown in \cref{fig:diffusion_demonstration}~\figleft. 
From its graph Laplacian $L$, we construct the initial distribution of node signals as a \Matern~GP with zero mean and the kernel $\Sigma_0 = (I + L)^{-1.5}$, and the final distribution as a diffusion GP with zero mean and the kernel $\Sigma_1=\exp(-20L)$. 
We consider \gtsb~closed forms $X_t$ in \cref{eq.stochastic_interpolant_expression} driven by both \ref{proof.eq.topological_diffusion_sde_bm} and \ref{eq.topological_diffusion_sde_ve}. 
For the former, we set $c=0.5$ and $g=0.01$, labeled as \texttt{GTSB-BM}. 
For the latter, we consider $\sigma_{\min}=0.01$ and $\sigma_{\max}=1$ with $c=0.01$ and $c=10$, labeled as \texttt{GTSB-VE1} and \texttt{GTSB-VE2}, respectively.

We then compute the covariances $\Sigma_t$ of the time marginals, which has a closed-form given by \cref{cor:gaussian_tsbp_solution}, and obtain the samples based on the closed-form conditional distribution [cf. \cref{cor:conditional_sampling}] given an initial sample, illustrated in \cref{app.fig:synthetic_cov_tsb}. 
We also measure the Bures-Wasserstein (BW) distance \citep{bures1969extension} of $\Sigma_t$ and $\Sigma_1$ to evaluate the bridge quality, shown in \cref{fig.synthetic_cov_distance}.

% \begin{figure}[ht!]
%     % \includegraphics[width=0.33\linewidth]{figures/sbp_synthetic_illustration/graph.pdf}
    
% \end{figure}

\emph{Edge GP matching:} 
We also consider matching two edge GPs which are able to model the discretized edge flows in a $\textSC$ of the vector fields defined on a 2D plane \citep{chen2021helmholtzian}. 
The initial edge GP has a zero mean and a divergence-free diffusion kernel with $\kappa_{C}=10$, while the final edge GP has a zero mean and a curl-free diffusion kernel with $\kappa_{G}=10$ [cf. \cref{app.subsec.topological_signals}]. 
We construct the closed-form \gtsb~$X_t$ with the \ref{proof.eq.topological_diffusion_sde_bm} as the reference dynamics, where $c=1$ and $g=0.01$. 
We obtain the samples from the closed-form SDE representation \cref{eq.gaussian_tsbp_sde_solution}, shown in \cref{app.fig:plane_tgsb_bm}. 
We can see that the forward samples are able to reach the final state, and the backward samples are able to reach the initial state, despite some noise due to numerical simulation.
% the forward and backward 

\begin{figure}[ht!]
  \vspace{-2mm}
    \includegraphics[width=\linewidth]{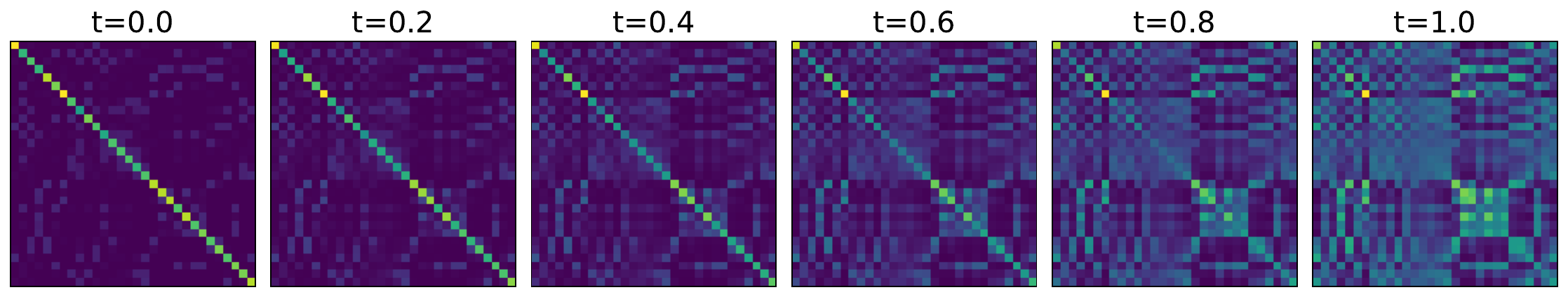}
    \includegraphics[width=\linewidth]{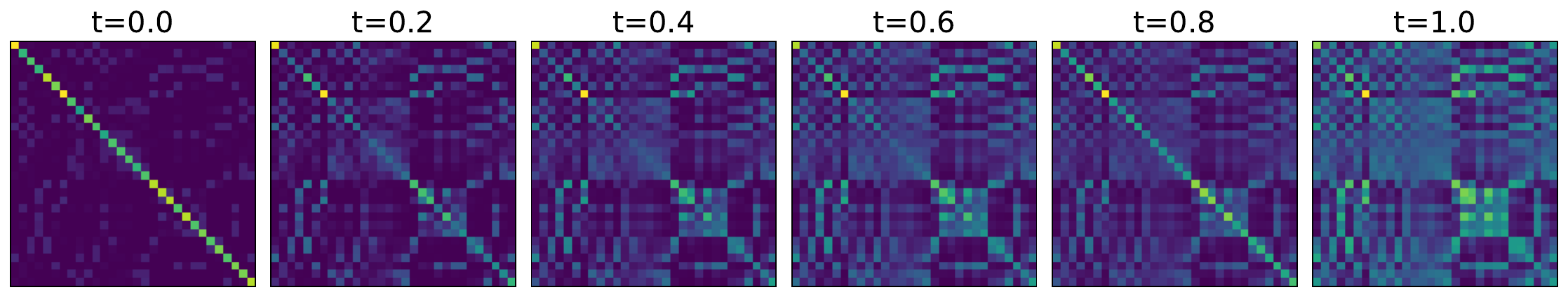}
    \includegraphics[width=0.175\linewidth]{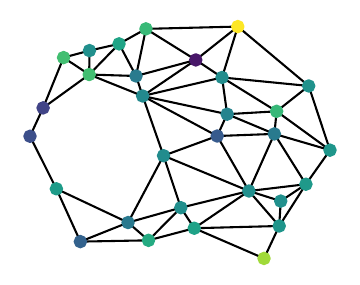}\hspace{-6pt}
    \includegraphics[width=0.175\linewidth]{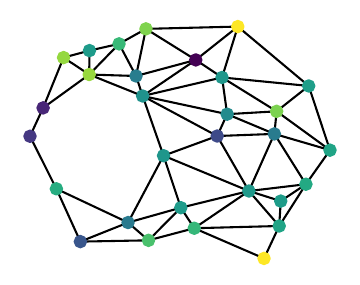}\hspace{-6pt}
    \includegraphics[width=0.175\linewidth]{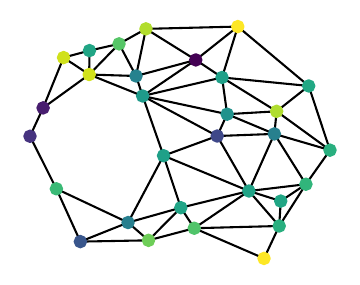}\hspace{-6pt}
    \includegraphics[width=0.175\linewidth]{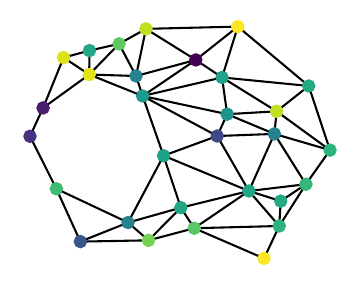}\hspace{-6pt}
    \includegraphics[width=0.175\linewidth]{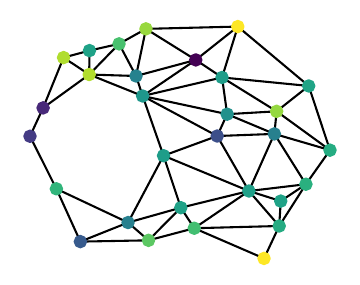}\hspace{-6pt}
    \includegraphics[width=0.175\linewidth]{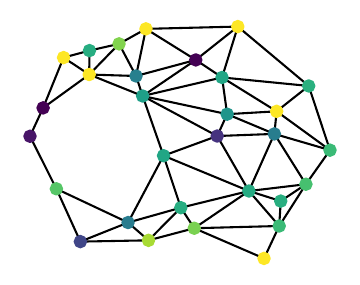}
    \caption{Covariances of the time marginals of the \gtsb~driven by \ref{proof.eq.topological_diffusion_sde_bm} \figtop~and \ref{eq.topological_diffusion_sde_ve} \figcenter, as well as the samples conditioned on the inisital signal \figbottom.}
    \label{app.fig:synthetic_cov_tsb}
\end{figure}

\begin{figure}[ht!]
  \includegraphics[width=1\linewidth]{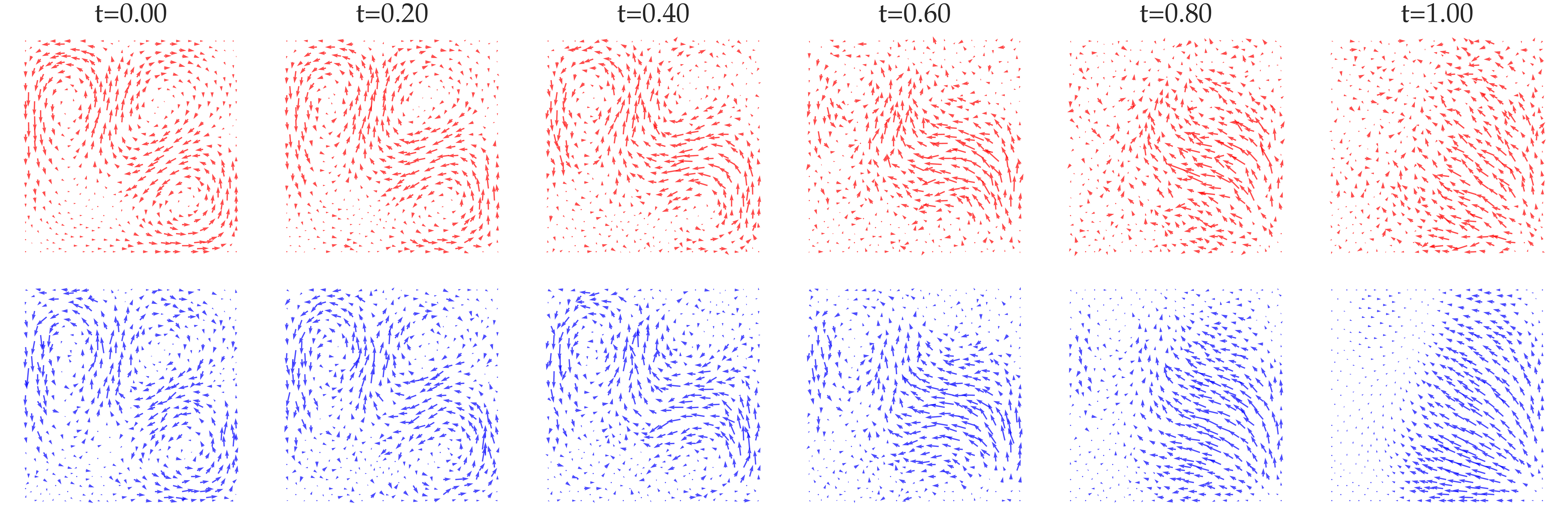}
  \caption{Forward and backward samples based on the closed-form SDE in \cref{eq.gaussian_tsbp_sde_solution} with respect to \ref{proof.eq.topological_diffusion_sde_bm}.}
  \label{app.fig:plane_tgsb_bm}
\end{figure}

\subsection{\tsb-based Generative Modeling and Matching}
\subsubsection{Data} \label{app.sec.experiment_data}
\textit{Heat flows:}
We use the heatflow dataset from Southeastern Australia from \citet{mather2018variations}, which collects the heatflow measurements with coordinates in total 294 from 1982 to 2016. 
% We construct a \emph{100-nearest neighbour} graph based on the geodesic distance between the 
% around 2010, there is a significant change in the heatflow pattern. 
Here we split the data into two parts, before and after 2010 (there is a significant change in the heat flow pattern), to understand the evolution of the heat flow by modeling them as initial and terminal data.
That is, for this dataset, we consider the \emph{signal matching} task.

\textit{Seismic magnitudes:}
We use the seismic event catalogue for M5.5+ (from 1990 to 2018) from IRIS which consists of 12,940 recorded earthquake events with magnitudes greater than 5.5.
To process these events, we use the \textcolor{violet}{\texttt{stripy}} toolbox to obtain the \emph{icosahedral} triangulated mesh of the Earth surface \citep{moresi2019stripy}.
Using the refinement of level three, this spherical mesh has 1,922 vertices. We refer to \cref{app.fig:earth_mesh} for a visualization of such a mesh of level one for better clarity.

\begin{figure}[ht!]
  \vspace{-2mm}
    \centering
    \includegraphics[width=0.3\linewidth]{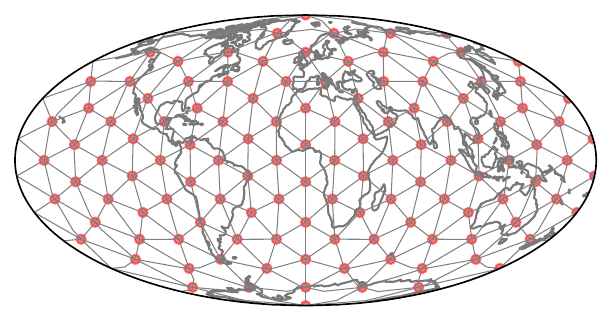}
    \includegraphics[width=0.37\linewidth]{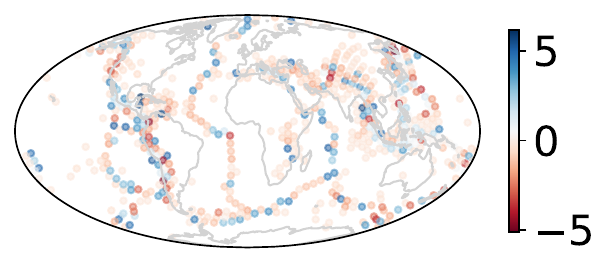}
    \caption{Earth mesh \figleft~and a node signal sample of the earthquake magnitudes \figright.}\label{app.fig:earth_mesh}
  \vspace{-3mm}
\end{figure}

Upon this mesh, we first associate each earthquake event to the nearest mesh vertex based on its longitude and latitude. 
All events are located on 576 unique vertices of the mesh. 
Using these unique vertices, we then construct a \emph{10-nearest neighbour} graph based on the geodesic distance between the vertices, and we use the symmetric normalized graph Laplacian.
Lastly, on top of this graph, we associate the yearly earthquake events to its vertices and take the magnitudes as node signals, resulting in $29$ such signals. 
Followed by this, we preprocess the magnitudes by removing the mean over the years. 
For this dataset, we consider the \emph{signal generation} task. 

\textit{Traffic flows:} We consider the PeMSD4 dataset which contains traffic flow in California from 01-01-2018 to 28-02-2018 over 307 sensors.
We convert the node data into edge flows over a $\textSC$ with 307 nodes, 340 edges and 29 triangles, following \citet{chen2022time}, and use the normalized Hodge Laplacian.
% We split them into training and testing sets with a ratio of $80\%-20\%$.
For this dataset, we consider the \emph{signal generation} task.

\textit{Ocean currents:}
% We consider the edge-based ocean current learning following the setup in \citet{chenHelmholtzianEigenmapTopological2021}. 
% The ocean current velocity fields were converted using the linear integration approximation to edge flows within a $\textSC$ whose nodes are 1500 buoys sampled from North Pacific ocean drifter records in 2010-2019 \citep{lumpkin2019global}.
% We apply both non-HC and HC GPs to predict the converted edge flows.
% Given the large number of edges ($\sim$20k), we consider a truncated approximation of kernels with eigenpairs associated with the 500 largest eigenvalues. 
We consider the \textit{Global Lagrangian Drifter Data}, which was collected by NOAA Atlantic Oceanographic and Meteorological Laboratory.
% \footnote{\url{http://www.aoml.noaa.gov/envids/gld/}.}.
% Each point in the dataset is a buoy at a specific time, with buoy ID, location (in latitude and longitude), date/time, velocity and water temperature. 
% We consider the buoys that were in the North Pacific ocean dated from 2010 to 2019 with a size of around three million.
The dataset itself is a 3D point cloud after converting the locations of buoys to the \textit{earth-centered, earth-fixed} (ECEF) coordinate system. 
We follow the procedure in \citet{chen2021decomposition,chen2021helmholtzian} to first sample 1,500 buoys furthest from each other, then construct a weighted $\textSC$ as a Vietoris-Rips (VR) complex with around 20k edges and around 90k triangles.
% We then convert the velocity field into flows on the edges of $\textSC$ by using the linear integration approximation \citep{chen2021helmholtzian}.
For this dataset, we consider the \emph{signal matching} task
Upon the weighted Hodge Laplacian, we use edge GP learned by \citet{yang2024hodge} from the drifter collected data as the initial distribution and synthetize a curl-free edge GP as the final distribution. 
% For this dataset, we consider the \emph{signal matching} task.
These two GPs have rather different behaviors, able to model ocean currents with different behaviors and make the matching task challenging.
From these GPs, we can generate the samples for training and testing in an efficient way based on eigenpairs associated to the 500 largest eigenvalues, analogous to using Karhunen-Loéve type-decomposition for continuous GPs.
% \emph{Note that the experiments here are data-depen}
% to generate two sets of clean samples. 
% We then perturbe the true GP samples by adding Gaussian noise with standard deviation $0.1$ to obtain the noisy samples.

\emph{Brain fMRI signals:}
We consider the Human Connectome Project (HCP) \citep{van2013wu} Young Adult dataset  where we model the human brain network as a graph and perform the \emph{matching} task on the measured fMRI signals recorded when the subject performed different tasks. We use the HCP recommended brain atlas \citep{glasser2016multi} where each hemisphere is divided into 180 cortical parcels. This results in a total of 360 brain regions. We then build a graph based on the physical conenction patterns between these regions where the edge weights measure the strength of the axonal connections between two regions, i.e., proportional to the inverse of the square distance \citep{perinelli2019dependence}. We use the symmetric normalized graph Laplacian. In our experiments, the two sets of the fMRI signals, respectively, correspond to the liberal and aligned brain activities. The former is associated with brain regions involved in high-level cognition, like decision making and memory, whereas the latter is associated with the sensory regions, like visual and auditory, meaning that functional signals are aligned with anatomical brain structure, as shown in \cref{app.fig:brain_signals}. 

\begin{figure}[ht!]
  \begin{minipage}{0.5\linewidth}
    \centering
    \includegraphics[width=1\linewidth]{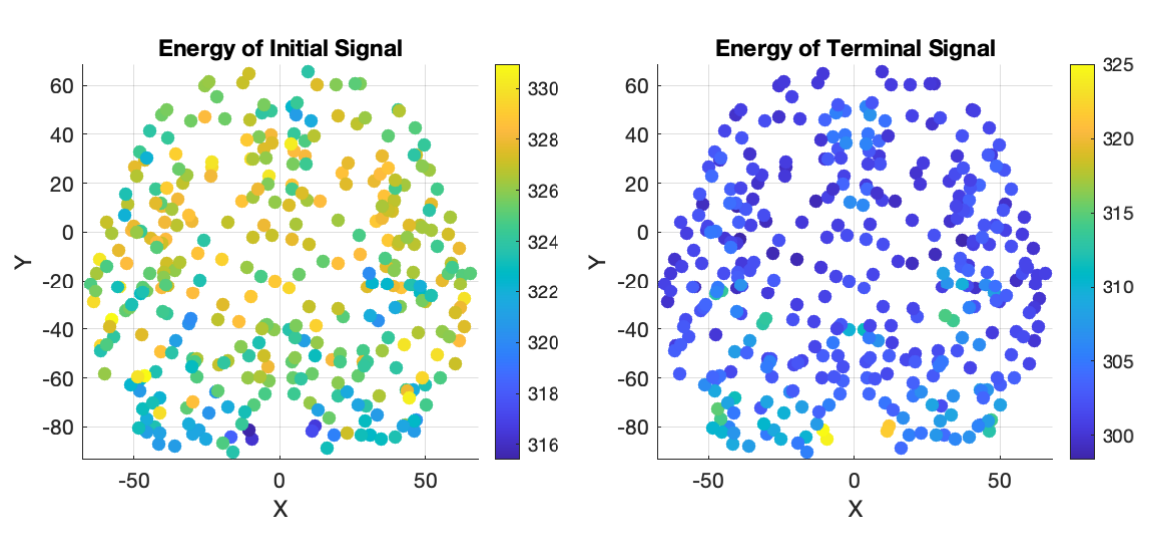}
    \caption{The energies of the initial (liberal) \figleft~and final (aligned) brain signals \figright.}\label{app.fig:brain_signals}
  \end{minipage}
  \hfill
  \begin{minipage}{0.45\linewidth}
    \vspace{0mm}
    \raggedleft
    \includegraphics[width=1\linewidth]{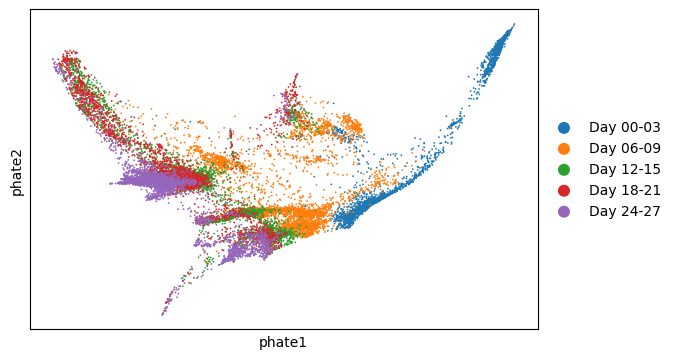}
    \caption{Two-dim \texttt{phate} embedding of the single-cell data \citep{moon2019visualizing}.}\label{app.fig:single_cell_embedding}
  \end{minipage}
  % \vspace{-3mm}
\end{figure}

\emph{Single-cell data:}
We consider the single-cell embryoid body data from \citep{moon2019visualizing}, which describes the differentiation of human embryonic stem cells grown as embryoid bodies into diverse cell lineages over a period of 27 days. 
These cell data, $X_1,X_2,\dots,X_5$, are collected at 5 timepoints (day 0--3, day 6--9, day 12--15, day 18--21, day 24--27, indexed by $t\in\{1,2,3,4,5\}$), resulting in total 18,203 observations.
We followed the preprocessing steps provided by \href{https://github.com/atong01/conditional-flow-matching}{\texttt{TorchCFM}} \citep{tong2024improving,tong2023simulation}. Please refer to this \href{https://data.mendeley.com/datasets/hhny5ff7yj/1}{link} for the direct use of preprocessed data \citep{tong2023processed}. 
Followed by this, we consider the two-dimensional \texttt{phate} embedding for the data \citep{moon2019visualizing}, resulting in the data coordinates of dimension $18,203 \times 2$, as illustrated in \cref{app.fig:single_cell_embedding}. 
From the preprocessing, we can build a sparse $k$-nearest neighbouring graph over \emph{the entire set of data observations}. 
That is, we have an adjacency matrix of dimension $18,203 \times 18,203$.

In our experiment, we aim to transport the observed data from day 0--3 to day 24--27, i.e., from $t=1$ to $t=5$. 
Thus, we build the two boundary distributions based on the normalized indicator functions, which indicate the associated nodes of the data points observed at these two timepoints.
That is,
% $\nu_0 := \mathbf{1}_{X_1}/\sum_{j\in X_1} \mathbf{1}_{X_1}(j)$, 
\begin{equation}
  \nu_0 := \mathbf{1}_{X_1}/\textstyle\sum_{j\in X_1} \mathbf{1}_{X_1}(j)
\end{equation}
where the sum is over the nodes associated to the first-timepoint observations in $X_1$, as the initial distribution, and similarly, $\nu_1:=\mathbf{1}_{X_5}/\sum_{j\in X_5} \mathbf{1}_{X_5}(j)$ as the final one. 
After training the models, using the final sample $\hat{X}_{t=5}$ obtained from the learned \tsb~given the initial observations, we can obtain the predictions at the five timepoints based on the sorting (from large to small) of $\hat{X}_{t=5}$. 
Specifically, given the indices after sorting, ${\rm{idx}} = {\arg{\rm{sort}}}(\hat{X}_{t=5})$, we partition them into the disjoint sets, ${\rm{idx}} = \gS_1\cup \gS_2 \cup \dots \cup \gS_5$ with $|\gS_t|=n_t$ the number of observations at timepoint $t$ for $t=1,\dots,5$. We then have the prediction labels given by $\gS_t$ that indicate the nodes supporting the data points predicted at timepoint $t$. 
The disjointed indices in $\gS_t$ essentially provide a labeling of the whole predictions for the five timepoints. 
We found that using adjacency matrix as the convolution operator in the reference dynamics performs better in practice. 
 
\subsubsection{Model}

% \paragraph{SB models}
\textbf{Models.}
We consider the following two sets of methods: 
\begin{itemize}[wide=5pt,topsep=-1pt,itemsep=-1pt]
    \item Euclidean SB-based models with BM, VE and VP reference processes \citep{chen2021likelihood}, which we refer to as \texttt{SB-BM}, \texttt{SB-VE} and \texttt{SB-VP}, respectively. 
    \item Topological SB-based model with \ref{proof.eq.topological_diffusion_sde_bm}, \ref{eq.topological_diffusion_sde_ve} and \ref{eq.topological_diffusion_sde_vp} as the reference processes, which we refer to as \texttt{TSB-BM}, \texttt{TSB-VE} and \texttt{TSB-VP}, respectively.
\end{itemize}
For some datasets, we also apply the Gaussian SB SDE solution as the reference dynamics: Euclidean SB-based models with the closed-form GSB SDEs (under BM and VE reference processes) as the reference \citep{bunne_schrodinger_2023}, which we refer to as \texttt{GSB-BM} and \texttt{GSB-VE}, respectively; and, topological SB-based model with the closed-form \gtsb~SDEs \cref{eq.gaussian_tsbp_sde_solution} (under \ref{proof.eq.topological_diffusion_sde_bm} and \ref{eq.topological_diffusion_sde_ve}) as the reference, which we refer to as \texttt{GTSB-BM} and \texttt{GTSB-VE}, respectively.

% \paragraph{Improving reference dynamics}
\textbf{Improving reference dynamics.}
Our proposed three types of reference dynamics have fixed diffusion rates $c$. 
This may limit the model flexibility in capturing the dynamics of the data, which we found especially in matching ocean current data. 
Thus, for this task, we allow the time-varying diffusion rate $c_t$. 
Specifically, we set it to be linearly increasing as $c_t = c_{\min} + t(c_{\max} - c_{\min})$ for some $c_{\min},c_{\max}$.
Moreover, due to the nonlinearity of the underlying process (from a non-curl-free GP to a curl-free GP), we also consider the heterogeneous heat diffusion, as dicussed in \cref{app.eq.topological_diffusion_sde_heterogeneous} where the down and up diffusion rates are different.

\textbf{Policy models.}
For the parameterization of the optimal policies $(Z_t^{\theta},\hat{Z}_t^{\hat{\theta}})$, we first obtain the time and signal embeddings individually. 
To obtain the signal embedding from the input, we consider the following two sets of models as the signal module: 
\begin{itemize}[wide=5pt,topsep=-1pt,itemsep=-1pt]
    \item ResBlock model: one multi-layer perceptron (MLP) followed by a number of residual block modules where each block has three MLPs with sigmoid linear unit (SiLU) activations. 
    \item Topological neural network (TNN) model: 
    % \begin{itemize}
    %     \item for node signals, we consider two-layer GCNs \citep{kipf2016semi} followed by one MLP;   
    %     \item for edge flows in a $\textSC$, we consider two-layer SNNs \citep{roddenberry2021principled} followed by one MLP, where the SNN layer has the linear convolution $X \leftarrow L_u X W_2 +  X W_1 + L_d X W_0 $ with the down and up Laplacians $L_d, L_u$ and the learnable weights $W_0,W_1,W_2$.
    % \end{itemize}
    For node signals, we consider two-layer GCNs \citep{kipf2016semi} followed by one MLP;   
    For edge flows in a $\textSC$, we consider two-layer SNNs \citep{roddenberry2021principled,yang2022simplicial_a} followed by one MLP, where each SNN layer has the linear convolution $X \leftarrow L_u X W_2 +  X W_1 + L_d X W_0 $ with the down and up Laplacians $L_d, L_u$ and the learnable weights $W_0,W_1,W_2$.
\end{itemize}
To obtain the time embedding, we pass the \emph{sinusoidal} positional encoding of the discretized timepoint through a two-layer MLP module with SiLU activations.
We then sum the two embeddings and pass it through a two-layer MLP output module with SiLU to obtain the final parameterization.

% \begin{figure}[t!]
%   \vspace{-2mm}
%   \includegraphics[width=1\linewidth]{figures/ocean_currents/ocean_bm_forward_smoothed_less.pdf}
%   \includegraphics[width=1\linewidth]{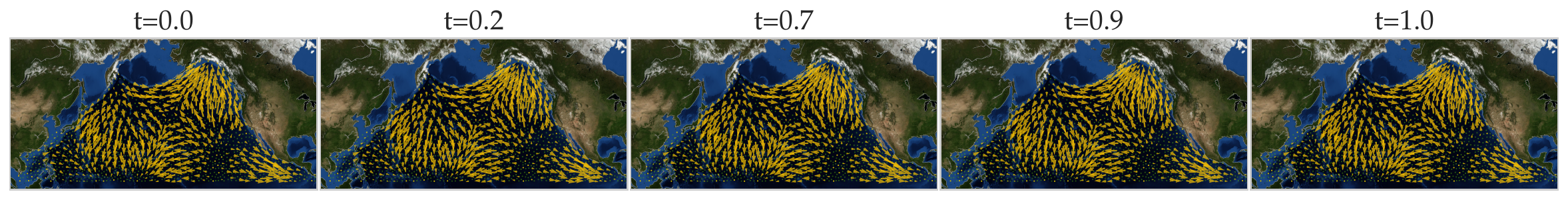}
%   \caption{Forward \figtop~and backward \figbottom~samples for ocean currents based on \texttt{SB-BM}.}
%   \label{app.fig:ocean_currents_sb}
%   % \vspace{-3mm}
% % \end{figure}
% % \begin{figure}[ht!]
%   % \vspace{-2mm}
%   \includegraphics[width=1\linewidth]{figures/ocean_currents/ocean_t_forward_smoothed_less.pdf}
%   \includegraphics[width=1\linewidth]{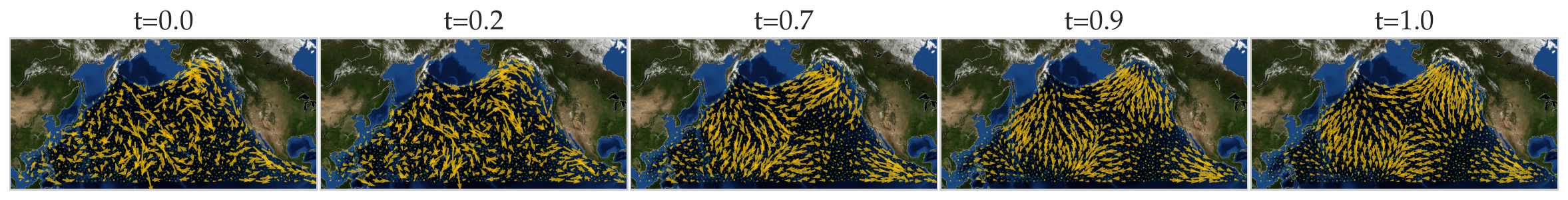}
%   \caption{Forward \figtop~and backward \figbottom~samples for ocean currents based on \texttt{TSB-BM}.}
%   \label{app.fig:ocean_currents_tsb}
%   \includegraphics[width=1\linewidth]{figures/ocean_currents/ocean_tgsb_forward_less.pdf}
%   \includegraphics[width=1\linewidth]{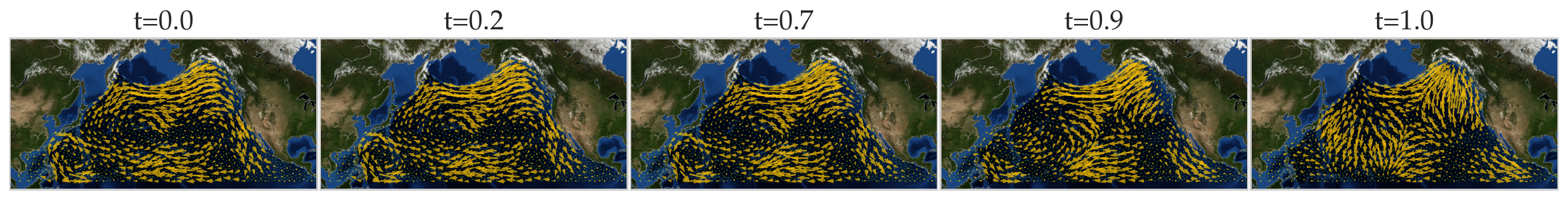}
%   % \vspace{-20pt}
%   \caption{Forward \figtop~and backward \figbottom~samples for ocean currents based on \texttt{GTSB-BM}.}
%   \label{fig:ocean_currents_tgsb}
% \end{figure}

\subsubsection{Implementation Details}
Our implementation is built upon the SB-framework by \citet{chen2021likelihood}. 
We use \texttt{AdamW} optimizer with a learning rate of $10^{-4}$ and Exponential Moving Average (EMA) with the decay rate of 0.99.
For the reference processes with BM involved, we treat the noise scale $g$ as a hyperparameter and optimize it by grid search.
For the reference processes with VE and VP involved, we grid search the noise scales $\sigma_{\min},\sigma_{\max}$ and $\beta_{\min},\beta_{\max}$. 
For \tsb-based models, we grid search the optimal diffusion rate $c$ and the noise scales involved in the \ref{eq.topological_diffusion_sde}. 

% and the drift $f_t$ and the policy models are computed based on the topology and the signal samples. 
In computing the likelihood in \cref{eq.likelihood_training_objective} during training, we use the trace estimator following \citep{hutchinson1989stochastic} to compute the divergence. 
In generative procedures, we apply the \emph{predictor-corrector} sampling \citep{song2020score} to improve performance. 
To evaluate the models, we compute the \emph{negative log-likelihoods} (NLLs) in \cref{app.eq.likelihood_backward} for generation tasks. 
For both generation and matching tasks, we assess the 1- and (square rooted) 2-\emph{Wasserstein distances} between the predicted and true signals.

% When training with \cref{eq.likelihood_training_objective}, we use the trace estimator following \citep{hutchinson1989stochastic} to compute the divergence. 
% In generative procedures, we may use the \emph{predictor-corrector} sampling \citep{song2020score} to improve performance. 

% \paragraph{Model improving}

% In computing the backward NLLs, if we are performing the data generation task, we cannot compute the exact likelihood since the true data distribution is unknown. In this case, we set $\log p_0(x_0) = 0$ assuming that the data is faithful to its distribution. 

\subsubsection{Results}

\emph{Heat flows:}
For matching the two types of heat flows, we observe from \cref{app.tab:heat-flow-full-results} that: 
(i) \tsb-based models are consistently better than SB-based models; and (ii) using GCNs for policy models increases the performance by a large margin for both sets of models. 

\begin{figure}[t!]
  % \vspace{-2mm}
  \includegraphics[width=1\linewidth]{figures/ocean_currents/ocean_bm_backward_smoothed_less.pdf}
  \includegraphics[width=1\linewidth]{figures/ocean_currents/ocean_t_backward_smoothed_less.pdf}
  \includegraphics[width=1\linewidth]{figures/ocean_currents/ocean_tgsb_backward_less.pdf}
  \caption{Backward sampled ocean currents using \texttt{TSB-BM} \figtop, \texttt{SB-BM} \figcenter~and \texttt{GTSB-BM} \figbottom.}
  \label{app.fig.ocean_current_backward}
\end{figure}

\begin{table}[t!]
  % \vspace{-3mm}
    \begin{adjustwidth}{-0cm}{}
      \begin{minipage}{0.61\linewidth}
        \centering
        \caption{Heat flow matching results.} \label{app.tab:heat-flow-full-results}
        \vspace{-1mm}
        \resizebox{1\columnwidth}{!}{
        \begin{tabular}{cccccc}
            \toprule
            \multirow{2}{*}{Method}  & \multicolumn{2}{c}{ResBlock} & \multicolumn{2}{c}{GCN}  \\ 
            \cmidrule(lr){2-3} \cmidrule(lr){4-5}  & Forward & Backward & Forward & Backward \\
            \midrule      
            \texttt{SB-BM} & -0.10 $\pm$ 0.02  & -0.08 $\pm$ 0.03  & -0.74 $\pm$ 0.05  & -0.72 $\pm$ 0.05  \\
            \texttt{SB-VE} & -0.12 $\pm$ 0.02  & -0.10 $\pm$ 0.01  & -1.20 $\pm$ 0.07  & -0.95 $\pm$ 0.07  \\ 
            \texttt{SB-VP} & -0.09 $\pm$ 0.02  & -0.08 $\pm$ 0.02  & -0.83 $\pm$ 0.04  & -0.66 $\pm$ 0.11  \\ 
            \midrule
            \texttt{TSB-BM} & -0.29 $\pm$ 0.02  & -0.27 $\pm$ 0.02  & -0.83 $\pm$ 0.05  & -0.81 $\pm$ 0.05  \\
            \texttt{TSB-VE} & -0.31 $\pm$ 0.02  & -0.29 $\pm$ 0.01  & -1.26 $\pm$ 0.05  & -0.97 $\pm$ 0.08  \\ 
            \texttt{TSB-VP} & -0.57 $\pm$ 0.02  & -0.55 $\pm$ 0.02  & -1.01 $\pm$ 0.03  & -0.92 $\pm$ 0.03  \\ 
            \bottomrule
            \end{tabular}
        }
      \end{minipage}
      \begin{minipage}{0.36\linewidth}
        \centering
        \caption{Traffic flow results.} \label{app.tab:traffic-flow-full-results}
        \vspace{-1mm}
      \resizebox{\columnwidth}{!}{
        \begin{tabular}{cccc}
            \toprule
            % \multirow{2}{*}{Method}  & \multicolumn{2}{c}{ResBlock} & \multicolumn{2}{c}{SNN}  \\ 
            % \cmidrule(lr){2-3} \cmidrule(lr){4-5}  & Forward NLL & Backward NLL & Forward NLL & Backward NLL \\
            Method & {ResBlock} & SNN \\
            \midrule      
            \texttt{SB-BM} & 0.82 $\pm$ 0.00   & 0.18 $\pm$ 0.02   \\
            \texttt{SB-VE} & 0.77 $\pm$ 0.00   & -0.42 $\pm$ 0.01   \\ 
            \texttt{SB-VP} & 0.79 $\pm$ 0.00  & -0.09 $\pm$ 0.01   \\ 
            \midrule
            \texttt{TSB-BM} & 0.40 $\pm$ 0.00   & 0.02 $\pm$ 0.03  \\
            \texttt{TSB-VE} & 0.01 $\pm$ 0.00   & -0.89 $\pm$ 0.02   \\ 
            \texttt{TSB-VP} & 0.02 $\pm$ 0.00   & -0.32 $\pm$ 0.01  \\ 
            \bottomrule
            \end{tabular}
        }
      \end{minipage}
  \end{adjustwidth}
\end{table}

\begin{table}[!t]
  \vspace{-1mm}
    \begin{adjustwidth}{-0cm}{}
      \begin{minipage}{0.48\linewidth}
        \centering
        \caption{Seismic magnitudes results.} \label{app.tab:eqs-full-results}
        \vspace{-1mm}
          \resizebox{0.78\columnwidth}{!}{
          \begin{tabular}{cccccc}
            \toprule
            % \multirow{2}{*}{Method}  & \multicolumn{2}{c}{ResBlock} & \multicolumn{2}{c}{SNN}  \\ 
            % \cmidrule(lr){2-3} \cmidrule(lr){4-5}  & Forward NLL & Backward NLL & Forward NLL & Backward NLL \\
            Method  & ResBlock & GCN  \\ 
            \midrule      
            \texttt{SB-BM} & 2.78 $\pm$ 0.01   & 2.71 $\pm$ 0.03  \\
            \texttt{SB-VE} & 2.97 $\pm$ 0.03   & 2.73 $\pm$ 0.05   \\ 
            \texttt{SB-VP} & 2.28 $\pm$ 0.02    & 2.01 $\pm$ 0.03  \\ 
            \midrule
            \texttt{GSB-BM} & 1.86 $\pm$ 0.02   & 1.83 $\pm$ 0.05   \\
            \texttt{GSB-VE} & 1.68 $\pm$ 0.03    & 1.46 $\pm$ 0.07  \\ 
            \midrule
            \texttt{TSB-BM} & 2.13 $\pm$ 0.01   & 1.82 $\pm$ 0.02   \\
            \texttt{TSB-VE} & 2.22 $\pm$ 0.02    & 1.53 $\pm$ 0.03  \\ 
            \texttt{TSB-VP} & 2.00 $\pm$ 0.02  & 1.51 $\pm$ 0.02   \\ 
            \midrule
            \texttt{GTSB-BM} & 1.58 $\pm$ 0.01    & 1.43 $\pm$ 0.04    \\
            \texttt{GTSB-VE} & 1.49 $\pm$ 0.02    & 1.06 $\pm$ 0.04    \\ 
            \bottomrule
            \end{tabular}
            }
          \end{minipage}
      \begin{minipage}{0.48\linewidth}
        \centering
        \caption{Ocean current matching results.} \label{app.tab:ocean-current-full-results}
        \vspace{-1mm}
        \resizebox{0.78\columnwidth}{!}{
        \begin{tabular}{cccccc}
            \toprule
            % \multirow{2}{*}{Method}  & \multicolumn{2}{c}{Forward} & \multicolumn{2}{c}{Backward}  \\ 
            % \cmidrule(lr){2-3} \cmidrule(lr){4-5}  & $W_1$ & $W_2$ & $W_1$ & $W_2$ \\
            Method & Foward & Backward \\
            \midrule      
            \texttt{SB-BM}   & 7.21 $\pm$ 0.00   & 7.21 $\pm$ 0.00  \\
            \texttt{SB-VE}   & 7.17 $\pm$ 0.02  & 7.17 $\pm$ 0.02 \\ 
            \midrule
            \texttt{GSB-BM}   & 1.09 $\pm$ 0.01  & 0.97 $\pm$ 0.00  \\
            \texttt{GSB-VE}   & 0.83 $\pm$ 0.01   & 0.49 $\pm$ 0.00  \\ 
            \midrule
            \texttt{TSB-BM}  & 6.94 $\pm$ 0.01   & 3.70 $\pm$ 0.00  \\
            \texttt{TSB-VE}  & 6.89 $\pm$ 0.00  & 3.60 $\pm$ 0.00  \\ 
            \midrule      
            \texttt{GTSB-BM}   & 1.09 $\pm$ 0.01   & 0.97 $\pm$ 0.00   \\
            \texttt{GTSB-VE}   & 0.53 $\pm$ 0.00   & 0.47 $\pm$ 0.00  \\ 
            \bottomrule
            \end{tabular}
        }
      \end{minipage}
  \end{adjustwidth}
  \vspace{-2mm}
\end{table}

\emph{Seismic magnitudes:}
In generative modeling for seismic magnitudes, while we have the similar observations as for the previous datasets, from \cref{app.tab:eqs-full-results}, we also observe that the \gtsb-based models are able to achieve the best performance, and likewise GSB-based models also increase the performance of SB-models.
In \cref{app.tab:wasserstein_results_across_datasets}, we report the 1- and 2-Wasserstein distances between the generated samples and the true ones.

\emph{Traffic flows:}
In geneartive modeling of traffic flows, we observe from \cref{app.tab:traffic-flow-full-results} that: \tsb-based models achieve smaller NLLs and using SNNs for both models improves the performance.
This observation is consistent with the Wasserstein metrics reported in \cref{app.tab:wasserstein_results_across_datasets}.

\emph{Ocean currents:}
In matching ocean currents with two types of different-behaving edge GPs,
\emph{note that the initial sample in the forward process in \cref{fig:ocean_currents_tsb_sb} and the final sample in the backward process in \cref{app.fig.ocean_current_backward} are the true samples}.
From these two figures, we first observe that SB-based models fail to learn the dynamics to reach the expected end states, as shown in \cref{app.fig.ocean_current_backward,fig:ocean_currents_tsb_sb}. 
On the other hand, \tsb-based models are able to reach an end state with small curl component in the forward process, yet with some discrepancy from the true one.
Moreover, the learned backward dynamics remains noisy and does not completely return to the initial state.
This implies that the underlying dynamics cannot be fully captured by the \ref{eq.topological_diffusion_sde}-type reference processes. 
This can be largely alleviated by using \gtsb-based models, where both the forward and backward processes reach the expected states with high fidelity.
This is however because we have the initial and final distributions modeled by GPs, allowing for \gtsb~to capture the underlying dynamics better.
These observations are reflected in the square-rooted 2-Wasserstein distance results in \cref{app.tab:ocean-current-full-results} between the samples given by the learned forward process and the true ones, as well as for the backward ones.

\emph{Brain fMRI signals:}
In matching the two brain fMRI signals, we observe from \cref{app.fig:brain_signals_results} that \texttt{TSB-VE}-based model reaches the final state where the signals have lower energy over the brain, indicating aligned activities, whereas \texttt{SB-VE} fails to do so. This is quantatively reflected in terms of the Wasserstein metrics between the generated final samples and the groundtruth ones in \cref{app.tab:wasserstein_results_across_datasets}.

% \begin{table}[t!]
%   \centering
%   % \vspace{-3mm}
%   \caption{1- and (square rooted) 2-Wasserstein distances across datasets.}\label{app.tab:wasserstein_results_across_datasets}
%   % \vspace{-2mm}
%   \resizebox{0.9\columnwidth}{!}{
%   \begin{tabular}{@{}ccccccccc@{}}
%   \toprule
%   \multirow{2}{*}{Method} & \multicolumn{2}{c}{{Seismic magnitudes}} & \multicolumn{2}{c}{{Traffic flows}} & \multicolumn{2}{c}{Brain signals} \\ 
%   \cmidrule(lr){2-3} \cmidrule(lr){4-5} \cmidrule(lr){6-7}
%   & $W_1$           & $W_2$           & $W_1$           & $W_2$           & $W_1$           & $W_2$ \\ \midrule
%   \texttt{SB-BM}          & 11.73 $\pm$ 0.05  & 8.29 $\pm$ 0.04   & 18.69 $\pm$ 0.02  & 13.36 $\pm$ 0.01  & 12.08 $\pm$ 0.08 & 8.58 $\pm$ 0.05 \\
%   \texttt{SB-VE}          & 11.49 $\pm$ 0.04  & 8.13 $\pm$ 0.03   & 19.04 $\pm$ 0.02  & 13.61 $\pm$ 0.02  & 17.46 $\pm$ 0.14 & 12.42 $\pm$ 0.09 \\
%   \texttt{SB-VP}          & 12.61 $\pm$ 0.06  & 8.92 $\pm$ 0.04   & 18.22 $\pm$ 0.03  & 13.02 $\pm$ 0.02  & 13.41 $\pm$ 0.05 & 9.54 $\pm$ 0.04 \\
%   \midrule
%   \texttt{TSB-BM}         & 9.01 $\pm$ 0.03   & 6.37 $\pm$ 0.03   & 10.57 $\pm$ 0.02  & 7.62 $\pm$ 0.01  & 7.51 $\pm$ 0.08 & 5.51 $\pm$ 0.06 \\
%   \texttt{TSB-VE}         & 7.69 $\pm$ 0.04   & 5.44 $\pm$ 0.03   & 10.51 $\pm$ 0.02  & 7.58 $\pm$ 0.01  & 7.59 $\pm$ 0.05 & 5.55 $\pm$ 0.04 \\
%   \texttt{TSB-VP}         & 8.40 $\pm$ 0.04   & 5.95 $\pm$ 0.03   & 9.92 $\pm$ 0.02   & 7.16 $\pm$ 0.01  & 7.67 $\pm$ 0.11 & 5.64 $\pm$ 0.09 \\ \bottomrule
%   \end{tabular}
%   }
%   \vspace{-3mm}
% \end{table}

\begin{table}[t!]
  %\vspace{-2mm}
  \caption{Overall 1- and (square rooted) 2-Wasserstein distances for generating and matching.}\label{app.tab:wasserstein_results_across_datasets}
  \centering
%  \vspace{-2mm}
  \resizebox{\textwidth}{!}{
  \begin{tabular}{@{}cccccccccc@{}}
  \toprule
  \multirow{2}{*}{Method} & \multicolumn{2}{c}{Seismic magnitudes} & \multicolumn{2}{c}{Traffic flows} & \multicolumn{2}{c}{Brain signals} & \multicolumn{2}{c}{Single-cell data} \\ 
  \cmidrule(lr){2-3} \cmidrule(lr){4-5} \cmidrule(lr){6-7} \cmidrule(lr){8-9}
  & $W_1$ & $W_2$ & $W_1$ & $W_2$ & $W_1$ & $W_2$ & $W_1$ & $W_2$ \\ \midrule
  \texttt{SB-BM}  & 11.73\textsubscript{$\pm$0.05} & 8.29\textsubscript{$\pm$0.04} & 18.69\textsubscript{$\pm$0.02} & 13.36\textsubscript{$\pm$0.01} & 12.08\textsubscript{$\pm$0.08} & 8.58\textsubscript{$\pm$0.05} & 0.33\textsubscript{$\pm$0.01} & 0.40\textsubscript{$\pm$0.01} \\
  \texttt{SB-VE}  & 11.49\textsubscript{$\pm$0.04} & 8.13\textsubscript{$\pm$0.03} & 19.04\textsubscript{$\pm$0.02} & 13.61\textsubscript{$\pm$0.02} & 17.46\textsubscript{$\pm$0.14} & 12.42\textsubscript{$\pm$0.09} & 0.33\textsubscript{$\pm$0.01} & 0.39\textsubscript{$\pm$0.01} \\
  \texttt{SB-VP}  & 12.61\textsubscript{$\pm$0.06} & 8.92\textsubscript{$\pm$0.04} & 18.22\textsubscript{$\pm$0.03} & 13.02\textsubscript{$\pm$0.02} & 13.41\textsubscript{$\pm$0.05} & 9.54\textsubscript{$\pm$0.04} & 0.33\textsubscript{$\pm$0.01} & 0.40\textsubscript{$\pm$0.00} \\ \midrule
  \texttt{TSB-BM} & 9.01\textsubscript{$\pm$0.03}  & 6.37\textsubscript{$\pm$0.03} & 10.57\textsubscript{$\pm$0.02} & 7.62\textsubscript{$\pm$0.01} & 7.51\textsubscript{$\pm$0.08}  & 5.51\textsubscript{$\pm$0.06} & 0.14\textsubscript{$\pm$0.03} & 0.28\textsubscript{$\pm$0.05} \\
  \texttt{TSB-VE} & 7.69\textsubscript{$\pm$0.04}  & 5.44\textsubscript{$\pm$0.03} & 10.51\textsubscript{$\pm$0.02} & 7.58\textsubscript{$\pm$0.01} & 7.59\textsubscript{$\pm$0.05}  & 5.55\textsubscript{$\pm$0.04} & 0.14\textsubscript{$\pm$0.02} & 0.27\textsubscript{$\pm$0.04} \\
  \texttt{TSB-VP} & 8.40\textsubscript{$\pm$0.04}  & 5.95\textsubscript{$\pm$0.03} & 9.92\textsubscript{$\pm$0.02}  & 7.16\textsubscript{$\pm$0.01} & 7.67\textsubscript{$\pm$0.11}  & 5.64\textsubscript{$\pm$0.09} & 0.14\textsubscript{$\pm$0.01} & 0.22\textsubscript{$\pm$0.03} \\ \bottomrule
  \end{tabular}
  }
  %\vspace{-2mm}
\end{table}
% \begin{table}[t!]
%   \centering
%   % \vspace{-3mm}
%   \caption{1-Wasserstein distances for generating and matching tasks across datasets.}\label{app.tab:wasserstein_results_across_datasets}
%   % \vspace{-2mm}
%   \resizebox{0.88\columnwidth}{!}{
%   \begin{tabular}{cccccc}
%   \toprule
%   Method & Seismic magnitudes & Traffic flows & Ocean currents & Brain signals & Single-cell data \\ \midrule
%   \texttt{SB-BM}          & 11.73 $\pm$ 0.05    & 18.69 $\pm$ 0.02  & 7.21 $\pm$ 0.00  & 12.08 $\pm$ 0.08 & 0.33 $\pm$ 0.01 \\
%   \texttt{SB-VE}          & 11.49 $\pm$ 0.04    & 19.04 $\pm$ 0.02  & 7.17 $\pm$ 0.02  & 17.46 $\pm$ 0.14 & 
%   0.33 $\pm$ 0.01 \\
%   \texttt{SB-VP}          & 12.61 $\pm$ 0.06    & 18.22 $\pm$ 0.03  & -- & 13.41 $\pm$ 0.05 & 0.33 $\pm$ 0.01 \\
%   \midrule
%   \texttt{TSB-BM}         & 9.01 $\pm$ 0.03    & 10.57 $\pm$ 0.02  & 6.94 $\pm$ 0.01  & 7.51 $\pm$ 0.08 & 0.14 $\pm$ 0.03 \\
%   \texttt{TSB-VE}         & 7.69 $\pm$ 0.04    & 10.51 $\pm$ 0.02  & 6.89 $\pm$ 0.00  & 7.59 $\pm$ 0.05 & 0.14 $\pm$ 0.02 \\
%   \texttt{TSB-VP}         & 8.40 $\pm$ 0.04    & 9.92 $\pm$ 0.02  & --   & 7.67 $\pm$ 0.11 & 0.14 $\pm$ 0.01 \\ \bottomrule
%   \end{tabular}
%   }
%   \vspace{-3mm}
% \end{table}

\begin{figure}[t!]
  % \vspace{-1mm}
  \includegraphics[width=1\linewidth]{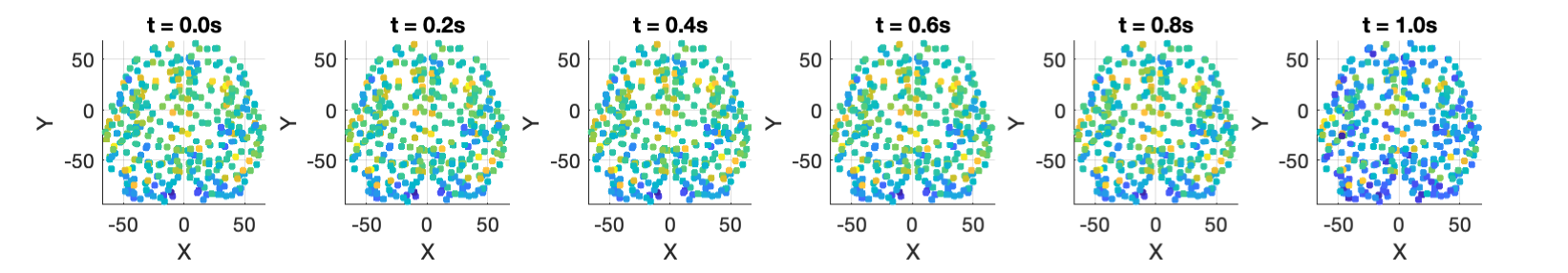}
  % \vspace{-3mm}
  \includegraphics[width=1\linewidth]{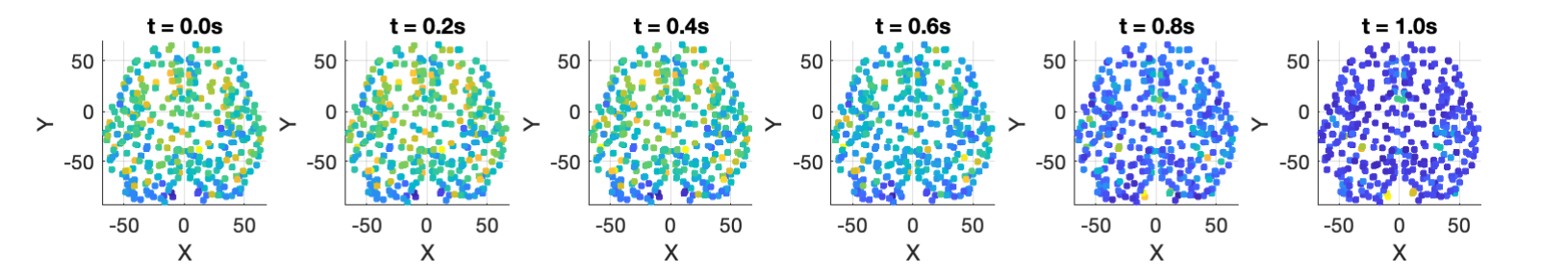}
  % \vspace{-3mm}
  \caption{Intermediate samples of brain signals learned using \texttt{SB-VE} \figtop~and \texttt{TSB-VE} \figbottom.}\label{app.fig:brain_signals_results}
  \vspace{-3mm}
\end{figure}

\begin{wraptable}[11]{r}{0.48\textwidth}
  \vspace{-3.5mm}
  \caption{Ablation study results on graph normalizations for brain signal matching.}\label{app.tab:ablation_on_graph_normalization}
  \vspace{-1.5mm}
  \resizebox{0.48\columnwidth}{!}{
  \begin{tabular}{@{}cccc@{}}
  \toprule
  Method & \texttt{TSB-BM} & \texttt{TSB-VE} & \texttt{TSB-VP} \\ \midrule
  $W_1, L_{\rm{norm}}^{\rm{sym}}$  & 7.51 {$\pm$ 0.08} & 7.59 {$\pm$ 0.05} & 7.67 {$\pm$ 0.11} \\ 
  $W_2, L_{\rm{norm}}^{\rm{sym}}$  & 5.51 {$\pm$ 0.06} & 5.55 {$\pm$ 0.04} & 5.64 {$\pm$ 0.09} \\ \midrule
  $W_1, L_{\rm{RW}}$  & 7.51 $\pm$ 0.08 & 7.62 $\pm$ 0.09 & 7.65 $\pm$ 0.09 \\ 
  $W_2, L_{\rm{RW}}$  & 5.52 $\pm$ 0.06 & 5.58 $\pm$ 0.07 & 5.62 $\pm$ 0.06 \\ \midrule
  $W_1, L_{\rm{comb}}$ & 8.06 $\pm$ 0.05 & 9.21 $\pm$ 0.06 & 9.29 $\pm$ 0.05 \\ 
  $W_2, L_{\rm{comb}}$ & 5.80 $\pm$ 0.04 & 6.62 $\pm$ 0.05 & 6.73 $\pm$ 0.03 \\ 
  \bottomrule
  \end{tabular}
  }
  \vspace{-2mm}
\end{wraptable}
% \textbf{Ablation study on graph normalizations. }
\textbf{Ablation study on graph normalizations.}
Here, we compare the performance of the TSB-based models using different ways of graph Laplacian normalizations. 
Specifically, we consider the random walk $L_{\rm{RW}}$ and the combinatorial $L_{\rm{comb}}$ graph Laplacians. 
For the latter, we normalize it by dividing the maximal eigenvalue of the Laplacian for stability.
From \cref{app.tab:ablation_on_graph_normalization}, we notice that using the random walk has comparable performance with using the symmetric normalized Laplacian $L_{\rm{norm}}^{\rm{sym}}$, and using the combinatorial one is worse than the other two. 
This is not surprising since the combinatorial one does not encode the connection strength between brain regions.
% \begin{table}[h!]
%   \centering
% \end{table}

\emph{Single-cell data:}
We first measure the Wasserstein distances between the predicted single-cells and the groundtruth ones at the final timepoint, as reported in \cref{app.tab:wasserstein_results_across_datasets}. We here provide the predictions in the two-dim \texttt{phate} embedding space for the \texttt{SB-BM} and \texttt{TSB-BM} models in 
% \cref{app.fig:single_cell_sb,app.fig:single_cell_tsb} 
\cref{app.fig:single_cell_results}
and the latter has a much better prediction. Moreover, from the final sample, we evaluate the predictions at the intermediate timepoints (see \cref{app.sec.experiment_data}) in \cref{app.tab:single_cell_intermediate} where the performance of \texttt{TSB-BM} is consistently better than \texttt{SB-BM}. Since our method relies on a graph constructed from the entire data points, we also provide the leave-one-out accuracy for \texttt{SB-BM} by training on the entire data points leaving out the to-be-predicted timepoint. We see that while the accuracy for the final timepoint is perfect, the intermediate predictions remain poor. In contrast, \texttt{TSB-BM}, by making use of the topology, captures the underlying dynamics and predicts the intermediate states better.

\begin{figure}[t!]
  \vspace{-3mm}
  \centering
  % \begin{minipage}{0.48\linewidth}
  %   \includegraphics[width=1\linewidth]{figures/single_cell/sb_.png}
  %   \caption{Groundtruth \figleft~and predicted \figright~single-cells embedding using \texttt{SB-BM}.}\label{app.fig:single_cell_sb}
  % \end{minipage} \hfill 
  % \begin{minipage}{0.48\linewidth}
  %   \includegraphics[width=1\linewidth]{figures/single_cell/tsb_.png}
  %   \caption{Groundtruth \figleft~and predicted \figright~single-cells embedding using \texttt{TSB-BM}.}\label{app.fig:single_cell_tsb}
  % \end{minipage}
  \includegraphics[width=0.3\linewidth]{figures/single_cell/phate_plot_new.png}
  \includegraphics[width=0.3\linewidth]{figures/single_cell/tsb_2_new.png}
  \includegraphics[width=0.3\linewidth]{figures/single_cell/sb_new.png}
  % \vspace{-3mm}
  \caption{Single-cell two-dim \text{phate} embeddings of the observations \figleft~and the predictions  using \texttt{TSB-BM~}\figcenter~and \texttt{SB-BM}~\figright.}\label{app.fig:single_cell_results}
\end{figure}

\begin{table}[t!]
  % \vspace{-2mm}
  \caption{Intermediate prediction performance on single-cell data using \texttt{TSB-BM} and \texttt{SB-BM}.}\label{app.tab:single_cell_intermediate}
  \centering
  \vspace{-1mm}
  \resizebox{0.92\textwidth}{!}{
  \begin{tabular}{@{}cccccccccc@{}}
  \toprule
  \multirow{2}{*}{Timepoint} & \multicolumn{3}{c}{\texttt{TSB-BM}} & \multicolumn{4}{c}{\texttt{SB-BM}}  \\ 
  \cmidrule(lr){2-4} \cmidrule(lr){5-8} 
  & $W_1$ & $W_2$ & accuracy & $W_1$ & $W_2$ & accuracy & leave-one-out\\ \midrule
  2 & 0.03\textsubscript{$\pm$0.00} & 0.09\textsubscript{$\pm$0.00} & 0.80 & 0.52\textsubscript{$\pm$0.01} & 0.59\textsubscript{$\pm$0.01} & 0.28 & 0.24 \\
  3 & 0.09\textsubscript{$\pm$0.00} & 0.22\textsubscript{$\pm$0.01} & 0.42 & 0.12\textsubscript{$\pm$0.00} & 0.21\textsubscript{$\pm$0.00} & 0.23 & 0.20 \\
  4 & 0.08\textsubscript{$\pm$0.00} & 0.16\textsubscript{$\pm$0.01} & 0.45 & 0.19\textsubscript{$\pm$0.00} & 0.34\textsubscript{$\pm$0.00} & 0.26 & 0.26 \\ 
  5 & 0.14\textsubscript{$\pm$0.03} & 0.28\textsubscript{$\pm$0.05} & 0.70 & 0.33\textsubscript{$\pm$0.01} & 0.40\textsubscript{$\pm$0.01} & 0.24 & 1\\ \bottomrule
  \end{tabular}
  }
  % \vspace{-4mm}
\end{table}

\subsubsection{Computational Complexity}\label{app.sec:computational_complexity}
% \begin{wrapfigure}[13]{r}{0.5\textwidth}
%   \vspace{-6mm}
%   \includegraphics[width=0.5\textwidth]{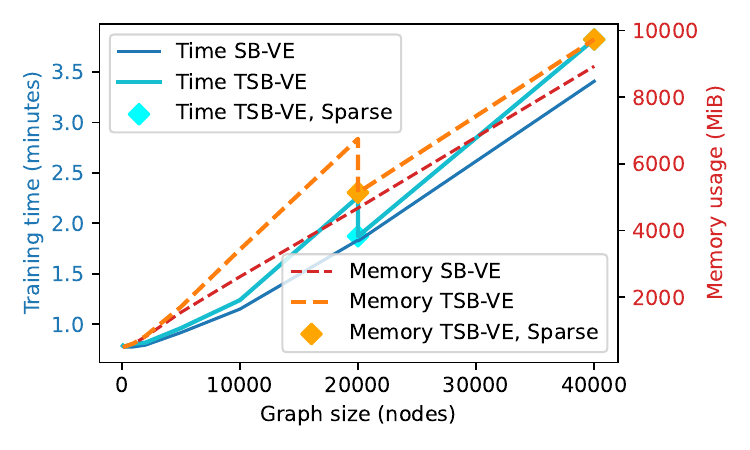}
%   \vspace{-8mm}
%   \caption{Training time and memory comparison between \texttt{SB-VE} and \texttt{TSB-VE}.}
%   \label{app.fig:time_memory_comparison}
% \end{wrapfigure}
Compared to SB-based models, the \tsb-based models introduce an additional topological convolution [cf.~\cref{eq.topological_drift}] overhead, which however admits an efficient computation, as discussed in \cref{sec:topological_signal_generation}. We here provide a quantative comparison of the compelxity in terms of the training time and memory consumption. 
We measure them using $\texttt{SB-VE}$ and $\texttt{TSB-VE}$ models on different-sized \emph{10-nearest neighbour} graphs built from Swiss roll point clouds.
This comparison is done in a single training stage with 2,000 iterations, running on a single NVIDIA RTX 3080 GPU. 
As shown in \cref{app.fig:time_memory_comparison}, we observe that in the moderate scale ($\leq 10,000$) region, the training time and memory consumption of \texttt{TSB-VE} are only slightly higher than \texttt{SB-VE}, with negligible difference. 
While this overhead becomes more significant as the scale further increases, both training time and memory can be reduced by exploiting the \textit{sparse} structure (here implemented using \texttt{torch.tensor.to\_sparse}) in the graph topology such that the computational overheads for for SB and TSB models remain comparable.
Under the same settings, \cref{tab:time_in_seconds} compares \texttt{SB-BM} and \texttt{TSB-BM} models across all datasets. The additional memory and training time introduced by \texttt{TSB-BM} remain below $4\%$.
% \begin{figure}[ht!]
%   \vspace{-1mm}
%   \centering
%   \includegraphics[width=0.7\textwidth]{figures/time_memory_comparison.pdf}
%   \vspace{-5mm}
%   \caption{Training time and memory comparison between \texttt{SB-VE} and \texttt{TSB-VE} w.r.t. different-sized graphs.}
%   \label{app.fig:time_memory_comparison}
%   \vspace{-5mm}
% \end{figure}

% \begin{table}[ht!]
% \centering
% \resizebox{1\columnwidth}{!}{
%   \begin{tabular}{@{}ccccccc@{}}
%   \toprule
%   Model & Metric & Seismic data & Traffic flows & Ocean currents & Brain signals & Single-cell \\ \midrule
%   \multirow{2}{*}{\texttt{TSB-BM}}  & Memory & 516 & 510 & 5976 & 486 & 1812 \\
%                           & Time & 50.17 & 52.25 & 102.67 & 49.62 & 60.14 \\ \midrule
%   \multirow{2}{*}{\texttt{SB-BM}}  & Memory & 512 & 504 & 5892 & 468 & 1678 \\
%                       & Time  & 51.48 & 50.62 & 106.68 & 48.97 & 59.50 \\ \bottomrule
%   \end{tabular}
% }
% \vspace{-1mm}
% \captionof{table}{Complexity (memory (in MiB) and training time (in seconds)) across different datasets.}
% \label{tab:time_in_seconds}
% \end{table}

\begin{figure}[ht!]
  % \vspace{-2mm}
  \centering
  \begin{minipage}{0.64\textwidth}
    \centering
    \includegraphics[width=0.98\textwidth]{figures/time_memory_comparison.pdf}
    \vspace{-5mm}
    \caption{Training time and memory comparison when training \texttt{SB-VE} and \texttt{TSB-VE} w.r.t. different-sized graphs.}
    \label{app.fig:time_memory_comparison}
  \end{minipage}%
  \hfill
  \begin{minipage}{0.3\textwidth}
    \centering
    \vspace{0mm}
    \resizebox{1\columnwidth}{!}{
      \begin{tabular}{@{}ccc@{}}
      \toprule
      Dataset & \texttt{TSB-BM} & \texttt{SB-BM}  \\ \midrule
        % {Ocean currents} & 5976 $|$ 106.68      & 5892 $|$ 102.67     \\ 
        % {Brain signals}         & 486 $|$ 49.62       & 468 $|$ 48.97     \\ 
        % {Seismic data}           & 516 $|$ 50.17      & 512 $|$ 51.48     \\ 
        % {Traffic Flows} & 510 $|$ 52.25      & 504 $|$ 50.62     \\ 
        \multirow{2}{*}{Seismic} & 516 & 512 \\ & 50.17 & 51.48 \\  \midrule
        \multirow{2}{*}{Traffic} & 510 & 504  \\ & 52.25 & 50.62 \\ \midrule
        \multirow{2}{*}{Ocean} & 5976  & 5892 \\ & 106.68 & 102.67  \\ \midrule
        \multirow{2}{*}{Brain} & 486 & 468 \\ & 49.62 & 48.97 \\  \midrule
        \multirow{2}{*}{Single-cell} & 4446 & 4294 \\ & 94.30  & 92.54  \\ \bottomrule
      \end{tabular}
      }
      \vspace{-2mm}
    \captionof{table}{Complexity (first row: memory (in MiB), and second row: training time (in seconds)).}
    \label{tab:time_in_seconds}
  \end{minipage}
  % \vspace{-10mm}
\end{figure}

%%%%%%%%%%%%%%%%%%%%%%%%%%%%%%%%%%%%%%%%%%%%%%%%%%%%%%%%%%%%
 
%%%%%%%%%%%%%%%%%%%%%%%%%%%%%%%%%%%%%%%%%%%%%%%%%%%%%%%%%%%%
%%%%%%%%%%%%%%%%%%%%%%%%%%%%%%%%%%%%%%%%%%%%%%%%%%%%%%%%%%%%

\end{document}